\documentclass{article}
\usepackage[hyphens]{url} 
\usepackage{graphicx}
\urlstyle{rm}
 
\usepackage[square,numbers]{natbib}  
\usepackage{caption} 

\usepackage{algorithm}
\usepackage{algorithmic}
\usepackage{newfloat}
\usepackage{listings}
\usepackage{fullpage}

\usepackage{url}
\usepackage{hyperref}       
\usepackage{booktabs}

\overfullrule=10mm

\usepackage{olo}
\usepackage{color}
\usepackage{xcolor}
\usepackage{adjustbox}
\usepackage{multirow}
\usepackage{wrapfig}
\usepackage{amsmath}
\usepackage{amssymb}
\usepackage{subcaption}
\usepackage{enumitem}

\hypersetup{
    colorlinks=true,
    linkcolor=blue,
    filecolor=blue,      
    urlcolor=blue,
    citecolor=red
    }

\newcommand{\code}{{\url{https://github.com/purushottamkar/svam/}}}

\newcommand{\step}{\xi}

\newcommand{\gem}{\textsc{SVAM}\xspace}
\newcommand{\gemrr}{\textsc{SVAM-RR}\xspace}

\newcommand{\gemme}{\textsc{SVAM-ME}\xspace}

\newcommand{\gemlr}{\textsc{SVAM-LR}\xspace}
\newcommand{\gemgam}{\textsc{SVAM-Gamma}\xspace}
\newcommand{\vwo}{\vw^\ast}
\newcommand{\hvw}{\hat\vw}
\newcommand{\tvw}{\tilde\vw}
\newcommand{\ty}{\tilde y}

\newcommand{\vmuo}{\vmu^\ast}

\newcommand{\hvmu}{\hat\vmu}

\newcommand{\betao}{\beta^\ast}
\newcommand{\cons}{\tilde c}

\newcommand{\ncons}{\tilde q}
\newcommand{\rcons}{\tilde g}

\newcommand{\bcn}[1]{\{{#1}\}}

\newcommand{\deff}{\overset{\mathrm{def}}{=}}

\graphicspath{{figs/}}

\def\mytitle{Corruption-tolerant Algorithms for Generalized Linear Models}\relax

\title{\mytitle}

\author{
	Bhaskar Mukhoty\thanks{Mohamed Bin Zayed University of Artificial Intelligence, Abu Dhabi, UAE} \thanks{Work done while the author was a student at IIT Kanpur}
	\and Debojyoti Dey\thanks{Indian Institute of Technology Kanpur, Uttar Pradesh, India}
	\and Purushottam Kar\footnotemark[2]\\
	bhaskar.mukhoty@mbzuai.ac.ae, \{debojyot,purushot\}@cse.iitk.ac.in
}

\begin{document}

\maketitle

\begin{abstract}
This paper presents \gem (Sequential Variance-Altered MLE), a unified framework for learning generalized linear models under adversarial label corruption in training data. \gem extends to tasks such as least squares regression, logistic regression, and gamma regression, whereas many existing works on learning with label corruptions focus only on least squares regression. \gem is based on a novel variance reduction technique that may be of independent interest and works by iteratively solving weighted MLEs over variance-altered versions of the GLM objective. SVAM offers provable model recovery guarantees superior to the state-of-the-art for robust regression even when a constant fraction of training labels are adversarially corrupted. \gem also empirically outperforms several existing problem-specific techniques for robust regression and classification. Code for \gem is available at \code
\end{abstract}
\section{Introduction}
\label{sec:intro}
Generalized linear models (GLMs) \cite{NelderWedderburn1972} are effective models for a variety of discrete and continuous label spaces, allowing the prediction of binary or count-valued labels (logistic, Poisson regression) as well as real-valued labels (gamma, least-squares regression). Inference in a GLM involves two steps: given a feature vector $\vx \in \bR^d$ and model parameters $\vwo$, a \emph{canonical parameter} is generated as $\theta := \ip\vwo\vx$ then the label $y$ is sampled from the exponential family distribution
\[
\P{y \cond \theta} = \exp( y\cdot\theta - \psi(\theta) - h(y)),
\]
where the function $h(\cdot)$ is specific to the GLM and $\psi(\cdot)$ is a normalization term, also known as log partition function. It is common to use a \emph{non-canonical link} such as $\theta := \exp(\ip\vwo\vx)$ for gamma distribution. GLMs also admit vector valued label $\vy\in\bR^n$ by substituting the scalar product by inner product $\ip\vy\veta$ where $\veta := \vX\vwo$ is the canonical parameter and $\vX\in\bR^{n\times d}$ is the covariate matrix.

\textbf{Problem Description:}
Given data $\bcn{(\vx^i,y_i)}_{i=1}^n$ generated using a known GLM but unknown model parameters $\vwo$, statistically efficient techniques exist to recover a consistent estimate of the model $\vwo$ \cite{McCullaghNelder1989}. However, these techniques break down if several observed labels $y_i$ are corrupted, not just by random statistical noise but by adversarially generated structured noise. Suppose $k < n$ labels are corrupted i.e. for some $k$ data points $i_1,\ldots,i_k$, the actual label $y_{i_j}, j = 1,\ldots,k$ generated by the GLM are replaced by the adversary with corrupted ones say $\tilde y_{i_j}$. Can we still recover $\vwo$? Note that the learning algorithm is unaware of the points that are corrupted.

\textbf{Breakdown Point:} The largest fraction $\alpha = k/n$ of corruptions that a learning algorithm can tolerate while still offering an estimate of $\vwo$ with bounded error is known as its breakdown point. This paper proposes the \gem algorithm that can tolerate $k = \Om n$ corruptions i.e. $\alpha = \Om1$.

\textbf{Adversary Models:} 
Contamination of the training labels $y_1,\ldots,y_n$ by an adversary can misguide the learning algorithm into selecting model parameters of the adversary's choice. An adversary has to choose (1) which labels $i_1,\ldots,i_k$ to corrupt and (2) what corrupted labels $\ty_{i_1},\ldots,\ty_{i_k}$ to put there. Adversary models emerge based on what information the adversary can consult while making these choices. The \textit{oblivious} adversary must make both these choices with no access to the original data $\bcn{(\vx^i, y_i)}_{i=1}^n$ or true model $\vwo$ and thus, can only corrupt a random/fixed subset of $k$ labels by sampling $\ty_{i_j}$ from some predetermined noise distribution. This is also known as the \textit{Huber} noise model. On the other hand, a \textit{fully adaptive} adversary has full access to the original data and true model while making both choices. Finally, the \textit{partially adaptive} adversary must choose the corruption locations without knowledge of original data or true model but has full access to these while deciding the corrupted labels. See Appendix~\ref{app:adversary} for details. 

\textbf{Contributions:} This paper describes the \gem (Sequential Variance-Altered MLE) framework that offers:\\
\textbf{1.} robust estimation with a breakdown point $\alpha = \Om1$ against partially and fully adaptive adversaries for robust least-squares regression and mean estimation and $\alpha = \Om{1/\sqrt d}$ for robust gamma regression. Prior works do not offer any breakdown point for gamma regression.\\
\textbf{2.} exact recovery of the true model $\vwo$ against a fully-adaptive adversary for the case of least squares regression,\\
\textbf{3.} the use of variance reduction technique (see \S\ref{sec:variance}) in robust learning, which is novel to the best of our knowledge,\\
\textbf{4.} extensive empirical evaluation demonstrating that despite being a generic framework, \gem is competitive to or outperforms algorithms specifically designed to solve problems such as least-squares and logistic regression.

\section{Related Works}
\label{sec:related}

In the interest of space, we review aspects of literature most related to \gem and refer to others \cite{diakonikolas2019robust,MukhotyGJK2019} for a detailed review. 

Robust GLM learning has been studied in a variety of settings. \citep{CantoniRonchetti2001} considered an oblivious adversary (Huber's noise model) but offered a breakdown point of $\alpha = \bigO{\frac1{\sqrt n}}$ i.e. tolerate $k \leq \bigO{\sqrt n}$ corruptions. \citep{YangTR2013} solve robust GLM estimation by solving M-estimation problems. However, they require the magnitude of the corruptions to be upper-bounded by some constant i.e. $\abs{y_i - \ty_i} \leq \bigO1$ and offer a breakdown point of $\alpha = \bigO{\frac1{\sqrt n}}$. Moreover, their approach solves $L_1$-regularized problems using projected gradient descent that offers slow convergence. In contrast, \gem offers a linear rate of convergence, offers a breakdown point of $\alpha = \Om1$ i.e. tolerate $k = \Om n$ corruptions and can tolerate corruptions with unbounded magnitude introduced by a partially or fully adaptive adversary.

Specific GLMs such as robust regression have received focused attention. Here the model is $\vy = X\vwo + \vb$ where $X \in \bR^{n \times d}$ is the feature matrix and $\vb$ is $k$-sparse corruption vector denoting the adversarial corruptions. A variant of this, studies a \emph{hybrid} noise model that replaces the zero entries of $\vb$ with Gaussian noise $\cN(0,\sigma^2)$. \citep{NguyenT2013,WrightM2010} solve an $L_1$ minimization problem which is slow in practice. (see \S\ref{sec:exps}). \citep{BhatiaJK2015} use hard thresholding techniques to estimate the subset of uncorrupted points while \citep{MukhotyGJK2019} modify the IRLS algorithm to do so. However, \cite{BhatiaJK2015, MukhotyGJK2019} are unable to offer consistent model estimates in the hybrid noise model even if the corruption rate $\alpha = k/n \rightarrow 0$ which is surprising since $\alpha \rightarrow 0$ implies vanishing corruption. In contrast, \gem offers consistent model recovery in the hybrid noise model against a fully adaptive adversary when $\alpha \rightarrow 0$. \cite{SuggalaBRJ2019} also offer consistent recovery with breakdown points $\alpha > 0.5$ but assume an oblivious adversary.

Robust classification with $y_i \in \bc{-1,+1}$ has been explored using robust surrogate loss functions \cite{natarajan2013learning} and ranking \cite{FengXMY2014,northcutt2017rankpruning} techniques. These works do not offer breakdown points but offer empirical comparisons.

Robust mean estimation entails recovering an estimate $\hvmu \in \bR^d$ of the mean $\vmuo$ of a multivariate Gaussian $\cN(\vmuo, \Sigma)$ given $n$ samples of which an $\alpha$ fraction are corrupted~\cite{lai2016agnostic}. Estimation error is known to be lower bounded $\norm{\hvmu - \vmuo}_2 \geq \Om{\alpha\sqrt{\log\frac{1}{\alpha}}}$ for this application even if $n\to \infty$~\cite{diakonikolas2019recent}. \citep{diakonikolas2019robust} use convex programming techniques and offer $\bigO{\alpha\log^{\frac32}{\frac1\alpha}}$ error given $n \geq \softOm{\frac{d^2}{\alpha^2}}$ samples and a $\text{poly}\br{n,d,\frac1\alpha}$ runtime. \cite{cheng2019high} improve the running time to $\frac{\softO{nd}}{\text{poly}(\alpha)}$. The recent work of \cite{DalalyanM2022} uses an IRLS-style approach that internally relies on expensive SDP-calls but offers high breakdown points. \gem uses $n = \bigO{\log^2\frac1\alpha}$ samples and offers a recovery error of $\bigO{\textit{trace}(\Sigma)(\log\frac1\alpha)^{-1/2}}$. This is comparable to existing works if $\textit{trace}(\Sigma)=\bigO{1}$. Moreover, \gem is much faster and simpler to implement in practice.

\emph{Meta algorithms} such as robust gradient techniques, median-of-means \citep{LecueL2020}, tilted ERM \citep{LiBSS2020} and maximum correntropy criterion \citep{FengHSYS2015} have been studied. SEVER~\cite{DiakonikolasKKLSS2019} uses gradient covariance matrix to filter out the outliers along its largest eigenspace while RGD~\cite{PrasadSBR2018} uses robust gradient estimates to perform robust first-order optimization directly. While convenient to execute, they may require larger training sets, e.g., SEVER requires $n > d^5$ samples for robust least-squares regression whereas \gem requires $n > \Om{d\log(d)}$. In terms of recovery guarantees, for least-squares regression without Gaussian noise, \gem and other methods \cite{BhatiaJK2015,MukhotyGJK2019}) offer exact recovery of $\vwo$ so long as the fraction of corrupted points is less than the breakdown point while SEVER's error continues to be bounded away from zero. RGD only considers an oblivious/Huber adversary while \gem can tolerate partially/fully adaptive adversaries. SEVER does not report an explicit breakdown point, RGD offers a breakdown point of $\alpha =  1/\log d$ (see Thm 2 in their paper) while \gem offers an explicit breakdown point independent of $d$. \gem also offers faster convergence than existing methods such as SEVER and RGD.

\section{The \gem Algorithm}
\label{sec:gem}
A popular approach in robust learning is to assign weights to data points, hoping that large weights would be given to uncorrupted points and low weights to corrupted ones, followed by weighted likelihood maximization. Often the weights are updated, and the process is repeated.  \cite{CantoniRonchetti2001} use Huber style weighing functions used in Mallow's type M-estimators, \cite{MukhotyGJK2019} use truncated inverse residuals, and \cite{ValdoraYohai2014} use Mahalanobis distance-based weights.

\gem notes that the label likelihood offers a natural measure of how likely the point is to be uncorrupted. Given a model estimate $\hvw^t$ at iteration $t$, the weight $s_i = \P{y_i\cond\eta^t_i} = \exp(y_i\cdot\eta_i^t - \psi(\eta_i^t) - h(y_i))$ can be assigned to the $i\nth$ point where $\eta^t_i = \ip{\hvw^t}{\vx^i}$. This gives us the weighted MLE\footnote{Recall that for gamma/Poisson regression we need to set $\eta^t_i = \exp(\ip{\hvw^t}{\vx^i})$ given the non-canonical link for these problems.} $\tilde Q(\vw \cond \hvw^t) = -\sum_{i=1}^n s_i \cdot \log \P{y_i\cond\ip{\vw}{\vx^i}}$ solving which gives us the next model iterate as
\begin{equation}
\hvw^{t+1} = \arg\min_{\vw \in \bR^d}\ \tilde Q(\vw \cond \hvw^t)
\label{eq:wmle}
\end{equation}
However, as \S\ref{sec:exps} will show, this strategy does not perform well. If the initial model $\hvw^1$ is far from $\vwo$, it may result in imprecise weights $s_i$ that are large for the corrupted points. For example, if the adversary introduces corruptions using a different model $\tvw$ i.e. $\tilde y_{i_j} \sim \P{y_i\cond\ip{\tvw}{\vx^{i_j}}}, j \in [k]$ and we happen to initialize close to $\tvw$ i.e. $\hvw^1 \approx \tvw$, then it is the corrupted points that would get large weights initially that may cause the algorithm to converge to $\tvw$ itself.

\textbf{Key Idea:} It is thus better to avoid drastic decisions, say setting $s_i \gg 0$ in the initial stages no matter how much a data point appears to be clean. \gem implements this intuition by setting weights using a label likelihood distribution with very large variance initially. This ensures that no data point (not even the uncorrupted ones) gets large weight (c.f. the uniform distribution that has large variance and assigns to point a high density). As \gem progresses towards $\vwo$, it starts using likelihood distributions with progressively lower variance. Note that this allows data points (hopefully the uncorrupted ones) to get larger weights (c.f. the Dirac delta distribution that has vanishing variance and assigns high density to isolated points).

\subsection{Mode-preserving Variance-altering Likelihood Transformations}
\label{sec:variance}

\begin{table*}[t]
	\centering
	\caption{\small{Some common distributions and their variance altered forms. Note that in all cases, the form of the distribution is preserved after transformation, as well as that the variance asymptotically goes down at the rate $\Theta(1/\beta)$ as $\beta \rightarrow \infty$. The figures on the right show examples of how the Gaussian and gamma likelihood functions change with varying values of $\beta$ while still being order/mode preserving.}}
	\label{tab:variance}
	
	\begin{minipage}{0.75\linewidth}
		\resizebox{\textwidth}{!}{
			\centering
			\begin{tabular}{ccccc}
				\toprule
				Name & Standard Form & Variance Altered Form & Variance & Asymptotic Form \\
				& (Mass/Density function) & $(\beta)$ & & $(\beta \rightarrow \infty)$\\
				\midrule
				Gaussian (univariate) & \multirow2*{$\sqrt{\frac1{2\pi}}\exp(-\frac12\br{y-\eta}^2)$} & \multirow2*{$\sqrt{\frac\beta{2\pi}}\exp(-\frac\beta2\br{y-\eta}^2)$} & \multirow2*{$\frac1\beta$} & \multirow2*{$\delta_{\eta}(y)$}\\
				$\cN(y \cond \eta)$ & & & &\\
				\midrule
				Gaussian (multivariate) & \multirow2*{$\br{\frac1{2\pi}}^{\frac d2}\exp(-\frac12\norm{\vy-\veta}_2^2)$} & \multirow2*{$\br{\frac\beta{2\pi}}^{\frac d2}\exp(-\frac\beta2\norm{\vy-\veta}_2^2)$} & \multirow2*{$\frac1\beta$} & \multirow2*{$\delta_{\veta}(\vy)$}\\
				$\cN(\vy \cond \veta)$ & & & &\\
				\midrule
				Bernoulli & $\P{y = 1 \cond \eta} = \pi$ & $\P{y = 1 \cond \eta} = \tilde\pi$ & \multirow2*{$< \frac1{\beta\eta}$} & \multirow2*{$\delta_{\sign(\eta)}(y)$}\\
				{$y \in \bc{-1,+1}$} & $\pi = (1 + \exp(-y\eta))^{-1}$ & $\tilde\pi = (1 + \exp(-\beta y\eta))^{-1}$ & &\\
				\midrule
				& \multirow2*{$\frac{1}{y \Gamma(\frac{1}{\phi})}\left(\frac{y\eta}{\phi}\right)^\frac{1}{\phi}\exp(-\frac{y\eta}{\phi})$} & \multirow2*{$\frac{1}{y \Gamma(\frac{1}{\tilde\phi_\beta})}\left(\frac{y\tilde\eta_\beta}{\tilde\phi_\beta}\right)^\frac{1}{\tilde\phi_\beta}\exp(-\frac{y\tilde\eta_\beta}{\tilde\phi_\beta})$} & \multirow4*{$\frac\phi{\eta^2}\frac{\phi+\beta(1 - \phi)}{\beta^2}$} & \multirow4*{$\delta_{\frac{1-\phi}\eta}(y)$}\\
				Gamma & & & &\\
				$\cG(y \cond \eta, \phi)$ & $\phi < 1$ & \scriptsize{$\tilde\phi_\beta = \phi/\br{\phi + \beta(1-\phi)}$} & &\\
				& \textbf{Note}: $\eta = \exp(\ip\vw\vx)$ & \scriptsize{$\tilde\eta_\beta = \eta\beta/\br{\phi + \beta(1-\phi)}$} & &\\
				\bottomrule
			\end{tabular}
		}
	\end{minipage}\hfill
	\begin{minipage}{0.25\linewidth}
	    \centering
		\includegraphics[width=0.7\textwidth]{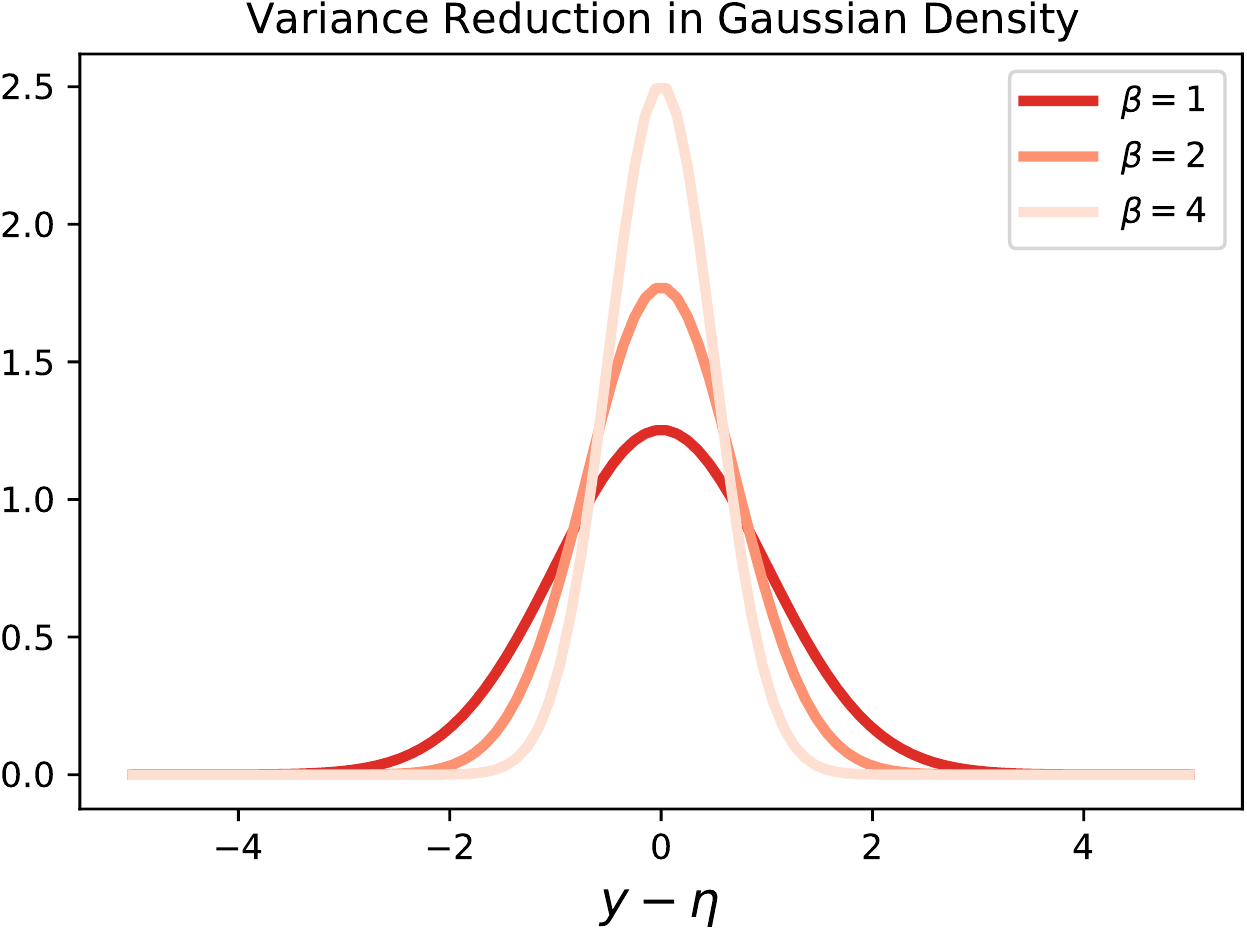}
		\includegraphics[width=0.7\textwidth]{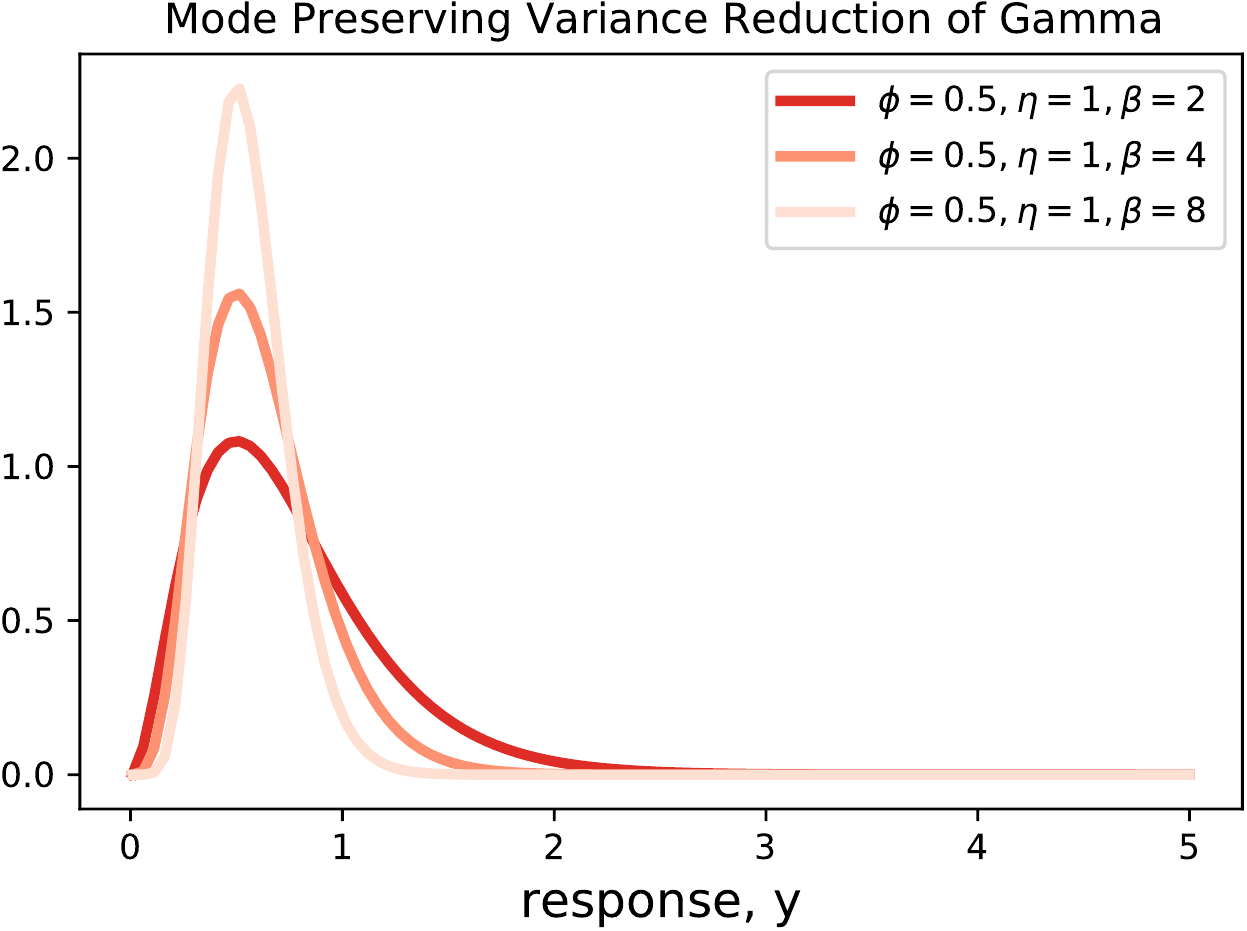}
	\end{minipage}
\end{table*}

To implement the above strategy, \gem (Algorithm~\ref{algo:gem}) needs techniques to alter the variance of a likelihood distribution at will. Note that the likelihood values of the altered distributions must be computable as they will be used as weights $s_i$ i.e. merely being able to sample the distribution is not enough. Moreover, the transformation must be order-preserving -- say the original and transformed distributions are $\bP$ and $\tilde\bP$ resp., then for every pair of labels $y, y'$ and every parameter value $\eta$, we must have $\P{y \cond \eta} > \P{y' \cond \eta} \Leftrightarrow \tilde\bP\bs{y \cond \eta} > \tilde\bP\bs{y' \cond \eta}$. If this is not true, then \gem could exhibit anomalous behavior.

\textbf{The Transformation:} If $\P{y \cond \eta} = \exp(y\cdot\eta - \psi(\eta) - h(y))$ is an exponential family distribution with parameter $\eta$ and log-partition function $\psi(\eta) = \log\int\exp(y\cdot\eta - h(y))\ dy$, then for any $\beta > 0$, we get the variance-altered density,
\[
\tilde\bP_\beta\bs{y \cond \eta} = \frac1{Z(\eta,\beta)}\exp(\beta\cdot(y\cdot\eta - \psi(\eta) - h(y))),
\]
where $Z(\eta,\beta) = \int\exp(\beta\cdot(y\cdot\eta - \psi(\eta) - h(y)))\ dy$. This transformation is order and mode preserving since $x^\beta$ is an increasing function for any $\beta > 0$. This generalized likelihood distribution has variance~\cite{NelderWedderburn1972}
$\frac1\beta\nabla^2\psi(\eta)$, which tends to $0$ as $\beta \rightarrow \infty$. Table~\ref{tab:variance} lists a few popular distributions, their variance altered versions, and asymptotic versions as $\beta \rightarrow \infty$.

We note that \citep{JiangKJ2012} also study  variance altering transformations for learning hidden Markov models, topic models, etc.. However, their transformations are unsuitable for use in \gem for a few reasons:\\
\textbf{1.} \gem's transformed distributions are always available in closed form whereas those of \citep{JiangKJ2012} are not necessarily available in closed form.\\
\textbf{2.} \gem's transformations are \emph{order}-preserving while \citep{JiangKJ2012} offer \emph{mean}-preserving that are not assured to be order-preserving.

\textbf{The Algorithm:} As presented in Algorithm~\ref{algo:gem}, \gem repeatedly constructs weighted MLEs $\tilde Q_\beta(\vw \cond \hvw^t)$ that take $\beta$-\textit{variance altered} weights $s_i=\tilde\bP_{\beta}[y_i\cond\ip{\vw}{\vx^i}]$ for all $i \in [n]$ and solves them to get new model estimates.

\begin{algorithm}[t]
	\caption{\small \gem: Sequential Variance-Altered MLE}
	\label{algo:gem}
	\begin{algorithmic}[1]
	{\small
		\REQUIRE Data $\bc{(\vx^i,y_i)}_{i=1}^n$, initial model $\hvw^1$, initial scale $\beta_1$, scale increment $\step > 1$, likelihood dist. $\P{\cdot\cond\cdot}$
		\FOR{$t = 1, 2, \ldots, T-1$}
			\STATE $s^t_i \leftarrow \tilde\bP_{\beta_t}[y_i\cond\ip{\hvw^t}{\vx^i}]$ \COMMENT{ $\beta_t$-var altered $\P{\cdot\cond\cdot}$}%
			\STATE $\tilde Q_{\beta_t}(\vw \cond \hvw^t) \deff -\sum_{i=1}^n s^t_i \cdot \log \P{y_i\cond\ip{\vw}{\vx^i}}$ 
			\STATE $\hvw^{t+1} = \arg\min_\vw\ \tilde Q_{\beta_t}(\vw \cond \hvw^t)$
			\STATE $\beta_{t+1} \leftarrow \step \cdot \beta_t$ \COMMENT{ Variance of $\tilde\bP_\beta[\cdot\cond\cdot] \downarrow$ as $\beta \uparrow$}
		\ENDFOR
		\STATE \textbf{return} {$\hvw^T$}
	}
	\end{algorithmic}
\end{algorithm}

We take a pause to assert that whereas the approach in \cite{MukhotyGJK2019}, although similar at first to Eq~\eqref{eq:wmle}, applies only to least-squares regression as it relies on notions of residuals missing from other GLMs. In contrast, \gem works for all GLMs e.g. least-squares/logistic/gamma regression and offers stronger theoretical guarantees.

Theorem~\ref{thm:gem-main} shows that \gem enjoys a linear rate of convergence. However, we first define notions of \emph{Local Weighted Strong Convexity and Lipschitz Continuity}. Let $\cB_2(\vv, r) := \bc{\vw: \norm{\vw - \vv}_2 \leq r}$ denote the $L_2$ ball of radius $r$ centered at the vector $\vv \in \bR^d$.

\begin{definition}[LWSC/LWLC]
\label{defn:lwsc-lwss}
Given data $\bc{(\vx^i,y_i)}_{i=1}^n$ and $\beta > 0$ an exponential family distribution $\P{\cdot\cond\cdot}$ is said to satisfy $\lambda_\beta$-Local Weighted Strongly Convexity and $\Lambda_\beta$-Local Weighted Lipschitz Continuity if for any \emph{true} model $\vwo$ and any $\vu, \vv \in \cB_2\br{\vwo, \sqrt\frac1\beta}$ the following hold,
\begin{enumerate}
    \item $\nabla^2 \tilde Q_\beta(\vv \cond \vu) \deff \left.\nabla^2 \tilde Q_\beta(\cdot \cond \vu)\right|_{\vv} \succeq \lambda_\beta\cdot I$
    \item $\norm{\nabla\tilde Q_\beta(\vwo\cond\vu)}_2 \deff \norm{\left.\nabla\tilde Q_\beta(\cdot\cond\vu)\right|_{\vwo}}_2 \leq \Lambda_\beta$
\end{enumerate}
\end{definition}
The above requires the $\tilde Q_\beta$-function to be strongly convex and Lipschitz continuous in a ball of radius $\frac1{\sqrt\beta}$ around the true model $\vwo$ i.e. as $\beta \uparrow$, the neighborhood in which these properties are desired also shrinks. We will show that likelihood functions corresponding to GLMs e.g., least squares and gamma regression satisfy these properties for appropriate ranges of $\beta$, even in the presence of corrupted samples.

\begin{theorem}[\gem convergence]
\label{thm:gem-main}
If the data and likelihood distribution satisfy the LWSC/LWLC properties for all $\beta \in (0,\beta_{\max}]$ and if \gem is initialized at $\hvw^1$ and scale $\beta_1 > 0$ s.t. $\beta_1\cdot\norm{\hvw^1 - \vwo}_2^2 \leq 1$, then for any $\epsilon > 1/{\beta_{\max}}$, for small-enough scale increment $\step > 1$, \gem ensures $\norm{\hvw^T - \vwo}_2^2 \leq \epsilon$ after $T = \bigO{\log\frac1\epsilon}$ iterations.
\end{theorem}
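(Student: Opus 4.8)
The plan is to prove the theorem by induction, maintaining the invariant that at every iteration $t$ the iterate stays inside the shrinking ball on which LWSC/LWLC is available, i.e. $\beta_t\cdot\norm{\hvw^t-\vwo}_2^2\le 1$, equivalently $\hvw^t\in\cB_2(\vwo,1/\sqrt{\beta_t})$. The base case is exactly the initialization hypothesis $\beta_1\cdot\norm{\hvw^1-\vwo}_2^2\le 1$. For the inductive step I would analyze a single update of Algorithm~\ref{algo:gem}. Assume $\hvw^t\in\cB_2(\vwo,1/\sqrt{\beta_t})$. Then, taking $\vu=\hvw^t$, LWSC says $f:=\tilde Q_{\beta_t}(\cdot\cond\hvw^t)$ is $\lambda_{\beta_t}$-strongly convex on $\cB_2(\vwo,1/\sqrt{\beta_t})$, and LWLC says $\norm{\nabla f(\vwo)}_2\le\Lambda_{\beta_t}$.

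Next I would bound the distance of the new iterate to $\vwo$ using a standard strongly-convex-minimizer argument. Let $\bar\vw:=\arg\min_{\vw\in\cB_2(\vwo,1/\sqrt{\beta_t})}f(\vw)$ be the \emph{constrained} minimizer. The first-order optimality (variational) inequality gives $\ip{\nabla f(\bar\vw)}{\vwo-\bar\vw}\ge 0$; combining this with $\lambda_{\beta_t}$-strong convexity applied both from $\bar\vw$ to $\vwo$ and from $\vwo$ to $\bar\vw$ yields $\lambda_{\beta_t}\norm{\bar\vw-\vwo}_2^2\le\ip{-\nabla f(\vwo)}{\bar\vw-\vwo}\le\Lambda_{\beta_t}\norm{\bar\vw-\vwo}_2$, hence $\norm{\bar\vw-\vwo}_2\le\Lambda_{\beta_t}/\lambda_{\beta_t}$. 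I then need to argue $\bar\vw$ is actually the global minimizer $\hvw^{t+1}$ returned by the algorithm: once we know (see below) $\Lambda_{\beta_t}/\lambda_{\beta_t}<1/\sqrt{\beta_t}$ strictly, $\bar\vw$ is an interior point and hence an unconstrained stationary point of $f$; since $f$ is a nonnegatively-weighted sum of per-sample negative log-likelihoods, which are convex for the GLMs considered (this is where any application-specific well-posedness of the weighted MLE enters), $f$ is convex and its stationary point is its unique global minimizer, so $\hvw^{t+1}=\bar\vw$ and $\norm{\hvw^{t+1}-\vwo}_2\le\Lambda_{\beta_t}/\lambda_{\beta_t}$.

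To close the induction I need $\norm{\hvw^{t+1}-\vwo}_2\le 1/\sqrt{\beta_{t+1}}=1/\sqrt{\step\beta_t}$, for which it suffices that $\Lambda_{\beta_t}/\lambda_{\beta_t}\le 1/\sqrt{\step\beta_t}$, i.e. $\step\le\lambda_{\beta_t}^2/(\beta_t\Lambda_{\beta_t}^2)$. This is precisely the ``small-enough scale increment'' requirement: I would pick any $\step$ with $1<\step\le\inf_{\beta\in(0,\beta_{\max}]}\lambda_\beta^2/(\beta\Lambda_\beta^2)$, which both preserves the invariant for all $t$ with $\beta_t\le\beta_{\max}$ and furnishes the strict interiority used above (such a $\step>1$ exists because the LWSC/LWLC parameter scalings established for the specific GLMs force this infimum above $1$). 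Finally, since $\beta_t=\step^{t-1}\beta_1$ grows geometrically, the invariant gives $\norm{\hvw^t-\vwo}_2^2\le 1/\beta_t=\step^{-(t-1)}/\beta_1$; because $\epsilon>1/\beta_{\max}$ we can run the recursion while keeping $\beta_t\le\beta_{\max}$ until $1/\beta_t\le\epsilon$, and (disposing of the trivial case $\beta_1\ge 1/\epsilon$ separately) this happens after $T=1+\lceil\log(1/(\beta_1\epsilon))/\log\step\rceil=\bigO{\log(1/\epsilon)}$ iterations, giving $\norm{\hvw^T-\vwo}_2^2\le\epsilon$.

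The routine parts are the one-step strongly-convex-minimizer estimate and the geometric iteration count. The two delicate points are: (i) certifying that the algorithm's global $\arg\min$ coincides with the constrained minimizer inside the current ball — handled here via convexity/uniqueness of the weighted MLE, but which in general needs either that structural fact or a ``no-escape'' continuity argument along the segment $[\vwo,\hvw^{t+1}]$; and, more importantly, (ii) that a scale increment $\step>1$ with $\step\,\beta\,\Lambda_\beta^2\le\lambda_\beta^2$ for all $\beta\in(0,\beta_{\max}]$ genuinely exists. Point (ii) is the real crux — it is not automatic from the definitions, must be verified GLM by GLM, and is where the admissible corruption fraction $\alpha$ and the sample-complexity requirements enter — but at the level of abstraction of Theorem~\ref{thm:gem-main} it is absorbed into the hypothesis that LWSC/LWLC hold with favorable parameters over $(0,\beta_{\max}]$.
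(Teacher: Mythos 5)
Your proposal is correct and follows essentially the same route as the paper's proof: the same invariant $\beta_t\cdot\norm{\hvw^t-\vwo}_2^2\le 1$ maintained by induction from the initialization, the same one-step bound of order $\Lambda_{\beta_t}/\lambda_{\beta_t}$ extracted from LWSC/LWLC, the same requirement that a small enough $\step>1$ exist (deferred, exactly as in the paper, to the per-application analyses), and the same geometric growth of $\beta_t$ yielding $T=\bigO{\log\frac1\epsilon}$ while $\beta_t\le\beta_{\max}$, which is where $\epsilon>1/\beta_{\max}$ enters. The only substantive difference is that you obtain the one-step estimate via the constrained minimizer plus an interiority/convexity argument (with constant $\Lambda_{\beta_t}/\lambda_{\beta_t}$ instead of the paper's $2\Lambda_{\beta_t}/\lambda_{\beta_t}$, which applies strong convexity directly to the unconstrained $\arg\min$ $\hvw^{t+1}$), and this refinement in fact treats more carefully the locality point that LWSC is only guaranteed on $\cB_2(\vwo,1/\sqrt{\beta_t})$ while the algorithm returns a global minimizer.
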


It is useful to take a moment to analyze this result. Note that if the LWSC/LWLC properties hold for larger values of $\beta$, \gem is able to offer smaller model recovery errors. Lets take least-squares regression with hybrid noise (see \S\ref{sec:me-rr-lr}) as an example. The proofs will show that LWSC/LWLC properties are assured for $\beta$ as large as $\beta_{\max} = \softO{\min\bc{\frac1{\alpha^{2/3}},\sqrt{\frac nd}}}$ (see \S\ref{sec:me-rr-lr}). Thus, with proper initialization of $\hvw^1, \step$ and $\beta_1$ (discussed below), \gem ensures $\norm{\hvw^T - \vwo}^2_2 \leq \softO{\max\bc{\alpha^{2/3},\sqrt{\frac dn}}}$ within $T = \bigO{\ln(n)}$ steps. This proof will hold so long as \gem is offered at least $n = \Omega(d\log d)$ training samples.

\begin{figure*}[t]%
	\centering
	\begin{subfigure}[b]{0.245\textwidth}
		\includegraphics[width=\textwidth]{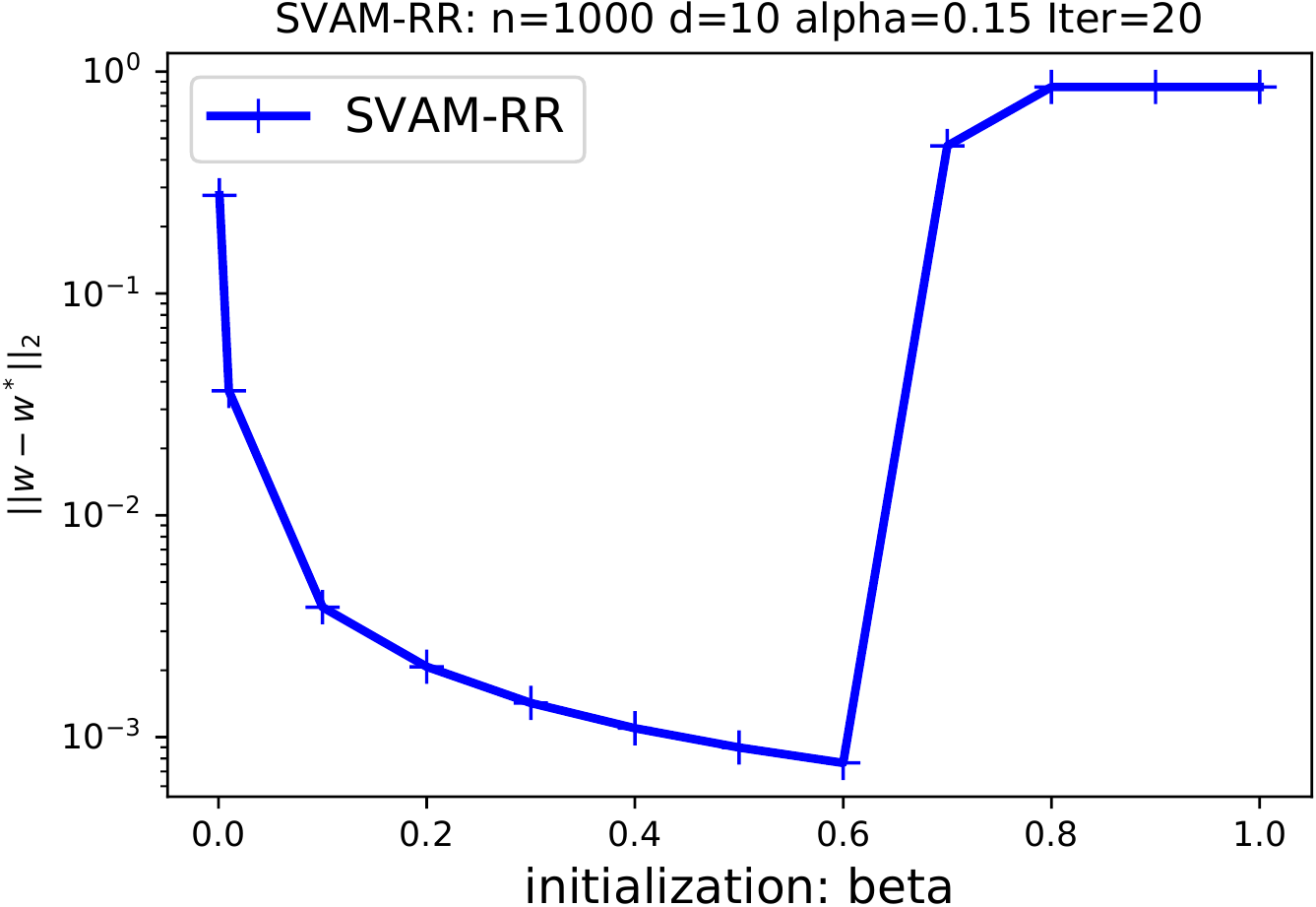}%
		\caption{\small $\beta_1$ Sensitivity}
	\end{subfigure}
	\begin{subfigure}[b]{0.245\textwidth}
		\includegraphics[width=\textwidth]{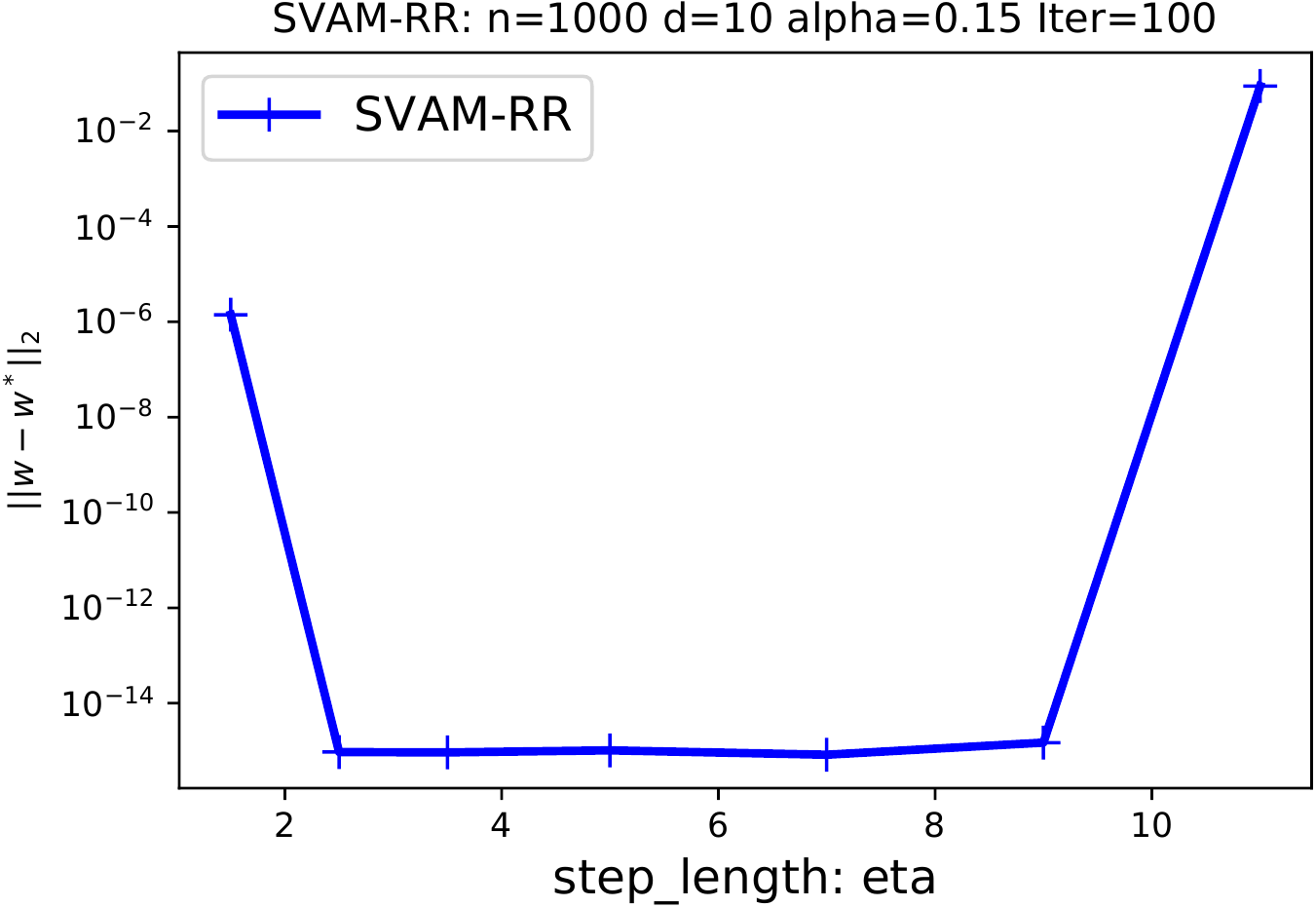}%
		\caption{\small $\step$ Sensitivity}
	\end{subfigure}
	\begin{subfigure}[b]{0.245\textwidth}
		\includegraphics[width=\textwidth]{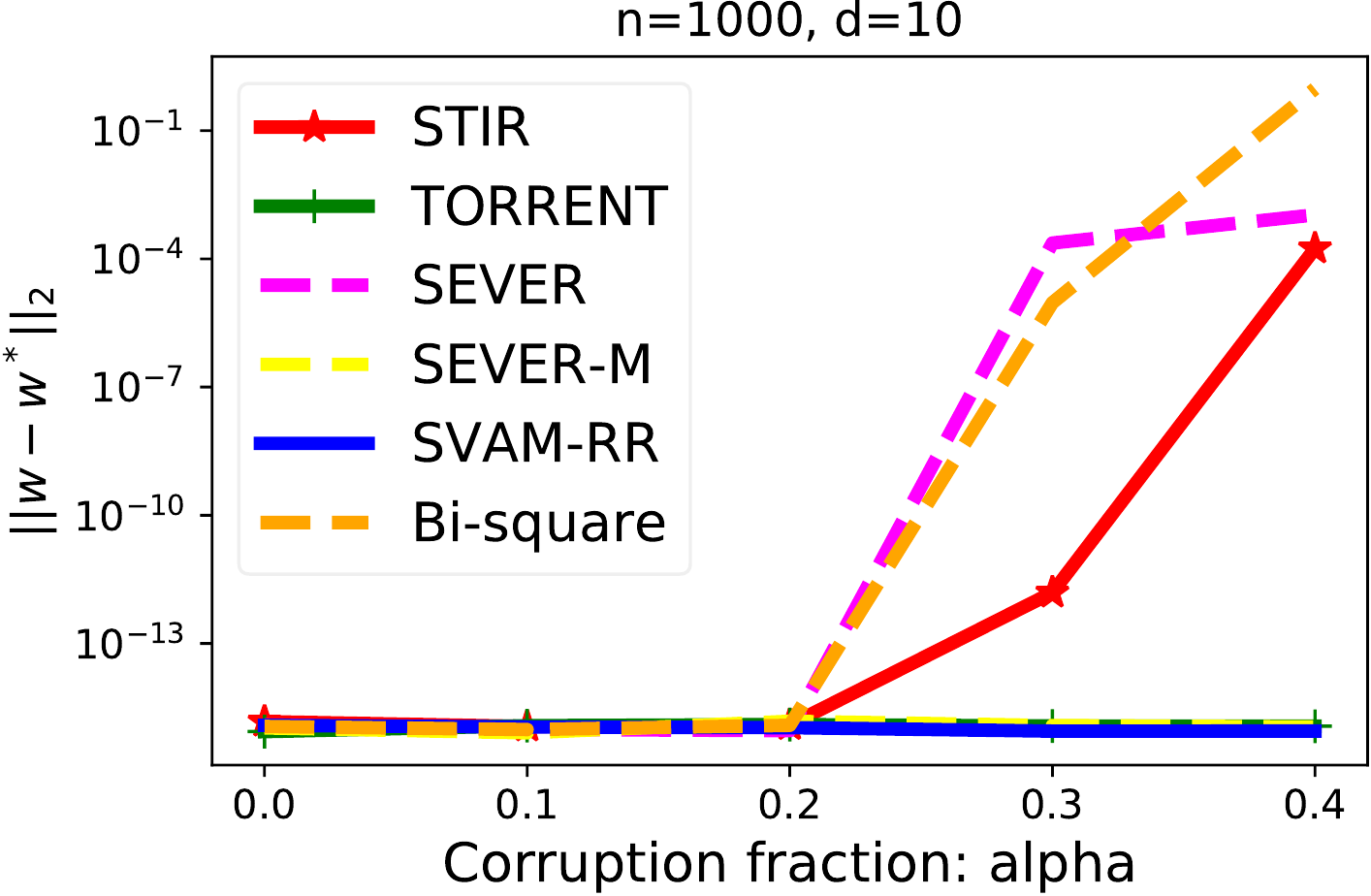}%
		\caption{\small $\alpha$ Sensitivity}
	\end{subfigure}
	\begin{subfigure}[b]{0.245\textwidth}
		\includegraphics[width=\textwidth]{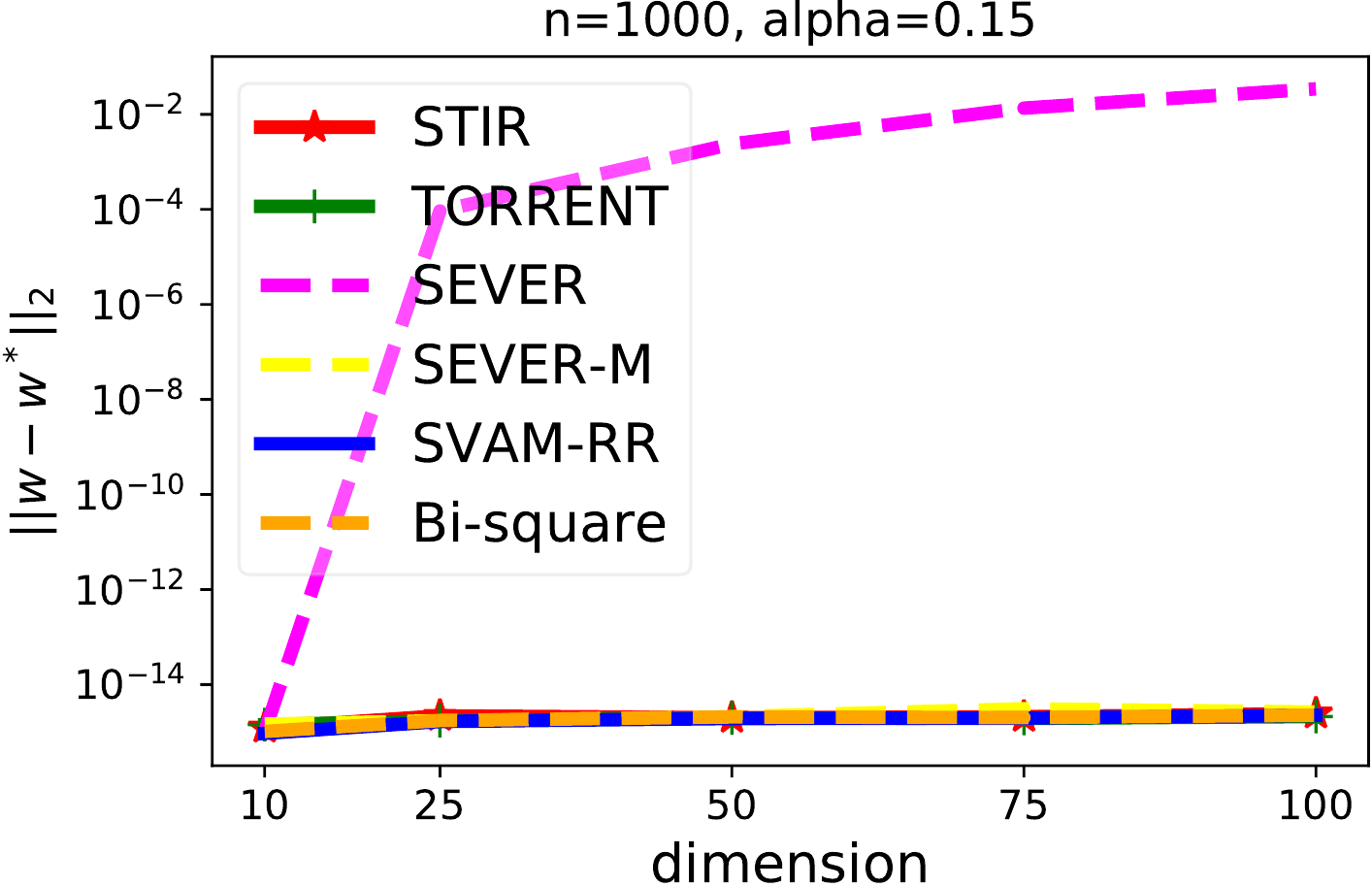}%
		\caption{\small $d$ Sensitivity}
	\end{subfigure}
	\caption{\small \gem offers stable convergence and recovery superior to competitor algorithms for a wide range of hyperparameters $\beta_1, \step$, corruption rates $\alpha = k/n$ and feature dimensionality $d$.}%
	\label{fig:hyperparam-sensitivity}%
\end{figure*}

\textbf{Initialization}: \gem needs to be invoked with $\hvw^1,\beta_1$ that satisfy the requirements of Thm~\ref{thm:gem-main} and small enough $\step$. If we initialize at the origin i.e. $\hvw^1 = \vzero$, then Theorem~\ref{thm:gem-main}'s requirement translates to $\beta_1 \leq \frac1{\norm{\hvw^1 - \vwo}_2^2}$ i.e. we need only find a small enough $\beta_1$. Thus, \gem needs to tune two scalars $\step,\beta_1$ to take small enough values which it does as described below. In practice, \gem offered stable performance for a wide range of $\beta_1, \step$ (see Fig~\ref{fig:hyperparam-sensitivity}).

\textbf{Hyperparameter Tuning}: \gem's two hyperparameters $\beta_1, \step$ were tuned using a held-out validation set. As the validation data could also contain corruptions, validation error was calculated by rejecting the top $\alpha$ fraction of validation points with the highest prediction error. The true value of $\alpha$ was provided to competitor algorithms as a handicap but not to \gem. Thus, $\alpha$ itself was treated as a (third) hyperparameter for \gem.

\section{Robust GLM Applications with \gem}
\label{sec:me-rr-lr}

This section adapts \gem to robust least-squares/gamma/logistic regression and robust mean estimation and establishes breakdown points and LWSC/LWLC guarantees for their respective $\tilde Q_\beta$ functions (see Defn~\ref{defn:lwsc-lwss}). We refer the reader to \S\ref{sec:intro} for definitions of \textbf{partially/fully adaptive} adversaries.

\textbf{Robust Least Squares Regression.} We have $n$ data points $(\vx^i, y_i)$, $\vx^i \in \bR^d$ sampled from a subGaussian distribution $\cD$ over $\bR^d$. We consider the \textbf{hybrid corruption} setting where on the $G = (1-\alpha)\cdot n$ ``good'' data points, we get labels $y_i = \ip{\vwo}{\vx^i} + \epsilon_i$ with Gaussian noise $\epsilon_i \sim \cN(0,\frac1\betao)$ with variance $\frac1\betao$ added. On the remaining $B = \alpha\cdot n$ ``bad'' points, we get adversarially corrupted labels $\tilde y_i = \ip{\vwo}{\vx^i} + b_i$ where $b_i \in \bR$ is chosen by the adversary. Note that $b_i$ can be unbounded. We also consider the \textbf{pure corruption} setting where clean points receive no Gaussian noise (note that this corresponds to $\betao = \infty$). \gemrr (Alg.~\ref{algo:gem-rr}) adapts \gem to the task of robust regression. 

\begin{theorem}[Partially Adaptive Adversary]
\label{thm:rr-main}
For \emph{hybrid corruptions} by a partially adaptive adversary with corruption rate $\alpha \leq 0.18$, there exist $\step > 1$ s.t. with probability at least $1 - \exp(-\Om d)$, LWSC/LWLC properties are satisfied for the $\tilde Q_\beta$ function for $\beta$ values as large as $\beta_{\max} = \bigO{\betao \min\bc{\frac1{\alpha^{2/3}}, \sqrt\frac n{d\log(n)}}}$. If initialized with $\hvw^1, \beta^1$ s.t. $\beta_1\cdot\norm{\hvw^1 - \vwo}_2^2 \leq 1$, \gemrr assures $\norm{\hvw^T - \vwo}_2^2 \leq \bigO{\frac1\betao\max\bc{\alpha^{2/3}, \sqrt\frac{d\log(n)}n}}$ within $T \leq \bigO{\log\frac n{\beta^1}}$ iterations. For \emph{pure corruptions} by a partially adaptive adversary, we have $\beta_{\max} = \infty$ and thus, for any $\epsilon > 0$, \gemrr assures $\norm{\hvw^T - \vwo}_2^2 \leq \epsilon$ within $T \leq \bigO{\log\frac1{\epsilon\beta^1}}$ iterations.
\end{theorem}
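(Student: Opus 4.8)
The plan is to derive Theorem~\ref{thm:rr-main} from the generic \gem guarantee (Theorem~\ref{thm:gem-main}) by verifying the LWSC/LWLC properties of Definition~\ref{defn:lwsc-lwss} for the Gaussian likelihood with the claimed value of $\beta_{\max}$. First I would specialize the \gem machinery. Since $\log\P{y_i\cond\ip\vw{\vx^i}}$ equals $-\tfrac12\br{y_i-\ip\vw{\vx^i}}^2$ up to an additive constant, the transformed weighted MLE is just a weighted least-squares objective, $\tilde Q_\beta(\vw\cond\vu) = c + \tfrac12\sum_{i=1}^n s_i(\vu)\br{y_i-\ip\vw{\vx^i}}^2$ with weights $s_i(\vu) = \sqrt{\beta/2\pi}\exp\br{-\tfrac\beta2\br{y_i-\ip\vu{\vx^i}}^2}$. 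Two facts then follow immediately: $\hvw^{t+1}$ has a closed form (a re-weighted least-squares estimate), and the Hessian $\nabla^2\tilde Q_\beta(\cdot\cond\vu) = \sum_i s_i(\vu)\,\vx^i(\vx^i)^\top$ does not depend on the evaluation point, so LWSC collapses to a uniform-over-$\vu$ spectral lower bound. Writing the residual at $\vwo$ as $\epsilon_i$ for a good point and $b_i$ for a bad point, the LWLC quantity decomposes as $\nabla\tilde Q_\beta(\vwo\cond\vu) = -\sum_{i\in G}s_i(\vu)\epsilon_i\vx^i - \sum_{i\in B}s_i(\vu)b_i\vx^i$.

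For LWSC I would discard the (non-negative) bad-point terms and lower bound $\sum_{i\in G}s_i(\vu)\vx^i(\vx^i)^\top$. Since $\vu\in\cB_2\br{\vwo,\sqrt{1/\beta}}$, the quantity $\sum_{i\in G}\ip{\vu-\vwo}{\vx^i}^2 = (\vu-\vwo)^\top\br{\sum_{i\in G}\vx^i(\vx^i)^\top}(\vu-\vwo)$ is $\bigO{n/\beta}$, so only a small fraction of good points can have $\beta\ip{\vu-\vwo}{\vx^i}^2$ large; combined with the fact that $\abs{\epsilon_i}$ is $\bigO{1/\sqrt{\min\bc{\beta,\betao}}}$ for a constant fraction of good points, a constant fraction of them have residual $\bigO{1/\sqrt{\min\bc{\beta,\betao}}}$ and hence weight $\Om{\sqrt{\min\bc{\beta,\betao}}}$. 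A matrix concentration bound for $\sum_i\vx^i(\vx^i)^\top$ restricted to such subsets --- this is where the requirement $n = \Om{d\log d}$ and the failure probability $\exp(-\Om d)$ enter --- together with a covering-net argument for uniformity over $\vu$, yields $\lambda_\beta = \Om{n\sqrt{\min\bc{\beta,\betao}}}$ (up to $\lambda_{\min}$ of the covariance of $\cD$).

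The crux is the uniform bound on the LWLC quantity $\Lambda_\beta$, which I would split into the stochastic and adversarial parts. For the adversarial part the essential observation is a \emph{self-bounding} property of the weights: with $\rho_i := y_i-\ip\vu{\vx^i}$ the bad-point residual and $r_i := \ip{\vu-\vwo}{\vx^i}$, one has $s_i(\vu)\abs{b_i} = \sqrt{\beta/2\pi}\,e^{-\beta\rho_i^2/2}\abs{\rho_i+r_i} \leq \tfrac1{\sqrt{2\pi e}} + \sqrt{\beta/2\pi}\,\abs{r_i}$, because $\sqrt\beta\abs{\rho_i}e^{-\beta\rho_i^2/2}$ is globally bounded --- so however large the adversary makes $\abs{b_i}$, the weight shrinks fast enough to neutralize it. Summing over the $\alpha n$ corrupted indices, projecting onto an arbitrary unit direction, and using Cauchy--Schwarz together with a subset-smoothness estimate $\sup_{\abs S\le\alpha n}\lambda_{\max}\br{\sum_{i\in S}\vx^i(\vx^i)^\top} = \softO{\alpha n}$ (again valid once $n=\Om{d\log d}$) controls the adversarial part; balancing the resulting ratio $\Lambda_\beta/\lambda_\beta$ against the admissible range of $\beta$ produces the $\alpha$-dependent factor in $\beta_{\max}$. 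For the stochastic part I would exploit that $s_i(\vu)\epsilon_i$ is mean-zero in $\epsilon_i$ (exactly when $r_i=0$, approximately otherwise) and that $\abs{s_i(\vu)\epsilon_i}$ is uniformly bounded, then apply a vector Bernstein/Hoeffding inequality with a net over $\vu$; this contributes the $\sqrt{d\log n/n}$ factor.

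Finally I would feed $\lambda_\beta,\Lambda_\beta$ into Theorem~\ref{thm:gem-main}: the initialization hypothesis $\beta_1\norm{\hvw^1-\vwo}_2^2\le 1$ is assumed, and choosing $\step>1$ small enough makes the per-step recursion $\beta_{t+1}\norm{\hvw^{t+1}-\vwo}_2^2 \leq \step\,\beta_t\br{\Lambda_{\beta_t}/\lambda_{\beta_t}}^2 \leq 1$ close for every $\beta_t\le\beta_{\max}$; running until $\beta_T\approx 1/\epsilon<\beta_{\max}$ then gives the error and iteration-count bounds after substituting $\epsilon=\Theta(1/\beta_{\max})$. The pure-corruption setting is the $\betao\to\infty$ limit: all good residuals vanish so the stochastic term disappears, and since the self-bounding inequality above is uniform in $\beta$ the adversarial bound does not deteriorate as $\beta\to\infty$, so $\beta_{\max}=\infty$ and $\epsilon$ can be driven to $0$. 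I expect the main obstacle to be exactly the uniform-over-$\cB_2\br{\vwo,\sqrt{1/\beta}}$ control of the adversarial gradient: one must handle the adversary's choice of $b_i$ (via the self-bounding inequality), the worst direction in $\bR^d$, and the supremum over $\vu$ simultaneously, while keeping $d$ out of the $\alpha$-dependent part of $\beta_{\max}$ --- this is what forces the subset-strong-convexity/smoothness bounds, the $n=\Om{d\log d}$ sample size, and pins down the explicit breakdown constant $\alpha\le 0.18$.
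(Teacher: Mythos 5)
Your proposal follows the same overall architecture as the paper (specialize $\tilde Q_\beta$ to weighted least squares, lower-bound the good-point weighted Gram matrix for LWSC, split the gradient at $\vwo$ into a good-noise part and a bad-corruption part for LWLC, use a self-bounding weight-times-corruption estimate plus nets, then feed $\Lambda_\beta/\lambda_\beta$ into Theorem~\ref{thm:gem-main}, with the pure-corruption case as the $\betao\to\infty$ limit), and your bad-point inequality $s_i(\vu)\abs{b_i}\leq \frac1{\sqrt{2\pi e}}+\sqrt{\beta/2\pi}\,\abs{r_i}$ is exactly the paper's case-analysis idea in simpler form. However, there is a genuine gap in your treatment of the good-point term. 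You propose to treat $s_i(\vu)\epsilon_i$ as ``mean-zero in $\epsilon_i$ (exactly when $r_i=0$, approximately otherwise)'' and to control it by a vector Bernstein bound around zero, contributing only a $\sqrt{d\log n/n}$ fluctuation. In the hybrid model this is not the dominant effect: because the weight $s_i(\vu)$ is a function of $\epsilon_i$, the conditional mean is nonzero, and the Gaussian change-of-measure computation (Lemma~\ref{lem:change-of-exp}, used inside Lemma~\ref{lem:rr-LWLC}) gives $\E{s_i\epsilon_i\vx^i}\propto \frac{\beta}{\beta+\betao}\,\E{\vx^i(\vx^i)^\top}\vDelta$ with $\vDelta=\vu-\vwo$. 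After dividing by $\lambda_\beta$ this bias contributes a factor $\approx\frac\beta{\beta+\betao}$ to the per-step contraction, and it is precisely the balance of this term against the bad-point term $\sim\alpha\sqrt{1+\beta/\betao}$ (roughly $\alpha\sqrt\rho\lesssim 1/\rho$ with $\rho=1+\beta/\betao$) that yields $\beta_{\max}=\bigO{\betao\min\bc{\alpha^{-2/3},\sqrt{n/(d\log n)}}}$ and hence the stated error. If the good-point term were only a mean-zero fluctuation, the bad-point constraint alone would permit $\beta\sim\betao\alpha^{-2}$ and ``prove'' an $\alpha^2/\betao$ error floor, which is not what the theorem asserts and is not attainable -- the bias is real. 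So this step of your plan would fail to establish the claimed $\beta_{\max}$; you must compute the conditional expectation of the weighted noise and concentrate around it, as the paper does.

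Two secondary issues. First, for the partially adaptive adversary the corrupted set $B$ is fixed, so the paper only needs $\lambda_{\max}(X_BX_B^\top)\leq 1.01B$ for that fixed set (Lemma~\ref{lem:ssc-sss}); your supremum over all subsets of size $\alpha n$ is the device reserved for the fully adaptive case (Theorem~\ref{thm:rr-fully-adaptive}) and injects an extra $\sqrt{\log(1/\alpha)}$ factor that would degrade both the $0.18$ breakdown point and the $\alpha^{2/3}$ rate. Second, your LWSC argument (``a constant fraction of good points retain weight $\Om{\sqrt{\min\bc{\beta,\betao}}}$'') gives the right scaling but cannot produce the explicit constant $0.18$; the paper instead evaluates $\E{s_i\ip{\vx^i}{\vv}^2}$ exactly for Gaussian covariates and noise, obtaining the factor $\sqrt{\betao/(\beta+\betao)}\br{1+\frac{\beta\betao}{\beta+\betao}\norm{\vDelta}_2^2}^{-3/2}$, and then concentrates via sub-exponential tail bounds and nets; the same exact constants are needed in the bad-point bound (optimization over the threshold $k$ with $\kappa=0.47$) to reach $\alpha\leq 0.1866$. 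Your pure-corruption reasoning ($\betao\to\infty$, good-point term vanishes, the self-bounding estimate is uniform in $\beta$, hence $\beta_{\max}=\infty$ and exact recovery) is correct and matches the paper.
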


Note that in the pure corruption setting, \gem assures exact recovery of $\vwo$ simply by running the algorithm long enough. This is not a contradiction since in this case, the LWSC/LWSS properties can be shown to hold for all values of $\beta < \infty$ since we effectively have $\betao = \infty$ in this case. Thm~\ref{thm:rr-main} holds against a partially adaptive adversary but can be extended to a fully adaptive adversary as well but at the cost of a worse breakdown point (see Thm~\ref{thm:rr-fully-adaptive} below). Note that \gem continues to assure exact recovery of $\vwo$.

\begin{theorem}[Fully Adaptive Adversary]
\label{thm:rr-fully-adaptive}
For pure corruptions by a fully adaptive adversary with corruption rate $\alpha \leq 0.0036$, LWSC/LWLC are satisfied for all $\beta \in (0,\infty)$ i.e. $\beta_{\max} = \infty$ and for any $\epsilon > 0$, \gemrr assures $\norm{\hvw^T - \vwo}_2^2 \leq \epsilon$ within $T \leq \bigO{\log\frac 1{\epsilon\beta^1}}$ iterations if initialized as described in the statement of Theorem~\ref{thm:rr-main}.
\end{theorem}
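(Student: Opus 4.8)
The plan is to reduce the statement to verifying the LWSC/LWLC conditions of Definition~\ref{defn:lwsc-lwss} for the least-squares $\tilde Q_\beta$ function for \emph{every} $\beta\in(0,\infty)$ and then invoke Theorem~\ref{thm:gem-main} with $\beta_{\max}=\infty$. For the Gaussian likelihood the $\beta$-variance-altered weights are $s_i(\vu)=\sqrt{\beta/2\pi}\exp\br{-\tfrac{\beta}{2}(y_i-\ip{\vu}{\vx^i})^2}$, so up to an additive constant $\tilde Q_\beta(\vw\cond\vu)=\tfrac12\sum_{i=1}^n s_i(\vu)(y_i-\ip{\vw}{\vx^i})^2$, which gives $\nabla\tilde Q_\beta(\vw\cond\vu)=-\sum_i s_i(\vu)(y_i-\ip{\vw}{\vx^i})\vx^i$ and $\nabla^2\tilde Q_\beta(\vw\cond\vu)=\sum_i s_i(\vu)\,\vx^i(\vx^i)^\top$, the latter independent of $\vw$. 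Throughout, $\vu,\vv\in\cB_2(\vwo,1/\sqrt\beta)$; write $G$ for the $(1-\alpha)n$ clean points, on which $y_i=\ip{\vwo}{\vx^i}$ \emph{exactly} (the corruption being pure), and $B$ for the $\alpha n$ corrupted points, on which $y_i=\ip{\vwo}{\vx^i}+b_i$ for adversarial $b_i\in\bR$.

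For \textbf{LWLC}, the crucial simplification is that $y_i-\ip{\vwo}{\vx^i}=0$ for $i\in G$, so $\nabla\tilde Q_\beta(\vwo\cond\vu)=-\sum_{i\in B}s_i(\vu)\,b_i\,\vx^i$ involves only the corrupted points. Writing $b_i=t_i-g_i$ with $t_i:=y_i-\ip{\vu}{\vx^i}$ the residual at $\vu$ and $g_i:=\ip{\vwo-\vu}{\vx^i}$, I would combine the elementary bound $\sqrt{\beta/2\pi}\,e^{-\beta t^2/2}\abs{t}\le c_0$ valid for all $t\in\bR$ with $c_0$ independent of $\beta$, the bound $s_i(\vu)\le\sqrt{\beta/2\pi}$, and $\sum_{i\in B}g_i^2\le\lambda_{\max}\br{\sum_{i\in B}\vx^i(\vx^i)^\top}\norm{\vwo-\vu}_2^2\le\mu_\alpha n/\beta$ to bound $\norm{\nabla\tilde Q_\beta(\vwo\cond\vu)}_2=\sup_{\norm{\vv}_2=1}\abs{\sum_{i\in B}s_i(\vu)b_i\ip{\vx^i}{\vv}}$ by $\Lambda_\beta=\bigO{(\sqrt{\alpha\mu_\alpha}+\mu_\alpha)\,n}$, which is $\Theta(1)$ in $\beta$. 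Here $\mu_\alpha n$ is a \emph{subset strong smoothness} bound on $\lambda_{\max}\br{\sum_{i\in S}\vx^i(\vx^i)^\top}$ holding \emph{uniformly over all} $S$ with $\abs{S}\le\alpha n$ --- against a fully adaptive adversary $B$ is picked after seeing all the data, so a fixed-set concentration bound does not suffice --- which for subGaussian design and $n=\Om{d\log d}$ holds with probability $1-\exp(-\Om d)$ and $\mu_\alpha=\bigO{\alpha\log(1/\alpha)}+\softO{d/n}$; the extra $\log(1/\alpha)$ over the partially adaptive case is the cost of a union bound over the $\binom{n}{\alpha n}$ corruption patterns. Uniformity of all these estimates over $\vu$ in the ball is handled by a standard net argument using Lipschitzness of $s_i(\vu)$ in $\vu$.

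For \textbf{LWSC}, I would discard the clean points with large residual: $\nabla^2\tilde Q_\beta(\vv\cond\vu)=\sum_i s_i(\vu)\vx^i(\vx^i)^\top\succeq\sum_{i\in G}s_i(\vu)\vx^i(\vx^i)^\top$, and for $i\in G$ the residual at $\vu$ is exactly $g_i=\ip{\vwo-\vu}{\vx^i}$ with $\sum_{i\in G}g_i^2\le\mu_G n/\beta$ for a subset-smoothness constant $\mu_G=\bigO1$ on the clean set. Hence at most $\alpha'n$ clean points can have $\beta g_i^2>\mu_G/\alpha'$, so on the remaining clean subset $G'$, of size $\abs{G'}\ge(1-\alpha-\alpha')n$, the weight obeys $s_i(\vu)\ge\sqrt{\beta/2\pi}\,e^{-\mu_G/(2\alpha')}=:c(\alpha')\sqrt\beta$. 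A \emph{subset strong convexity} bound, again uniform over all subsets of that size, then gives $\lambda_{\min}\br{\sum_{i\in G'}\vx^i(\vx^i)^\top}\ge\gamma_\alpha n$ with probability $1-\exp(-\Om d)$, where in the fully adaptive regime $\gamma_\alpha=1-\alpha-\alpha'-\bigO{\sqrt{\alpha\log(1/\alpha)}}-\softO{d/n}$, whence $\lambda_\beta\ge c(\alpha')\gamma_\alpha\,n\sqrt\beta$. This argument goes through for \emph{every} $\beta>0$ because the ball of radius $1/\sqrt\beta$ shrinks at exactly the rate that keeps $\sum_{i\in G}g_i^2=\bigO{n/\beta}$, so $\beta_{\max}=\infty$.

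Finally, Theorem~\ref{thm:gem-main}'s machinery converts the exact step $\hvw^{t+1}=\arg\min_\vw\tilde Q_{\beta_t}(\vw\cond\hvw^t)$, via first-order optimality together with $\lambda_{\beta_t}$-LWSC and $\Lambda_{\beta_t}$-LWLC, into $\norm{\hvw^{t+1}-\vwo}_2\le\Lambda_{\beta_t}/\lambda_{\beta_t}$, and for the induction to survive at the next scale $\beta_{t+1}=\step\beta_t$ one needs $\Lambda_\beta/\lambda_\beta\le1/\sqrt{\step\beta}$. Substituting the two bounds gives $\Lambda_\beta/\lambda_\beta=\bigO{(\sqrt{\alpha\mu_\alpha}+\mu_\alpha)/(c(\alpha')\gamma_\alpha\sqrt\beta)}$, so a valid increment $\step>1$ exists exactly when the subset-convexity constant dominates the subset-smoothness constant, i.e. when $c(\alpha')\gamma_\alpha$ exceeds a fixed multiple of $\sqrt{\alpha\mu_\alpha}+\mu_\alpha$; optimizing the free parameter $\alpha'$ and tracking the (fully-adaptive, hence worsened) subset constants is what produces the numerical threshold $\alpha\le0.0036$. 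Since this holds for all $\beta\in(0,\infty)$, Theorem~\ref{thm:gem-main} applies for every $\epsilon>0=1/\beta_{\max}$; running \gemrr from the initialization of Theorem~\ref{thm:rr-main} until the scale reaches $\beta_T=\step^{T-1}\beta_1\ge1/\epsilon$ takes $T\le\bigO{\log\frac1{\epsilon\beta_1}}$ iterations and then forces $\norm{\hvw^T-\vwo}_2^2\le1/\beta_T\le\epsilon$, i.e. exact recovery in the limit. The main obstacle is obtaining the uniform-over-subsets subset strong convexity and smoothness inequalities with constants sharp enough to leave room for $\step>1$: the fully adaptive adversary forces the union bound over all $\binom{n}{\alpha n}$ corruption patterns, which simultaneously degrades $\gamma_\alpha$ and inflates $\mu_\alpha$, and this interacts delicately with the exponentially small weight floor $c(\alpha')$, so it is the bookkeeping, rather than any single clever step, that pins down the breakdown point.
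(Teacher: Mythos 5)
Your plan is sound in its overall architecture and matches the paper's on the points that actually matter for the fully adaptive setting: both redefine/verify LWSC and LWLC \emph{uniformly over all possible corruption patterns} (the paper formalizes this as Definition~\ref{defn:lwsc-LWLC-adaptive} and pays for it with a union bound over the $\binom{n}{\alpha n}$ choices of the good set, exactly the $\exp(\alpha n(1-\log\alpha))$ cost and extra $\sqrt{\log(1/\alpha)}$-type degradation you anticipate, implemented via \cite[Lemma 15]{BhatiaJK2015} for the bad-point Gram matrix and via a union bound over $\cT_{\bar\alpha}$ applied to Lemma~\ref{lem:wsc-wss}, forcing $\zeta \geq \Om{\sqrt{\alpha - \alpha\log\alpha}}$); both observe that in the pure-corruption ($\betao = \infty$) setting the resulting ratio $\sqrt\beta\,\Lambda_\beta/\lambda_\beta$ is independent of $\beta$, so $\beta_{\max} = \infty$ and Theorem~\ref{thm:gem-main}'s invariant $\beta_t\norm{\hvw^t-\vwo}_2^2 \leq 1$ yields $T \leq \bigO{\log\frac1{\epsilon\beta_1}}$. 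Where you genuinely diverge is in how the two sides of the ratio are bounded. For LWLC you split $b_i = t_i - g_i$ and use the pointwise bound $\sqrt{\beta}e^{-\beta t^2/2}\abs{t} \leq c_0$, whereas the paper runs the case analysis of Lemma~\ref{lem:rr-LWLC} (optimizing a threshold $k$ to get the $\bs{(1.01\kappa^2)^{1/3}+e^{-1/3}}^{3/2}$ factor); these are comparable in strength. For LWSC your argument is TORRENT-style: discard the $\alpha' n$ clean points with large residual and use a hard weight floor $\sqrt{\beta/2\pi}\,e^{-\mu_G/(2\alpha')}$ plus uniform subset strong convexity. The paper instead computes the expectation $\E{s\,\ip{\vx}{\vv}^2}$ exactly for Gaussian covariates (Lemmata~\ref{lem:rr-lwsc} and \ref{lem:point-exp}), obtaining the clean factor $(1+\beta\norm{\vDelta}_2^2)^{-3/2} \geq (1+\kappa^2)^{-3/2}$, and then concentrates with a net over models. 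Your route buys generality (it needs no exact Gaussian integral and works for any design admitting subset strong convexity), while the paper's buys sharper constants.

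The one substantive caveat concerns the claimed number $0.0036$. The theorem asserts this specific breakdown point, and the paper obtains it by combining the $(1+\kappa^2)^{-3/2}$ expectation constant at $\kappa = 0.47$, the Bhatia et al.\ subset bound $1+3.01e\sqrt{6\log(e/\alpha)}$, and $\zeta \rightarrow \sqrt{\alpha-\alpha\log\alpha}$, reducing everything to a single scalar inequality in $\alpha$. Your exponential floor $e^{-\mu_G/(2\alpha')}$ is strictly lossier than the paper's expectation-based constant (for any admissible $\alpha'$ the floor is well below $(1+\kappa^2)^{-3/2}$), so while your bookkeeping will certainly produce \emph{some} constant breakdown point and hence the qualitative statement ($\beta_{\max}=\infty$, exact recovery, $T=\bigO{\log\frac1{\epsilon\beta_1}}$), it is unlikely to reproduce the stated threshold $\alpha \leq 0.0036$ without switching to the sharper expectation-based lower bound on the weighted Hessian. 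If the exact constant is part of what you must prove, that substitution is needed; otherwise your argument stands as a valid, somewhat more elementary alternative.
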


\textbf{Establishing LWSC/LWLC:} In the appendices, Lemmata~\ref{lem:rr-lwsc} and \ref{lem:rr-LWLC} establish LWSC/LWLC properties for robust least squares regression while Theorems~\ref{repthm:rr-main} and \ref{thm:rr-fully-adaptive-restated} establish the breakdown points and existence of suitable increments $\step > 1$. Handling a fully adaptive adversary requires making mild modifications to the notions of LWSC/LWLC, details of which are present in Appendix~\ref{app:rr-adaptive}.

\textbf{Model Recovery and Breakdown Point:} For pure corruption, \gemrr offers exact model recovery against partially and fully adaptive adversaries as it assures $\norm{\hvw^T - \vwo}_2^2 \leq \epsilon$ for any $\epsilon > 0$ if \gemrr is executed long enough. For hybrid corruption where even ``clean'' points receive Gaussian noise with variance $\frac1\betao$, \gemrr assures $\norm{\hvw^T - \vwo}_2^2 \leq \bigO{\frac1\betao\sqrt\frac{d\log(n)}n}$ as $\alpha \rightarrow 0$ i.e. $\norm{\hvw^T - \vwo}_2^2 \rightarrow 0$ as $n \rightarrow \infty$ assuring consistent recovery. This significantly improves previous results by \cite{BhatiaJK2015,MukhotyGJK2019} which offer $\bigO{\frac1\betao}$ error even if $\alpha \rightarrow 0$ and $n \rightarrow \infty$. Note that \gemrr has a superior breakdown point (allowing upto 18\% corruption rate) against an oblivious adversary. The breakdown point deteriorates as expected (still allowing upto 0.36\% corruption rate) against a fully adaptive adversary. We now present analyses for other GLM problems.

\begin{figure}
\centering
\begin{adjustbox}{max width=0.9\linewidth}
\begin{minipage}{\linewidth}
\begin{algorithm}[H]
	\caption{\small{\gemrr: Robust Least Squares Regression}}
	\label{algo:gem-rr}
	\begin{algorithmic}[1]
		{\small
			\REQUIRE Data $\bcn{(\vx^i, y_i)}_{i = 1}^n$, initial scale $\beta_1$, initial model $\hvw^1$, $\step$
			\ENSURE A model estimate $\hvw \approx \vwo$
			\FOR{$t = 1, 2, \ldots, T-1$}
				\STATE $s_i \leftarrow \exp\br{-\frac{\beta_t}2(y_i - \ip{\vx^i}{\hvw^t})^2}$
				\STATE $S \leftarrow \diag(s_1,\ldots,s_n)$
				\STATE $\hvw^{t+1} \leftarrow (XSX^\top)^{-1}(XS\vy)$
				\STATE $\beta_{t+1} \leftarrow \step \cdot \beta_t$
			\ENDFOR
			\STATE \textbf{return} {$\hvw^T$}
		}
	\end{algorithmic}
\end{algorithm}
\vspace*{-5ex}
\begin{algorithm}[H]
	\caption{\small{\gemme: Robust Mean Estimation}}
	\label{algo:gem-me}
	\begin{algorithmic}[1]
		{\small
			\REQUIRE Data $\bcn{\vx^i}_{i = 1}^n$, initial scale $\beta_1$, initial model $\hvmu^1$, $\step$
			\ENSURE A mean estimate $\hvmu \approx \vmuo$
			\FOR{$t = 1, 2, \ldots, T-1$}
				\STATE $s_i \leftarrow \exp\br{-\frac{\beta_t}2\norm{\vx^i - \hvmu^t}_2^2}$
				\STATE $\hvmu^{t+1} \leftarrow \br{\sum_{i=1}^ns_i}^{-1}\br{\sum_{i=1}^ns_i\vx^i}$
				\STATE $\beta_{t+1} \leftarrow \step \cdot \beta_t$
			\ENDFOR
			\STATE \textbf{return} {$\hvmu^T$}
		}
	\end{algorithmic}
\end{algorithm}
\vspace*{-5ex}
\begin{algorithm}[H]
	\caption{\small{\gemgam: Robust Gamma Regression}}
	\label{algo:gem-gam}
	\begin{algorithmic}[1]
		{\small
			\REQUIRE Data $\bcn{(\vx^i, y_i)}_{i = 1}^n$, initial scale $\beta_1$, initial model $\hvw^1$, $\step$
			\ENSURE A model estimate $\hvw \approx \vwo$
			\FOR{$t = 1, 2, \ldots, T-1$}
				\STATE $s_i \leftarrow \cG(y_i \cond \tilde\eta_{\beta_t}, \tilde\phi_{\beta_t})$ \COMMENT{see Table~\ref{tab:variance}}
				\STATE {$\hvw^{t+1} \leftarrow \arg\min \sum_{i=1}^ns_i\cdot\ell(\vw,\vx^i,y_i)$} where\\
				$\ell(\vw,\vx,y) = (1-\phi)^{-1}y\exp(\ip\vw\vx) - \ip\vw\vx$%
				\STATE $\beta_{t+1} \leftarrow \step \cdot \beta_t$
			\ENDFOR
			\STATE \textbf{return} {$\hvw^T$}
		}
	\end{algorithmic}
\end{algorithm}
\vspace*{-5ex}
\begin{algorithm}[H]
	\caption{\small{\gemlr: Robust Classification}}
	\label{algo:gem-lr}
	\begin{algorithmic}[1]
		{\small
			\REQUIRE Data $\bcn{(\vx^i, y_i)}_{i = 1}^n$, initial scale $\beta_1$, initial model $\hvw^1$, $\step$
			\ENSURE A model estimate $\hvw \approx \vwo$
			\FOR{$t = 1, 2, \ldots, T-1$}
				\STATE $s_i \leftarrow (1 + \exp(-\beta_t y_i\ip{\vx^i}{\hvw^t}))^{-1}$
				\STATE $\hvw^{t+1} \leftarrow \arg\min \sum_{i=1}^ns_i\cdot\ell(\vw,\vx^i,y_i)$ where\\
				{$\ell(\vw,\vx,y) = \log(1 + \exp(- y\ip{\vx}{\vw}))$}%
				\STATE $\beta_{t+1} \leftarrow \step \cdot \beta_t$
			\ENDFOR
			\STATE \textbf{return} {$\hvw^T$}
		}
	\end{algorithmic}
\end{algorithm}
\end{minipage}
\end{adjustbox}
\end{figure}

\textbf{Robust Gamma Regression.} The data generation and corruption model for gamma regression are slightly different given that the gamma distribution has support only over positive reals. First, the canonical parameter is calculated as $\eta_i = \exp(\ip\vwo{\vx^i})$ using which a clean label $y^i$ is generated. To simplify the analysis, we assume that $\norm\vwo_2 = 1$, $\phi = 0.5$, $\vx^i \sim \cN(\vzero, I)$. For the $G = (1-\alpha)\cdot n$ ``good'' points, labels are generated as $y_i = \exp(\ip\vwo{\vx^i})(1-\phi)$ i.e. the \emph{no-noise} model. For the remaining $B = \alpha\cdot n$ ``bad'' points, the label is corrupted as $\tilde y^i = y^i\cdot b_i$ where $b_i > 0$ is a positive real number (but otherwise arbitrary and unbounded). A multiplicative corruption makes more sense since the final label must be positive. \gemgam (Algorithm~\ref{algo:gem-gam}) adapts \gem to robust gamma regression. Due to the alternate canonical parameter used in gamma regression, the initialization requirement also needs to be modified to $\beta_1\cdot\br{\exp\br{\norm{\hvw^1 - \vwo}_2} - 1}^2 \leq 1$. However, the hyperparameter tuning strategy discussed in \S\ref{sec:gem} continues to apply.

\begin{theorem}
\label{thm:gam-main}
For data corrupted by a partially adaptive adversary with $\alpha \leq \frac{0.002}{\sqrt{d}}$, there exist $\step > 1$ s.t. with probability at least $1 - \exp(-\Om d)$, LWSC/LWLC conditions are satisfied for the $\tilde Q_\beta$ function for $\beta$ values as large as $\beta_{\max} = \bigO{1/\br{\exp\br{\bigO{\alpha \sqrt{d}}} - 1}}$. If initialized at $\hvw^1,\beta_1$ s.t. $\beta_1\cdot\br{\exp\br{\norm{\hvw^1 - \vwo}_2} - 1}^2 \leq 1$ and $\beta \geq 1$, \gemgam assures $\norm{\hvw^T - \vwo}_2 \leq \epsilon$ for any $\epsilon \geq \bigO{\alpha \sqrt{d}}$ within $T \leq \bigO{\log\frac1\epsilon}$ steps.
\end{theorem}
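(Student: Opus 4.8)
The plan is to derive Theorem~\ref{thm:gam-main} by instantiating the master convergence result Theorem~\ref{thm:gem-main} for the gamma likelihood, so the real content reduces to (a) verifying the LWSC/LWLC properties (Definition~\ref{defn:lwsc-lwss}) of the gamma $\tilde Q_\beta$ function in the presence of the $\alpha n$ multiplicative corruptions, with explicit $\lambda_\beta,\Lambda_\beta$ over $\beta\in(0,\beta_{\max}]$, and (b) showing these admit a scale increment $\step>1$ for which the per-iterate invariant propagates. Since gamma regression uses the non-canonical link $\eta=\exp(\ip\vw\vx)$, a preliminary step is to re-run Theorem~\ref{thm:gem-main}'s argument with the potential $\norm{\vw-\vwo}_2^2$ replaced throughout by $\br{\exp\br{\norm{\vw-\vwo}_2}-1}^2$; these are equivalent up to constants near $\vwo$, so the geometric-shrinkage bookkeeping is unchanged, which is exactly why the initialization condition becomes $\beta_1\br{\exp\br{\norm{\hvw^1-\vwo}_2}-1}^2\leq1$ and why one still gets $T=\bigO{\log(1/\epsilon)}$ iterations with the terminal error governed by $1/\beta_{\max}$.

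\textbf{Derivatives and LWSC.} With the loss $\ell(\vw,\vx,y)=(1-\phi)^{-1}y\exp(\ip\vw\vx)-\ip\vw\vx$ from Algorithm~\ref{algo:gem-gam} and the weights $s_i=\cG(y_i\cond\tilde\eta_\beta,\tilde\phi_\beta)$ from Table~\ref{tab:variance} (whose variance-altered canonical parameter $\tilde\eta_\beta$ is formed from $\vu$),
\begin{align*}
\nabla\tilde Q_\beta(\vw\cond\vu) &= \sum_i s_i\br{(1-\phi)^{-1}y_i\exp(\ip\vw{\vx^i})-1}\vx^i,\\
\nabla^2\tilde Q_\beta(\vw\cond\vu) &= \sum_i s_i\,(1-\phi)^{-1}y_i\exp(\ip\vw{\vx^i})\,\vx^i(\vx^i)^\top.
\end{align*}
For LWSC I would note that the Hessian is a nonnegatively weighted sum of PSD rank-one matrices, so the corrupted points can only help and may be dropped; for the clean points the no-noise model gives the identity $(1-\phi)^{-1}y_i\exp(\ip\vwo{\vx^i})=1$ (clean labels sit at the mode, which the transformation preserves), turning the coefficient into $s_i\exp(\ip{\vv-\vwo}{\vx^i})$. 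For $\vu,\vv\in\cB_2(\vwo,1/\sqrt\beta)$ each clean label lies at multiplicative distance $1+\bigO{1/\sqrt\beta}$ from the mode of the $\beta$-variance-altered gamma at $\vu$, so $s_i$ keeps the $\Theta(\sqrt\beta)$ peak-density scaling up to constants; taking expectations over $\vx^i\sim\cN(\vzero,I)$ yields a matrix $\succeq\mu_\beta I$ with $\mu_\beta=\Om{\sqrt\beta}$, and a matrix-Bernstein plus covering argument over pairs $\vu,\vv\in\cB_2(\vwo,1/\sqrt\beta)$ promotes this to $\lambda_\beta=\Om{n\sqrt\beta}$ with probability $1-\exp(-\Om d)$ once $n=\softOm{d}$. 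The delicacy is that the summands are heavy-tailed --- products of a log-normal factor $\exp(\ip\vu{\vx^i})$, a $\chi^2$ factor $\norm{\vx^i}_2^2$, and the bounded gamma weight --- so the sub-exponential-norm bookkeeping behind the covering argument must be done with care.

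\textbf{LWLC.} The clean points contribute \emph{exactly zero} to $\nabla\tilde Q_\beta(\vwo\cond\vu)$ by the same identity (this is what lets the method reduce the error arbitrarily in the corruption-free limit), leaving $\nabla\tilde Q_\beta(\vwo\cond\vu)=\sum_{i\in B}s_i(b_i-1)\vx^i$. The crux is a self-limiting estimate: writing $\rho_i$ for the multiplicative displacement of the corrupted label from the mode of the $\beta$-variance-altered gamma at $\vu$ (so $\rho_i=b_i\exp(\ip{\vu-\vwo}{\vx^i})$, and $b_i-1$ is controlled by $\rho_i$ since $\vu$ is near $\vwo$), the density bound $s_i\lesssim\sqrt\beta\,(\rho_i e^{1-\rho_i})^{\Om\beta}$ forces $s_i\abs{b_i-1}\lesssim\sqrt\beta\cdot\sup_{\rho>0}(\rho e^{1-\rho})^{\Om\beta}\abs{\rho-1}$ --- the adversary cannot make a corrupted point simultaneously influential and high-weight. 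Summing the $\alpha n$ bad contributions (their locations being oblivious, the $\vx^i$ are fresh Gaussians and the adversary can at most align the signs) via a covering argument over the unit sphere gives $\Lambda_\beta=\softO{\alpha n\sqrt d}$ with probability $1-\exp(-\Om d)$, the $\sqrt d$ entering through $\norm{\vx^i}_2\approx\sqrt d$, the $e^{\Om d}$ net, and the log-normal prefactor; tracking the precise $\beta$-dependence of this bound, together with the concentration requirements, is what caps the usable range of scales at a finite $\beta_{\max}$.

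\textbf{Combining.} By Theorem~\ref{thm:gem-main}'s per-iterate step, the invariant $\beta_t\br{\exp\br{\norm{\hvw^t-\vwo}_2}-1}^2\leq1$ propagates as long as $\step\beta_t\br{\exp\br{c\,\Lambda_{\beta_t}/\lambda_{\beta_t}}-1}^2\leq1$; substituting $\Lambda_\beta/\lambda_\beta=\softO{\alpha\sqrt d/\sqrt\beta}$ reduces this to a condition of the form $\step\cdot\softO{\alpha^2 d}\leq1$ over all $\beta\leq\beta_{\max}$, which (i) admits a valid $\step>1$ precisely when $\alpha\sqrt d$ is a small enough absolute constant --- giving $\alpha\leq0.002/\sqrt d$ after the constants are tracked --- and (ii) identifies $\beta_{\max}=\bigO{1/\br{\exp\br{\bigO{\alpha\sqrt d}}-1}}$, so that $\norm{\hvw^T-\vwo}_2\leq\epsilon$ for any $\epsilon\geq\bigO{\alpha\sqrt d}$ within $T=\bigO{\log(1/\epsilon)}$ steps. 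I expect the main obstacle to be the LWLC step: proving the self-limiting bound with the correct $\beta$-dependence from the exact variance-altered gamma density, and then controlling the heavy-tailed (log-normal $\times$ $\chi^2$) empirical process produced by summing over adversarially chosen corruptions --- this is where the extra $\sqrt d$ in the gamma breakdown point originates and where the analysis is most sensitive to constants.
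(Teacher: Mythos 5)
Your overall architecture matches the paper's: invoke Theorem~\ref{thm:gem-main} with the modified invariant $\beta_t\br{\exp\br{\norm{\hvw^t-\vwo}_2}-1}^2\leq 1$, establish LWSC from the good points only (the clean labels sit at the mode, so the Hessian coefficient becomes $s_i\exp(\ip{\vv-\vwo}{\vx^i})$, exactly as in Lemma~\ref{lem:gam-lwsc}), establish LWLC from the bad points only since $b_i=1$ kills the clean-point gradient (Lemma~\ref{lem:gam-LWLC}), and bound $s_i\abs{b_i-1}$ by a self-limiting estimate of the form $\sup_x x^a e^{-x}=a^ae^{-a}$. The differences in setup (you take $\vx^i\sim\cN(\vzero,I)$ and matrix-Bernstein; the appendix analysis actually works with covariates uniform on $S^{d-1}$, McDiarmid plus a covering net, $\E{\ip\vv{\vx^i}^2}=1/d$, and $\norm{X_B}_2\leq\sqrt{B/d}$) are a matter of route, not of substance.

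The genuine gap is in your quantitative $\beta$-accounting, and it breaks the final "Combining" step. You claim $\lambda_\beta=\Om{n\sqrt\beta}$ but $\Lambda_\beta=\softO{\alpha n\sqrt d}$ with no $\beta$-growth, so $\Lambda_\beta/\lambda_\beta=\softO{\alpha\sqrt d/\sqrt\beta}$. Plugging this into the propagation condition $\exp\br{2\Lambda_{\beta_t}/\lambda_{\beta_t}}-1\leq 1/\sqrt{\step\beta_t}$ gives a requirement of the form $\alpha\sqrt d\lesssim 1/\sqrt\step$ that is \emph{independent of} $\beta$: under your own bounds the invariant would propagate for every $\beta$, i.e.\ $\beta_{\max}=\infty$ and exact recovery, which contradicts the finite $\beta_{\max}=\bigO{1/\br{\exp\br{\bigO{\alpha\sqrt d}}-1}}$ and the error floor $\epsilon\geq\bigO{\alpha\sqrt d}$ that you then assert in (ii). In the paper's proof the cap arises precisely because the LWLC constant grows with $\beta$ at the same rate as the LWSC constant: the adversary's weight on a bad point is bounded only by $m(\beta_t)$, which by Stirling scales like the peak density $\Theta(\sqrt{\beta_t})$ (the bound used is $b_is_i\leq m(\beta_t)$, $s_i\leq m(\beta_t)$, not your sharper $s_i\abs{b_i-1}=\bigO1$), while $\lambda_\beta\sim\frac Gd\,\kappa g_{\min}\sim\frac Gd\sqrt{\beta}$; hence $2\Lambda_\beta/\lambda_\beta\to\Theta(\alpha\sqrt d)$, a constant bounded away from zero, and comparing it with $\ln\br{1+\frac1{\step\beta_t}}\to0$ is exactly what forces $\beta\leq\beta_{\max}$ and the terminal radius $\ln\br{1+1/\beta_{\max}}=\Theta(\alpha\sqrt d)$. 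So you must either redo the LWLC bound carrying the $\Theta(\sqrt\beta)$ factor (and with uniformity over $\vu$ in the shrinking ball, which is where your sharper pointwise supremum is not obviously available for free), or, if you keep your sharper per-point estimate, actually derive where a finite $\beta_{\max}$ and the $\alpha\sqrt d$ floor would then come from --- as written, the stated conclusion does not follow from the bounds you propose, and the breakdown constant $0.002/\sqrt d$ (obtained in the paper by fixing $\zeta=1/2$, $R=1$ in the explicit expression) is likewise asserted rather than tracked.
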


\textbf{Model recovery, Consistency, Breakdown pt.} It is notable that prior results in literature do not offer any breakdown point results for gamma regression. We find that Thm~\ref{thm:gam-main} requires $\beta_1\cdot\br{\exp\br{\norm{\hvw^1 - \vwo}_2} - 1}^2 \leq 1$ and $\beta \geq 1$ which imply $\norm{\hvw^1 - \vwo}_2 \leq \ln 2$. This is in contrast to Thms~\ref{thm:rr-main} and \ref{thm:rr-fully-adaptive} that allow any initial $\hvw^1$ so long as $\beta_1, \step$ are sufficiently small. \gemgam guarantees convergence to a region of radius $\bigO{\alpha\sqrt d}$ around $\vwo$ whereas Thms~\ref{thm:rr-main} and \ref{thm:rr-fully-adaptive} assure exact recovery. However, these do not seem to be artifacts of the proof technique. In experiments, \gemgam did not offer vanishingly small recovery errors and did indeed struggle if initialized with $\beta_1 \ll 1$. It may be the case that there exist lower bounds preventing exact recovery for gamma regression similar to mean estimation.

\textbf{Robust Mean Estimation.} We have $n$ data points of which the set $G$ of $(1-\alpha)\cdot n$ ''good'' points are generated from a $d$-dimensional spherical Gaussian $\vx^i \sim \cN(\vmu, \Sigma)$ i.e. $\vx^i = \vmu + \vepsilon^i$ where $\vepsilon^i \sim \cN(\vzero, \Sigma)$ and $\Sigma = \frac1\betao\cdot I$ for some $\betao > 0$. The rest are the set $B$ of $\alpha\cdot n$ ''bad'' points that are corrupted by an adversary i.e. $\tilde\vx^i = \vmuo + \vb^i$ where $\vb^i \in \bR^d$ can be unbounded. \gemme (Algorithm~\ref{algo:gem-me}) adapts \gem to the robust mean estimation problem. For notational clarity we use, $\veta = \vmu$, in this problem.

\begin{theorem}
\label{thm:me-main}
For data corrupted by a partially adaptive adversary with corruption rate $\alpha \leq 0.26$, there exists $\step > 1$ s.t. with probability at least $1 - \exp(-\Om d)$, LWSC/LWLC conditions are satisfied for the $\tilde Q_\beta$ function for $\beta$ upto $\beta_{\max} = \bigO{\frac\betao d\min\bc{\log\frac1\alpha, \sqrt{nd}}}$. If initialized with $\hvmu^1, \beta^1$ s.t. $\beta_1\cdot\norm{\hvmu^1 - \vmuo}_2^2 \leq 1$, \gemme assures $\norm{\hvmu^T - \vmuo}_2^2 \leq \epsilon$ for any $\epsilon \geq \bigO{\trace^2(\Sigma)\cdot\max\bc{{\frac1{\ln(1/\alpha)}}, \frac1{\sqrt{nd}}}}$ within $T \leq \bigO{\log\frac n{\beta^1}}$ iterations.
\end{theorem}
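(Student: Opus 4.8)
The plan is to derive Theorem~\ref{thm:me-main} as an instance of the generic guarantee of Theorem~\ref{thm:gem-main}: once the LWSC and LWLC properties of Definition~\ref{defn:lwsc-lwss} are verified for the $\tilde Q_\beta$ function used by \gemme over the largest possible range of $\beta$, Theorem~\ref{thm:gem-main} immediately supplies the iteration count and the attainable error. For mean estimation the objective is $\tilde Q_\beta(\vmu\cond\vu) = \tfrac12\sum_{i=1}^n s_i(\vu)\norm{\vx^i-\vmu}_2^2$ up to an additive constant, with $s_i(\vu)=\exp(-\tfrac\beta2\norm{\vx^i-\vu}_2^2)$, so $\nabla^2\tilde Q_\beta(\vmu\cond\vu)=\br{\sum_i s_i(\vu)}I$ (independent of $\vmu$) and $\nabla\tilde Q_\beta(\vmuo\cond\vu)=\sum_i s_i(\vu)(\vmuo-\vx^i)$. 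Hence it suffices to produce a high-probability lower bound $\lambda_\beta\le\min_{\vu\in\cB_2(\vmuo,1/\sqrt\beta)}\sum_i s_i(\vu)$ and an upper bound $\Lambda_\beta\ge\max_{\vu\in\cB_2(\vmuo,1/\sqrt\beta)}\norm{\sum_i s_i(\vu)(\vx^i-\vmuo)}_2$, and then to locate the range of $\beta$ for which a scale increment $\step>1$ with $\step\cdot\beta\cdot(\Lambda_\beta/\lambda_\beta)^2<1$ exists; this threshold is $\beta_{\max}$.

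For LWSC I would discard the bad points (their weights are non-negative) and show concentration of $\sum_{i\in G}s_i(\vu)$. Writing $\vx^i-\vu=\vepsilon^i+\vDelta$ with $\vepsilon^i\sim\cN(\vzero,\Sigma)$, $\Sigma=\tfrac1\betao I$ and $\norm{\vDelta}_2\le 1/\sqrt\beta$, a Gaussian integral gives $\mathbb{E}[s_i(\vu)] = \br{\tfrac{\betao}{\beta+\betao}}^{d/2}\exp\br{-\tfrac{\beta\betao}{2(\beta+\betao)}\norm{\vDelta}_2^2} =: Z$, which behaves like $\exp(-\Theta(\beta\,\trace(\Sigma)))$ once $\beta\le\betao$; this already identifies $\beta\,\trace(\Sigma)=\beta d/\betao=\bigO{\log\tfrac1\alpha}$ as the interesting scaling. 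A $\chi^2$ tail bound on $\norm{\vepsilon^i}_2^2$, a Bernstein bound for $\sum_{i\in G}s_i(\vu)$ around $GZ$, and a union bound over a sufficiently fine net of $\cB_2(\vmuo,1/\sqrt\beta)$ (using that $\vu\mapsto s_i(\vu)$ is Lipschitz on the ball) then yield $\lambda_\beta\ge\tfrac12 GZ$ with probability $1-\exp(-\Om d)$; the finite-sample accounting in this step is what contributes the $\sqrt{nd}$ branch of $\beta_{\max}$.

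For LWLC I would split $\nabla\tilde Q_\beta(\vmuo\cond\vu)$ into a bad part and a good part. For a bad point, $s_i(\vu)\norm{\vx^i-\vmuo}_2 = \exp\br{-\tfrac\beta2\norm{\vb^i+\vDelta}_2^2}\norm{\vb^i}_2$, and since $r\mapsto\exp(-\tfrac\beta2(r-\norm{\vDelta}_2)^2)\,r$ peaks at $r=\bigO{1/\sqrt\beta}$ with value $\bigO{1/\sqrt\beta}$, the entire bad contribution is at most $\bigO{B/\sqrt\beta}=\bigO{\alpha n/\sqrt\beta}$ no matter how the partially adaptive adversary aligns the $\vb^i$. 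For the good part, the same Gaussian tilting shows $\mathbb{E}[s_i(\vu)(\vx^i-\vmuo)] = -\tfrac{\beta}{\beta+\betao}\,Z\,\vDelta$, a bias of norm at most $Z\sqrt\beta/\betao$, while the deviation of $\sum_{i\in G}s_i(\vu)(\vx^i-\vmuo)$ from its mean is controlled by a vector Bernstein/Hoeffding inequality (each summand has norm $\bigO{Z\sqrt{\trace(\Sigma)}}$ in the relevant regime) plus, once more, a net over $\vu$. Dividing the three pieces by $\lambda_\beta\asymp GZ$ gives $\Lambda_\beta/\lambda_\beta\lesssim\tfrac{\alpha}{(1-\alpha)Z\sqrt\beta}+\tfrac{\sqrt\beta}{\betao}+\sqrt{\tfrac{d}{\betao n}}$; substituting $Z=\exp(-\Theta(\beta d/\betao))$ and imposing $\step\beta(\Lambda_\beta/\lambda_\beta)^2<1$ pins down $\beta_{\max}=\bigO{\tfrac{\betao}{d}\min\bc{\log\tfrac1\alpha,\sqrt{nd}}}$, the breakdown condition $\alpha\le 0.26$ (forced by the first term at $\beta=\beta_{\max}$, where $Z\approx\sqrt\alpha$), the existence of a valid $\step>1$, and finally the error guarantee $\epsilon=\bigO{1/\beta_{\max}}$ up to the stated $\trace(\Sigma)$ factors, with the iteration count $T=\bigO{\log(n/\beta_1)}$ read off from Theorem~\ref{thm:gem-main}.

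I expect the main obstacle to be the uniform (over $\vu\in\cB_2(\vmuo,1/\sqrt\beta)$) control of the good-point LWLC term $\sum_{i\in G}s_i(\vu)(\vx^i-\vmuo)$: the summands are nonlinear functions of exactly the Gaussian vectors $\vepsilon^i$ that also appear linearly, so off-the-shelf sub-Gaussian concentration does not apply directly; the exponential weights must be exploited carefully, simultaneously to suppress atypical good points with large $\norm{\vepsilon^i}_2$ and to keep each summand's norm small, and the covering argument must mesh with the $\beta$-dependent radius $1/\sqrt\beta$. Carrying this out with honest constants (and likewise for the companion LWSC estimate) is exactly what produces the explicit $0.26$ breakdown point.
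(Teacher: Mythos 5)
Your proposal is correct and follows essentially the same route as the paper's proof: LWSC by concentrating $\sum_{i\in G}s_i$ around $G\cdot Z$ with a net over the ball, LWLC by splitting into bad points (peak bound $\bigO{1/\sqrt\beta}$ per point) and good points (Gaussian tilting for the mean plus subexponential concentration and nets over $\vv$ and $\vDelta$), then reading off $\beta_{\max}$, the breakdown point and $\epsilon$ from Theorem~\ref{thm:gem-main}; the paper merely rescales the weights by $\cons = 1/Z$ and first treats $\betao = d$ before rescaling the space. The only quibble is constant-level: the paper's $0.26$ comes from the small-$\beta$ regime ($\beta = \bigO1$, $\cons \approx \sqrt e$, bad-point constant $\kappa = 1+\sqrt{1/2}$, giving $\kappa\sqrt e\,\alpha/(1-\alpha)\leq 1$), not from imposing $\alpha/((1-\alpha)Z)<1$ at $\beta = \beta_{\max}$ with $Z\approx\sqrt\alpha$ as you suggest, which would yield a different threshold.
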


\textbf{Model recovery, Consistency, Breakdown pt.} Note that for any constant $\alpha > 0$, the estimation error does not go to zero as $n \rightarrow \infty$. As mentioned in \S\ref{sec:related}, an error of $\Om{\alpha}$ is unavoidable no matter how large $n$ gets. Thus, the best hope we have is of the estimation error going to zero as $\alpha \rightarrow 0$ and $n \rightarrow \infty$. The error in Theorem~\ref{thm:me-main} does indeed go to zero in this setting. Also, note that the error depends only on the trace of the covariance matrix of the clean points, and thus for $\trace(\Sigma) = \bigO1$, the result offers an estimation error independent of dimension. \gemme offers a large breakdown point (allowing upto 26\% corruption rate).

\textbf{Establishing LWSC/LWLC for Gamma Regression and Mean Estimation}: In the appendices, Lemmata~\ref{lem:gam-lwsc}, \ref{lem:gam-LWLC}, Lemmata~\ref{lem:me-lwsc}, \ref{lem:me-LWLC} establish the LWSC/LWLC properties for the $\tilde Q_\beta$ function for gamma regression and mean estimation and Theorems~\ref{repthm:gam-main} and Theorem~\ref{repthm:me-main} establish the breakdown points and existence of increments $\step > 1$.

\textbf{Robust Classification.} In this case the labels are generated as $y_i = \sign(\ip{\vwo}{\vx^i})$ and the bad points in the set $B$ get their labels flipped $\tilde y_i = -\sign(\ip{\vwo}{\vx^i})$. \gemlr (Algorithm~\ref{algo:gem-lr}) adapts \gem to robust logistic regression.

\section{Experiments}
\label{sec:exps}

We used a 64-bit machine with Intel® Core™ i7-6500U CPU @ 2.50GHz, 4 cores, 16 GB RAM, Ubuntu 16.04 OS.

\textbf{Benchmarks.} \gem was benchmarked against several baselines \textbf{(a)} \textbf{VAM}: this is \gem executed with a fixed value of the scale $\beta$ by setting the scale increment to $\step = 1$ to investigate any benefits of varying the scale $\beta$, \textbf{(b)} \textbf{MLE}: likelihood maximization on all points (clean + corrupted) without any weights assigned to data points -- this checks for benefits of performing weighted MLE, \textbf{(c)} \textbf{Oracle}: an execution of the MLE only on the clean points -- this is the gold standard in robust learning and offers the best possible outcome. In addition, several \textbf{problem-specific competitors} were also considered. For robust regression, STIR \cite{MukhotyGJK2019}, TORRENT \cite{BhatiaJK2015}, SEVER \cite{DiakonikolasKKLSS2019}, RGD \cite{PrasadSBR2018}, and the classical robust M-estimator of Tukey's bisquare loss were included. Note that TORRENT already outperforms $L_1$ regularization methods while achieving better or competitive recovery errors (see \cite[Fig 2(b)]{BhatiaJK2015}). Since \gemrr was faster than TORRENT itself, $L_1$ regularized methods such as \cite{NguyenT2013,WrightM2010} were not considered. For robust mean estimation, popular baselines such as coordinate-wise median and geometric median were taken. For robust classification, the rank-pruning method RP-LR \cite{northcutt2017rankpruning} and the method from \cite{natarajan2013learning} were used.

\textbf{Experimental Setting and Reproducibility.} Due to lack of space, details of experimental setup, data generation, how adversaries were simulated etc are presented in Appendix~\ref{app:setup}. \gem also offered superior robustness than competitors against a wide range of ways to simulate adversarial corruption (see Appendix~\ref{app:exps} for details). Code for \gem is available at \code.

\begin{figure*}[t]%
	\centering
	\begin{subfigure}[b]{0.3\textwidth}
		\includegraphics[width=\textwidth]{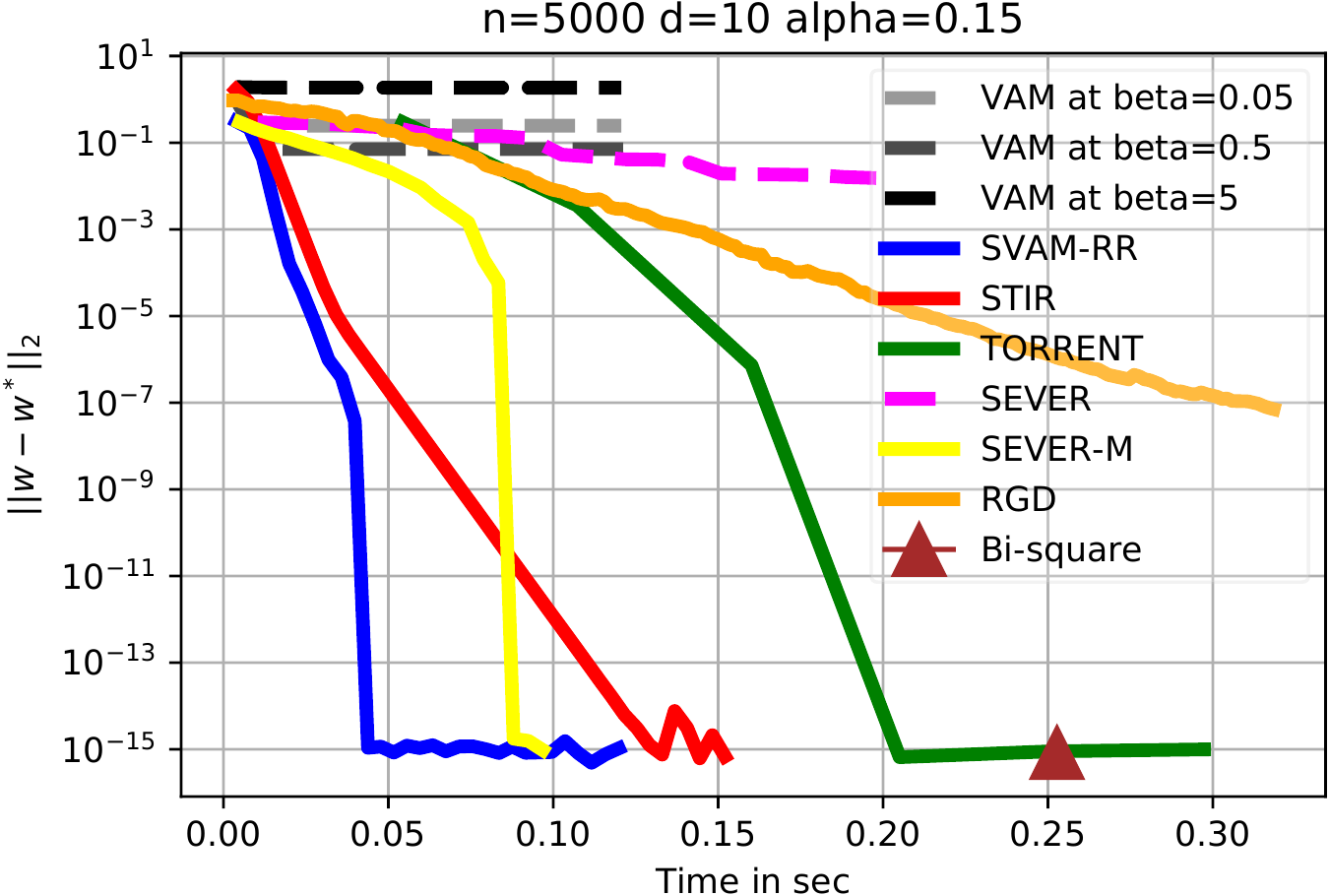}%
		\caption{\small Robust Regression}
	\end{subfigure}
	\begin{subfigure}[b]{0.3\textwidth}
		\includegraphics[width=\textwidth]{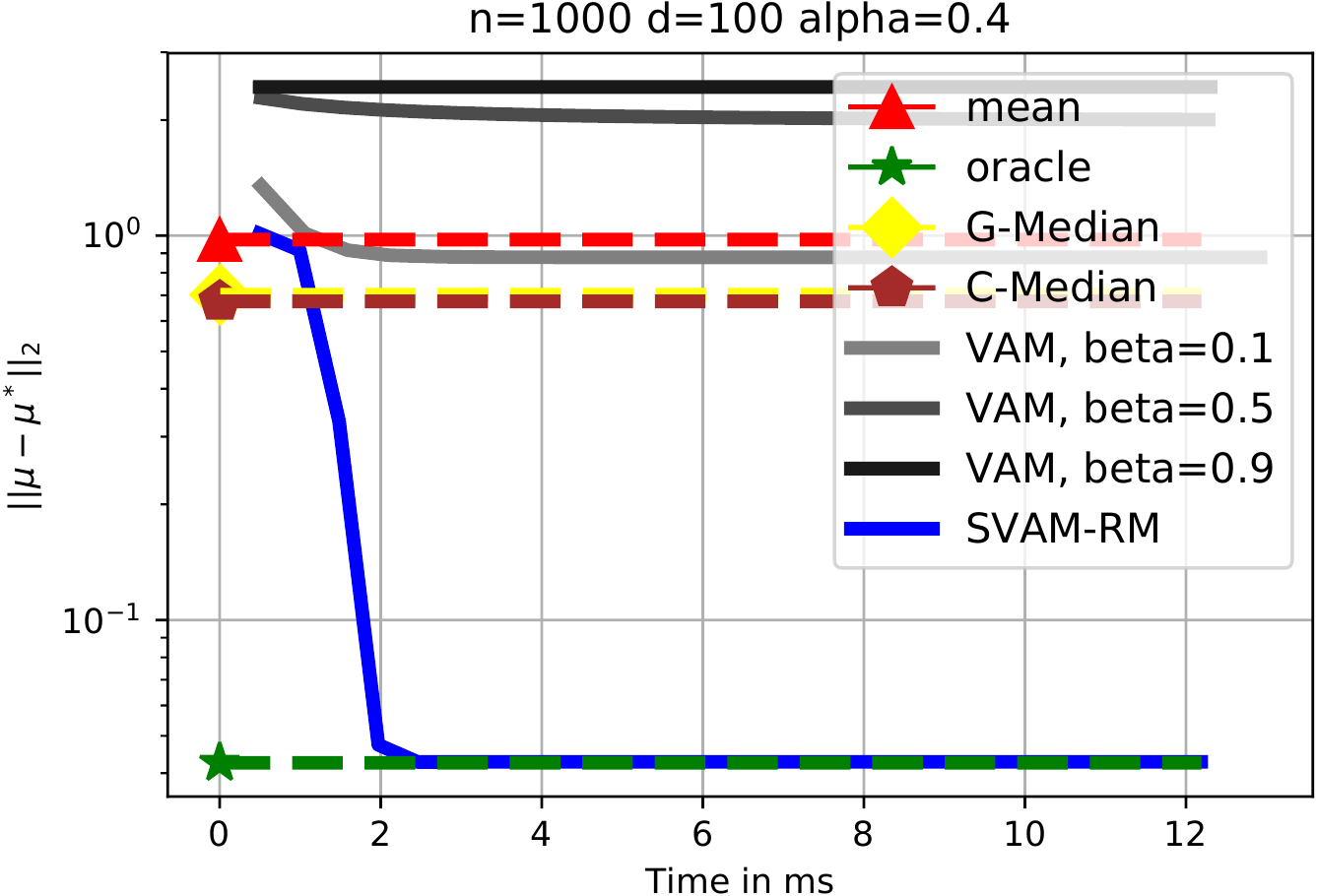}%
		\caption{\small Robust Mean Estimation}
	\end{subfigure}
	\begin{subfigure}[b]{0.3\textwidth}
		\includegraphics[width=\textwidth]{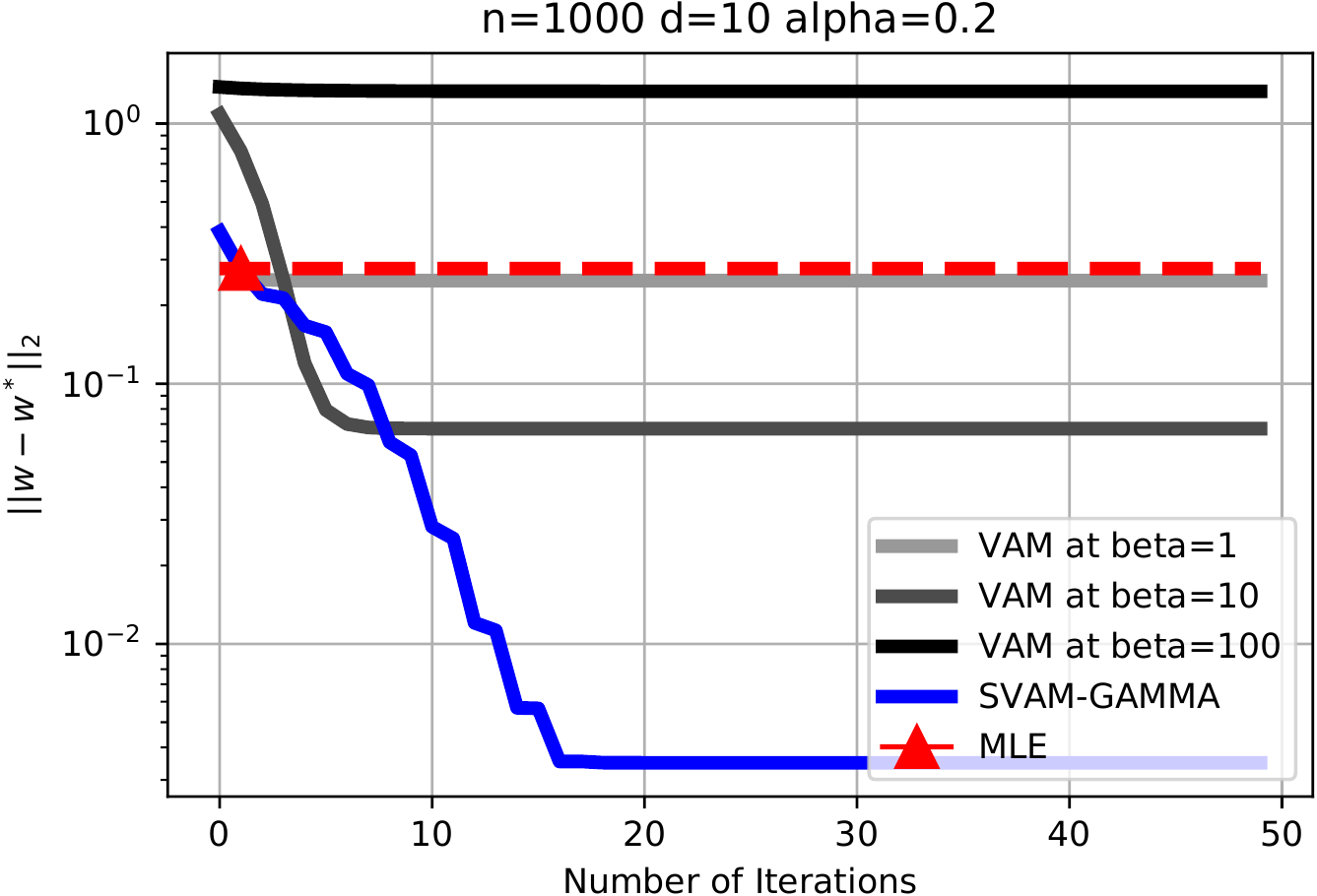}%
		\caption{\small Robust Gamma Regression}
	\end{subfigure}
	\begin{subfigure}[b]{0.3\textwidth}
		\includegraphics[width=\textwidth]{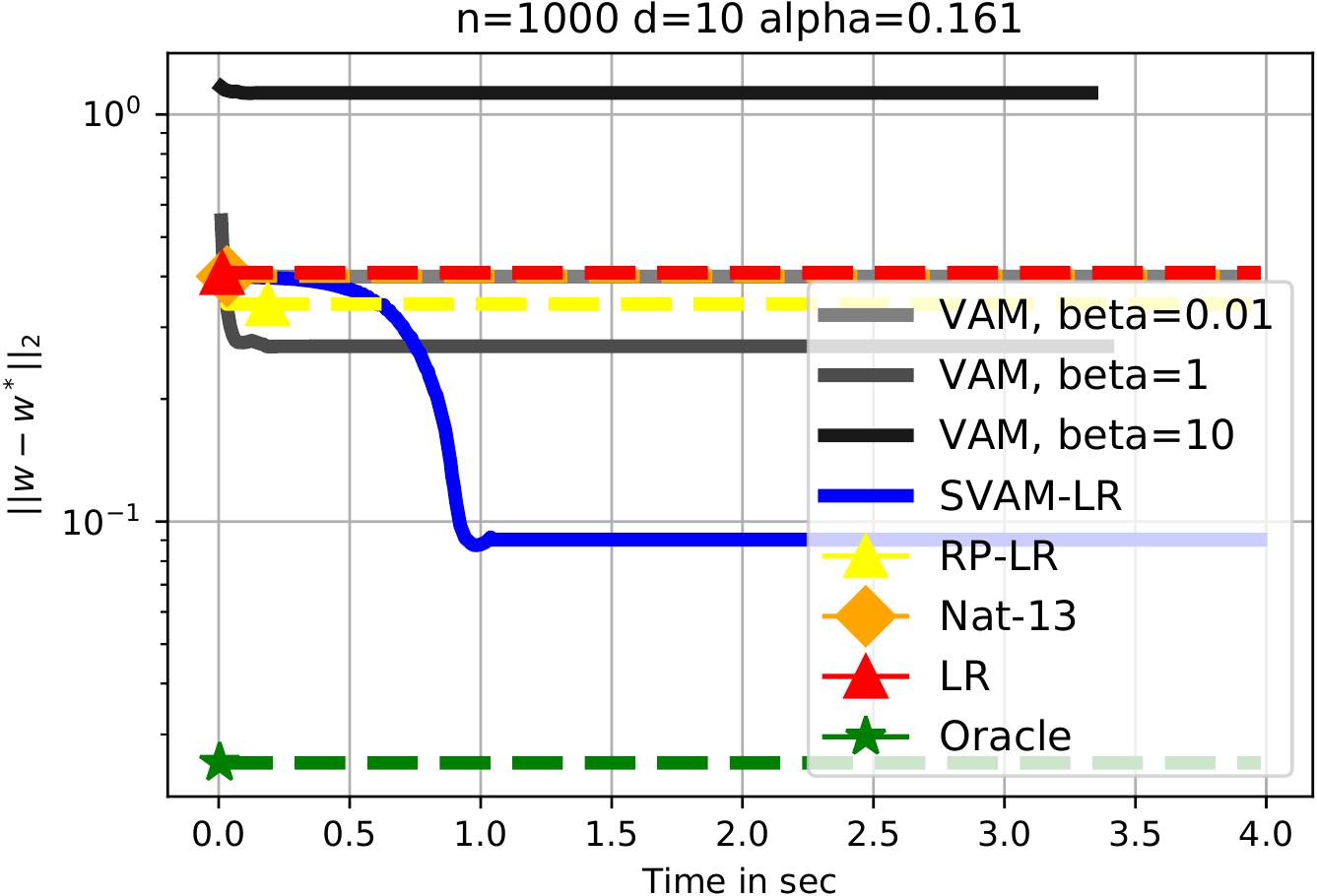}%
		\caption{\small Robust Classification}
	\end{subfigure}
	\begin{subfigure}[b]{0.3\textwidth}
		\includegraphics[width=\textwidth]{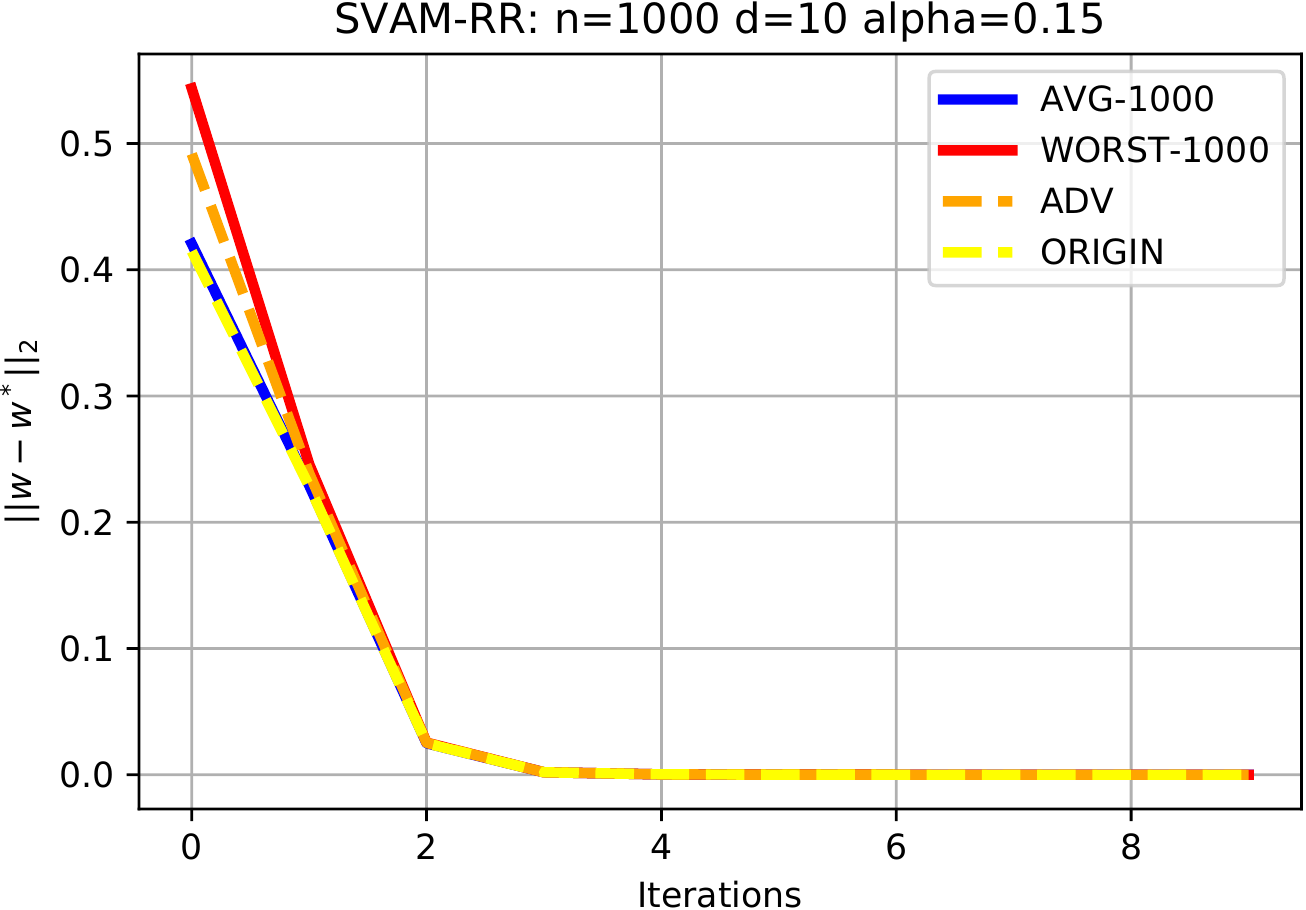}%
		\caption{\small Adversarial Initialization}
	\end{subfigure}
	\begin{subfigure}[b]{0.3\textwidth}
		\includegraphics[width=\textwidth]{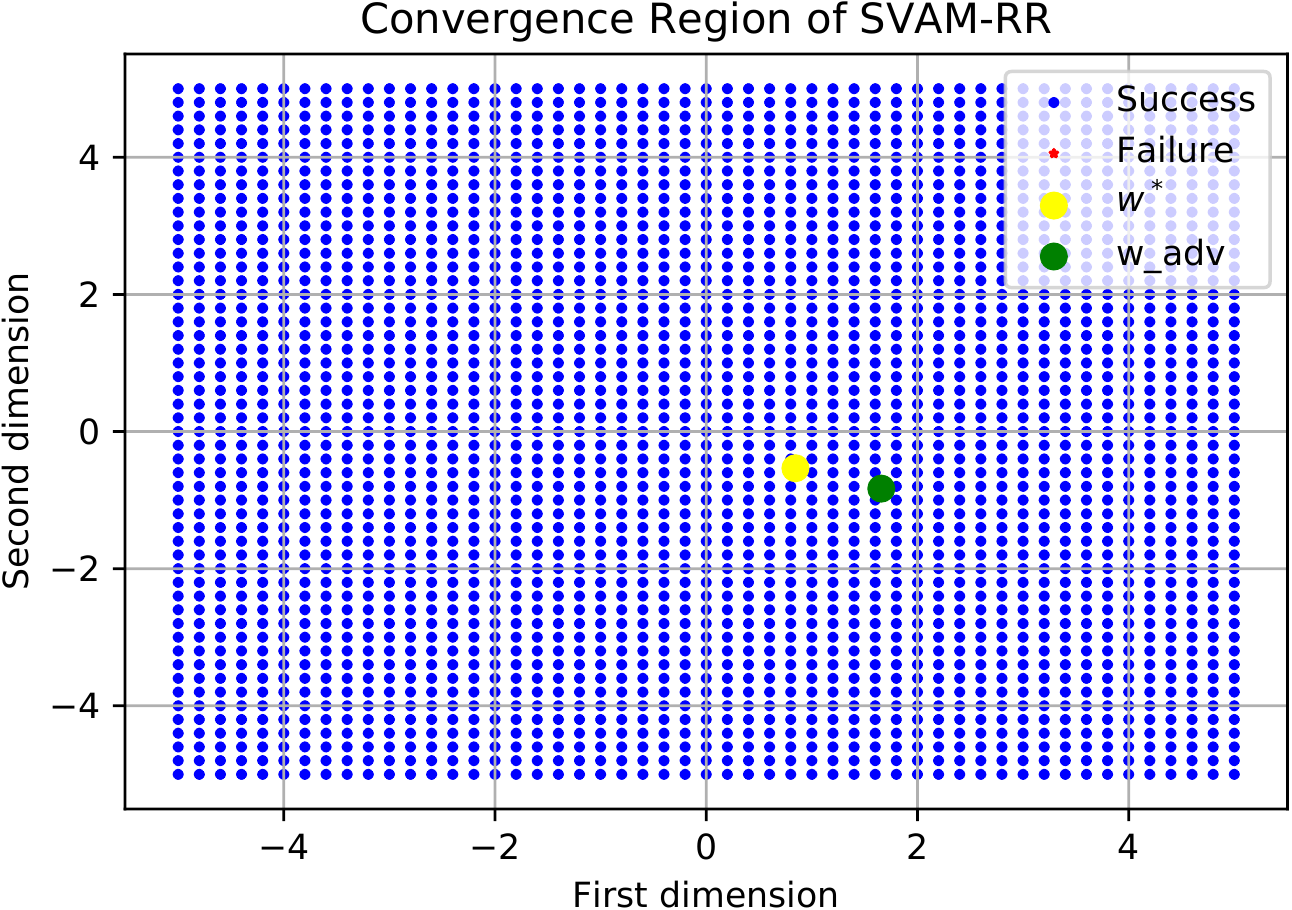}%
		\caption{\small Global Convergence}
	\end{subfigure}
	\caption{\small Figs~\ref{fig:conv}(a,b,c,d) compare \gem and various competitors on robust GLM problems. The number of data points $n$, dimensions $d$, and fraction of corruptions $\alpha = k/n$ are mentioned at the top of each figure. For algorithms for which iteration-wise performance was unavailable, their final performance level is plotted as a horizontal dashed line. A marker is placed on the line indicating the time it took for that algorithm to converge. The figures clarify that executing VAM with a single fixed value of $\beta$ cannot replace the gradual variations in $\beta_t$ done by \gem. Figs~\ref{fig:conv}(e,f) confirm that \gem offers convergence to $\vwo$ irrespective of the model with which it is initialized. In these figures, corruptions were introduced using an adversarial model $\tvw$ i.e. for corrupted points, the label was set to $\tilde y_i = \ip{\tvw}{\vx_i}$. \gem was then initialized at $\tvw$ itself to check if it gets misled by faulty initialization but was found to offer exact recovery regardless.}
	\label{fig:conv}%
\end{figure*}

\subsection{Experimental Observations}

\textbf{Robust Regression.} Fig~\ref{fig:conv}(a) shows that \gemrr, SEVER, RGD, STIR, and TORRENT are competitive and achieve oracle-level error. However, \gemrr can be twice as fast in terms of execution time. Since TORRENT itself outperforms $L_1$ regularization methods while achieving better or competitive recovery errors (see Fig 2(b) in \cite{BhatiaJK2015}), we do not compare against $L_1$ methods. \gemrr is several times faster than classical robust M-estimators such as Tukey's bisquare loss. Also, no single value of $\beta$ can offer the performance of \gem, as is indicated by the poor performance of VAM. Fig~\ref{fig:vam-beta} in the appendix shows that this is true even if very large or very small values of $\beta$ are used with VAM. We note that SEVER chooses a threshold in each iteration to eliminate specific points as corrupted. This threshold is chosen randomly (possibly for ease of proof) but causes SEVER to offer sluggish convergence. Thus, we also report the performance of a modification SEVER-M that was given an unfair advantage by revealing to it the actual number of corrupted points (\gem was not given this information). This sped-up SEVER but \gem continued to outperform SEVER-M. Fig~\ref{fig:avg-conv} in the appendix reports repeated runs of the experiment where \gem continues to lead.

\textbf{Robust Logistic and Gamma Regression.} Fig~\ref{fig:conv}(c,d) report results of \gem on robust gamma and logistic regression problems. The figures show that executing VAM with a fixed value of $\beta$ cannot replace the gradual variations in $\beta_t$ done by \gem. Additionally, for robust classification, \gemlr achieves error, an order of magnitude smaller than all competitors except the oracle. \gem also outperforms the RP-LR \cite{northcutt2017rankpruning} and \cite{natarajan2013learning} algorithms that were specifically designed for robust classification. A horizontal dashed line is used to indicate the final performance of algorithms for which iteration-wise performance was unavailable.

\textbf{Robust Mean Estimation.} Fig~\ref{fig:conv}(b) reports results on robust mean estimation problems. \gem outperforms VAM with any fixed value of $\beta$ as well as the naive sample mean (the MLE in this case). Popular approaches coordinate-wise median and geometric median were fast but offered poor results. \gem on the other hand achieved oracle error-level error by assigning proper scores to all data points.

\textbf{Sensitivity to Hyperparameter Tuning.} In Figs~\ref{fig:hyperparam-sensitivity}(a,b), \gemrr was offered hyperparameters in a wide range of values to study how it responded when provided mis-specified hyperparameters. \gem offered stable convergence for a wide range of $\beta_1, \step$ indicating that it is resilient to minor mis-specifications in hyperparameters.

\textbf{Sensitivity to Dimension and Corruption.} Figs~\ref{fig:hyperparam-sensitivity}(c,d) compare the error offered by various algorithms in recovering $\vwo$ for robust least-squares regression when the fraction of corrupted points $\alpha$ and feature dimension $d$ were varied. All values are averaged over $20$ experiments with each experiment using $1000$ data points. $\alpha$ was varied in the range $[0,0.4]$ and $d$ in the range $[10,100]$ with fixed hyper-parameters. STIR and Bi-square are sensitive to corruption while SEVER is sensitive to both corruption and dimension. RGD is not visible in the figures as its error exceeded the figure boundaries. Experiments for Fig~\ref{fig:hyperparam-sensitivity}(c) fixed $d = 10$ and vary $\alpha$ while Fig~\ref{fig:hyperparam-sensitivity}(d) fixed $\alpha = 0.15$ and vary $d$. Figs~\ref{fig:hyperparam-sensitivity}(c,d) show that \gemrr can tolerate large fractions of the data getting corrupted and is not sensitive to $d$.

\textbf{Testing \gem for Global Convergence.} To test the effect of initialization, in Fig~\ref{fig:conv}(e), corruptions were introduced using an adversarial model $\tilde{\mathbf{w}}$ i.e. for corrupted points, labels were set to $\tilde y_i = \ip{\tilde{\mathbf{w}}}{\vx_i}$. \gemrr was initialized at 1000 randomly chosen models, the origin, as well as at the adversarial model $\tilde\vw$ itself. WORST-1000 (resp. AVG-1000) indicate the worst (resp. average) performance \gem had at any of the 1000 initializations. Fig~\ref{fig:conv}(f) further emphasizes this using a toy 2D problem. \gem was initialized at all points on the grid. An initialization was called a success if \gem got error $< 10^{-6}$ within eight or fewer iterations. In all these experiments \gem rapidly converged to the true model irrespective of model initialization.

\section*{Acknowledgements}
The authors thank the anonymous reviewers of this paper for suggesting illustrative experiments and pointing to relevant literature. B.M. is supported by the Technology Innovation Institute and MBZUAI joint project (NO. TII/ARRC/2073/2021): Energy-based Probing for Spiking Neural Networks. D.D. is supported by the Research-I Foundation at IIT Kanpur and acknowledges support from the Visvesvaraya PhD Scheme for Electronics \& IT (FELLOW/2016-17/MLA/194). P.K. thanks Microsoft Research India and Tower Research for research grants.

\bibliographystyle{abbrvnat}
\bibliography{refs}

\appendix

\allowdisplaybreaks

\section{Summary of Assumptions}
\label{app:assumptions-limitations}
This paper presented \gem, a framework for robust GLM problems based on a novel variance reduced reweighted MLE technique that can be readily adapted to arbitrary GLM problems such as robust least squares/logistic/gamma regression and mean estimation. Here, we summarize the theoretical and empirical assumptions made by the \gem framework for easy inspection.

\emph{Experimental Assumptions.} \gem requires minimal assumptions to be executed in practice and only requires two scalar hyperparameters to be tuned properly. \gem is robust to minor misspecifications to its hyperparameters (see Figure~\ref{fig:hyperparam-sensitivity}) and the hyperparameter tuning described in \S\ref{sec:gem} works well in practice allowing \gem to offer superior or competitive empirical performance when compared to state of the art techniques for task-specific estimation techniques e.g. TORRENT for robust regression, classical and popular techniques such as Tukey's bisquare or geometric median for robust mean estimation, as well as recent advances in robust gradient-based techniques such as SEVER and RGD.

\emph{Theoretical Assumptions.} \gem establishes explicit breakdown points in several interesting scenarios against both partially and fully adaptive adversaries. To do so, \gem assumes a realizable setting e.g. in least-squares regression, labels for the $G$ clean points are assumed to be generated as $y_i = \ip\vwo{\vx_i} + \epsilon_i$ where $\vwo$ is the gold model and $\epsilon_i \sim \cN\br{0,\frac1\betao}$ is Gaussian noise. Of course, on the $B$ bad points, the (partially/fully adaptive) adversary is free to introduce corruptions jointly in any manner. For least-squares regression, the covariates/feature vectors i.e. $\vx_i$ are assumed to be sampled from some sub-Gaussian distribution that includes arbitrary bounded distributions as well as multivariate Gaussian distributions in $d$ dimensions (both standard and non-standard) -- see Appendix~\ref{app:rr} for details. For gamma regression and mean estimation settings, the covariates are assumed to be sampled from a spherical multivariate Gaussian in $d$ dimensions. Other assumptions are listed in the statements of the theorems in \S\ref{sec:me-rr-lr}.

\section{Adversary Models}
\label{app:adversary}

We explain the various adversary models in more detail here. Several models popular in literature give various degrees of control to the adversary. This section offers a more relaxed discussion of some prominent adversary models along with examples of applications in which they arise.

\textbf{(Oblivious) Huber Adversary.} Corruption locations $i_1,\ldots,i_k$ are chosen randomly for which corrupted labels are sampled i.i.d. from some pre-decided distribution $\cB$ i.e. $\tilde y_{i_j} \sim \cB$. Next, data features $\vx^i$ and the true model $\vwo$ are selected and \emph{clean} labels are generated according to the GLM for all non-corrupted points.

\textbf{Partially Adaptive Adversary.} The adversary first chooses the corruption locations $i_1,\ldots,i_k$ (e.g. some fixed choice or randomly). Then data features $\vx^i$ and the true model $\vwo$ are selected and \emph{clean} labels $y_1,\ldots,y_n$ are generated according to the GLM. Next, the adversary is presented with the collection $\bc{\vwo, \bc{(\vx^i,y_i)}_{i=1}^n, \bc{i_j}_{j = 1}^k}$ and is allowed to use this information to generate corrupted labels $\tilde y_{i_j}$ for the points marked for corruption.

\textbf{Fully Adaptive Adversary.} First data features $\vx^i$ and the true model $\vwo$ are selected and \emph{clean} labels $y_1,\ldots,y_n$ are generated according to the GLM. Then the adversary is presented with the collection $\bc{\vwo, \bc{(\vx^i,y_i)}_{i=1}^n}$ and is allowed to use this information to select which $k$ points to corrupt as well as generate corrupted labels $\tilde y_{i_j}$ for those points.

\textbf{Discussion.} The fully adaptive adversary can choose corruption locations $i_1,\ldots,i_k$ and the corrupted labels $\tilde y_{i_j}$ with complete information of the true model $\vwo$, the clean labels $\bc{y_i}$ and the feature vectors $\bc{\vx^i}$ and is the most powerful. The partially adaptive adversary can decide the corruptions $\tilde y_{i_j}$ after inspecting $\vwo,\bc{\vx^i, y_i}$ but cannot control the corruption locations. This can model e.g. an adversary that corrupts user data by installing malware on their systems. The adversary cannot force malware on a system of their choice but can manipulate data coming from already compromised systems at will. The Huber adversary is the least powerful with corruption locations that are random as well as corrupted labels that are sampled randomly and cannot depend on $\bc{\vx^i, y_i}$. Although weak, this adversary can nevertheless model sensor noise e.g. pixels in a CCD array that misfire with a certain probability. As noted earlier, \gem results are shown against both fully and partially adaptive adversaries.
\section{Experimental Setup}
\label{app:setup}

Experiments were carried out on a 64-bit machine with Intel® Core™ i7-6500U CPU @ 2.50GHz, 4 cores, 16 GB RAM and Ubuntu 16.04 OS. Statistics such as dataset size $n$, feature dimensions $d$ and corruption fraction $\alpha$ are mentioned above each figure. 20\% of train data was used as a held-out validation set using which ($\beta_1, \step$) were tuned using line search. Figure~\ref{fig:hyperparam-sensitivity} indicates that \gem is not sensitive to setting $\beta_1$ or $\step$ and a good value can be found using line search.

\textbf{Synthetic Data Generation.} Synthetic datasets were used in the experiments to demonstration recovery of the true model parameters. All regression co-variates/features were generated using a standard normal distribution $\cN(0, I_d)$. Clean responses in the least-squares regression settings were generated without additional Gaussian noise while corrupted responses were generated for an $\alpha$ fraction of the data points. For least-squares regression, clean responses were generated as $y_i = \ip{\vwo}{\vx_i}$, while for logistic regression, clean binary labels were generated as $y_i = \bI[\ip{\vwo}{\vx_i} > 0]$. For gamma regression, clean responses were generated using a likelihood distribution with vanishing variance. This was done by using a variance-altered likelihood distribution with the setting $\betao \rightarrow \infty$ (see Table~\ref{tab:variance} for the likelihood expressions). Clean data points for mean estimation were sampled from $\cN(\vmuo, \frac{1}{d}\cdot I_d)$. When not stated otherwise, corrupted labels were generated using an adversarial model. Specifically, to simulate the adversary, an \emph{adversarial} model $\tvw$ (for least-squares/logistic/gamma regression) or $\tilde\vmu$ (for mean estimation) was sampled and labels for data points chosen for corruption were generated using this adversarial model instead of the true model. For example, corrupted labels were generated for least-squares regression as $\tilde y_i = \ip{\tvw}{\vx_i}$ for all $k$ locations chosen for corruption. Please also refer to Appendix~\ref{app:exps} for an extensive study on several other ways of simulating the adversary and initialization schemes.

\textbf{Simulating the Adversary.} Corruptions were introduced using an adversarial model. Specifically, an adversarial model $\tvw$ was chosen, and for bad points, the adversary generated a label using $\tvw$, which overwrote the true label. For least-squares/logistic/gamma regression, both $\vwo,\tvw$ were independently chosen to be random unit vectors. For robust mean estimation Fig~\ref{fig:conv}(a), $\vmuo$ and $\tilde\vmu$ were chosen as random Gaussian vectors of length $2$ and $6$ respectively. Except Fig~\ref{fig:conv}(e,f) in which the setting is different, \gem variants were always initialized at the adversarial model itself i.e. $\hvw^1 = \tvw$ to test the ability of \gem to converge to the true model no matter what the initialization.

\textbf{Other Corruption Models.} \gem was found to offer superior robustness as compared to competitor algorithms against a wide range of ways to simulate adversarial corruption, including powerful ones that use expensive leverage score computations to decide corruptions. Fig~\ref{fig:corruption_model} in Appendix~\ref{app:exps} reports results of experiments with a variety of adversaries.

\begin{figure}[t]
	\centering
	\includegraphics[width=0.5\textwidth]{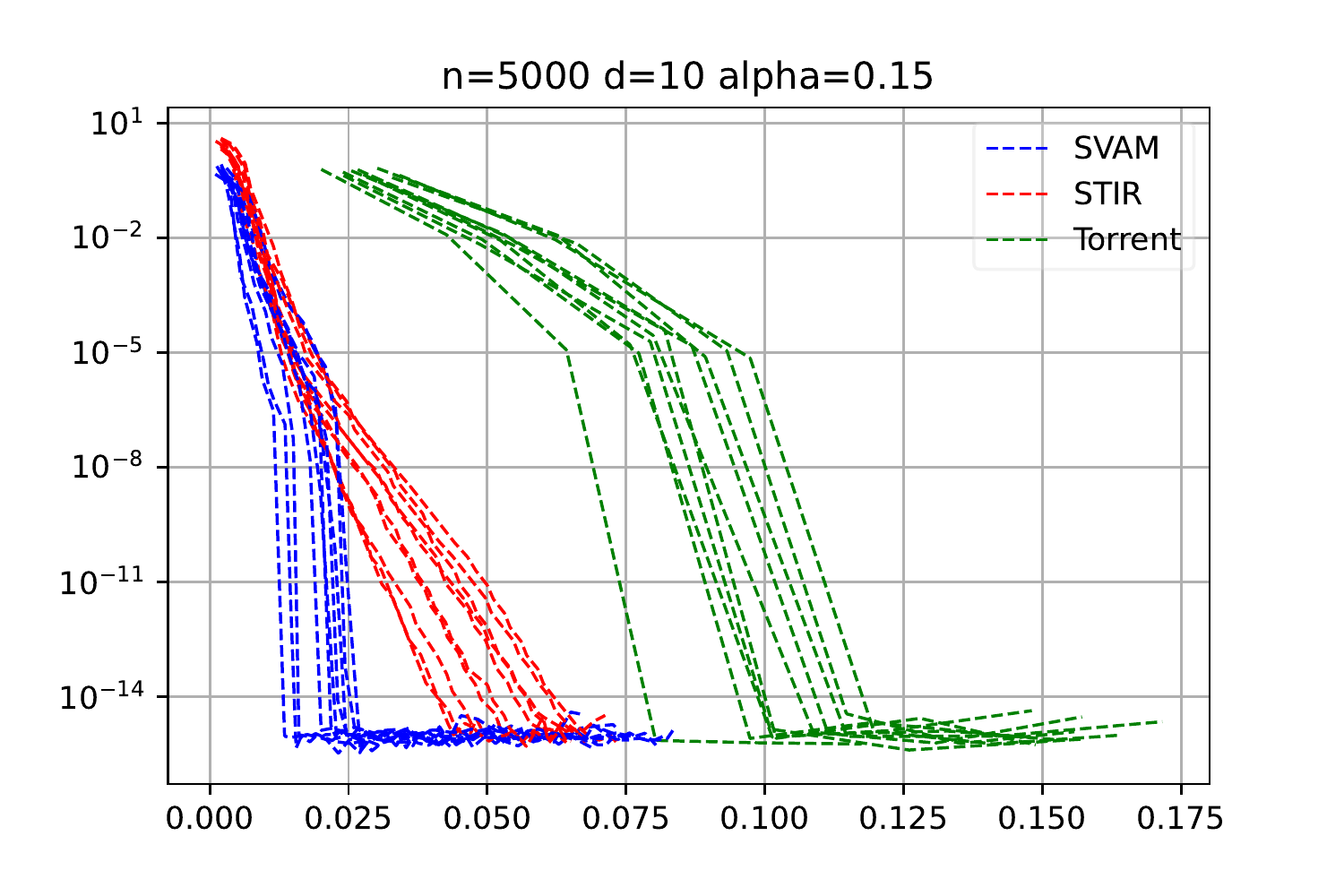}
 	\caption{Convergence results obtained after repeated experiments with the 3 leading methods for robust least squares regression. \gem continues to lead despite the natural variance in convergence plots.}
	\label{fig:avg-conv}%
\end{figure}

\begin{figure}[t]
	\centering
	\includegraphics[width=0.5\textwidth]{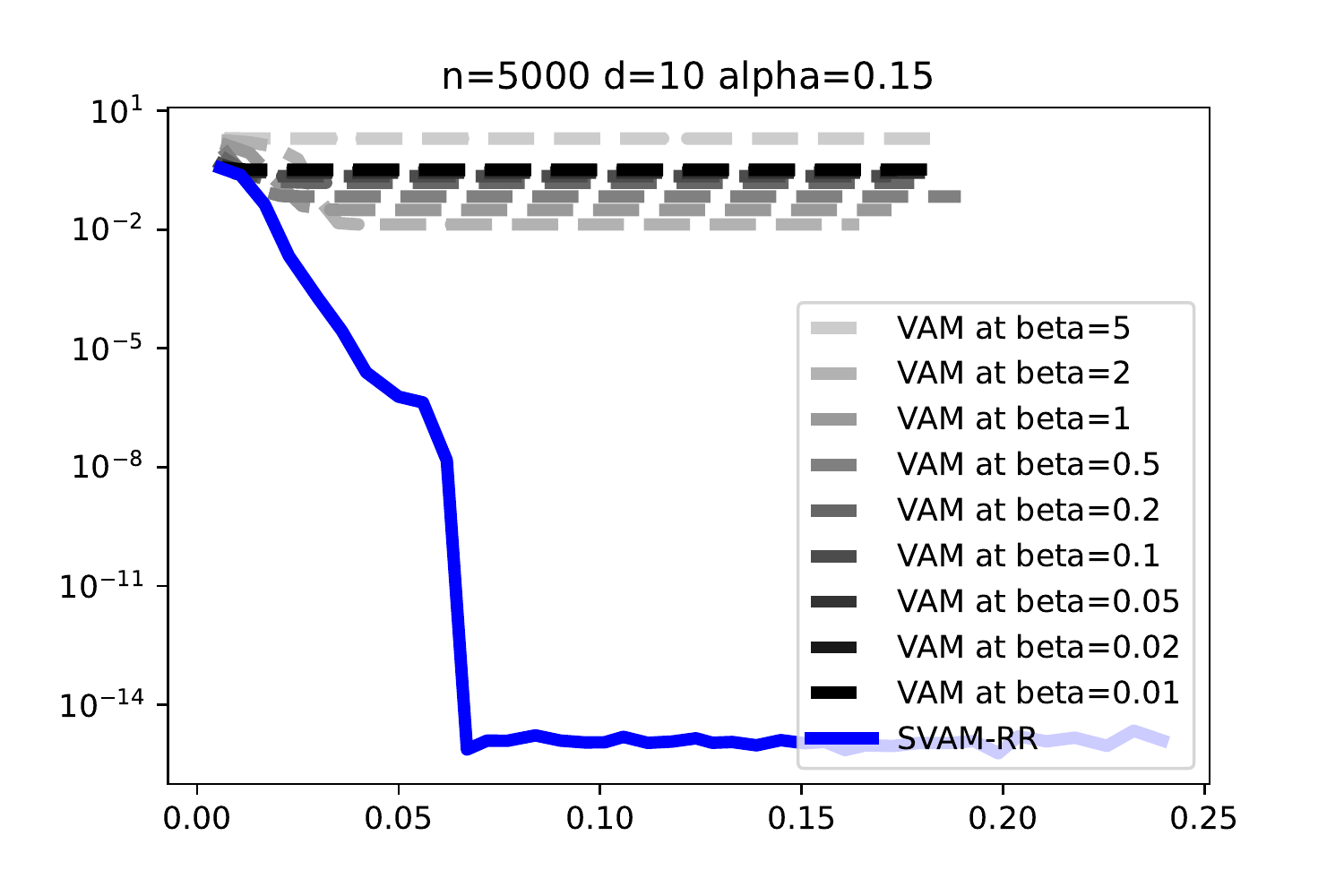}
 	\caption{Convergence results for VAM with various (fixed) values of $\beta$ for robust least squares regression. The plots indicate that the best performance achieved by VAM is with a moderately large value of $\beta$. Excessively large and excessively small values of $\beta$ are both ill-suited to model recovery. However, even if provided such an optimal value of $\beta$, VAM's performance is still inferior to that of \gem. We note that \gem does not use a fixed value of $\beta$ and instead dynamically updates it.}
	\label{fig:vam-beta}%
\end{figure}

\begin{figure}[t]
	\centering
	\includegraphics[width=0.32\textwidth]{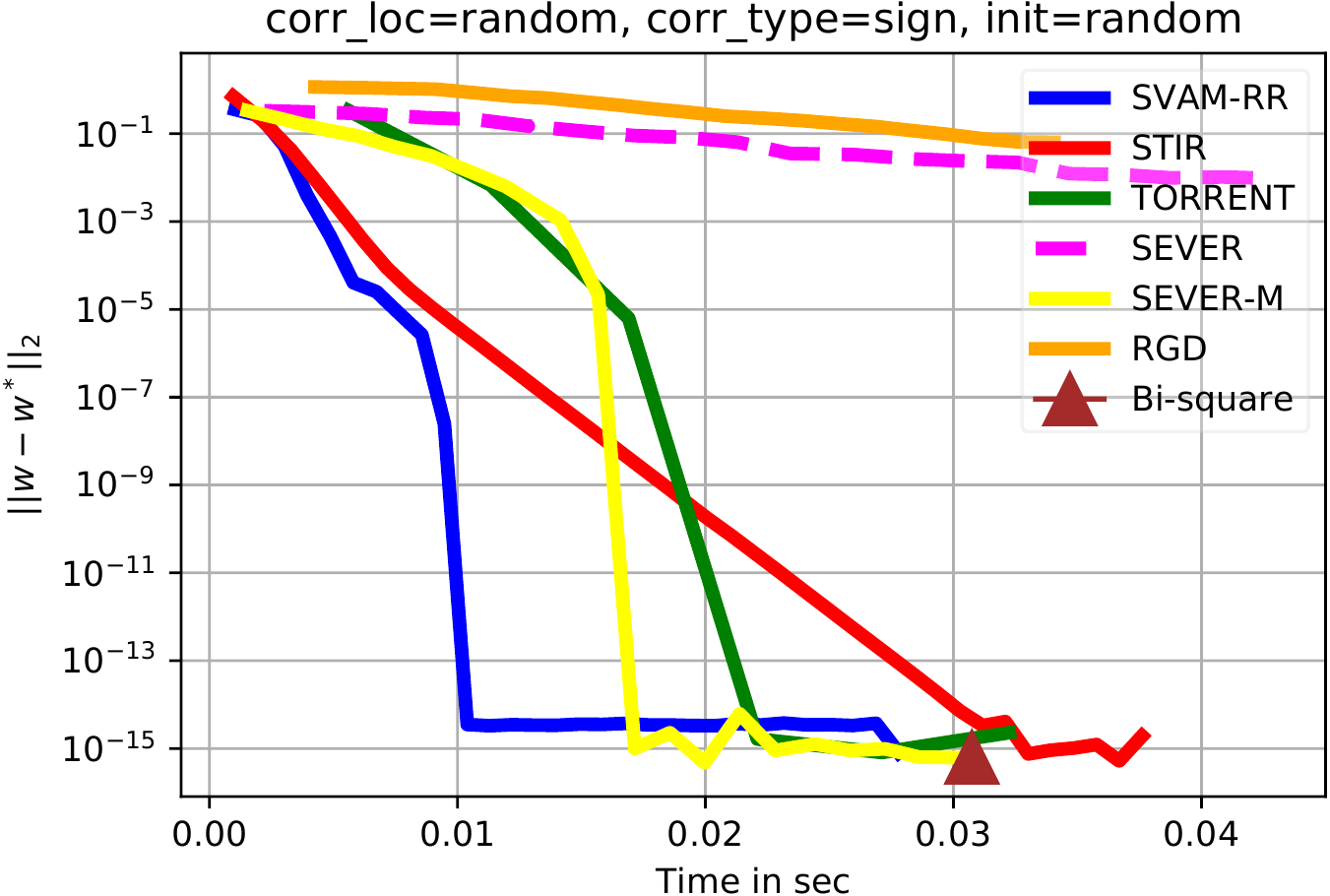}
	\includegraphics[width=0.32\textwidth]{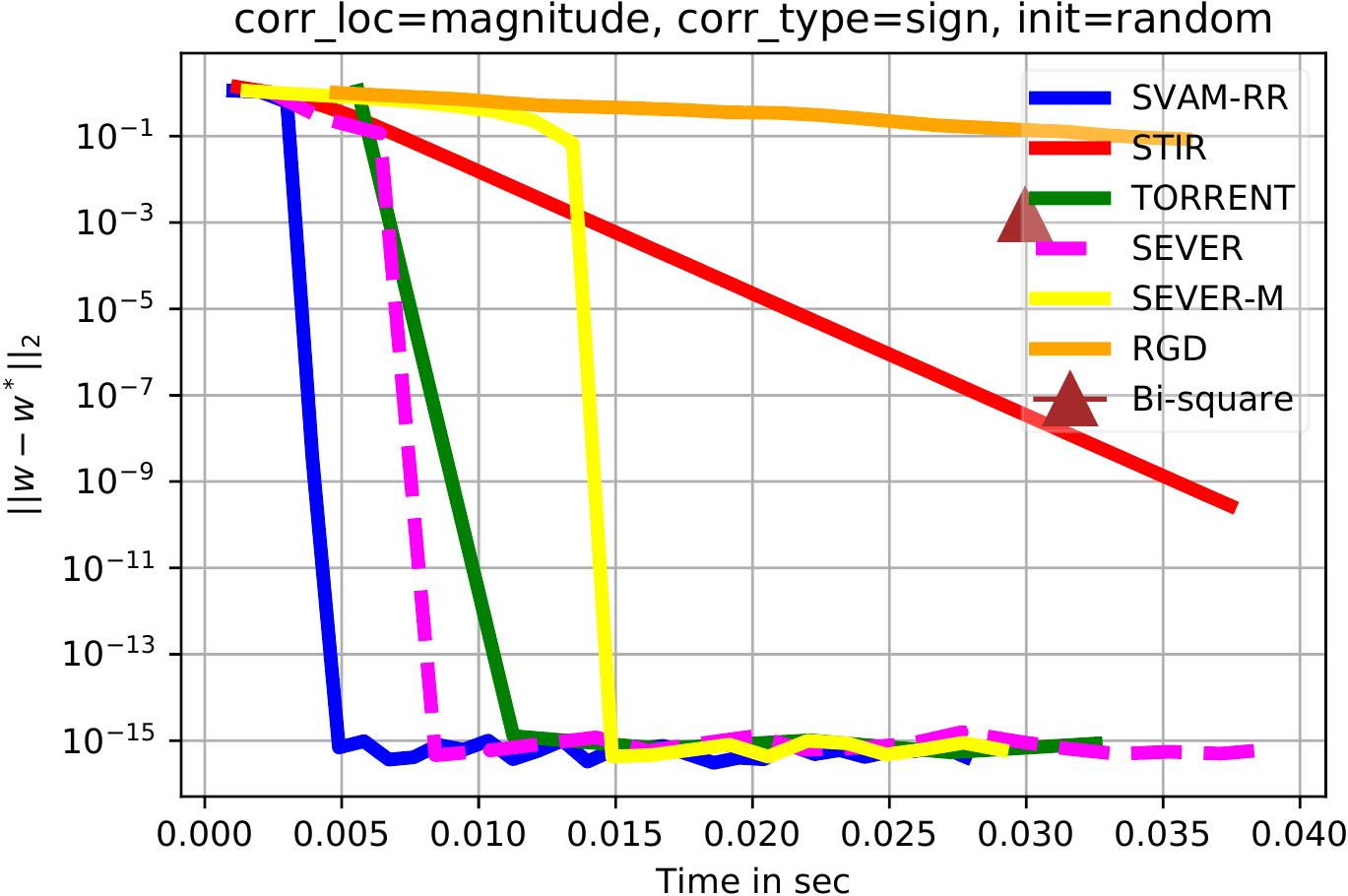}
	\includegraphics[width=0.32\textwidth]{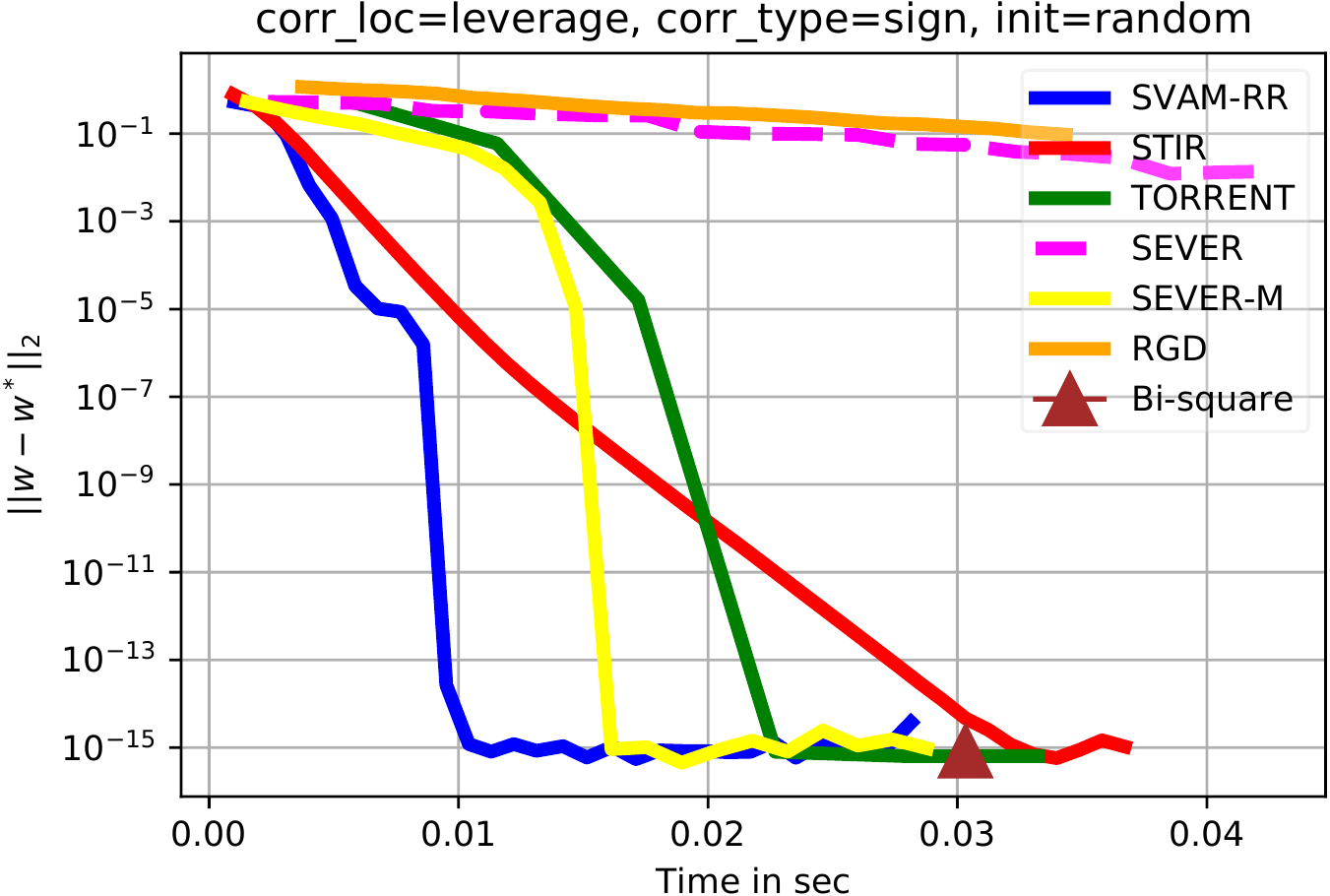}\\
	\includegraphics[width=0.32\textwidth]{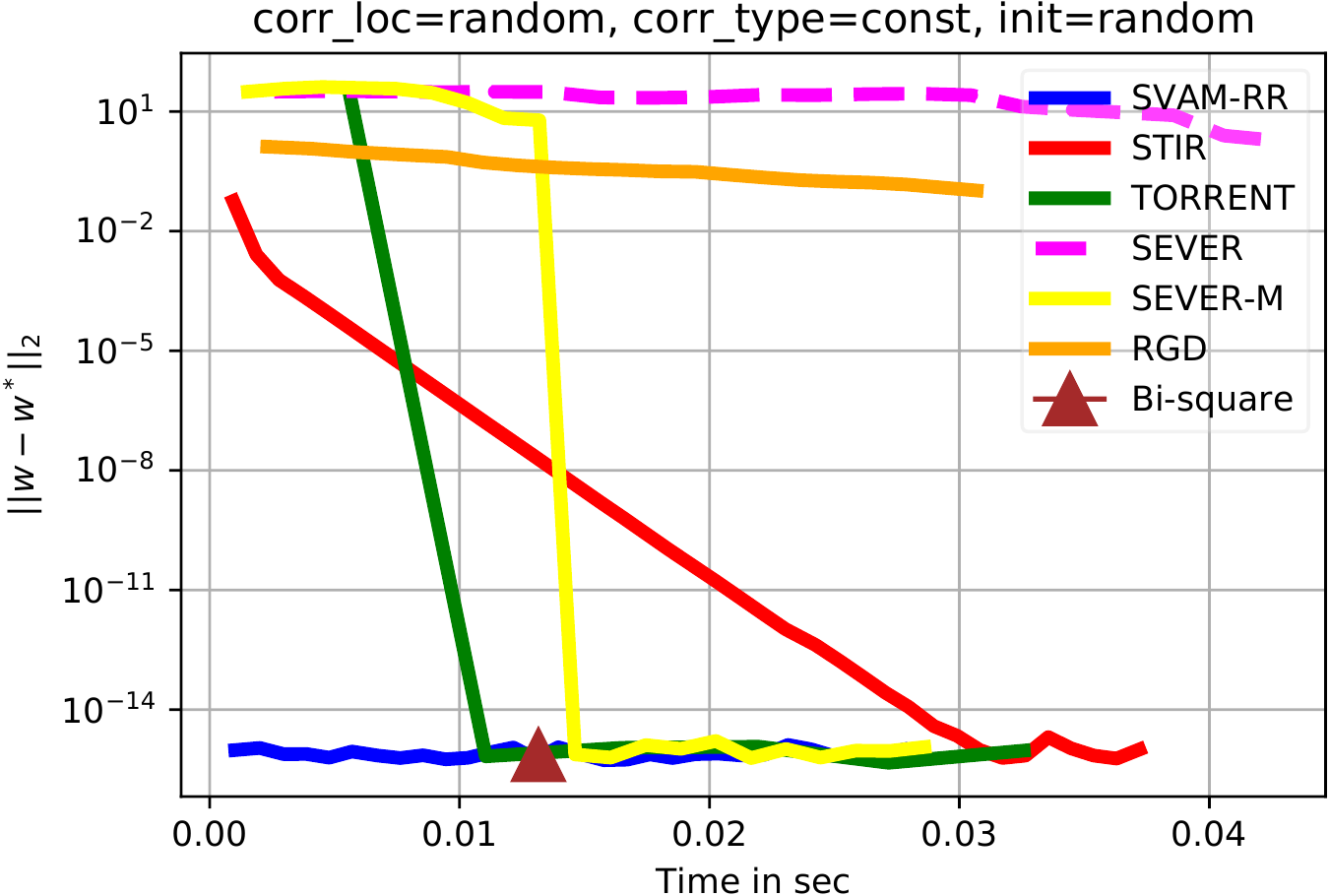}
	\includegraphics[width=0.32\textwidth]{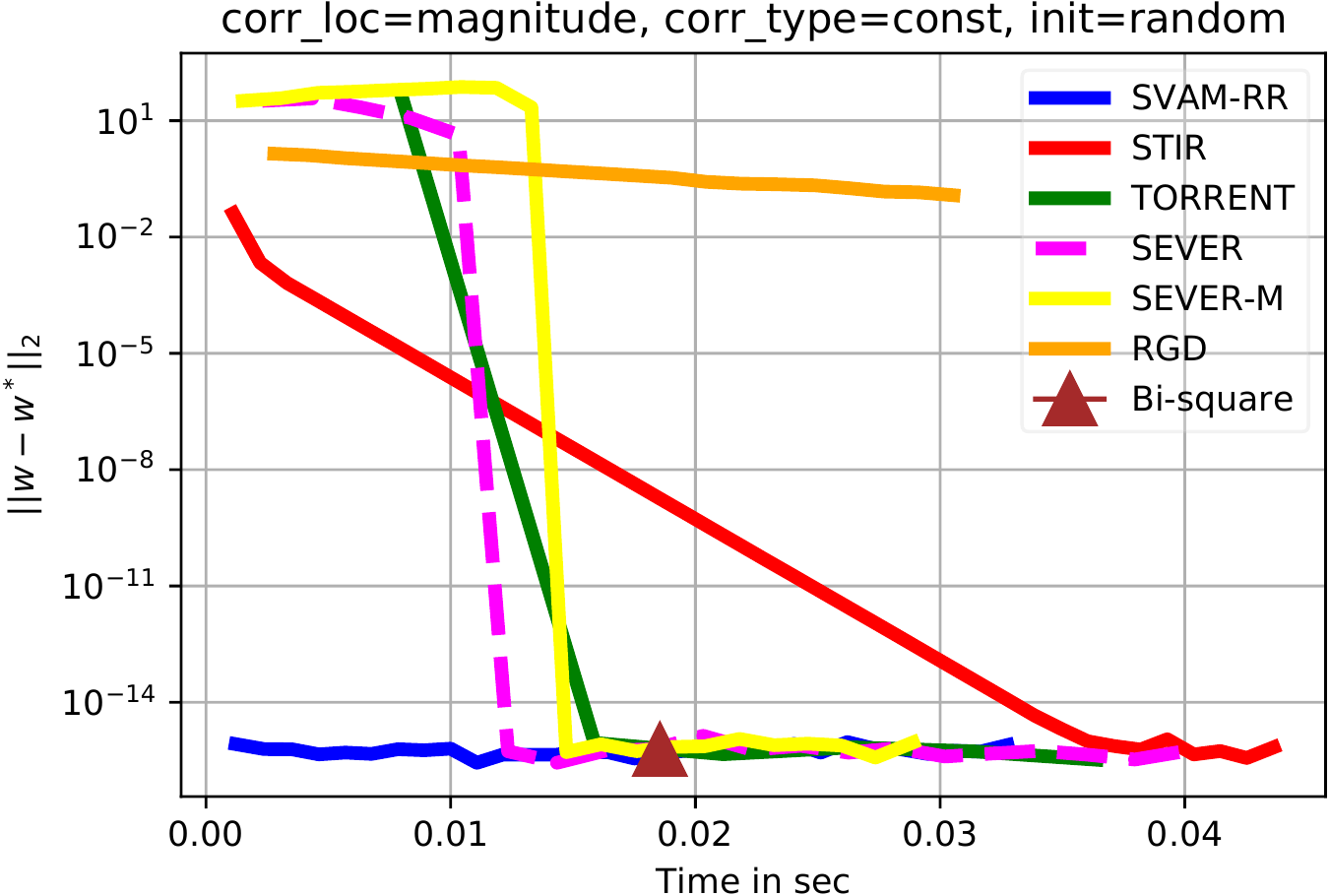}
	\includegraphics[width=0.32\textwidth]{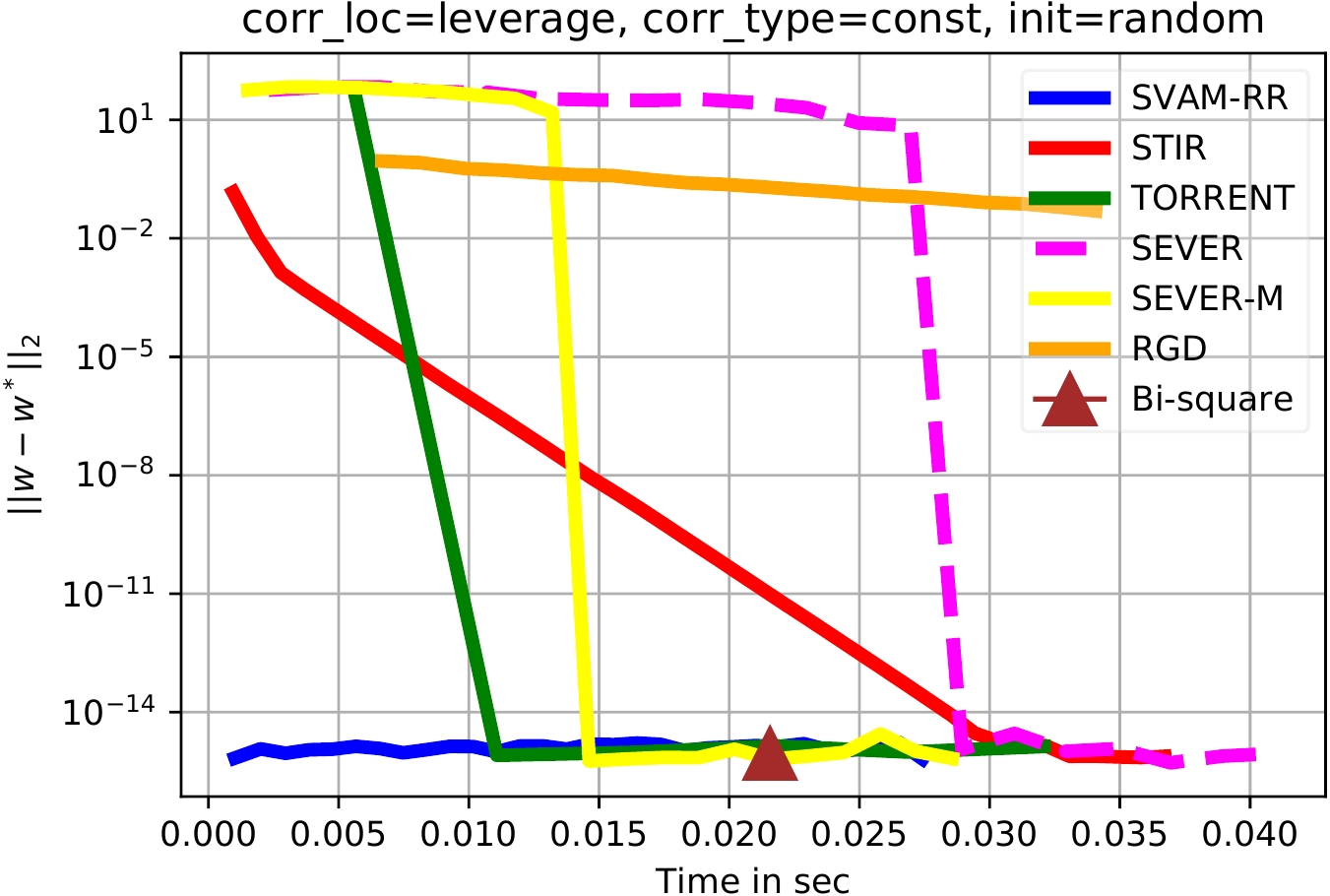}\\
	\includegraphics[width=0.32\textwidth]{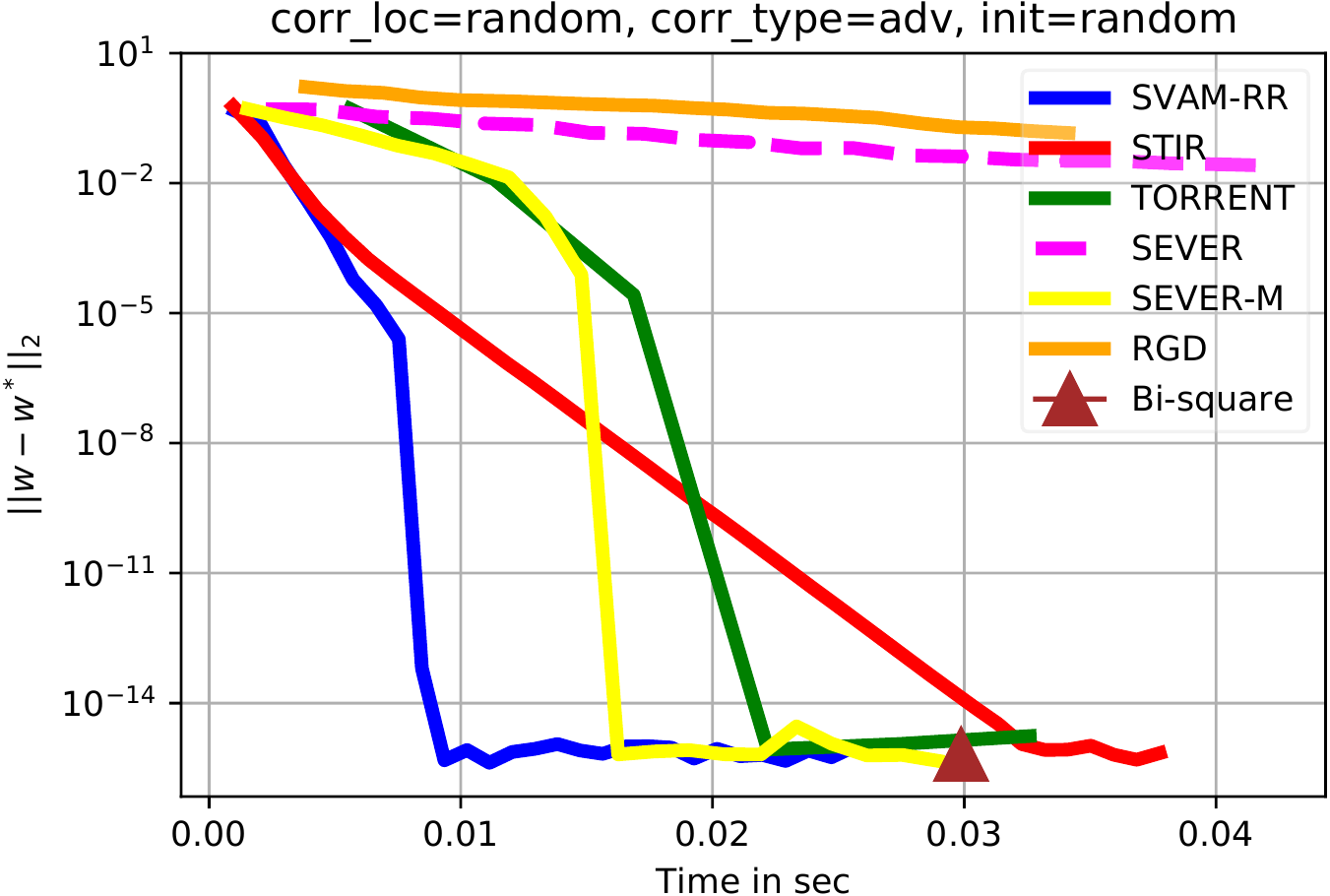}
	\includegraphics[width=0.32\textwidth]{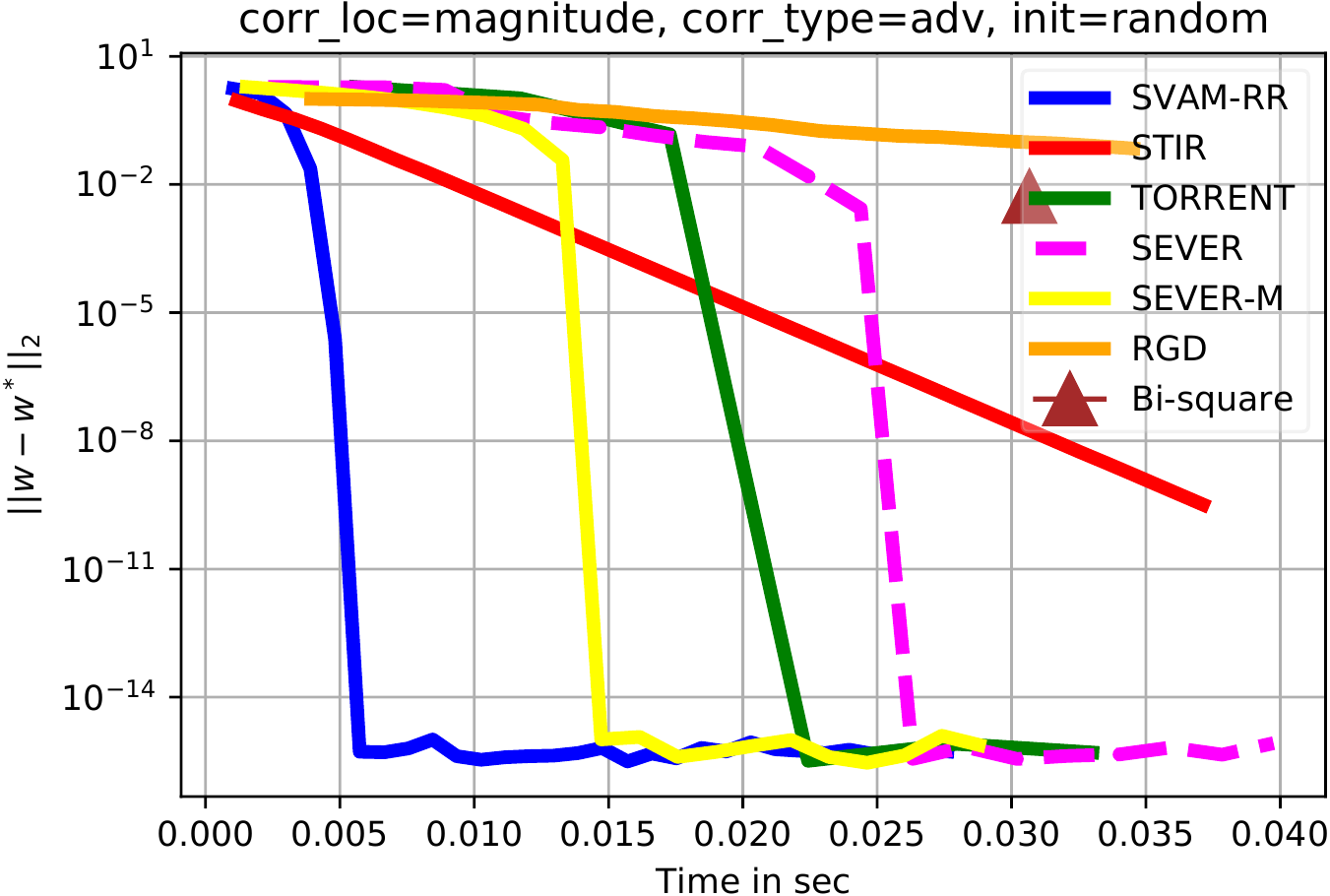}
	\includegraphics[width=0.32\textwidth]{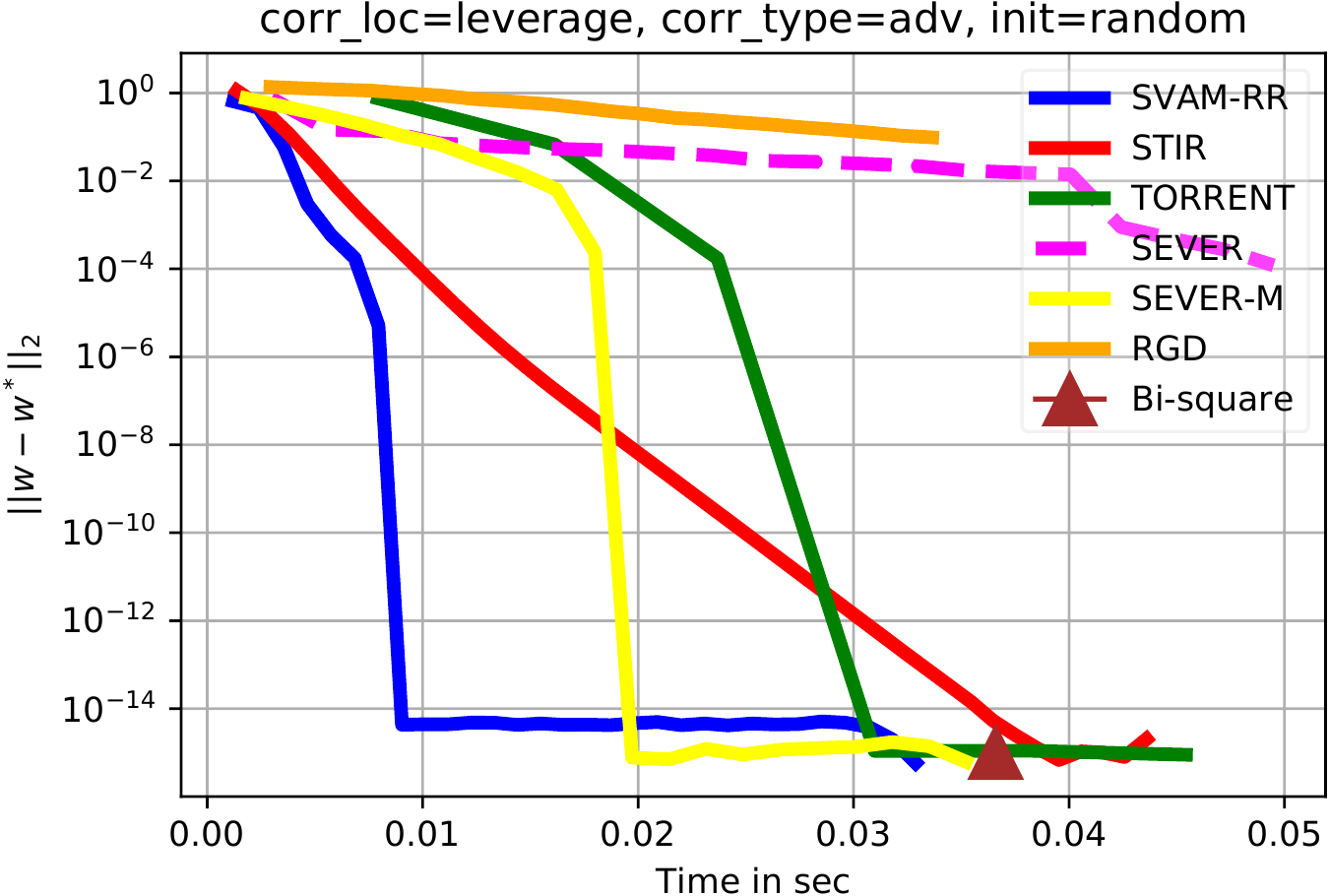}\\
	\includegraphics[width=0.32\textwidth]{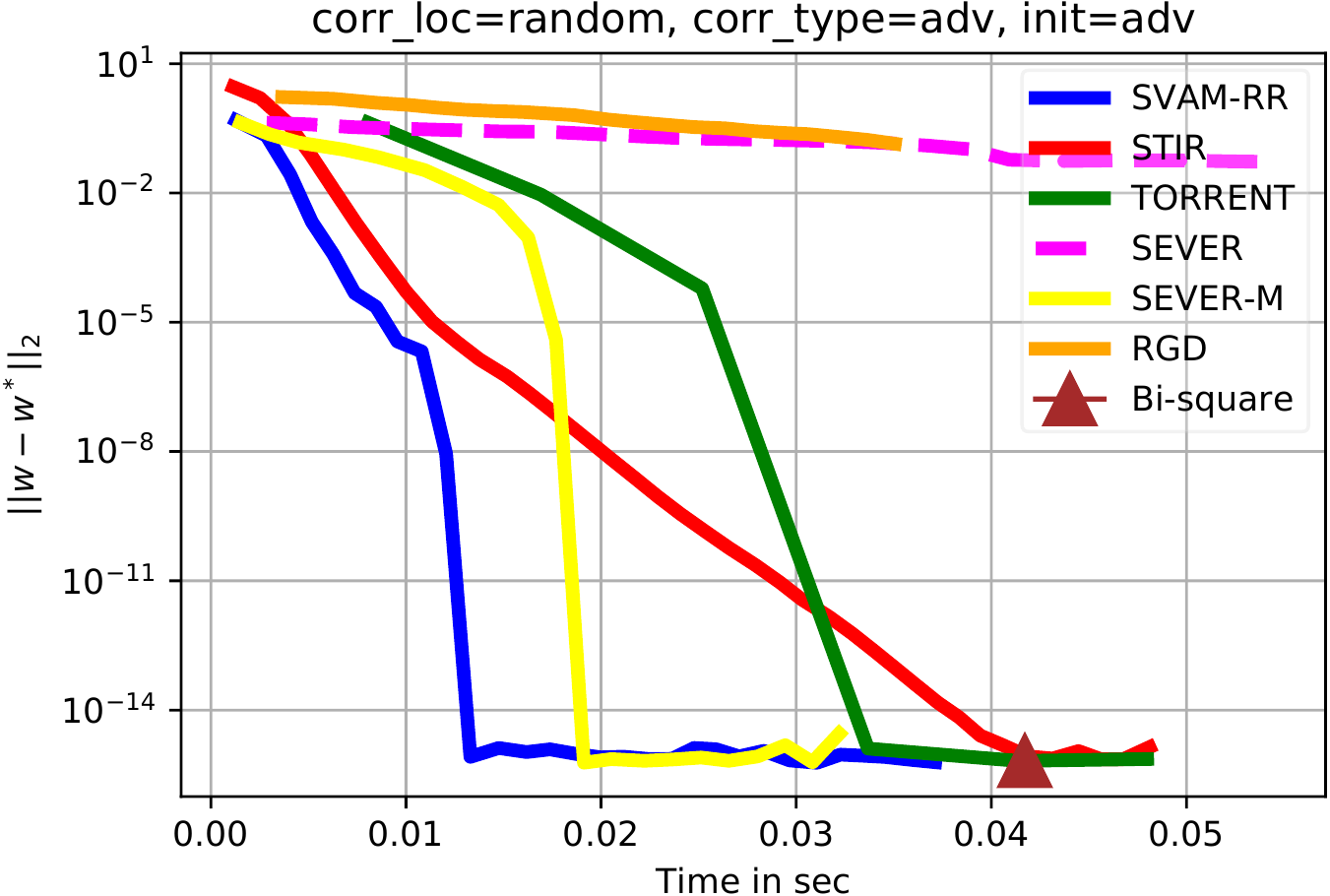}
	\includegraphics[width=0.32\textwidth]{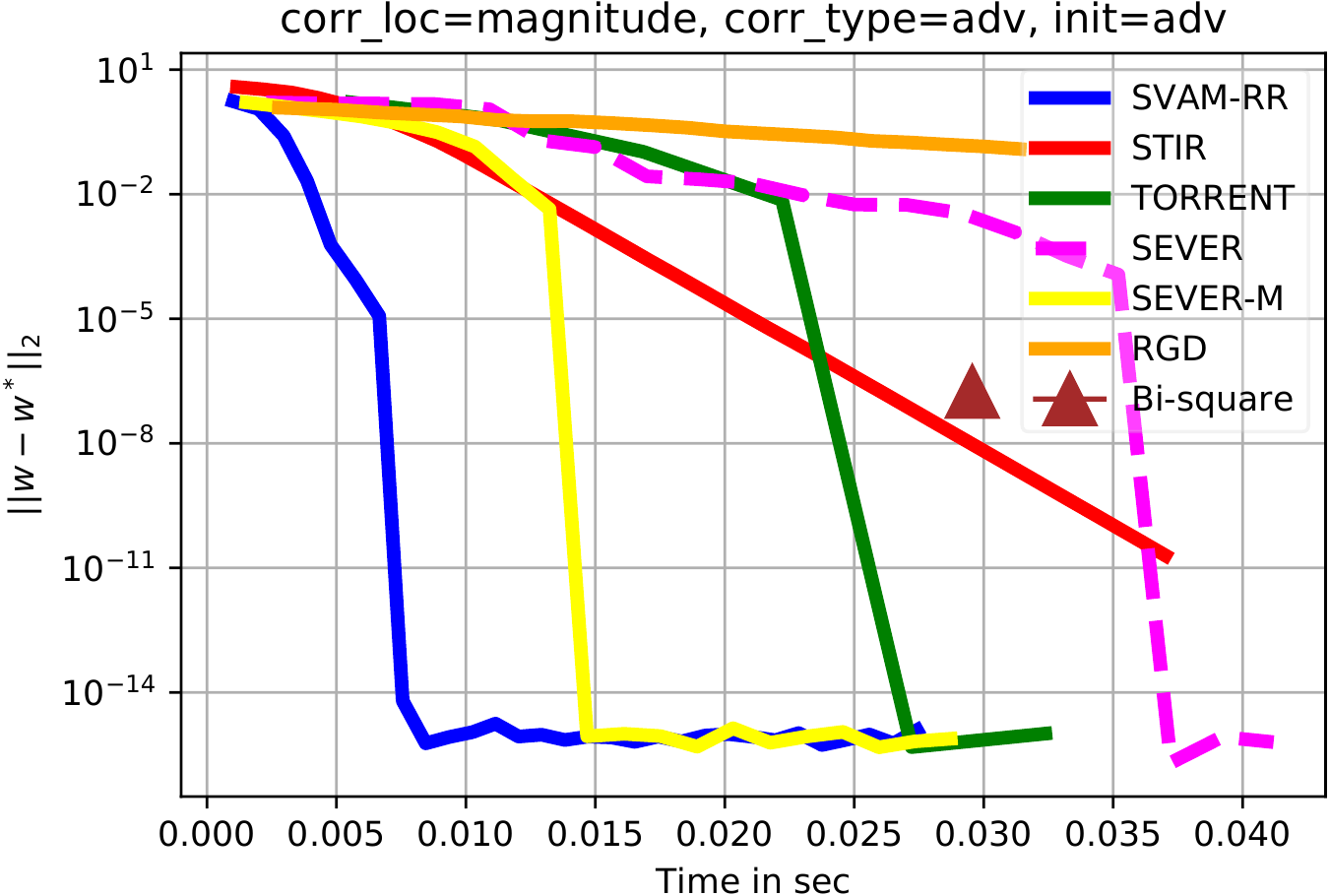}
	\includegraphics[width=0.32\textwidth]{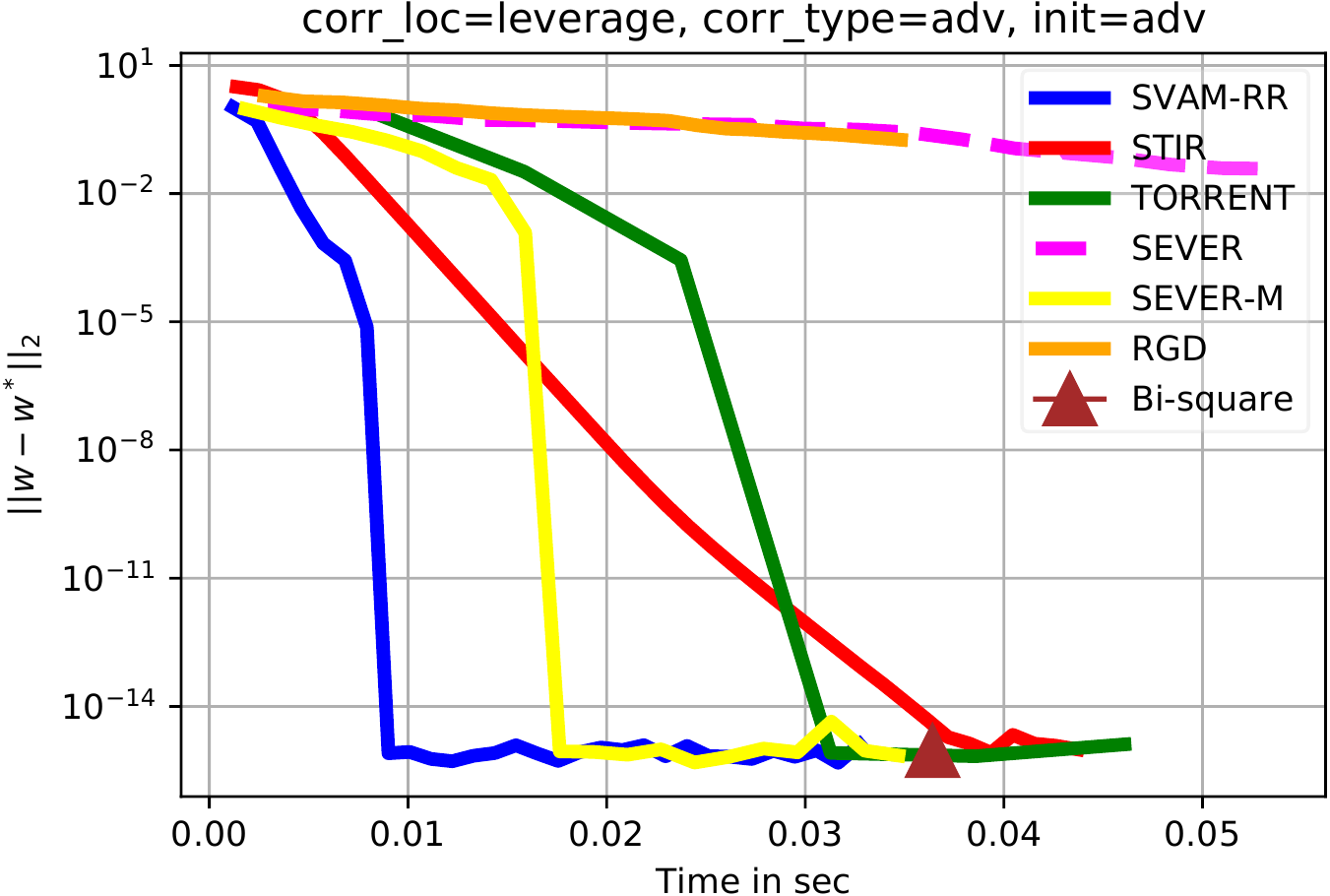}\\
	\caption{\gem exhibits a high degree of tolerance (superior to competitor algorithms) to a variety of corruption models such as choice of corruption location (e.g. choosing corruption locations randomly, based on leverage scores, based on magnitude of the clean label, etc), choice of type of corruption (e.g. sign-flip, constant shift, using an adversarial model, etc) and initialization schemes (e.g. random, adversarial, etc). Please see the text in Appendix~\ref{app:exps} for more details.}
	\label{fig:corruption_model}%
\end{figure}

\clearpage

\section{Tolerance to Different Adversaries and Adversarial Initialization Schemes}
\label{app:exps}

Figure \ref{fig:avg-conv} repeats the experiment of Fig. \ref{fig:conv}(a) 10 times, showing that the convergence results are consistent across several runs of the experiment. In this comparison, we put two closet competitors STIR and TORRENT, along with \gem, and omit other competitors to avoid clutter in the figure. The experiment shows that the performance of the methods does not vary very wildly across the runs and \gem's convergence continues to remain faster than its competitors.

Figure \ref{fig:vam-beta} considers the VAM method with several values of $\beta$ ranging from very small to very large. It is apparent that no single fixed value of $\beta$ is able to offer satisfactory result indicating that dynamically changing the value of $\beta$ as done by \gem is required for better convergence results.

Figure~\ref{fig:corruption_model} demonstrates recovery of $\vwo$, under different ways of simulating the adversary and initialization of algorithms. We consider corruption models, that differ in choosing locations for corruption and the way chosen points are corrupted. We demonstrate recovery, on choosing points for corruption in three ways: i) random, ii) using magnitude of response and iii) using leverage statistic score. The leverage score of a point increases as it lies farther from the mean of the data points. The diagonal elements of the projection matrix $P=X^\top(XX^\top)^{-1}X$, gives the respective leverage scores and $k$ data points having largest leverage score are choosen for corruption. After selecting the data points to be corrupted, we either i) flip the sign of the response, ii) set the responses to a constant value B or iii) use an adversarial model $\tilde{\mathbf{w}}$, to generate the corrupted response i.e. set the  response to $\tilde y_i = \ip{\tilde{\mathbf{w}}}{\vx_i}$. The initialization offered was also varied in two ways: i) random initialization, and ii) adversarial initialization where \gem was initialized at $\tilde\vw$, the same adversarial model using which corruptions were introduced. 

Experiments were performed by varying location of corruption in \{random, absolute magnitude, leverage score\}, corruption type in \{adversarial, sign change, set constant\} and initialization in \{random, adversarial\}. The experimental setting is given in the title of each figure, while all of them have $n=1000$, $d=10$ and $\alpha=0.15$. It can be observed that in general, \gem demonstrates superior performance irrespective of the adversarial models and initialization.

It can be also observed that SVAM converges after single iteration when corruptions are introduced by setting responses to a constant (second column) whereas Tukey's bisquare method does not converge well, when absolute magnitude is used to select location of corruption(second row), except for constant response corruptions.

\section{Proof of Theorem~\ref{thm:gem-main}}
\begin{theorem}[\gem convergence - Restated]
\label{thm:gem-main-restated}
Suppose the data and likelihood distribution satisfy the $\lambda_\beta$-LWSC and $\Lambda_\beta$-LWLC properties for all values of $\beta$ in the range $(0,\beta_{\max}]$. Then if \gem is initialized at a point $\hvw^1$ and initial scale $\beta_1 > 0$ such that $\beta_1\cdot\norm{\hvw^1 - \vwo}_2^2 \leq 1$ then for any $\epsilon > \frac1{\beta_{\max}}$, for small-enough scale increment $\step > 1$, \gem ensures $\norm{\hvw^T - \vwo}_2 \leq \epsilon$ within $T = \bigO{\log\frac1\epsilon}$ iterations.
\end{theorem}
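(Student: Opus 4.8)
The plan is a single induction establishing the invariant $\hvw^t \in \cB_2(\vwo,1/\sqrt{\beta_t})$, equivalently $\beta_t\norm{\hvw^t-\vwo}_2^2 \leq 1$, at every iteration $t$ for which $\beta_t \leq \beta_{\max}$; the base case $t=1$ is exactly the initialization hypothesis $\beta_1\norm{\hvw^1-\vwo}_2^2 \leq 1$. For the inductive step I would fix such a $t$, assume $\hvw^t$ lies in the ball, and bound $\norm{\hvw^{t+1}-\vwo}_2$ using the regularity of $\tilde Q_{\beta_t}(\cdot\cond\hvw^t)$. Two facts are used repeatedly: first, once $\hvw^t$ is frozen the weights $s^t_i > 0$ are constants, so $\tilde Q_{\beta_t}(\cdot\cond\hvw^t)$ is a weighted negative log-likelihood of a GLM and hence \emph{globally} convex and smooth in its first argument --- for each instantiation the per-example loss has a positive-semidefinite Hessian (least squares and logistic are standard, and the gamma loss $\ell(\vw,\vx,y) = (1-\phi)^{-1}y\exp(\ip\vw\vx) - \ip\vw\vx$ has Hessian $(1-\phi)^{-1}y\exp(\ip\vw\vx)\,\vx\vx^\top \succeq 0$ since $y > 0$) --- and $\nabla\tilde Q_{\beta_t}(\hvw^{t+1}\cond\hvw^t)=0$ by optimality; second, since the anchor $\hvw^t$ lies in $\cB_2(\vwo,1/\sqrt{\beta_t})$, the LWSC and LWLC guarantees apply to $\tilde Q_{\beta_t}(\cdot\cond\hvw^t)$ with $\vu = \hvw^t$.

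The delicate point is that LWSC holds only \emph{inside} $\cB_2(\vwo,1/\sqrt{\beta_t})$, whereas $\hvw^{t+1}$ minimizes over all of $\bR^d$, so applying strong convexity to $\hvw^{t+1}$ directly would be circular. I would instead reason through the constrained minimizer $\bar\vw := \arg\min_{\vw\in\cB_2(\vwo,1/\sqrt{\beta_t})}\tilde Q_{\beta_t}(\vw\cond\hvw^t)$. Combining the first-order (variational-inequality) optimality of $\bar\vw$ over the ball, the $\lambda_{\beta_t}$-strong monotonicity of $\nabla\tilde Q_{\beta_t}(\cdot\cond\hvw^t)$ on the ball (obtained by integrating the LWSC Hessian bound), Cauchy--Schwarz, and the LWLC bound $\norm{\nabla\tilde Q_{\beta_t}(\vwo\cond\hvw^t)}_2 \leq \Lambda_{\beta_t}$ yields $\norm{\bar\vw-\vwo}_2 \leq \Lambda_{\beta_t}/\lambda_{\beta_t}$. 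If $\sqrt{\beta_t}\,\Lambda_{\beta_t} < \lambda_{\beta_t}$, this places $\bar\vw$ strictly in the interior of the ball, so $\bar\vw$ is an unconstrained critical point of the globally convex $\tilde Q_{\beta_t}(\cdot\cond\hvw^t)$; since that function is $\lambda_{\beta_t}$-strongly convex on the ball, no global minimizer can sit on the boundary, so the minimizer $\hvw^{t+1}$ coincides with $\bar\vw$ and $\norm{\hvw^{t+1}-\vwo}_2 \leq \Lambda_{\beta_t}/\lambda_{\beta_t}$.

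To close the induction I need $\norm{\hvw^{t+1}-\vwo}_2 \leq 1/\sqrt{\beta_{t+1}} = 1/\sqrt{\step\beta_t}$, i.e.\ $\step \leq \lambda_{\beta_t}^2/(\beta_t\Lambda_{\beta_t}^2)$; note that $\step>1$ together with this inequality already forces $\sqrt{\beta_t}\,\Lambda_{\beta_t} < \lambda_{\beta_t}$, so the interior condition invoked above comes for free. The assertion that the regularity constants are favourable enough for some $\step>1$ to exist --- made explicit as ``there exist $\step>1$'' in the application theorems (e.g.\ Thm~\ref{thm:rr-main}) --- is precisely that $\inf_{\beta\in(0,\beta_{\max}]}\lambda_\beta^2/(\beta\Lambda_\beta^2) > 1$; choosing any $\step$ strictly between $1$ and this infimum makes the inductive step valid at every iteration with $\beta_t \leq \beta_{\max}$, which is the ``small-enough $\step$'' of the statement. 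Granting the invariant, $\norm{\hvw^t-\vwo}_2^2 \leq 1/\beta_t = 1/(\step^{t-1}\beta_1)$ decays geometrically, so stopping at the first $T$ with $\step^{T-1}\beta_1 \geq 1/\epsilon$ gives $\norm{\hvw^T-\vwo}_2^2 \leq \epsilon$ with $T = 1 + \lceil \log(1/(\epsilon\beta_1))/\log\step \rceil = \bigO{\log\frac1\epsilon}$. The hypothesis $\epsilon > 1/\beta_{\max}$ is what keeps the scale below $\beta_{\max}$ throughout: at this $T$ one has $\beta_T \leq \step/\epsilon$, which is at most $\beta_{\max}$ once $\step$ is close enough to $1$, so LWSC/LWLC remain available at every iteration the argument touches.

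I expect the main obstacle to be exactly this containment question --- certifying that the unconstrained weighted-MLE iterate provably lands inside the shrinking ball $\cB_2(\vwo,1/\sqrt{\beta_{t+1}})$ on which the next round of local regularity is guaranteed --- because the circular-looking dependence between ``$\hvw^{t+1}$ lies in the ball'' and ``LWSC applies at $\hvw^{t+1}$'' must be broken cleanly (here via the constrained-minimizer detour together with global convexity of $\tilde Q_\beta$). Everything downstream --- the geometric error decay and the $\bigO{\log(1/\epsilon)}$ iteration count --- is then routine bookkeeping.
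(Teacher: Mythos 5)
Your proposal is correct and follows the same skeleton as the paper's proof: induct on the invariant $\beta_t\cdot\norm{\hvw^t-\vwo}_2^2\leq 1$, use LWSC/LWLC to bound $\norm{\hvw^{t+1}-\vwo}_2$ by a multiple of $\Lambda_{\beta_t}/\lambda_{\beta_t}$, pick $\step>1$ small enough that this bound is below $1/\sqrt{\step\beta_t}$, and read off the linear rate from the geometric growth of $\beta_t$ (with $\epsilon>1/\beta_{\max}$ keeping all scales inside the regime where LWSC/LWLC are assumed). The one genuine difference is how the key step is executed: the paper applies the strong-convexity inequality of $\tilde Q_{\beta_t}(\cdot\cond\hvw^t)$ directly between $\vwo$ and the unconstrained minimizer $\hvw^{t+1}$, together with $\tilde Q_{\beta_t}(\hvw^{t+1}\cond\hvw^t)\leq\tilde Q_{\beta_t}(\vwo\cond\hvw^t)$ and Cauchy--Schwarz, obtaining $\norm{\hvw^{t+1}-\vwo}_2\leq 2\Lambda_{\beta_t}/\lambda_{\beta_t}$ --- without pausing over the fact that LWSC is only guaranteed on $\cB_2(\vwo,1/\sqrt{\beta_t})$ while $\hvw^{t+1}$ is a priori arbitrary in $\bR^d$. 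Your constrained-minimizer detour (variational inequality plus strong monotonicity on the ball, then the interior argument identifying the constrained and unconstrained minimizers) closes exactly this gap cleanly, and as a bonus yields the slightly sharper constant $\Lambda_{\beta_t}/\lambda_{\beta_t}$. The price is an extra hypothesis the paper's Definition~\ref{defn:lwsc-lwss} does not state, namely global convexity of $\tilde Q_{\beta_t}(\cdot\cond\hvw^t)$; this is harmless here since fixed nonnegative weights make the weighted negative log-likelihood convex for every instantiation in the paper (including the gamma loss, as you verify), but it is worth flagging that your argument uses it while the paper's statement, read literally, does not supply it.
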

\begin{proof}
The key to this proof is to maintain the invariant $\beta_t\cdot\norm{\hvw^t - \vwo}_2^2 \leq 1$. Note that initialization is done precisely to ensure this at the beginning of the execution of the algorithm which acts as the base case for an inductive argument. For the inductive case, consider an iteration $t$ and let $\sqrt{\beta_t}\cdot\norm{\hvw^t - \vwo}_2 \leq 1$. LWSC ensures strong convexity giving
\[
\tilde Q_{\beta_t}(\hvw^{t+1} \cond \hvw^t) - \tilde Q_{\beta_t}(\vwo \cond \hvw^t) \geq \ip{\nabla \tilde Q_{\beta_t}(\vwo\cond\hvw^t)}{\hvw^{t+1} - \vwo} + \frac{\lambda_{\beta_t}}2\norm{\hvw^{t+1} - \vwo}_2^2
\]
Since $\hvw^{t+1}$ minimizes $\tilde Q_{\beta_t}(\cdot \cond \hvw^t)$, we have $\tilde Q_{\beta_t}(\hvw^{t+1} \cond \hvw^t) \leq \tilde Q_{\beta_t}(\vwo \cond \hvw^t)$. Elementary manipulations and the Cauchy-Schwartz inequality now give us
\[
\norm{\hvw^{t+1} - \vwo}_2 \leq \frac{2\norm{\nabla \tilde Q_{\beta_t}(\vwo\cond\hvw^t)}_2}{\lambda_{\beta_t}} \leq \frac{2\Lambda_{\beta_t}}{\lambda_{\beta_t}}.
\]
Now, we will additionally ensure that we choose the scale increment $\step$ to be small enough (while still ensuring $\step > 1$) such that $\frac{2\Lambda_\beta}{\lambda_\beta} < \sqrt\frac1{\step\beta}$ for all $\beta \in (0,\beta_{\max}]$. Combining this with the above result gives us
\[
\norm{\hvw^{t+1} - \vwo}_2^2 \leq \frac1{\step\beta_t}
\]
Thus, if we now set $\beta_{t+1} = \step\beta_t$, then rearranging the terms in the above inequality tell us that $\beta_{t+1}\cdot\norm{\hvw^{t+1} - \vwo}_2^2 \leq 1$ which lets us continue the inductive argument. Note that this process can continue on till we have $\beta_t \leq \beta_{\max}$ since the LWSC/LWLC properties are assured till that point. Moreover, since $\beta_t$ goes up by a constant fraction at each step and $\norm{\hvw^t - \vwo}_2^2 \leq \frac1{\beta_t}$ due to the invariant, a linear rate of convergence is assured which finishes the proof. The existence of a suitable scale increment $\step$ satisfying the above requirements is established in a case-wise manner by Theorems~\ref{repthm:rr-main}, \ref{thm:rr-fully-adaptive-restated}, \ref{repthm:me-main} and \ref{repthm:gam-main}. We also note that, as discussed in \S\ref{sec:me-rr-lr} and elaborated in Appendix~\ref{app:gam}, since gamma regression requires an alternate parameterization owing to its need to support only non-negative labels, the invariant used for the convergence bound for \gemgam is also slightly altered as mentioned in Thm~\ref{thm:gam-main} to instead use $\beta_t\cdot(\exp(\norm{\hvw^t - \vwo}_2) - 1)^2 \leq 1$.
\end{proof}

\section{Some Helpful Results}
Below we present a few helpful results.

\begin{lemma}
\label{lem:rx-bound}
Suppose $\vepsilon^i \sim \cN(\vzero, I), i = 1, \ldots, n$ and denote $R_X := \max_{i \in n} \norm{\vepsilon^i}_2$. Then we have $R_X \leq \sqrt n$ with probability at least $1 - \exp(-\Om n)$.
\end{lemma}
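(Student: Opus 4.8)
The plan is to reduce the claim to a standard sub-exponential tail bound for a single $\chi^2_d$ variable followed by a union bound over the $n$ samples. Note first that $\norm{\vepsilon^i}_2^2$ is distributed as $\chi^2_d$, so it concentrates around $d$; consequently the bound $R_X \le \sqrt n$ can only hold with overwhelming probability when the ambient dimension $d$ is not much larger than $n$. The relation $d = \bigO{n}$ that this requires (with a small enough constant, or more precisely $d \le (1-\delta)n$ for some constant $\delta>0$) is subsumed by the running sample-complexity assumption $n = \Om{d\log d}$ under which this lemma is applied, so I would simply carry this dependence rather than attempt a dimension-free statement.

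Concretely, I would fix an index $i$ and bound $\P{\norm{\vepsilon^i}_2^2 > n}$ using the moment generating function $\mathbb{E}\exp\br{t\norm{\vepsilon^i}_2^2} = (1-2t)^{-d/2}$, valid for $t \in (0,1/2)$. Markov's inequality applied to $\exp\br{t\norm{\vepsilon^i}_2^2}$ gives $\P{\norm{\vepsilon^i}_2^2 > n} \le \exp(-tn)(1-2t)^{-d/2}$ for each such $t$, and the choice $t = \tfrac{n-d}{2n}$ (which lies in $(0,1/2)$ whenever $d < n$) makes the right-hand side equal to $\exp\br{-\tfrac{n-d}{2} + \tfrac d2\ln\tfrac nd}$. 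Writing $d = \rho n$, this exponent equals $-\tfrac n2\br{1 - \rho - \rho\ln\tfrac1\rho}$, and since $\rho - \rho\ln\rho$ is increasing on $(0,1)$ with value $1$ at $\rho = 1$, the exponent is $-cn$ for a constant $c = c(\rho) > 0$ as soon as $\rho$ is bounded away from $1$; a Laurent--Massart-type deviation inequality for $\chi^2_d$ yields the same conclusion. A union bound over $i \in [n]$ then gives $\P{R_X^2 > n} \le n\exp(-cn) = \exp(-cn + \ln n) = \exp(-\Om{n})$ since $\ln n = o(n)$, and taking square roots on the complementary event gives $R_X \le \sqrt n$ with probability at least $1 - \exp(-\Om{n})$, as claimed.

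The only obstacle, and it is a mild one, is bookkeeping the dimension dependence: one must pick the constant relating $n$ and $d$ so that both the single-sample exponent $c(\rho)n$ and, after the union bound, the leftover $cn - \ln n$ remain $\Om{n}$. Beyond that the argument involves nothing more than a routine exponential Markov bound together with a union bound.
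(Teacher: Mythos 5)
Your proof is correct and is exactly the ``standard argument'' the paper invokes without detail: a chi-squared (sub-exponential) tail bound on each $\norm{\vepsilon^i}_2^2$ via the moment generating function, followed by a union bound over the $n$ samples. Your explicit observation that the bound needs $d \leq (1-\delta)n$ for some constant $\delta > 0$ --- left implicit in the paper's statement but guaranteed in all its applications by the sample-size requirement $n = \Om{d\log d}$ --- is a welcome clarification rather than a deviation.
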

\begin{proof}
Follows from standard arguments.
\end{proof}

\begin{lemma}
\label{lem:ssc-sss}
For covariate vectors $X = \bs{\vx_1,\ldots,\vx_n}$ generated from an isotropic sub-Gaussian distribution, for any fixed set $S \subset [n]$ and $n = \Om{d}$, with probability at least $1-\exp(-\Om d)$,
\[
0.99\abs{S} \leq \lambda_{\min}(X_SX_S^\top) \leq \lambda_{\max}(X_SX_S^\top) \leq 1.01\abs{S},
\]
where the constant inside $\Om{\cdot}$ depends only on the sub-Gaussian distribution and universal constants.
\end{lemma}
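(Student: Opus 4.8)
The plan is to recognize Lemma~\ref{lem:ssc-sss} as the standard statement that an isotropic sub-Gaussian ensemble concentrates around its (identity) population covariance, and to prove it by combining a one-dimensional Bernstein bound with an $\epsilon$-net over the sphere. Write $X_S X_S^\top = \sum_{i \in S}\vx_i\vx_i^\top$; isotropy of the covariate distribution gives $\mathbb{E}\bs{X_S X_S^\top} = \abs S\cdot I$. Since for every unit vector $\vu$ we have $\vu^\top (X_S X_S^\top)\vu = \sum_{i\in S}\ip{\vx_i}{\vu}^2$, and hence $\lambda_{\min}(X_S X_S^\top) = \abs S\cdot\min_{\vu\in\mathbb S^{d-1}}\frac1{\abs S}\sum_{i\in S}\ip{\vx_i}{\vu}^2$ and symmetrically for $\lambda_{\max}$ with a maximum, it suffices to establish that with probability at least $1-\exp(-\Om d)$,
\[
\sup_{\vu\in\mathbb S^{d-1}}\ \abs{\frac1{\abs S}\sum_{i\in S}\br{\ip{\vx_i}{\vu}^2 - 1}}\ \leq\ 0.01.
\]

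First I would fix a single direction $\vu\in\mathbb S^{d-1}$. Because the $\vx_i$ are independent, isotropic and sub-Gaussian, each $\ip{\vx_i}{\vu}$ is a mean-zero sub-Gaussian scalar of unit variance whose sub-Gaussian norm is bounded by an absolute multiple of the ensemble's sub-Gaussian parameter; therefore $Z_i \deff \ip{\vx_i}{\vu}^2 - 1$ is a centered, independent, sub-exponential random variable with sub-exponential norm $\bigO1$ (the constant depending only on the distribution). Bernstein's inequality for sums of independent sub-exponential variables then yields, for every $t\in(0,1)$,
\[
\Pr\bs{\abs{\frac1{\abs S}\sum_{i\in S}Z_i} > t}\ \leq\ 2\exp\br{-c\,\abs S\,t^2},
\]
where $c>0$ depends only on the sub-Gaussian parameter.

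Next I would upgrade this to the uniform bound by a covering argument. Let $\mathcal N$ be a $\frac14$-net of $\mathbb S^{d-1}$ with $\abs{\mathcal N}\leq 9^d$; the standard bound relating the spectral norm of a symmetric matrix to its quadratic form on a $\frac14$-net gives $\norm{\frac1{\abs S}X_S X_S^\top - I}_{\mathrm{op}} \leq 2\max_{\vu\in\mathcal N}\abs{\vu^\top\br{\frac1{\abs S}X_S X_S^\top - I}\vu}$. Applying the per-direction estimate with $t = 0.005$ and taking a union bound over $\mathcal N$ bounds the failure probability by $9^d\cdot 2\exp(-c\,\abs S\,(0.005)^2)$. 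As long as $\abs S = \Om d$ with a sufficiently large constant (depending on $c$, hence on the sub-Gaussian parameter) --- which is the case in the applications of this lemma, where $S$ is a constant fraction of $[n]$ and $n = \Om d$ --- this is at most $\exp(-\Om d)$; on the complementary event every eigenvalue of $\frac1{\abs S}X_S X_S^\top$ lies in $[0.99,1.01]$, i.e. $0.99\abs S\leq\lambda_{\min}(X_S X_S^\top)\leq\lambda_{\max}(X_S X_S^\top)\leq 1.01\abs S$, which is the claim.

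I expect the main obstacle to be the same one that rules out a direct appeal to a matrix Chernoff bound: sub-Gaussian covariates have unbounded norm, so the summands $\vx_i\vx_i^\top$ are not uniformly bounded in operator norm. The net-plus-Bernstein route circumvents this by reducing everything to scalar sub-exponential concentration, at the price of the $9^d$ net cardinality; the only care needed is to make the exponent $c\,\abs S\,t^2$ dominate $d\ln 9$ with a linear-in-$d$ surplus, which is exactly what fixes the hidden constant in the hypothesis $\abs S = \Om d$ and keeps that constant dependent only on the sub-Gaussian parameter and universal constants. Alternatively one could simply invoke the textbook form of this fact for isotropic sub-Gaussian ensembles and rescale.
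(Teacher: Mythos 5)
Your proof is correct and is essentially the standard argument behind this lemma: the paper gives no proof of its own, simply citing \cite{BhatiaJK2015}, and the result there rests on exactly the route you take --- reduce to scalar sub-exponential (Bernstein) concentration of $\ip{\vx_i}{\vu}^2$ for a fixed direction, then upgrade to an operator-norm bound via a $\frac14$-net of cardinality $9^d$ and a union bound. Your remark that the exponent must dominate $d\ln 9$, so that one really needs $\abs{S} = \Om{d}$ with a large enough constant (which holds in every application of the lemma, where $S$ is a constant fraction of $[n]$), is the correct reading of the loosely stated hypothesis $n = \Om{d}$.
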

\begin{proof}
Taken from \cite{BhatiaJK2015}.
\end{proof}

\begin{lemma}
\label{lem:change-of-exp}
If $\vepsilon \sim \cN(\vzero, \frac1\betao\cdot I)$, then for any function $f: \bR^d \rightarrow \bR$, any $\beta > 0$ and any $\vDelta \in \bR^d$, we have
\[
\E{\exp\br{-\frac\beta2\norm{\vepsilon - \vDelta}_2^2}\cdot f(\vepsilon)} = \frac1\cons\cdot\E{f(\vx)},
\]
where $\vx \sim \cN\br{\frac{\beta}{\beta+\betao}\vDelta, \frac1{\beta+\betao}\cdot I}$ and $\cons = \br{\sqrt{\frac{\beta+\betao}\betao}}^d\exp\br{\frac{\beta\betao}{2(\beta+\betao)}\norm{\vDelta}_2^2}$.
\end{lemma}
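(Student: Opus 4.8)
The plan is to prove this by a direct Gaussian-integral computation: write the left-hand expectation as an integral against the density of $\cN(\vzero,\frac1\betao I)$, merge the two exponential factors, complete the square in $\vepsilon$, and reinterpret what remains as an expectation under a re-centered, rescaled Gaussian. Concretely, the first step is to write
\[
\E{\exp\br{-\tfrac\beta2\norm{\vepsilon-\vDelta}_2^2}f(\vepsilon)} = \int_{\bR^d}\br{\tfrac\betao{2\pi}}^{d/2}\exp\br{-\tfrac\betao2\norm{\vepsilon}_2^2 - \tfrac\beta2\norm{\vepsilon-\vDelta}_2^2}f(\vepsilon)\,d\vepsilon.
\]

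The next step is to collect the two quadratic terms in the exponent and complete the square, which gives the identity
\[
-\tfrac\betao2\norm{\vepsilon}_2^2 - \tfrac\beta2\norm{\vepsilon-\vDelta}_2^2 = -\tfrac{\beta+\betao}2\norm{\vepsilon - \tfrac{\beta}{\beta+\betao}\vDelta}_2^2 - \tfrac{\beta\betao}{2(\beta+\betao)}\norm{\vDelta}_2^2,
\]
whose second summand is $\vepsilon$-free and therefore factors out of the integral. I would then recognize $\exp\br{-\tfrac{\beta+\betao}2\norm{\vepsilon - \vmu_0}_2^2}$, with center $\vmu_0 := \tfrac{\beta}{\beta+\betao}\vDelta$, as the density of $\cN\br{\vmu_0,\tfrac1{\beta+\betao}I}$ up to the normalizing constant $\br{\tfrac{\beta+\betao}{2\pi}}^{d/2}$; multiplying and dividing by this constant rewrites the remaining integral as $\br{\tfrac\betao{\beta+\betao}}^{d/2}\E{f(\vx)}$ with $\vx\sim\cN\br{\vmu_0,\tfrac1{\beta+\betao}I}$ exactly as in the statement. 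Finally I would collect the two scalar prefactors $\br{\tfrac\betao{\beta+\betao}}^{d/2}$ and $\exp\br{-\tfrac{\beta\betao}{2(\beta+\betao)}\norm{\vDelta}_2^2}$ and verify that their product equals $1/\cons$ for $\cons = \br{\sqrt{\tfrac{\beta+\betao}\betao}}^d\exp\br{\tfrac{\beta\betao}{2(\beta+\betao)}\norm{\vDelta}_2^2}$, which yields the claim.

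There is no genuine obstacle beyond careful bookkeeping of the Gaussian normalizing constants and the algebra of completing the square; if anything is a potential pitfall it is tracking the exact powers of $\tfrac{\beta+\betao}{2\pi}$ versus $\tfrac\betao{2\pi}$ correctly. The only mild caveat, which I would state explicitly, is that $f$ must be such that the expectations on both sides are well defined and finite (for instance $f$ measurable of at most polynomial growth), which is implicit in the statement; under such a condition Fubini/absolute integrability justifies all the interchanges above.
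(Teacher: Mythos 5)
Your proposal is correct and follows essentially the same route as the paper's own proof: expand the expectation as a Gaussian integral, complete the square in $\vepsilon$ to pull out the $\vepsilon$-free factor $\exp\br{-\frac{\beta\betao}{2(\beta+\betao)}\norm{\vDelta}_2^2}$, recognize the remaining kernel as the (unnormalized) density of $\cN\br{\frac{\beta}{\beta+\betao}\vDelta,\frac1{\beta+\betao}I}$, and verify that the collected constants equal $1/\cons$. Your added remark on integrability of $f$ is a harmless (and reasonable) explicit caveat that the paper leaves implicit.
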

\begin{proof}
We have
\begin{figure}[H]
\begin{adjustbox}{max width=\textwidth}
\parbox{\linewidth}{
\begin{align*}
	&\E{\exp\br{-\frac\beta2\norm{\vepsilon - \vDelta}_2^2}\cdot f(\vepsilon)} = \br{\sqrt\frac\betao{2\pi}}^d\idotsint_{\bR^d}\exp\br{-\frac\beta2\norm{\vepsilon - \vDelta}_2^2}\cdot f(\vepsilon)\cdot \exp\br{-\frac\betao2\norm{\vepsilon}_2^2}\ d\vepsilon\\
	&= \br{\sqrt\frac\betao{2\pi}}^d\exp\br{-\frac\beta2\norm{\vDelta}_2^2}\idotsint_{\bR^d}\exp\br{-\frac{\beta+\betao}2\norm{\vepsilon}_2^2 + \beta\vepsilon^\top\vDelta}\cdot f(\vepsilon)\ d\vepsilon\\
	&= \br{\sqrt\frac\betao{2\pi}}^d\exp\br{-\frac\beta2\norm{\vDelta}_2^2+\frac{\beta^2}{2(\beta+\betao)}\norm{\vDelta}_2^2}\idotsint_{\bR^d}\exp\br{-\norm{\sqrt\frac{\beta+\betao}2\vepsilon - \frac{\beta}{\sqrt{2(\beta+\betao)}}\vDelta}_2^2}\cdot f(\vepsilon)\ d\vepsilon\\
	&= \br{\sqrt\frac\betao{2\pi}}^d\exp\br{-\frac{\beta\betao}{2(\beta+\betao)}\norm{\vDelta}_2^2}\idotsint_{\bR^d}\exp\br{-\frac{\beta+\betao}2\norm{\vepsilon - \frac{\beta}{\beta+\betao}\vDelta}_2^2}\cdot f(\vepsilon)\ d\vepsilon\\
	&= \br{\sqrt\frac\betao{2\pi}}^d\br{\sqrt\frac{2\pi}{\beta+\betao}}^d\exp\br{-\frac{\beta\betao}{2(\beta+\betao)}\norm{\vDelta}_2^2}\E{f(\vx)}
\end{align*}
}
\end{adjustbox}
\end{figure}
which finishes the proof upon using $\cons\br{\sqrt\frac\betao{2\pi}}^d\br{\sqrt\frac{2\pi}{\beta+\betao}}^d\exp\br{-\frac{\beta\betao}{2(\beta+\betao)}\norm{\vDelta}_2^2} = 1$.
\end{proof}

\begin{lemma}
\label{lem:change-of-exp2}
If $\vepsilon \sim \cN(\vzero, \frac1\betao\cdot I)$, then for any function $f: \bR^d \rightarrow \bR$, any $\beta > 0$ and any $\vDelta \in \bR^d$, we have
\[
\E{\exp\br{\beta\vepsilon^\top\vDelta}\cdot f(\vepsilon)} = \ncons\cdot\E{f(\vx)},
\]
where $\vx \sim \cN\br{\frac\beta\betao\vDelta, \frac1\betao\cdot I}$ and $\ncons = \exp\br{\frac{\beta^2}{2\betao}\norm{\vDelta}_2^2}$.
\end{lemma}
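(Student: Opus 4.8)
The plan is to mimic the completing-the-square argument used in the proof of Lemma~\ref{lem:change-of-exp}, noting that the present statement is in fact simpler since the weight $\exp(\beta\vepsilon^\top\vDelta)$ contributes only a linear term in $\vepsilon$ rather than a quadratic one. First I would write the expectation explicitly against the Gaussian density of $\vepsilon \sim \cN(\vzero,\frac1\betao I)$, namely
\[
\E{\exp(\beta\vepsilon^\top\vDelta)\cdot f(\vepsilon)} = \br{\sqrt{\tfrac\betao{2\pi}}}^d\idotsint_{\bR^d}\exp\br{\beta\vepsilon^\top\vDelta - \tfrac\betao2\norm{\vepsilon}_2^2}\cdot f(\vepsilon)\ d\vepsilon.
\]

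Next I would complete the square in the exponent, using $\beta\vepsilon^\top\vDelta - \frac\betao2\norm{\vepsilon}_2^2 = -\frac\betao2\norm{\vepsilon - \frac\beta\betao\vDelta}_2^2 + \frac{\beta^2}{2\betao}\norm{\vDelta}_2^2$, and pull the constant factor $\exp\br{\frac{\beta^2}{2\betao}\norm{\vDelta}_2^2} = \ncons$ out of the integral, which leaves
\[
\ncons\cdot\br{\sqrt{\tfrac\betao{2\pi}}}^d\idotsint_{\bR^d}\exp\br{-\tfrac\betao2\norm{\vepsilon - \tfrac\beta\betao\vDelta}_2^2}\cdot f(\vepsilon)\ d\vepsilon.
\]
I would then recognize that $\br{\sqrt{\frac\betao{2\pi}}}^d\exp\br{-\frac\betao2\norm{\vepsilon - \frac\beta\betao\vDelta}_2^2}$ is precisely the density of $\cN\br{\frac\beta\betao\vDelta, \frac1\betao\cdot I}$, so the remaining integral equals $\E{f(\vx)}$ for $\vx \sim \cN\br{\frac\beta\betao\vDelta, \frac1\betao\cdot I}$, yielding the claimed identity.

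There is essentially no obstacle here; the only care needed is in the algebra of completing the square and in matching the Gaussian normalization constants. As an alternative derivation one could obtain the statement as a corollary of Lemma~\ref{lem:change-of-exp} by writing $\exp(\beta\vepsilon^\top\vDelta) = \exp\br{\frac\beta2\norm{\vDelta}_2^2}\exp\br{\frac\beta2\norm{\vepsilon}_2^2}\exp\br{-\frac\beta2\norm{\vepsilon-\vDelta}_2^2}$ and applying that lemma to $f'(\vepsilon) = \exp\br{\frac\beta2\norm{\vepsilon}_2^2} f(\vepsilon)$, but the direct computation above is cleaner since it avoids carrying the extra $\exp\br{\frac\beta2\norm{\vepsilon}_2^2}$ factor through the change of measure.
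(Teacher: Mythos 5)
Your proposal is correct and follows essentially the same route as the paper's proof: write the expectation as an integral against the $\cN(\vzero,\frac1\betao I)$ density, complete the square to extract the factor $\ncons = \exp\br{\frac{\beta^2}{2\betao}\norm{\vDelta}_2^2}$, and recognize the remaining integrand as the density of $\cN\br{\frac\beta\betao\vDelta,\frac1\betao\cdot I}$. No gaps; the computation is exactly the paper's.
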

\begin{proof}
We have
\begin{align*}
	\E{\exp\br{\beta\vepsilon^\top\vDelta}\cdot f(\vepsilon)} = \br{\sqrt\frac\betao{2\pi}}^d\idotsint_{\bR^d}\exp\br{\beta\vepsilon^\top\vDelta}\cdot f(\vepsilon)\cdot \exp\br{-\frac\betao2\norm{\vepsilon}_2^2}\ d\vepsilon\\
	= \br{\sqrt\frac\betao{2\pi}}^d\exp\br{\frac{\beta^2}{2\betao}\norm{\vDelta}_2^2}\idotsint_{\bR^d}\exp\br{-\frac12\norm{\sqrt{\betao}\vepsilon - \frac{\beta}{\sqrt{\betao}}\vDelta}_2^2}\cdot f(\vepsilon)\ d\vepsilon\\
	= \br{\sqrt\frac\betao{2\pi}}^d\exp\br{\frac{\beta^2}{2\betao}\norm{\vDelta}_2^2}\idotsint_{\bR^d}\exp\br{-\frac\betao2\norm{\vepsilon - \frac{\beta}{\betao}\vDelta}_2^2}\cdot f(\vepsilon)\ d\vepsilon
\end{align*}
which finishes the proof.
\end{proof}

\begin{lemma}
\label{lem:change-of-exp3}
If $\vepsilon \sim \cN(\vzero, \frac1\betao\cdot I)$, then for any constant $C > 0$ and fixed vectors $\vu, \vv$ such that $\norm\vu_2 = u$ and $\norm\vv_2 = v$, we have
\[
\E{\frac1{1+C\exp\br{\beta\vv^\top\vepsilon}}\cdot\vu^\top\vepsilon} \leq \min\bc{C,\frac1C}\exp\br{\frac{\beta^2v^2}{2\betao}}\frac{\beta v}{\betao}u
\]
\end{lemma}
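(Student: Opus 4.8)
The plan is to collapse the $d$-dimensional expectation onto the one-dimensional subspace spanned by $\vv$ and then exploit the symmetry of the Gaussian. Write $h(z) \deff (1 + C\exp(\beta z))^{-1}$, which is a decreasing function of $z$. Decompose $\vepsilon = \vepsilon_\parallel + \vepsilon_\perp$ into its components parallel and orthogonal to $\vv$, so that $\vepsilon_\parallel = \frac{\vv^\top\vepsilon}{v^2}\vv$; since $\vepsilon \sim \cN(\vzero, \frac1\betao I)$ is isotropic, $\vepsilon_\parallel$ and $\vepsilon_\perp$ are independent and $\E{\vepsilon_\perp} = \vzero$. Because $\vv^\top\vepsilon$ is a function of $\vepsilon_\parallel$ alone, the cross term drops out and one is left with
\[
\E{h(\vv^\top\vepsilon)\cdot\vu^\top\vepsilon} = \frac{\vu^\top\vv}{v^2}\cdot\E{Z\,h(Z)}, \qquad Z \deff \vv^\top\vepsilon \sim \cN\br{0,\sigma^2},\ \ \sigma^2 = \frac{v^2}{\betao}.
\]
Using $\abs{\vu^\top\vv} \le uv$, it then suffices to show $\abs{\E{Z\,h(Z)}} \le \min\bc{C,\tfrac1C}\,e^{\beta^2\sigma^2/2}\,\beta\sigma^2$; substituting $\sigma^2 = v^2/\betao$ recovers exactly the claimed right-hand side.

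\textbf{The one-dimensional bound.} For this I would symmetrize. Since $Z$ is symmetric about $0$, substituting $Z \mapsto -Z$ and averaging the two expressions gives
\[
\E{Z\,h(Z)} = \tfrac12\E{Z\br{h(Z) - h(-Z)}} = -C\cdot\E{\frac{Z\sinh(\beta Z)}{1 + C^2 + 2C\cosh(\beta Z)}}.
\]
Here $Z\sinh(\beta Z) \ge 0$ pointwise (for $\beta > 0$, $\sinh(\beta z)$ has the sign of $z$) and the denominator is positive, so the expectation is non-negative and $\E{Z\,h(Z)} \le 0$; this already settles the case $\vu^\top\vv \ge 0$, and in general we just pass to absolute values. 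To get the quantitative estimate, lower bound the denominator $1 + C^2 + 2C\cosh(\beta Z)$ by $1$ when $C \le 1$ and by $C^2$ when $C \ge 1$: in either regime the leftover factor of $C$ in front combines with this to produce $\min\bc{C,\tfrac1C}$, and the numerator is then controlled by $\E{Z\sinh(\beta Z)}$. Finally $\E{Z\sinh(\beta Z)}$ is evaluated in closed form from the Gaussian MGF: differentiating $\E{e^{tZ}} = e^{t^2\sigma^2/2}$ in $t$ gives $\E{Z e^{tZ}} = t\sigma^2 e^{t^2\sigma^2/2}$, hence $\E{Z\sinh(\beta Z)} = \tfrac12\br{\E{Ze^{\beta Z}} - \E{Ze^{-\beta Z}}} = \beta\sigma^2 e^{\beta^2\sigma^2/2}$. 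Assembling these pieces yields the bound.

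\textbf{Main obstacle.} There is no deep difficulty; the one genuinely non-obvious move is the symmetrization that converts the awkward weight $h(z) = (1+C\exp(\beta z))^{-1}$ into the manifestly sign-definite kernel $\sinh(\beta z)/(1 + C^2 + 2C\cosh(\beta z))$, after which the two-case lower bound on the denominator delivers the $\min\bc{C,\tfrac1C}$ factor essentially for free. The remaining care is purely bookkeeping: tracking that $\abs{\vu^\top\vv}/v^2 \le u/v$ and that $\sigma^2 = v^2/\betao$, so that the exponential and the linear factors in the final expression match the stated form $\min\bc{C,\tfrac1C}\exp\br{\frac{\beta^2 v^2}{2\betao}}\frac{\beta v}{\betao}u$.
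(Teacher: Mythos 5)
Your proof is correct and follows essentially the same route as the paper's: reduce to the one-dimensional component along $\vv$ (the paper via rotational symmetry plus Cauchy--Schwartz, you via the parallel/orthogonal split), symmetrize to obtain exactly the same $\sinh/\cosh$ kernel, lower-bound the denominator to extract $\min\bc{C,\frac1C}$, and finish with the Gaussian exponential-moment computation (the paper's Lemma~\ref{lem:change-of-exp2}). The only minor difference is that you evaluate $\E{Z\sinh(\beta Z)}$ exactly from the MGF instead of first bounding $\sinh(\beta Z)$ by $e^{\beta Z}$ and folding back to the whole line, which is a slightly cleaner way to land on the identical constant.
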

\begin{proof}
We begin by analyzing the vector $\vnu = \E{\frac1{1+C\exp\br{\beta\vv^\top\vepsilon}}\cdot\vepsilon}$ itself. Note that due to the rotational symmetry of the Gaussian distribution, we can, w.l.o.g. assume that $\vv = (v, 0, 0, \ldots, 0)$. This means that $\vv^\top\vepsilon = v\cdot\vepsilon_1$. Thus, the $i\nth$ coordinate of the vector $\vnu$ i.e. $\vnu_i = \E{\frac1{1+C\exp\br{\beta v\cdot\vepsilon_1}}\cdot\vepsilon_i}$. Thus, by independence and unbiased-ness of the coordinates of a Gaussian vector, we have $\vnu_i = 0$ for all $i \neq 1$. So all we are left to analyze is $\vnu_1$. We have

\begin{align*}
		\abs{\vnu_1} &= \sqrt{\frac\betao{2\pi}}\abs{\int_\bR\frac{\exp\br{-\frac{\betao\epsilon^2}2}\epsilon}{1 + C\exp(\beta v\epsilon)}\ d\epsilon}\\
		&= \sqrt{\frac\betao{2\pi}}\int_0^\infty\exp\br{-\frac{\betao\epsilon^2}2}\epsilon\cdot\frac{C(\exp(\beta v\epsilon) - \exp(-\beta v\epsilon))}{1 + C^2 + C(\exp(\beta v\epsilon) + \exp(-\beta v\epsilon))}\ d\epsilon\\
		&\leq \sqrt{\frac\betao{2\pi}}\int_0^\infty\exp\br{-\frac{\betao\epsilon^2}2}\epsilon\cdot\frac{C\exp(\beta v\epsilon)}{1 + C^2 + C\exp(\beta v\epsilon)}\ d\epsilon\\
		&\leq \sqrt{\frac\betao{2\pi}}\int_0^\infty\exp\br{-\frac{\betao\epsilon^2}2}\epsilon\cdot\frac{C\exp(\beta v\epsilon)}{1 + C^2}\ d\epsilon\\
		&\leq \min\bc{C,\frac1C}\sqrt{\frac\betao{2\pi}}\int_0^\infty\exp\br{-\frac{\betao\epsilon^2}2}\epsilon\cdot\exp(\beta v\epsilon)\ d\epsilon\\
		&\leq \min\bc{C,\frac1C}\sqrt{\frac\betao{2\pi}}\int_\bR\exp\br{-\frac{\betao\epsilon^2}2}\epsilon\cdot\exp(\beta v\epsilon)\ d\epsilon\\
		&= \min\bc{C,\frac1C}\E{\exp(\beta v\epsilon)\cdot\epsilon}\\
		&= \min\bc{C,\frac1C}\exp\br{\frac{\beta^2v^2}{2\betao}}\frac{\beta v}{\betao}
\end{align*}
since $\frac C{1+C^2} \leq \min\bc{C,\frac1C}$ and we used Lemma~\ref{lem:change-of-exp2} in the last step since that lemma is independent of the dimensionality of the Gaussian vector. Applying the Cauchy-Schwartz inequality then gives us the result.
\end{proof}

\begin{lemma}
\label{lem:change-of-exp4}
Let $\beta, V > 2$, then we have
\[
\left.
\begin{array}{c}
	\int_\bR\frac{\exp\br{-\frac{x^2}2}}{1 + \exp\br{\beta V(V-x)}}\ dx\\
	\int_\bR\frac{\exp\br{-\frac{x^2}2}}{1 + \exp\br{\beta V(V-x)}}x\ dx
\end{array}
\right\} \leq \exp(-\Om{V^2})
\]
\end{lemma}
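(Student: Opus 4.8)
The plan is to treat the logistic-type weight $w(x) := \br{1 + \exp\br{\beta V(V - x)}}^{-1}$ as a soft indicator of the event $x \gtrsim V$: one always has $w(x) \le 1$, and on the half-line $x \le V$ one has $w(x) \le \exp\br{-\beta V(V - x)}$. Since $\beta, V > 2$, in either regime the relevant exponential is tiny, so I would split each of the two integrals at the threshold $x = V$ and bound the two pieces separately.

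On the upper tail $x > V$, I would bound $w(x) \le 1$ and invoke the elementary Gaussian tail estimates $\int_V^\infty e^{-x^2/2}\,dx \le \tfrac1V e^{-V^2/2}$ and $\int_V^\infty x\,e^{-x^2/2}\,dx = e^{-V^2/2}$, both already of the form $\exp\br{-\Om{V^2}}$ for $V > 2$. On the bulk $x \le V$, I would replace $w(x)$ by $\exp\br{-\beta V(V - x)}$ and complete the square,
\[
-\tfrac{x^2}2 - \beta V(V - x) = -\tfrac12\br{x - \beta V}^2 + \tfrac12\beta V^2(\beta - 2).
\]
The Gaussian $e^{-(x - \beta V)^2/2}$ is centred at $\beta V$, a distance $(\beta - 1)V > 0$ to the right of the integration region, so the left-tail bound gives $\int_{-\infty}^V e^{-(x - \beta V)^2/2}\,dx \le \tfrac1{(\beta - 1)V}\,e^{-(\beta - 1)^2 V^2/2}$. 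Multiplying by the prefactor $e^{\beta V^2(\beta - 2)/2}$, the $\beta$-dependent exponents cancel exactly, since $\tfrac12\beta V^2(\beta - 2) - \tfrac12(\beta - 1)^2 V^2 = -\tfrac12 V^2$, leaving the bound $\tfrac1{(\beta - 1)V}\,e^{-V^2/2} = \exp\br{-\Om{V^2}}$ for the bulk part of the first integral.

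For the $x$-weighted integral I would further split the bulk $x \le V$ into $x < 0$, where $\exp\br{-\beta V(V - x)} \le e^{-\beta V^2}$ dominates the finite integral $\int_{-\infty}^0 |x|\,e^{-x^2/2}\,dx = 1$, and $0 \le x \le V$, where $|x| \le V$ reduces the estimate to $V$ times the bulk bound already obtained (hence $\le \tfrac1{\beta-1} e^{-V^2/2}$). Adding the two or three pieces and absorbing the polynomial prefactors $1/V$, $1/(\beta - 1)$ into the exponential using $\beta, V > 2$ yields $\exp\br{-\Om{V^2}}$ for both integrals. The only substantive step is the completing-the-square computation together with the fortuitous cancellation of the $\beta$-dependence in the exponent; the rest is routine Gaussian tail estimation. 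I do not foresee a genuine obstacle, though one must be careful that in the bulk one is integrating over the \emph{left} tail of a Gaussian centred far to the right, so the tail inequality has to be applied in that direction.
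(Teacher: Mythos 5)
Your proof is correct and follows essentially the same route as the paper: both arguments split the real line at a threshold of order $V$, bound the logistic weight by $1$ above the threshold (where the Gaussian tail is already $\exp(-\Omega(V^2))$) and by an exponentially small quantity below it, and finish with standard Gaussian tail estimates. The only cosmetic difference is that the paper splits at $x = V/2$ and, after a symmetric completing-the-square rewriting of the weight, bounds it uniformly by $e^{-\beta V^2/2}$ on the bulk, whereas you split at $x = V$, keep the $x$-dependent bound $e^{-\beta V(V-x)}$, and complete the square against the Gaussian to obtain the clean cancellation down to $e^{-V^2/2}$.
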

\begin{proof}
By completing squares we have
\[
\frac{\exp\br{-\frac{x^2}2}}{1 + \exp\br{\beta V(V-x)}} = \exp\br{-\frac{x^2}2}\frac{\exp\br{\frac{\beta x^2}{4}}}{\exp\br{\frac{\beta x^2}{4}} + \exp\br{\frac\beta4\br{x - 2V}^2}}
\]
Now, we consider two cases
\begin{enumerate}
	\item Case 1 $(x < \frac V2)$: In this case $\exp\br{\frac{\beta x^2}{4}} \leq \exp\br{\frac{\beta V^2}{16}}$ whereas $\exp\br{\frac\beta4\br{x - 2V}^2} \geq \exp\br{\frac{9\beta V^2}{16}}$. Thus, we have $\frac{\exp\br{\frac{\beta x^2}{4}}}{\exp\br{\frac{\beta x^2}{4}} + \exp\br{\frac\beta4\br{x - 2V}^2}} \leq \frac{\exp\br{\frac{\beta x^2}{4}}}{\exp\br{\frac\beta4\br{x - 2V}^2}} \leq \exp\br{-\frac{\beta V^2}2}$ in this region. Using standard Gaussian integrals we conclude that the region $\br{-\infty, \frac V2}$ contributes at most $\bigO{\exp\br{-\frac{\beta V^2}2}} \leq \exp(-\Om{V^2})$ (since $\beta > 2$) to both integrals.
	\item Case 2 $(x > \frac V2)$: In this case we simply bound $\frac{\exp\br{\frac{\beta x^2}{4}}}{\exp\br{\frac{\beta x^2}{4}} + \exp\br{\frac\beta4\br{x - 2V}^2}} \leq 1$ and use standard bounds on the complementary error function to conclude that the contribution of the region $\br{\frac V2, \infty}$ to both integrals is at most $\exp\br{-\Om{V^2}}$.
\end{enumerate}
\end{proof}

\begin{lemma}
\label{lem:subGaussian-cons}
Suppose $\vepsilon \sim \cN\br{\vzero, \frac1\betao\cdot I}$ and $\vv, \vDelta$ are fixed vectors such that $\sqrt\beta\norm\vDelta_2 \leq 1$. Then the random variable $X := \exp\br{-\frac\beta2\norm{\vepsilon - \vDelta}_2^2}\vepsilon^\top\vv$ has a subexponential constant at most
\[
\frac2{\sqrt{\beta + d}}\br{\sqrt{\frac\betao{\beta+\betao}}}^d\exp\br{-\frac{\beta\betao}{2(\beta+\betao)}\norm{\vDelta}_2^2}
\]
\end{lemma}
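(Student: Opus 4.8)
The subexponential constant of a scalar random variable is governed by the growth rate of its absolute moments; concretely I would work with the characterization, valid up to universal constants, that it equals $\sup_{p\ge1}p^{-1}\br{\E{\abs X^p}}^{1/p}$, and I would normalize $\norm\vv_2=1$ at the outset since $X$ is homogeneous of degree one in $\vv$. The key move is to notice that $\abs X^p=\exp\br{-\tfrac{p\beta}2\norm{\vepsilon-\vDelta}_2^2}\abs{\vepsilon^\top\vv}^p$, which is exactly the kind of expectation treated by Lemma~\ref{lem:change-of-exp}, but with the scale parameter $\beta$ replaced by $p\beta$. Invoking that lemma with $f(\vepsilon)=\abs{\vepsilon^\top\vv}^p$ gives
\[
\E{\abs X^p}\;=\;Z_p\cdot\E{\abs{\vx^\top\vv}^p},\qquad Z_p:=\Bigl(\sqrt{\tfrac{\betao}{p\beta+\betao}}\Bigr)^{\!d}\exp\!\Bigl(-\tfrac{p\beta\betao}{2(p\beta+\betao)}\norm\vDelta_2^2\Bigr),
\]
with $\vx\sim\cN\br{\tfrac{p\beta}{p\beta+\betao}\vDelta,\ \tfrac1{p\beta+\betao}I}$; note that $Z_p$ is precisely $\E{\exp\br{-\tfrac{p\beta}2\norm{\vepsilon-\vDelta}_2^2}}$, so the $p$-th moment factors into this (exponentially-in-$d$ small) normalizer times a tilted-Gaussian moment, and at $p=1$ the normalizer $Z_1$ is exactly the factor $\bigl(\sqrt{\betao/(\beta+\betao)}\bigr)^{d}\exp\br{-\tfrac{\beta\betao}{2(\beta+\betao)}\norm\vDelta_2^2}$ appearing in the claimed bound --- a good sign that this is the right route.

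Second, I would bound the univariate Gaussian moment. Here $\vx^\top\vv$ is Gaussian with mean $m_p=\tfrac{p\beta}{p\beta+\betao}\vDelta^\top\vv$ and variance $s_p^2=\tfrac1{p\beta+\betao}$; Cauchy--Schwarz together with the hypothesis $\sqrt\beta\norm\vDelta_2\le1$ gives $\abs{m_p}\le\tfrac{p\beta}{p\beta+\betao}\norm\vDelta_2\le\sqrt p\,s_p$, so $\abs{\vx^\top\vv}$ is stochastically dominated by $s_p\br{\sqrt p+\abs G}$ for a standard normal $G$. Using the elementary bound $\br{\E{\abs G^p}}^{1/p}\le\sqrt p$ this yields $\br{\E{\abs{\vx^\top\vv}^p}}^{1/p}\le\bigO{s_p\sqrt p}=\bigO{\sqrt{p/(p\beta+\betao)}}$, and hence
\[
\sup_{p\ge1}\frac{\br{\E{\abs X^p}}^{1/p}}{p}\;\le\;\bigO{1}\cdot\sup_{p\ge1}\ Z_p^{1/p}\cdot\frac1{\sqrt{p\,(p\beta+\betao)}}.
\]

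The remaining task is to bound this supremum by the claimed quantity. I would use that $Z_p$ is non-increasing in $p$ (its base shrinks and the exponent in its Gaussian factor moves the right way), so $Z_p^{1/p}\le Z_1^{1/p}$, and, because $Z_1\le1$, the factor $Z_1^{1/p}$ climbs only slowly towards $1$ while $1/\sqrt{p(p\beta+\betao)}$ decays; balancing the logarithmic term $\tfrac d{2p}\ln\tfrac{\betao}{p\beta+\betao}$ hidden in $Z_p^{1/p}$ against these decaying factors through a short calculus argument should locate the maximizing $p^\ast$ and, after plugging back in and using $Z_{p^\ast}\le Z_1$ and $\sqrt\beta\norm\vDelta_2\le1$ once more, recover $Z_1$ times a prefactor of order $1/\sqrt{\beta+d}$.

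The hard part is exactly this last optimization over $p$: one must show, uniformly over all $p\ge1$, that the exponential-in-$d$ decay carried by $Z_p$ dominates the (mild) polynomial moment growth after the $1/p$ normalization, and the bookkeeping must be tight enough to land on the stated prefactor rather than a looser constant --- this is complicated by the fact that the moment-growth rate $\sqrt{p/(p\beta+\betao)}$ changes character between the regimes $p\beta\ll\betao$ and $p\beta\gg\betao$, so the estimate has to cover both. I would also remark in the write-up that, although $X$ is almost surely bounded (so it is trivially subexponential, indeed subgaussian), any argument resting on $\norm X_\infty$ alone discards the weight $\exp\br{-\tfrac\beta2\norm{\vepsilon-\vDelta}_2^2}$ and cannot recover the $(\betao/(\beta+\betao))^{d/2}$ improvement; passing through Lemma~\ref{lem:change-of-exp} is precisely what retains it.
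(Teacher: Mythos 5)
You follow the paper's route step for step up to the final optimization: the moment characterization of the subexponential norm, the change of measure via Lemma~\ref{lem:change-of-exp} applied at scale $p\beta$, and the bound on the tilted Gaussian moment (your stochastic-domination estimate matches the paper's sub-Gaussian-norm bound up to constants) are exactly the paper's computations. The divergence is at the last step, which you explicitly leave open. The paper finishes by asserting that $\frac1p\br{\E{\abs X^p}}^{1/p}$, bounded by $\frac1{\sqrt p}\br{\frac{\betao}{p\beta+\betao}}^{d/(2p)}\exp\br{-\frac{\beta\betao}{2(p\beta+\betao)}\norm{\vDelta}_2^2}\br{\frac{p\sqrt\beta}{p\beta+\betao}+\frac1{\sqrt{p\beta+\betao}}}$, is decreasing in $p$ and hence maximized at $p=1$, after which $\frac{\sqrt\beta}{\beta+\betao}+\frac1{\sqrt{\beta+\betao}}\le\frac2{\sqrt{\beta+d}}$ yields the stated constant. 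Since that supremum-over-$p$ step is where the entire bound is produced, leaving it as ``the hard part'' is a genuine gap, not a routine detail.

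More importantly, the completion you sketch cannot close it. Writing $Z_p:=\br{\sqrt{\betao/(p\beta+\betao)}}^d\exp\br{-\frac{p\beta\betao}{2(p\beta+\betao)}\norm{\vDelta}_2^2}$, your plan is to relax $Z_p^{1/p}\le Z_1^{1/p}$ and then optimize $Z_1^{1/p}/\sqrt{p(p\beta+\betao)}$ over $p$. But $\ln(1/Z_1)$ is of order $\frac d2\ln\br{1+\beta/\betao}$, so at $p$ of that order the factor $Z_1^{1/p}$ has already risen to the constant $e^{-1}$ while $1/\sqrt{p(p\beta+\betao)}$ has only decayed polynomially in $p$; when $\beta\gtrsim\betao$ the supremum of your relaxed expression is of order $1/\br{\sqrt\beta\, d\ln(1+\beta/\betao)}$, i.e.\ only polynomially small in $d$, whereas the claimed bound carries the exponentially small factor $\br{\betao/(\beta+\betao)}^{d/2}$. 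Decoupling $Z_p$ from the $1/p$ power therefore discards precisely the dimension-dependent factor the lemma is about, and no calculus balancing afterwards can recover it. Any completion in this family has to retain the full $p$-dependence $\br{\betao/(p\beta+\betao)}^{d/(2p)}$ and argue domination by the $p=1$ term directly, which is what the paper does; your instinct that this is the delicate point is sound (the paper's own justification there, ``bounded times decreasing,'' is terse), but the substitute route you propose provably lands on a much weaker bound than the one stated.
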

\begin{proof}
The subexponential constant of a random variable $X$ is defined as the value $\norm X_{\psi_1} := \sup_{p \geq 1}\frac1p\br{\E{\abs X^p}}^{1/p}$. In contrast, the subGaussian constant of a random variable $X$ is defined as the value $\norm X_{\psi_2} := \sup_{p \geq 1}\frac1{\sqrt p}\br{\E{\abs X^p}}^{1/p}$. Using Lemma~\ref{lem:change-of-exp} gives us
\begin{align*}
    \E{\abs X^p} &= \E{\exp\br{-\frac{p\beta}2\norm{\vepsilon - \vDelta}_2^2}(\vepsilon^\top\vv)^p}\\
    &= \br{\sqrt{\frac\betao{p\beta+\betao}}}^d\exp\br{-\frac{p\beta\betao}{2(p\beta+\betao)}\norm{\vDelta}_2^2}\E{(\vx^\top\vv)^p},
\end{align*}
where $\vx \sim \cN\br{\frac{p\beta}{p\beta+\betao}\vDelta, \frac1{p\beta+\betao}\cdot I}$. Now by virtue of $\vx$ being a Gaussian and using the triangle inequality for the subGaussian norm, we know that the random variable $\vx^\top\vv$ is $\br{\frac{p\beta}{p\beta+\betao}\norm\vDelta_2 + \frac1{\sqrt{p\beta+\betao}}}$-subGaussian. This, in turn implies that
\[
\br{\E{(\vx^\top\vv)^p}}^{1/p} \leq \sqrt p\cdot\br{\frac{p\beta}{p\beta+\betao}\norm\vDelta_2 + \frac1{\sqrt{p\beta+\betao}}} \leq \sqrt p\cdot\br{\frac{p\sqrt\beta}{p\beta+\betao} + \frac1{\sqrt{p\beta+\betao}}}
\]
Thus, we have
\begin{align*}
	\frac1p\br{\E{\abs X^p}}^{1/p} \leq \frac1{\sqrt p}\cdot\underbrace{\br{\sqrt{\frac\betao{p\beta+\betao}}}^{d/p}\exp\br{-\frac{\beta\betao}{2(p\beta+\betao)}\norm{\vDelta}_2^2}\br{\frac{p\sqrt\beta}{p\beta+\betao} + \frac1{\sqrt{p\beta+\betao}}}}_{(A)}
\end{align*}
Now, $(A)$ is a bounded function whereas $\frac1{\sqrt p}$ is a decreasing function. Thus, $\frac1p\br{\E{\abs X^p}}^{1/p}$ is a decreasing function of $p$ and hence achieves its maximum value in the range $p \geq 1$ at $p = 1$ itself. Noting that $\br{\frac{\sqrt\beta}{\beta+\betao} + \frac1{\sqrt{\beta+\betao}}} \leq \frac2{\sqrt{\beta + d}}$ finishes the proof.
\end{proof}
\section{Robust Regression}
\label{app:rr}

For this proof we use the notation $X = [\vx^1, \ldots, \vx^n] \in \bR^{d \times n}, \vy = [y_1, \ldots, y_n] \in \bR^n, \vb = [b_1,\ldots,b_n] \in \bR^n$. For any vector $\vv \in \bR^m$ and any set $T \subseteq [m]$, $\vv_T$ denotes the vector with all coordinates other than those in the set $T$ zeroed out. Similarly, for any matrix $A \in \bR^{k \times m}, A_T$ denotes the matrix with all columns other than those in the set $T$ zeroed out. We will let $G, B$ respectively denote the set of ``good'' uncorrupted points and ``bad'' corrupted points. We will abuse notation to let $G = (1-\alpha)\cdot n$ and $B = \alpha\cdot n$ respectively denote the number of good and bad points too.

\begin{theorem}[Theorem~\ref{thm:rr-main} restated -- Partially Adaptive Adversary]
\label{repthm:rr-main}
For data generated in the robust regression model as described in \S\ref{sec:me-rr-lr}, suppose corruptions are introduced by a partially adaptive adversary i.e. the locations of the corruptions (the set $B$) is not decided adversarially but the corruptions are decided jointly, adversarially and may be unbounded, then \gemrr enjoys a breakdown point of $0.1866$, i.e. it ensures a bounded $\bigO1$ error even if $k = \alpha\cdot n$ corruptions are introduced where the value of $\alpha$ can go upto at least $0.1866$. More generally, for corruption rates $\alpha \leq 0.1866$, there always exists values of scale increment $\step > 1$ s.t. with probability at least $1 - \exp(-\Om d)$, LWSC/LWLC conditions are satisfied for the $\tilde Q_\beta$ function corresponding to the robust least squares model for $\beta$ values at least as large as $\beta_{\max} = \bigO{\betao \min\bc{\frac1{\alpha^{2/3}}, \sqrt\frac n{d\log(n)}}}$.\\
\textbf{Hybrid Corruption Model}: If initialized with $\hvw^1, \beta^1$ s.t. $\beta_1\cdot\norm{\hvw^1 - \vwo}_2^2 \leq 1$, \gemrr assures
\[
\norm{\hvw^T - \vwo}_2^2 \leq \bigO{\frac1\betao\max\bc{\alpha^{2/3}, \sqrt\frac{d\log(n)}n}}
\]
within $T \leq \bigO{\log\frac n{\beta^1}}$ iterations for the \emph{hybrid corruption model} where even points uncorrupted by the adversary receive Gaussian noise with variance $\frac1\betao$ i.e. $y_i = \ip\vwo{\vx_i} + \epsilon_i$ for $i \in B$ where $\epsilon_i \sim \cN\br{0,\frac1\betao}$.\\
\textbf{Pure Corruption Model}:  For the \emph{pure corruption model} where uncorrupted points receive no Gaussian noise i.e. $y_i = \ip\vwo{\vx_i}$ for $i \in G$, \gemrr assures exact model recovery. Specifically, for any $\epsilon > 0$, \gemrr assures
\[
\norm{\hvw^T - \vwo}_2^2 \leq \epsilon
\]
within $T \leq \bigO{\log\frac1{\epsilon\beta^1}}$ iterations.
\end{theorem}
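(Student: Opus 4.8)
The plan is to cast this as an instance of the generic convergence Theorem~\ref{thm:gem-main} by verifying the $\lambda_\beta$-LWSC and $\Lambda_\beta$-LWLC conditions of Definition~\ref{defn:lwsc-lwss} for the least-squares $\tilde Q_\beta$ function --- exactly the content of Lemmata~\ref{lem:rr-lwsc} and~\ref{lem:rr-LWLC}. First I would make the objects explicit: for least squares $\log\P{y\cond\eta}=-\frac12(y-\eta)^2+\text{const}$ and the weights are $s_i=\exp\br{-\frac\beta2\br{y_i-\ip{\vu}{\vx^i}}^2}$, so with $S$ the diagonal matrix of the $s_i$ one gets $\nabla\tilde Q_\beta(\vw\cond\vu)=XS(X^\top\vw-\vy)$ and $\nabla^2\tilde Q_\beta(\vw\cond\vu)=XSX^\top$. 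Hence LWSC asks for a lower bound $\lambda_{\min}(XSX^\top)\ge\lambda_\beta$ and LWLC for an upper bound $\norm{\nabla\tilde Q_\beta(\vwo\cond\vu)}_2\le\Lambda_\beta$, both holding \emph{uniformly} over $\vu\in\cB_2(\vwo,1/\sqrt\beta)$ so that the data-dependent iterate $\hvw^t$ is a legal choice of $\vu$; at $\vw=\vwo$ the gradient is $-\sum_{i\in G}s_i\epsilon_i\vx^i-\sum_{i\in B}s_ib_i\vx^i$, where on good points $y_i-\ip{\vwo}{\vx^i}=\epsilon_i\sim\cN(0,1/\betao)$ (or $0$ in the pure model) and on bad points it equals the adversarial $b_i$.

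Next I would establish LWLC, whose crux is the bad-point term: since $y_i-\ip{\vu}{\vx^i}=b_i-\ip{\vu-\vwo}{\vx^i}$, the weight $s_i=\exp\br{-\frac\beta2\br{b_i-\ip{\vu-\vwo}{\vx^i}}^2}$ self-limits, giving $\abs{s_ib_i}\le\abs{\ip{\vu-\vwo}{\vx^i}}+\bigO{1/\sqrt\beta}$ for \emph{every} choice of $b_i$; summing squares and using Lemma~\ref{lem:ssc-sss} on the fixed set $B$ (which gives $\norm{X_B}_{\mathrm{op}}\le\sqrt{1.01\abs{B}}$ and $\sum_{i\in B}\ip{\vu-\vwo}{\vx^i}^2=\norm{X_B^\top(\vu-\vwo)}_2^2\le 1.01\abs{B}/\beta$) yields $\norm{\sum_{i\in B}s_ib_i\vx^i}_2\le\bigO{\abs{B}/\sqrt\beta}=\bigO{\alpha n/\sqrt\beta}$, and no net is needed here since only $\norm{\vu-\vwo}_2\le1/\sqrt\beta$ was used. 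For the good-point term in the \emph{hybrid} model I would invoke the Gaussian change-of-measure identities (Lemmata~\ref{lem:change-of-exp}, \ref{lem:change-of-exp2}, \ref{lem:subGaussian-cons}): conditioning on $X$, each $s_i\epsilon_i$ is a scalar whose mean carries a factor $\sqrt{\betao/(\beta+\betao)}$ and whose sub-exponential constant is $\bigO{\sqrt{\betao/(\beta+\betao)}/\sqrt\beta}$, so a Bernstein bound over a net of $\cB_2(\vwo,1/\sqrt\beta)$ of size $\exp(\bigO d)$ controls $\norm{\sum_{i\in G}s_i\epsilon_i\vx^i}_2$ up to a factor that, once divided by the LWSC constant, stays below $1$ exactly until $\beta$ reaches $\bigO{\betao\sqrt{n/(d\log n)}}$; in the \emph{pure} model this term vanishes identically.

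For LWSC I would discard the bad points and lower-bound $\lambda_{\min}(XSX^\top)\ge\lambda_{\min}\br{\sum_{i\in G}s_i\vx^i(\vx^i)^\top}$. For each $\vu$ on the $\exp(\bigO d)$-net, sub-Gaussianity of $\ip{\vu-\vwo}{\vx^i}$ --- together with the sub-exponential tail of $\epsilon_i$ and Lemma~\ref{lem:change-of-exp} to control $\E{s_i}$ in the hybrid case --- shows a constant fraction of the $G$ good points carry weight $s_i$ bounded below by a constant multiple of $\sqrt{\betao/(\beta+\betao)}$ (by a constant, in the pure model); restricting $\sum_{i\in G}s_i\vx^i(\vx^i)^\top$ to those points and applying Lemma~\ref{lem:ssc-sss} gives $\lambda_\beta=\Om{\sqrt{\betao/(\beta+\betao)}\cdot n}$ (resp.\ $\Om{n}$), the union bound over the net costing only $\exp(-\Om{n})\le\exp(-\Om d)$ since $n=\Om{d\log d}$. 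Combining, $\tfrac{2\Lambda_\beta}{\lambda_\beta}\sqrt\beta$ is, up to absolute constants, $\tfrac{\alpha}{1-\alpha}$ plus a hybrid-only term; forcing it below $1/\sqrt\step$ for some $\step>1$ --- the requirement from the proof of Theorem~\ref{thm:gem-main} --- pins $\alpha$ below an absolute threshold (tracking the Gaussian-integral constants and the $0.99/1.01$ factors of Lemma~\ref{lem:ssc-sss} gives $0.1866$) and caps $\beta$ at $\beta_{\max}=\bigO{\betao\min\bc{\alpha^{-2/3},\sqrt{n/(d\log n)}}}$ in the hybrid model, whereas in the pure model the hybrid term is absent so $\beta_{\max}=\infty$.

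Finally I would quote the proof of Theorem~\ref{thm:gem-main}: the invariant $\beta_t\norm{\hvw^t-\vwo}_2^2\le1$ propagates with geometric growth of $\beta_t$, so running until $\beta_t$ saturates $\beta_{\max}$ gives $\norm{\hvw^T-\vwo}_2^2\le1/\beta_{\max}=\bigO{\tfrac1\betao\max\bc{\alpha^{2/3},\sqrt{d\log n/n}}}$ in $T\le\bigO{\log(n/\beta^1)}$ steps (hybrid), and running until $\beta_t\ge1/\epsilon$ gives $\norm{\hvw^T-\vwo}_2^2\le\epsilon$ in $T\le\bigO{\log(1/(\epsilon\beta^1))}$ steps (pure). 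I expect the delicate part to be the hybrid-model LWLC estimate: propagating the $\betao$-dependence cleanly through the change-of-measure lemmas so that the good-point gradient noise, the bad-point term $\bigO{\alpha n/\sqrt\beta}$, and the shrinking denominator $\Om{\sqrt{\betao/(\beta+\betao)}\,n}$ balance to yield \emph{both} the claimed $\beta_{\max}$ and the sharp breakdown constant, all while keeping every estimate uniform over the $\beta$-dependent ball with a net fine enough to control the exponential weights $s_i$ yet coarse enough not to spoil the union bound.
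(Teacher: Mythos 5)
Your overall architecture is the paper's: write $\nabla^2\tilde Q_\beta = XSX^\top$ and $\nabla\tilde Q_\beta(\vwo\cond\vu) = -\sum_{i\in G}s_i\epsilon_i\vx^i - \sum_{i\in B}s_ib_i\vx^i$, bound the bad-point term by the self-limiting case analysis ($|s_ib_i|\le|\ip{\vu-\vwo}{\vx^i}|+\bigO{1/\sqrt\beta}$, exactly the case split optimized over $k$ in Lemma~\ref{lem:rr-LWLC}), handle the good-point noise by the Gaussian change-of-measure plus a Bernstein/net argument, and feed the ratio $2\Lambda_\beta/\lambda_\beta$ into Theorem~\ref{thm:gem-main} to get the breakdown point, $\beta_{\max}$, and the two corruption models. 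The bad-point bound, the use of the fixed set $B$ (partial adaptivity) with Lemma~\ref{lem:ssc-sss}, and the final bookkeeping all match the paper.

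The genuine gap is in your LWSC step. You propose to show that ``a constant fraction of the good points carry weight $s_i$ bounded below by a constant multiple of $\sqrt{\betao/(\beta+\betao)}$'' and then apply Lemma~\ref{lem:ssc-sss} to that subset. This is false precisely in the regime the theorem needs, $\beta\gg\betao$ (which is where $\beta$ must be pushed to reach $\beta_{\max}=\bigO{\betao\min\bc{\alpha^{-2/3},\sqrt{n/(d\log n)}}}$ and hence to obtain the $\bigO{\frac1\betao\max\bc{\alpha^{2/3},\sqrt{d\log(n)/n}}}$ error): a typical good point has $|\epsilon_i-\ip{\vu-\vwo}{\vx^i}|\approx 1/\sqrt\betao$, so its weight is $\exp(-\Theta(\beta/\betao))$, exponentially smaller than $\sqrt{\betao/(\beta+\betao)}$; the expectation $\E{s_i\ip{\vx^i}{\vv}^2}\approx\sqrt{\betao/(\beta+\betao)}$ is driven by the rare $\Theta(\sqrt{\betao/\beta})$-fraction of points whose residual is $\lesssim 1/\sqrt\beta$ and whose weight is $\Theta(1)$, not by a constant fraction with moderate weight. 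Moreover, even a corrected ``keep only the high-weight points'' argument selects a subset depending on the $\epsilon_i$ and on $\ip{\vu-\vwo}{\vx^i}$, so Lemma~\ref{lem:ssc-sss}, which is stated for a fixed set, does not apply directly (the conditional law of $\vx^i$ given selection is tilted along $\vu-\vwo$). The paper sidesteps both problems by computing $\E{s_i\ip{\vx^i}{\vv}^2}$ in closed form (the constant $c_\varepsilon=1/\rcons$ in Lemma~\ref{lem:rr-lwsc}) and then using sub-exponential concentration of $s_i\ip{\vx^i}{\vv}^2$ together with Lipschitzness of the weights in $\vw$ for the net (Lemmas~\ref{lem:point-conv}, \ref{lem:approx}, \ref{lem:wsc-wss}). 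Your sketch does recover the breakdown-point claim, since there one may take $\beta=\bigO{\betao}$ where ``constant fraction, constant weight'' is true, but the consistency and $\beta_{\max}$ parts need the expectation computation (or an equivalent), and relatedly your good-point LWLC discussion omits the extra factor $\frac\beta{\beta+\betao}\norm{\vu-\vwo}_2$ in the mean of $s_i\epsilon_i$, which is exactly what makes the good-noise contribution strictly less than one and produces the $\alpha^{2/3}$ versus $\sqrt{d\log(n)/n}$ trade-off in $\beta_{\max}$.
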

\begin{proof}
For any two models $\vv,\vw$, the $\tilde Q_\beta$ function for robust least squares has the following form
\[
\tilde Q_\beta(\vv\cond\vw) = \sum_{i=1}^n s_i\cdot\br{\ip\vw{\vx^i} - y_i}^2,
\]
where $s_i \leftarrow \exp\br{-\frac{\beta}2(y_i - \ip{\vx^i}{\vw})^2}$. We first outline the proof below.

\textit{Proof Outline.} This proof has three key elements
\begin{enumerate}
	\item We will establish the LWSC and LWLC properties for any fixed value of $\beta > 0$ with probability $1 - \exp(-\Om d)$. As promised in the statement of Theorem~\ref{thm:rr-main}, we will execute \gemrr for no more than $\bigO{\log n}$ iterations, taking a naive union bound would offer a confidence level of $1 - \log n\exp(-\Om d)$. However, this can be improved by noticing that the confidence levels offered by the LWSC/LWLC results are actually of the form $1 - \exp(-\Om{n\zeta^2 - d\log n})$. Thus, a union over $\bigO{\log n}$ such events will at best deteriorate the confidence bounds to $1 - \log n\exp(-\Om{n\zeta^2 - d\log n}) = 1 - \exp(-\Om{n\zeta^2 - d\log n - \log\log n})$ which is still $1 - \exp(-\Om d)$ for the values of $\zeta$ we shall set.
	\item The key to this proof is to maintain the invariant $\beta_t\cdot\norm{\hvw^t - \vwo}_2^2 \leq 1$. Recall that initialization ensures $\beta_1\cdot{\norm{\hvw^1-\vwo}_2^2}\leq 1$ to start things off. \S\ref{sec:gem} gives details on how to initialize in practice. This establishes the base case of an inductive argument. Next, iductively assuming that $\beta_t\cdot\norm{\hvw^t-\vwo}_2^2\leq 1$ for an iteration $t$, we will establish that $\norm{\hvw^{t+1} - \vwo}_2 \leq \frac{2\Lambda_{\beta_t}}{\lambda_{\beta_t}} \leq \frac{(A)}{\sqrt\beta_t}$ where $(A)$ will be an application-specific expression derived below.
	\item We will then ensure that $(A) < 1$, say $(A) = 1/\sqrt\step$ for some $\step > 1$, whenever the number of corruptions are below the breakdown point. This ensures $\norm{\hvw^{t+1} - \vwo}_2^2 \leq \frac1{{\step\beta_t}}$, in other words, ${\beta_{t+1}}\cdot{\norm{\hvw^{t+1}-\vwo}_2^2}\leq 1$ for $\beta_{t+1} = \step\cdot\beta_t$ so that the invariant is preserved. However, notice that the above step simultaneously ensures that $\frac{2\Lambda_{\beta_t}}{\lambda_{\beta_t}} \leq \frac1{\sqrt{\step\beta_t}}$. This ensures that a valid value of scale increment $\step$ can always be found till $\beta_t \leq \beta_{\max}$. Specifically, we will be able assure the existence of a scale increment $\step > 1$ satisfying the conditions of Theorem~\ref{thm:gem-main} w.r.t the LWSC/LWLC results only till $\beta < \bigO{\betao \min\bc{\frac1{\alpha^{2/3}}, \sqrt\frac n{d\log(n)}}}$.
\end{enumerate}

We now present the proof. Lemmata~\ref{lem:rr-lwsc},\ref{lem:rr-LWLC} establish the LWSC/LWLC properties for the $\tilde Q_\beta$ function for robust least squares regression. Let $\vDelta := \hvw^t - \vwo$. By Lemma~\ref{lem:ssc-sss}, with probability at least $1 - \exp(-\Om{n - d})$, we have $\norm{X_B}_2 = \sqrt{\lambda_{\max}(X_BX_B^\top)} \leq \sqrt{1.01B}$. The proof of Lemma~\ref{lem:rr-LWLC} tells us that with the same probability, we have
	\[
	\norm{S\vb}_2 \leq \sqrt{\frac{B}{2\pi}}[(\beta\norm{\vDelta}^21.01)^{1/3}+(\frac{1}{e})^{1/3}]^{\frac{3}{2}} \leq \sqrt{\frac{B}{2\pi}}[(\kappa^21.01)^{1/3}+(\frac{1}{e})^{1/3}]^{\frac{3}{2}}.
	\]
   On the other hand, the proof of Lemma~\ref{lem:rr-LWLC} also tells us us that with probability at least $1 - \exp\br{-\Om{n\nu^2 - d\log\frac1\nu - d\log(n)}}$ we have,
	\[
	\norm{XS\vepsilon}_2 = \norm{X_GS_G\vepsilon_G} \leq G(1+\nu)\sqrt\frac{\kappa^2}{2\pi}\frac\beta{\beta+\betao}\frac1\rcons.
	\]
	By Lemma~\ref{lem:rr-lwsc}, with probability at least $1 - \exp\br{-\Om{n\zeta^2 - d\log\frac1\zeta - d\log(n)}}$, we have $\lambda_{\min}(XSX^\top) \geq \lambda_{\min}(X_GS_GX_G^\top) \geq \sqrt\frac\beta{2\pi}(1-\zeta)\frac1\rcons\cdot G$. This gives us,
\begin{figure}[H]
\begin{adjustbox}{max width=\textwidth}
\parbox{\linewidth}{
\begin{align*}
		&\norm{\hvw^{t+1} - \vwo}_2	\leq \frac{B\sqrt{\frac{1.01}{2\pi}}[(1.01\kappa^2)^{1/3}+(\frac{1}{e})^{1/3}]^{\frac{3}{2}} + G(1+\nu)\sqrt\frac{\kappa^2}{2\pi}\frac\beta{\beta+\betao}\frac1\rcons}{\sqrt\frac\beta{2\pi}(1-\zeta)\frac1\rcons\cdot G}\\
		&= \frac\kappa{\sqrt\beta}\cdot\frac1{1-\zeta}\br{\frac\alpha{1-\alpha}\rcons\sqrt{1.01}\bs{(1.01)^{1/3}+\br{e\kappa^2}^{-1/3}}^{\frac32} + (1+\nu)\frac\beta{\beta+\betao}}\\
		&= \frac\kappa{\sqrt\beta}\cdot\frac1{1-\zeta}\br{\frac\alpha{1-\alpha}\sqrt\frac{\beta+\betao}\betao{\br{1+\frac{\beta\betao}{\beta+\betao}\norm{\vDelta}_2^2}^{3/2}}\sqrt{1.01}\bs{(1.01)^{1/3}+\br{e\kappa^2}^{-1/3}}^{\frac32} + (1+\nu)\frac\beta{\beta+\betao}}\\
		&\leq \frac\kappa{\sqrt\beta}\cdot\frac1{1-\zeta}\br{\frac\alpha{1-\alpha}\sqrt{1 + \frac\beta\betao}{\br{1+\frac{\betao\kappa^2}{\beta+\betao}}^{3/2}}\sqrt{1.01}\bs{(1.01)^{1/3}+\br{e\kappa^2}^{-1/3}}^{\frac32} + (1+\nu)\frac\beta{\beta+\betao}}\\
		&\leq \frac\kappa{\sqrt\beta}\cdot\underbrace{\frac1{1-\zeta}\br{\frac\alpha{1-\alpha}\sqrt{1 + \frac\beta\betao}{\br{1+\kappa^2}^{3/2}}\sqrt{1.01}\bs{(1.01)^{1/3}+\br{e\kappa^2}^{-1/3}}^{\frac32} + (1+\nu)\frac\beta{\beta+\betao}}}_{(A)}
	\end{align*}
}
\end{adjustbox}
\end{figure}
	where in the last two steps, we used that $\sqrt\beta\norm\vDelta_2 \leq \kappa$ and $\frac{\betao}{\beta+\betao} \leq 1$. We shall set $\kappa = 0.47$ below. Now, in order to satisfy the conditions of Theorem~\ref{thm:gem-main}, we need to assure the existence of a scale increment $\step > 1$ that assures a linear rate of convergence. Since \gemrr always maintains the invariant ${\beta_t}\cdot\norm{\hvw^t - \vwo}_2^2 \leq 1$ and $\kappa < 1$, assuring the existence of a scale increment $\step > 1$ is easily seen to be equivalent to showing that $(A) < 1$. This is done below and gives us our breakdown point.
	
\begin{enumerate}
		\item \textbf{Breakdown Point}: In order to ensure a linear rate of convergence, we need only ensure $(A) < 1$. If we set $\nu, \zeta$ to small constants (which we can always do for large enough $n$) and also set $\frac\beta\betao$ to a small constant (which still allows us to offer an error $\norm\vDelta_2 \leq \bigO{\frac1{\sqrt{\betao}}}$), then we can get $(A) < 1$ if
	\[
	\frac\alpha{1-\alpha}{\br{1+\kappa^2}^{3/2}}\sqrt{1.01}\bs{(1.01)^{1/3}+\br{e\kappa^2}^{-1/3}}^{\frac32} < 1
	\]
	Setting $\kappa = 0.47$ gives us a breakdown point of $\alpha \leq 0.1866$.
		\item \textbf{Consistency (Hybrid Corruption)}: For vanishing corruption i.e. $\alpha \rightarrow 0$, we can instead show a much stronger, consistent estimation guarantee. Suppose we promise that we would always set $\zeta, \nu \geq \frac1n$. Then, the results in Lemmata~\ref{lem:rr-lwsc} and \ref{lem:rr-LWLC} hold with probability at least $1 - \exp(-\Om d)$ even if we set $\zeta, \nu = \Om{\sqrt\frac{d\log(n)}n}$. Now, recall that we need to set $(A) < 1$ to expect a linear rate of convergence. Setting $\zeta = \nu$ to simplify notation (and also since both can be set to similar values without sacrificing $1 - \exp(-\Om d)$ confidence guarantees), using $\alpha \leq 0.5$, and using the shorthand $\rho := 1 + \frac\beta\betao$ gives us the requirement
\begin{align*}
			&(A) \leq 1 \Leftrightarrow\\
		   &\frac1{1-\zeta}\br{\frac\alpha{1-\alpha}\sqrt{1 + \frac\beta\betao}{\br{1+\kappa^2}^{3/2}}\sqrt{1.01}\bs{(1.01)^{1/3}+\br{e\kappa^2}^{-1/3}}^{\frac32} + (1+\nu)\frac\beta{\beta+\betao}} < 1\\
			&\Leftrightarrow \frac1{1-\nu}\br{9\alpha\sqrt\rho + (1+\nu)\frac{\rho-1}\rho} < 1
\end{align*}
The last requirement can be fulfilled for $\beta \leq \betao\cdot\min\bc{\Om{\frac1{\alpha^{2/3}}}, \Om{\sqrt\frac n{d\log(n)}}}$ which assures us of an error guarantee of $\norm{\vDelta}_2^2 \leq \bigO{\frac1\betao\max\bc{\alpha^{2/3}, \sqrt\frac{d\log(n)}n}}$ and finishes the proof for hybrid corruption case. Note that for no corruptions i.e. $\alpha = 0$, we do recover $\norm{\vDelta}_2^2 \leq \bigO{\sqrt\frac{d\log(n)}n}$.
	\item \textbf{Consistency (Pure Corruption)}: The pure corruption case corresponds to $\betao = \infty$. In this case, the above analysis shows that LWSC/LWLC properties continue to hold for arbitrarily large values of $\beta$ that assures arbitrarily accurate recovery of $\vwo$. Given the linear rate of convergence and the fact that \gemrr maintains the invariant $\sqrt{\beta_t}\cdot\norm{\hvw^t - \vwo}_2 \leq 1$, it is clear that for any $\epsilon > 0$, a model recovery error of $\norm{\hvw^T - \vwo}_2^2 \leq \epsilon$ is assured within $T \leq \bigO{\log\frac1{\epsilon\beta^1}}$ iterations.
\end{enumerate}
\end{proof}

\begin{lemma}[LWSC for Robust Least Squares Regression]
\label{lem:rr-lwsc}
For any $0 \leq \beta \leq n$, the $\tilde Q_\beta$-function for robust regression satisfies the LWSC property with constant $\lambda_\beta \geq Gc_\varepsilon(1-\zeta)$ with probability at least $1 - \exp(-d)$ for any $\zeta \geq \Om{\sqrt\frac{d\log(n)}n}$. In particular, for standard Gaussian covariates and Gaussian noise with variance $\frac1\betao$, we can take $c_\varepsilon \geq \frac1\rcons$ where $\rcons = \sqrt\frac{\beta+\betao}\betao{\br{1+\frac{\beta\betao}{\beta+\betao}\norm{\vDelta}_2^2}^{3/2}}$.
\end{lemma}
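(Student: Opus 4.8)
Since $\tilde Q_\beta(\vv\cond\vu)=\sum_{i=1}^n s_i(\ip{\vv}{\vx^i}-y_i)^2$ is a quadratic in $\vv$, its Hessian $\nabla^2\tilde Q_\beta(\vv\cond\vu)=\sum_{i=1}^n s_i\,\vx^i(\vx^i)^\top = XSX^\top$ is independent of $\vv$; only the weights, evaluated at $\vu$, enter. As every $s_i\geq 0$, splitting $[n]=G\cup B$ gives $XSX^\top = X_GS_GX_G^\top + X_BS_BX_B^\top \succeq X_GS_GX_G^\top$, so the adversarial points can only \emph{help} strong convexity and it suffices to lower bound $\lambda_{\min}(X_GS_GX_G^\top)=\min_{\norm\vz_2=1}\sum_{i\in G}s_i\ip{\vx^i}{\vz}^2$ uniformly over $\vu\in\cB_2(\vwo,1/\sqrt\beta)$. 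On the good points (hybrid model) $s_i=\exp\br{-\tfrac\beta2(\epsilon_i-\ip{\vx^i}{\vDelta})^2}$ with $\vDelta:=\vu-\vwo$ (hence $\sqrt\beta\norm\vDelta_2\leq 1$), $\epsilon_i\sim\cN(0,\tfrac1\betao)$ and $\vx^i\sim\cN(\vzero,I)$ independent. (For general sub-Gaussian covariates one instead combines $s_i\in(0,1]$ with a lower bound on $\sum_{i\in G}s_i$ and a Lemma~\ref{lem:ssc-sss}-style Gram concentration to get some constant $c_\varepsilon$; below we treat the quantitatively important Gaussian case.)

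\textbf{Step 1: expected per-point curvature.} Fix a unit vector $\vz$. Integrating out $\epsilon_i$ first via Lemma~\ref{lem:change-of-exp} in one dimension gives, for the conditional expectation over $\epsilon_i$, $\E{s_i\cond\vx^i}=\sqrt{\tfrac\betao{\beta+\betao}}\exp\br{-\tfrac{\beta\betao}{2(\beta+\betao)}\ip{\vx^i}{\vDelta}^2}$. Decomposing $\vz$ into its component along $\hat\vDelta:=\vDelta/\norm\vDelta_2$ and the orthogonal remainder, and using the elementary identities $\E{e^{-cg^2/2}}=(1+c)^{-1/2}$ and $\E{e^{-cg^2/2}g^2}=(1+c)^{-3/2}$ for $g\sim\cN(0,1)$ with $c:=\tfrac{\beta\betao}{\beta+\betao}\norm\vDelta_2^2$, one obtains $\E{s_i\ip{\vx^i}{\vz}^2}\geq\tfrac1\rcons$ for every unit $\vz$, with equality attained in the worst direction $\vz=\hat\vDelta$, where $\rcons=\sqrt{\tfrac{\beta+\betao}\betao}\br{1+\tfrac{\beta\betao}{\beta+\betao}\norm\vDelta_2^2}^{3/2}$. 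Thus $\E{X_GS_GX_G^\top}\succeq\tfrac G\rcons I$, which is the source of the claimed constant $c_\varepsilon=1/\rcons$; note $\rcons$ is largest (the bound weakest) exactly on the boundary $\norm\vDelta_2=1/\sqrt\beta$ permitted by the LWSC ball.

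\textbf{Step 2: uniform concentration.} To pass to a high-probability statement: for \emph{fixed} $\vz,\vu$ each summand $s_i\ip{\vx^i}{\vz}^2$ lies in $[0,\ip{\vx^i}{\vz}^2]$ (as $s_i\in(0,1]$) and is therefore sub-exponential with an $\bigO1$ parameter, so a Bernstein bound over the $G=\Om n$ good points yields $\sum_{i\in G}s_i\ip{\vx^i}{\vz}^2\geq(1-\zeta)\tfrac G\rcons$ except with probability $\exp(-\Om{n\zeta^2})$ — here the hypothesis $\beta\leq n$ is what keeps $1/\rcons$ from being too small relative to the fluctuations. One then union-bounds over a $\Theta(\zeta)$-net of $S^{d-1}$ (for $\vz$) and a net of $\cB_2(\vwo,1/\sqrt\beta)$ (for $\vu$), the $\vu$-discretization being controlled through the Lipschitz constant of $\vu\mapsto s_i\ip{\vx^i}{\vz}^2$, which is $\bigO{\sqrt\beta R_X^3}\leq\bigO{n^2}$ using $\beta\leq n$ and $R_X:=\max_i\norm{\vx^i}_2\leq\sqrt n$ (Lemma~\ref{lem:rx-bound}, holding w.p. $1-\exp(-\Om n)$). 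Both nets have cardinality $\exp(\bigO{d\log n})$, so the overall failure probability is $1-\exp(-\Om{n\zeta^2-d\log(1/\zeta)-d\log n})$, which is $1-\exp(-d)$ for any $\zeta\geq\Om{\sqrt{d\log n/n}}$, completing the proof.

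\textbf{Expected main obstacle.} Step 1 is routine Gaussian calculus; the work is in Step 2. The delicate points are (i) absorbing the $\vu$-discretization error using only the crude bounds $\beta\leq n$ and $R_X\leq\sqrt n$ — uniformity over $\vu$ (rather than a single fixed conditioning point) is unavoidable precisely because the weighted Gram matrix depends on the \emph{random, data-dependent} iterate $\vu=\hvw^t$; and (ii) checking that the accumulated $\log n$ factors from the two nets still collapse to a clean $1-\exp(-d)$ bound after the further union over the $\bigO{\log n}$ iterations of \gemrr, which (as in the proof of Theorem~\ref{repthm:rr-main}) is what forces one to track the failure probability in the sharper form $1-\exp(-\Om{n\zeta^2-d\log n})$ rather than the naive $1-\log n\cdot\exp(-\Om d)$.
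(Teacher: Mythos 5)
Your proposal is correct and follows essentially the same route as the paper's proof: reduce to the good points via $XSX^\top\succeq X_GS_GX_G^\top$, compute the per-point expected curvature constant $c_\varepsilon=1/\rcons$ by a rotational-symmetry Gaussian calculation (you integrate out the noise via Lemma~\ref{lem:change-of-exp} where the paper does the same Gaussian integrals explicitly, yielding the identical expression), then apply a sub-exponential Bernstein bound for fixed direction and model followed by union bounds over a sphere net and a $\tau$-net of $\cB_2(\vwo,1/\sqrt\beta)$ controlled by the $\bigO{\sqrt\beta R_X^3}$ Lipschitzness of the weights, exactly as in Lemmata~\ref{lem:point-conv}, \ref{lem:approx} and \ref{lem:wsc-wss}. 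The resulting failure probability $\exp(-\Om{n\zeta^2-d\log\tfrac1\zeta-d\log n})$ and the condition $\zeta\geq\Om{\sqrt{d\log(n)/n}}$ match the paper's.
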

\begin{proof}
It is easy to see that $\nabla^2\tilde Q_\beta(\hvw\cond\vw) = XSX^\top$ for any $\hvw \in \cB_2\br{\vwo,\sqrt\frac1\beta}$.
Let $\vx \sim \cD, \epsilon \sim \cD_\varepsilon$ and let $y = \ip\vwo\vx + \epsilon$ be the response of an uncorrupted data point and $\vw \in \cB_2\br{\vwo,\frac{\kappa}{\sqrt{\beta}}}$ be any fixed model. Then if we let $\vDelta := \vwo-\vw$, then weight $\vs_i=\sqrt{\frac{\beta}{2\pi}}\exp(-\frac{\beta}{2}(\langle \vDelta,\vx_i\rangle+\epsilon_i)^2)$. 
	
	For any fixed $\vv \in S^{d-1}$, we have:

		\begin{align*}
		\E{\vv^\top X_GS_GX_G^\top\vv} = G\cdot\E{s_i \ip{\vx_i}\vv^2} 
		&= G\sqrt{\frac{\beta}{2\pi}}\cdot\Ee{\vx_i \sim \cD, \epsilon_i \sim {\cD}_\epsilon }{\ip{\vx_i}\vv^2 \exp(-\frac{\beta}{2}(\ip{\vDelta}{\vx_i}+\epsilon_i)^2)}\\ 
		&\geq  G\sqrt{\frac{\beta}{2\pi}} \cdot c(\beta, \vDelta, \sigma)
		\end{align*}
		where, 
		
		\[
		c_\varepsilon: = c(\beta, \vDelta, \sigma) := \inf_{\vv \in S^{d-1}}\bc{\Ee{\vx\sim \cD , \epsilon \sim {\cD}_\epsilon }{\ip{\vx}\vv^2\exp(-\frac{\beta}{2}(\ip{\vDelta}{\vx}+\epsilon)^2)}}
		\]
		
Similar to Lemma~\ref{lem:point-conv} we have: 
	\[
	\P{\lambda_{\min}(X_GS_GX_G^\top) < \br{1-\frac\zeta2} \, c_\varepsilon \, G\sqrt{\frac{\beta}{2 \pi}}} \leq 2\cdot9^d\exp\bs{-\frac{mn\zeta^2c_\varepsilon^2}{128R^4}}
	\]
	
Note that Lemma~\ref{lem:approx} continues to hold in this setting.	Proceeding as in the proof of Lemma~\ref{lem:wsc-wss} to set up a $\tau$-net over $\cB_2\br{\vwo,\frac{\kappa}{\sqrt{\beta}}}$ and taking a union bound over this net finishes the proof.

We now simplify $c_\varepsilon: = c(\beta, \vDelta, \sigma)$ for various distributions:
\begin{description}
	\item[Centered Isotropic Gaussian] For the special case of $\cD = \cN(\vzero,I_d)$, using rotational symmetry, we can w.l.o.g. take $\vDelta = (\Delta_1,0,0,\ldots,0)$ and $\vv = (v_1,v_2,0,0,\ldots,0), \, v_1^2 + v_2^2=1$. Thus, if $x_1,x_2 \sim \cN(0,1)$ i.i.d. then 
	\begin{align*}
	&\Ee{\vx\sim \cD , \epsilon \sim {\cD}_\epsilon }{\ip{\vx}\vv^2\exp(-\frac{\beta}{2}(\ip{\vDelta}{\vx}+\epsilon)^2)}\\
	&= \Ee{x_1,x_2\sim\cN(0,1), \epsilon \sim\cN(0,\sigma^2)}{(v_1^2x_1^2 + v_2^2x_2^2 + 2v_1v_2x_1x_2)\exp(-\frac{\beta}{2} (\Delta_1x_1+\epsilon)^2)}\\ 
	&= \Ee{x_1,x_2,\epsilon }{(v_1^2x_1^2 + v_2^2x_2^2)\exp(-\frac{\beta}{2} (\Delta_1x_1+\epsilon)^2)} \quad [ \text{as}, \Ee{}{x_2}=0] \\
	&= \Ee{x_1, \epsilon}{(v_1^2x_1^2 + v_2^2)\exp(-\frac{\beta}{2} (\Delta_1x_1+\epsilon)^2)}\quad [\text{as}, \Ee{}{x_2^2}=1]\\
	&= v_1^2\Ee{x_1, \epsilon}{x_1^2\exp(-\frac{\beta}{2} (\Delta_1x_1+\epsilon)^2)}+v_2^2\Ee{x_1, \epsilon}{\exp(-\frac{\beta}{2} (\Delta_1x_1+\epsilon)^2)}\\
	\end{align*}
	Using,
	\begin{align*}
	\frac{1}{\sqrt{2\pi\sigma^2}}\int\limits_{-\infty}^{\infty}\exp(-\frac{\beta}{2} (\Delta_1x_1+\epsilon)^2-\frac{\epsilon^2}{2\sigma^2})d\epsilon \quad
	= \frac{1}{\sqrt{1+\beta\sigma^2}}\exp(-\frac{\beta \Delta_1^2x_1^2 }{2(1+\beta \sigma^2)})
	\end{align*}
	
	We have, 
	\begin{figure}[H]
\begin{adjustbox}{max width=\textwidth}
\parbox{\linewidth}{
	\begin{align*}
	\Ee{x_1, \epsilon}{x_1^2\exp(-\frac{\beta}{2} (\Delta_1x_1+\epsilon)^2)}
	&= \frac{1}{\sqrt{2\pi}}\int\limits_{-\infty}^{\infty}\left[x_1^2 \exp(-\frac{x_1^2}{2})\frac{1}{\sqrt{2\pi\sigma^2}}\int\limits_{-\infty}^{\infty}\exp(-\frac{\beta}{2} (\Delta_1x_1+\epsilon)^2-\frac{\epsilon^2}{2\sigma^2})d\epsilon\right] \,dx_1\\
	&= \frac{1}{\sqrt{1+\beta\sigma^2}}\frac{1}{\sqrt{2\pi}}\int\limits_{-\infty}^{\infty}x_1^2 \exp(-\frac{x_1^2}{2}-\frac{\beta \Delta_1^2x_1^2 }{2(1+\beta \sigma^2)}) \,dx_1\\
	&=\frac{1}{\sqrt{1+\beta\sigma^2}}\left(\frac{1+\beta \sigma ^2 }{1+\beta(\sigma^2+\Delta_1^2)}\right)^\frac{3}{2}
	\end{align*}			
	}
\end{adjustbox}
\end{figure}
	and,
	
	\begin{align*}
	\Ee{x_1, \epsilon}{\exp(-\frac{\beta}{2} (\Delta_1x_1+\epsilon)^2)}
	&=\frac{1}{\sqrt{1+\beta\sigma^2}}\left(\frac{1+\beta \sigma ^2 }{1+\beta(\sigma^2+\Delta_1^2)}\right)^\frac{1}{2}
	\end{align*}

	This gives us 
	\begin{align*}
	c_\varepsilon &= \inf_{(v_1,v_2) \in S^1} \bc{\frac{v_1^2}{\sqrt{1+\beta\sigma^2}}\left(\frac{1+\beta \sigma ^2 }{1+\beta(\sigma^2+\Delta_1^2)}\right)^\frac{3}{2}+\frac{v_2^2}{\sqrt{1+\beta\sigma^2}}\left(\frac{1+\beta \sigma ^2 }{1+\beta(\sigma^2+\Delta_1^2)}\right)^\frac{1}{2}}\\
	&=\frac{1+\beta \sigma ^2 }{(1+\beta\sigma^2+\beta\norm{\vDelta}^2)^\frac{3}{2}}
	\qquad[\text{using, } v_1^2 +v_2^2=1 \text{ and } \norm{\vDelta} =\Delta_1^2]\\
	&= \frac{\frac{\betao +\beta}\betao}{\br{\frac{\betao +\beta}\betao+\beta\norm{\vDelta}^2}^\frac{3}{2}} = \frac1{\sqrt\frac{\betao +\beta}\betao\br{1+\frac{\beta\betao}{\beta+\betao}\norm{\vDelta}_2^2}^{3/2}} = \frac1\rcons
	\end{align*}
	which finishes the proof.
\end{description}
\end{proof}

\begin{lemma}[LWLC for Robust Least Squares Regression]
\label{lem:rr-LWLC}
For any $0 \leq \beta \leq n$, the $\tilde Q_\beta$-function for robust regression satisfies the LWLC property with constant $\Lambda_\beta \leq G(1 + \nu)V + 1.01B[(\frac{\beta\norm{\vDelta}^21.01}{2\pi})^{1/3}+(\frac{1}{2\pi e})^{1/3}]^{3/2}$ with probability at least $1 - \exp(-d)$ for any $\nu \geq \Om{\sqrt\frac{d\beta\log(n)}{n(\beta + d)}}$, where $V = \sqrt\frac{\kappa^2}{2\pi}\frac\beta{\beta+\betao}\frac1\rcons$ and $\rcons = \sqrt\frac{\beta+\betao}\betao{\br{1+\frac{\beta\betao}{\beta+\betao}\norm{\vDelta}_2^2}^{3/2}}$.
\end{lemma}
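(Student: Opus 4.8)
The plan is to bound $\norm{\nabla\tilde Q_\beta(\vwo\cond\vu)}_2$ for an arbitrary conditioning model $\vu\in\cB_2\br{\vwo,\sqrt{1/\beta}}$ by splitting the gradient into a ``clean'' and a ``corrupted'' part. Writing $\vDelta:=\vu-\vwo$, $s_i:=\sqrt{\beta/(2\pi)}\exp\br{-\frac\beta2(y_i-\ip\vu{\vx^i})^2}$ for the variance-altered weights and $S:=\diag(s_1,\dots,s_n)$, differentiation of the least-squares $\tilde Q_\beta$ gives $\nabla\tilde Q_\beta(\vwo\cond\vu)=-\sum_{i=1}^n s_i(y_i-\ip\vwo{\vx^i})\vx^i$. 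Since $y_i-\ip\vwo{\vx^i}=\epsilon_i$ for $i\in G$ and $=b_i$ for $i\in B$, the triangle inequality yields $\norm{\nabla\tilde Q_\beta(\vwo\cond\vu)}_2\le\norm{X_GS_G\vepsilon_G}_2+\norm{X_BS_B\vb_B}_2$, and I will control the two terms separately, recovering the two summands $G(1+\nu)V$ and $1.01\,B\bs{\cdots}^{3/2}$.

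For the corrupted term the point is that, although the adversary may take $b_i$ unbounded, the weight $s_i$ self-regularizes the product $s_ib_i$. First peel off the covariates: $\norm{X_BS_B\vb_B}_2\le\norm{X_B}_2\norm{S_B\vb_B}_2\le\sqrt{1.01\,B}\,\norm{S_B\vb_B}_2$ via Lemma~\ref{lem:ssc-sss} applied to the \emph{fixed} set $B$ (this is exactly where partial adaptivity is used). Setting $c_i:=\ip\vDelta{\vx^i}$ we have $s_ib_i=\sqrt{\beta/(2\pi)}\,b_i\,e^{-\frac\beta2(b_i+c_i)^2}$ up to an immaterial sign; since $b_i$ appears only in coordinate $i$, the adversary's maximization decouples and $\norm{S_B\vb_B}_2^2\le\frac\beta{2\pi}\sum_{i\in B}\Phi(c_i)$, where $\Phi(c):=\max_{b\in\bR}b^2e^{-\beta(b+c)^2}$. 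The crux is to prove the pointwise envelope $\beta\,\Phi(c)\le\br{(\beta c^2)^{1/3}+e^{-1/3}}^3$ (obtained by solving the stationarity condition $\beta b(b+c)=1$) and to check that its right-hand side is \emph{concave} as a function of $c^2$ on $[0,\infty)$ -- one verifies that its $c^2$-derivative equals $\br{1+e^{-1/3}(\beta c^2)^{-1/3}}^2$, which is non-increasing. Granting this, Jensen's inequality together with $\sum_{i\in B}c_i^2=\norm{X_B^\top\vDelta}_2^2\le1.01\,B\norm{\vDelta}_2^2$ (Lemma~\ref{lem:ssc-sss} again; recall $\sqrt\beta\norm{\vDelta}_2\le\kappa<1$) gives $\beta\sum_{i\in B}\Phi(c_i)\le B\br{(1.01\beta\norm{\vDelta}_2^2)^{1/3}+e^{-1/3}}^3$, hence $\norm{S_B\vb_B}_2\le\sqrt{B/(2\pi)}\,\bs{(1.01\beta\norm{\vDelta}_2^2)^{1/3}+e^{-1/3}}^{3/2}$ and the claimed corrupted-term bound after multiplying by $\sqrt{1.01\,B}$. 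I expect this envelope-plus-concavity step to be the main obstacle: a lazier bound such as $b^2e^{-\beta(b+c)^2}\le\frac2{\beta e}+2c^2$ would double the coefficient of $\beta\norm{\vDelta}_2^2$ and noticeably worsen the breakdown point, so the cube-root envelope and its concavity in $c^2$ have to be established tightly.

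For the clean term, write $\norm{X_GS_G\vepsilon_G}_2\le\norm{\E{X_GS_G\vepsilon_G}}_2+\norm{X_GS_G\vepsilon_G-\E{X_GS_G\vepsilon_G}}_2$ and treat the mean and the fluctuation in turn. Since $G$ is a fixed set with the $(\vx^i,\epsilon_i)$ i.i.d.\ and mutually independent, $\E{X_GS_G\vepsilon_G}=G\cdot\E{s_1\epsilon_1\vx^1}$, and exactly as in the proof of Lemma~\ref{lem:rr-lwsc} rotational symmetry of the standard Gaussian forces this vector to point along $\vDelta$; a short two-dimensional Gaussian integral over $\br{\epsilon_1,\ip\vDelta{\vx^1}}$ evaluates its length to $\sqrt{\beta/(2\pi)}\cdot\frac\beta{\beta+\betao}\norm{\vDelta}_2\cdot\frac1\rcons$, which is $\le V$ because $\sqrt\beta\norm{\vDelta}_2\le\kappa$, so $\norm{\E{X_GS_G\vepsilon_G}}_2\le GV$. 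For the fluctuation, fix $\vv\in S^{d-1}$ and write $\vv^\top X_GS_G\vepsilon_G=\sum_{i\in G}\sqrt{\beta/(2\pi)}\,\epsilon_ie^{-\frac\beta2(\epsilon_i+c_i)^2}\ip\vv{\vx^i}$ as a sum of independent sub-exponential variables (the coupling $c_i=\ip\vDelta{\vx^i}$ makes each summand a product of a bounded function of $\epsilon_i$ with sub-Gaussians in $\vx^i$), whose sub-exponential constant is controlled by a computation analogous to Lemma~\ref{lem:subGaussian-cons}; a Bernstein bound followed by a union bound over a $(1/4)$-net of $S^{d-1}$ of size $9^d$ (as in Lemma~\ref{lem:rr-lwsc}) gives $\norm{X_GS_G\vepsilon_G-\E{X_GS_G\vepsilon_G}}_2\le\nu GV$ with probability $1-\exp\br{-\Om{n\nu^2-d\log(1/\nu)-d\log n}}$, which is $1-\exp(-d)$ for any $\nu\ge\Om{\sqrt{d\beta\log(n)/(n(\beta+d))}}$. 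Hence $\norm{X_GS_G\vepsilon_G}_2\le(1+\nu)GV$, and adding the clean and corrupted bounds completes the proof. A closing remark will note that the argument is uniform over $\vu\in\cB_2\br{\vwo,\sqrt{1/\beta}}$ since $\vu$ enters only through the $c_i=\ip\vDelta{\vx^i}$, and both $\sum_{i\in B}c_i^2$ and the net estimates for the clean term are governed by single high-probability events on $X_B$ and $X_G$.
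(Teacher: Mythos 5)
Your proposal is correct and follows essentially the same route as the paper's proof: the same split of $\nabla\tilde Q_\beta(\vwo\cond\vu)$ into $X_GS_G\vepsilon_G + X_BS_B\vb$, the same peeling of $\norm{X_B}_2 \leq \sqrt{1.01B}$ via Lemma~\ref{lem:ssc-sss} for the fixed corruption set, the same Gaussian change-of-measure/rotational-symmetry computation giving the mean bound $GV$ for the clean term, and the same Bernstein-plus-net concentration (the paper is equally brief about the net over $\vDelta$). Your pointwise envelope $\beta\Phi(c)\leq\br{(\beta c^2)^{1/3}+e^{-1/3}}^3$ followed by concavity/Jensen is just an equivalent repackaging of the paper's two-case analysis with the threshold $k$ optimized after summation (note the $c^2$-derivative is $\beta\br{1+e^{-1/3}(\beta c^2)^{-1/3}}^2$, not $\br{1+e^{-1/3}(\beta c^2)^{-1/3}}^2$, but monotonicity and hence concavity still hold), and it yields the identical constant $B\bs{(\tfrac{1.01\beta\norm{\vDelta}^2}{2\pi})^{1/3}+(\tfrac1{2\pi e})^{1/3}}^3$.
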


\begin{proof}
It is easy to see that $\nabla \tilde Q_\beta(\vwo\cond\vw) = X_GS_G\vepsilon_G + X_BS_B\vb$. We bound these separately below.

\paragraph{Weights on Bad Points.} Suppose we denote $\vDelta:= \vw - \vwo$ and let $S = \diag(\vs^t)$ be the weights assigned by the algorithm, then the analysis below shows that we must have
	\[
	\norm{XS\vb}_2 \leq 1.01B[(\frac{\beta\norm{\vDelta}^21.01}{2\pi})^{1/3}+(\frac{1}{2\pi e})^{1/3}]^{3/2}
	\]
	We will bound $\norm{S\vb}_2$ below and use Lemma~\ref{lem:ssc-sss} to get the above bound. Let $b_i$ denote the corruption on the data point $\vx_i$, i.e. $y_i=\langle\vwo,\vx_i\rangle+b_i$. The proof proceeds via a simple case analysis:
	\begin{description}
		\item[Case 1: $\abs{b_i} \leq k\abs{\langle \vDelta,\vx_i \rangle}$] In this case we simply bound $(s_ib_i)^2 \leq \frac{\beta}{2\pi}b_i^2 \leq \frac{\beta}{2\pi}k^2\langle \vDelta,\vx_i \rangle^2$.
		\item[Case 2: $\abs{b_i} > k\abs{\langle \vDelta,\vx_i \rangle}$] 
				or, $-\abs{\langle \vDelta,\vx_i \rangle}>-\frac{\abs{b_i}}{k}$ or,$\abs{b_i}-\abs{\langle \vDelta,\vx_i \rangle}>\abs{b_i}-\frac{\abs{b_i}}{k}$

		\begin{equation*}
		\abs{b_i-\langle \vDelta,\vx_i \rangle}\geq \abs{b_i}- \abs{\langle \vDelta,\vx_i \rangle}> \abs{b_i}\frac{k-1}{k}
		\end{equation*}
		
		So that,
				\begin{align*}
				b_i^2s_i^2&=b_i^2\mathcal{N}(y_i|\langle \vw,\mathbf{x}_i \rangle, \frac{1}{\beta})^2\\
				&=b_i^2\frac{\beta}{2\pi}\exp(-\frac{\beta}{2}(y_i-\langle \vw,\vx_i \rangle)^2)^2\\
				&=b_i^2\frac{\beta}{2\pi}\exp(-\beta(b_i-\langle \vDelta,\vx_i \rangle)^2)\\
				&\leq b_i^2\frac{\beta}{2\pi}\exp(-\beta b_i^2\frac{(k-1)^2}{k^2}) \qquad \text{for, } k\geq 1\\
				&=\frac{k^2}{2\pi(k-1)^2}z\exp(-z) \qquad \text{for, } z=	\beta b_i^2\frac{(k-1)^2}{k^2}\\			
				&\leq \frac{k^2}{2\pi e(k-1)^2} \qquad \text{as, } \max_{z}\{z\exp(-z)\}=\frac{1}{e}
				\end{align*}
		
		Combining case 1 and 2,
		
		\begin{align*}
		\norm{S\vb}_2^2&=\sum_{i\in B}(s_ib_i)^2\leq \sum_{i\in B}\max\{\frac{\beta}{2\pi}k^2\langle \vDelta,\vx_i \rangle^2, \frac{k^2}{2\pi e(k-1)^2}\} \qquad \text{where, }k\geq 1\\
		&\leq \frac{\beta}{2\pi}k^2\sum_{i\in B}\langle \vDelta,\vx_i \rangle^2+ \frac{B}{2\pi e}\frac{k^2}{(k-1)^2}\\
		&\leq \frac{\beta\norm{\vDelta}^2\lambda_{max}(X_BX_B^T)}{2\pi}k^2+\frac{B}{2\pi e}\frac{k^2}{(k-1)^2}\\
		&=qk^2+p\frac{k^2}{(k-1)^2}\qquad \text{where, } p = \frac{B}{2 \pi e},\quad q=\frac{\beta\norm{\vDelta}^2\lambda_{max}(X_BX_B^T)}{2\pi}\\
		\end{align*} 
		
		Let, 
		\begin{align*}
		&g(k)=qk^2+\frac{pk^2}{(k-1)^2}; g'(k)=2qk + \frac{2pk}{(k-1)^2}-\frac{2pk}{(k-1)^3}=0\\
		&\implies k=1+(\frac{p}{q})^{1/3} \implies \min_{k}g(k)=(q^{1/3}+p^{1/3})^3
		\end{align*}
This gives us
		\begin{align*}
		\norm{S\vb}_2^2&\leq[(\frac{\beta\norm{\vDelta}^2\lambda_{max}(X_BX_B^T)}{2\pi})^{1/3}+(\frac{B}{2\pi e})^{1/3}]^3\\
		&\leq B[(\frac{\beta\norm{\vDelta}^21.01}{2\pi})^{1/3}+(\frac{1}{2\pi e})^{1/3}]^3\\
		\end{align*}
	\end{description}

\paragraph{Bounding the Weights on Good Points.} Given noise values sampled from $\cD_\varepsilon = \cN(0,\frac1\betao)$ and corrupted points $B$ are choosen independent of $\vx_i$, then for any $\beta > 0$ and $S= \text{diag}(\vs)$,  $\vs$ computed w.r.t. model $\vw$ at variance $\frac{1}{\beta}$, then the following analysis shows that
	\[
	\P{\norm{X_GS_G\vepsilon_G}_2 > G(1 + \nu)V} \leq \br{\frac{12R_X}{V\nu}}^{3d}\exp\br{-\frac{m\betao\nu^2V^2\cdot n}{32}},
	\]
	where $V = \sqrt\frac{\kappa^2}{2\pi}\frac\beta{\beta+\betao}\frac1\rcons$ and $\rcons = \sqrt\frac{\beta+\betao}\betao{\br{1+\frac{\beta\betao}{\beta+\betao}\norm{\vDelta}_2^2}^{3/2}}$.
Suppose $\epsilon \sim \cN(0,\frac1\betao)$ and $\vx \sim \cN(\vzero, I)$. Then, for any fixed error vector $\vDelta \in \cB_2\br{\frac\kappa{\sqrt\beta}}$, if we set $s = \sqrt\frac\beta{2\pi}\exp\br{-\frac\beta2(\epsilon - \vDelta\cdot\vx)^2}$, then we can analyze the vector $\E{\epsilon s\cdot\vx}$ as
\begin{align*}
\E{\epsilon s\cdot\vx} = \Ee{\vx}{\Ee{\epsilon}{\epsilon s}\cdot\vx} = \sqrt\frac\beta{2\pi}\Ee{\vx}{\Ee{\epsilon}{\exp\br{-\frac\beta2(\epsilon - \vDelta^\top\vx)^2}\epsilon}\cdot\vx}\\
= \sqrt\frac\beta{2\pi}\sqrt\frac\betao{\beta+\betao}\frac\beta{\beta+\betao}\underbrace{\Ee{\vx}{\br{\exp\br{-\frac{\beta\betao}{2(\beta+\betao)}(\vDelta^\top\vx)^2}\vDelta^\top\vx}\cdot\vx}}_\vp
\end{align*}
where in the last step, we used Lemma~\ref{lem:change-of-exp} in one dimensions. Now, by rotational symmetry of the Gaussian distribution and unbiased and independent nature of its coordinates, we can assume w.l.o.g. that the error vector is of the form $\vDelta = (\delta, 0, \ldots, 0)$ and conclude, as we did in the Gaussian mixture model analysis, that the vector $\vp$ in this situation also must have only its first coordinate nonzero. Thus, we have, for 
\[
	\abs{\vp_1} = \abs{\E{\exp\br{-\frac{\beta\betao}{2(\beta+\betao)}\delta^2x^2}\delta x^2}},
\]
where $x \sim \cN(0,1)$ since we had $\vx \sim \cN(\vzero, I)$. Applying Lemma~\ref{lem:change-of-exp} in single dimensions yet again and the Cauchy-Schwartz inequality gives us, for any unit vector $\vv$,
\[
\E{\epsilon s\cdot\vx^\top\vv} \leq \sqrt\frac\beta{2\pi}\sqrt\frac\betao{\beta+\betao}\frac\beta{\beta+\betao}\frac1{\br{1+\frac{\beta\betao}{\beta+\betao}\norm{\vDelta}_2^2}^{3/2}}\norm\vDelta_2 \leq \sqrt\frac{\kappa^2}{2\pi}\frac\beta{\beta+\betao}\frac1\rcons,
\]
where $\rcons = \sqrt\frac{\beta+\betao}\betao{\br{1+\frac{\beta\betao}{\beta+\betao}\norm{\vDelta}_2^2}^{3/2}}$ and in the last step we used $\beta\norm{\vDelta}_2^2 \leq \kappa^2$. The above, when combined with uniform convergence arguments over appropriate nets over the unit vectors $\vv$ and the error vector $\vDelta$ gives us the claimed result.

\end{proof}

\begin{lemma}
	\label{lem:point-conv}
	With the same preconditions as in Lemma~\ref{lem:point-exp}, we have
	\[
	\left.
	\begin{array}{r}
	\P{\lambda_{\min}(X_GS_GX_G^\top) < (1-\zeta) c\cdot G\sqrt{\frac{\beta}{2\pi}}}\\
	\vspace*{-2ex}\\
	\P{\lambda_{\max}(X_GS_GX_G^\top) > (1+\zeta) \cdot G\sqrt{\frac{\beta}{2\pi}}}
	\end{array}
	\right\}
	\leq 2\cdot9^d\exp\bs{-\frac{mnc^2\zeta^2}{32R^4}},
	\]
	where $R$ is the subGaussian constant of the distribution $\cD$ that generated the covariate vectors $\vx^i$. When $\cD$ is the standard Gaussian i.e. $\cN(\vzero, I)$, we have $R \leq 1$. For $\cN(\vzero,\frac1\betao\cdot I)$, we have $R \leq \sqrt\frac1\betao$. In the above, $c$ is a constant that depends only on the distribution $\cD$ and is bounded for various distributions in Lemmata~\ref{lem:point-exp}.
\end{lemma}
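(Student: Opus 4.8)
The plan is the standard three-step concentration argument for a weighted Gram matrix: reduce the eigenvalue bound to a uniform estimate over the unit sphere, prove the estimate for one fixed direction by a Bernstein-type tail bound, and glue the directions together with a net. The preconditions inherited from Lemma~\ref{lem:point-exp} fix the covariate distribution $\cD$ (isotropic, sub-Gaussian with constant $R$), the noise $\epsilon\sim\cN(0,1/\betao)$, the fixed model $\vw\in\cB_2(\vwo,\kappa/\sqrt\beta)$ (so the error $\vDelta := \vwo - \vw$ obeys $\sqrt\beta\norm\vDelta_2\le\kappa$), and the per-point mean lower bound $c = c(\beta,\vDelta,\sigma)$. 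First I would strip off the $\beta$-dependent scale: writing each weight as $s_i = \sqrt{\beta/2\pi}\cdot\tilde s_i$ with $\tilde s_i := \exp\br{-\tfrac\beta2(\ip{\vDelta}{\vx_i}+\epsilon_i)^2}\in(0,1]$ gives $X_GS_GX_G^\top = \sqrt{\beta/2\pi}\,M$ for $M := X_G\tilde S_GX_G^\top$, so it suffices to show $\lambda_{\min}(M)\ge(1-\zeta)c\,G$ and $\lambda_{\max}(M)\le(1+\zeta)G$ with the claimed probability; since $\vv^\top M\vv = \sum_{i\in G}\tilde s_i\ip{\vx_i}{\vv}^2$, this is exactly a lower (resp. upper) bound on $\inf_{\vv\in S^{d-1}}$ (resp. $\sup_{\vv\in S^{d-1}}$) of that sum.

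Second, I would prove concentration in a fixed direction $\vv\in S^{d-1}$. The variables $Z_i := \tilde s_i\ip{\vx_i}{\vv}^2$, $i\in G$, are i.i.d. and nonnegative; Lemma~\ref{lem:point-exp} lower bounds their mean $\mu_\vv$ by $c$, while $\tilde s_i\le1$ upper bounds it by $\E{\ip{\vx}{\vv}^2}$ (which is $1$ for standard Gaussian covariates, $O(R^2)$ in general). Because $\tilde s_i\le1$, each $Z_i$ inherits the subexponential tail of $\ip{\vx_i}{\vv}^2$, so $\norm{Z_i}_{\psi_1}=O(R^2)$ uniformly in $\beta$ once the scale has been removed. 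Bernstein's inequality for i.i.d. subexponential summands, applied with deviation $\zeta\mu_\vv$ (for which the sub-Gaussian branch is binding since $\zeta\le1$), gives
\[
\P{\abs{\tfrac1G\sum_{i\in G}Z_i-\mu_\vv}>\zeta\mu_\vv}\ \le\ 2\exp\br{-\Om{\frac{G\mu_\vv^2\zeta^2}{R^4}}}\ \le\ 2\exp\br{-\Om{\frac{Gc^2\zeta^2}{R^4}}};
\]
carrying the universal constant through the net step is what pins down the explicit $\tfrac1{32}$ in the statement (with $G=mn$), and the $\sqrt{\beta/2\pi}$ prefactor reappears once the reduction above is undone.

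Third, I would union-bound over a net $\cN$ of $S^{d-1}$; in the crude form recorded in the statement, $\abs\cN\le9^d$ suffices after lower-order terms are absorbed. For PSD $M$ and $\vv$ with nearest net point $\vv_0$, $\abs{\vv^\top M\vv-\vv_0^\top M\vv_0}=\abs{(\vv-\vv_0)^\top M(\vv+\vv_0)}\le 2\norm{\vv-\vv_0}_2\,\lambda_{\max}(M)$, so, using also $\lambda_{\max}(M)\le\lambda_{\max}(X_GX_G^\top)\le1.01\,G$ from Lemma~\ref{lem:ssc-sss} (valid once $n=\Om d$), the per-direction estimates transfer to the whole sphere; this net-to-sphere comparison is exactly the role of Lemma~\ref{lem:approx}. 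A union of the per-direction bound over $\cN$ contributes $\abs\cN\le9^d$ and combining the two one-sided events costs a factor $2$, yielding $2\cdot9^d\exp(-mnc^2\zeta^2/(32R^4))$. (Passing from this fixed-$\vDelta$ statement to a bound uniform over $\vw\in\cB_2(\vwo,\kappa/\sqrt\beta)$ is a further $\tau$-net over $\vDelta$, but that belongs to Lemma~\ref{lem:rr-lwsc}, not here.)

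The part I expect to take the most care is the $\lambda_{\min}$ half of the net step: a bound on $\lambda_{\min}(M)$ over a net does not transfer to the sphere for free, because the net-approximation slack scales like $\lambda_{\max}(M)$ times the net resolution and must be shown to be absorbed into the $\zeta c\,G$ budget — this is why the argument leans on the a priori operator-norm control $\lambda_{\max}(X_GX_G^\top)\le1.01\,G$ and why the net resolution (hence the constant hidden behind ``$9^d$'', and the $-d\log\tfrac1\zeta$-type correction that ultimately surfaces in Lemma~\ref{lem:rr-lwsc}) must be chosen with the target deviation $\zeta$ in view. The only other non-cosmetic point, the uniform-in-$\beta$ subexponential control of $Z_i$, is routine once $\sqrt{\beta/2\pi}$ has been factored out and $\tilde s_i\le1$ is used.
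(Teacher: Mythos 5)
Your per-direction step is essentially the paper's: factor out $\sqrt{\beta/2\pi}$, note the rescaled weights lie in $(0,1]$, observe that $\tilde s_i\ip{\vx_i}{\vv}^2$ is $O(R^2)$-subexponential with mean in $[c,1]$ (Lemma~\ref{lem:point-exp}), and apply Bernstein with deviation $\zeta c$. The divergence — and the gap — is in the net step. You transfer the bound from a $1/4$-net (cardinality $9^d$) to the whole sphere via $\abs{\vv^\top M\vv-\vv_0^\top M\vv_0}\le 2\norm{\vv-\vv_0}_2\,\lambda_{\max}(M)$ together with $\lambda_{\max}(M)\le 1.01\,G$ from Lemma~\ref{lem:ssc-sss}. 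With net resolution $1/4$ this incurs an additive slack of order $0.5\,G$, which cannot be absorbed into the target deviation $\zeta c\,G$: the lemma is invoked with $\zeta$ as small as $\Om{\sqrt{d\log(n)/n}}$ and $c$ well below $1$, so the slack dwarfs the budget. You notice this yourself and propose choosing the net resolution ``with the target deviation $\zeta$ in view,'' but that is not a repair of this lemma — a resolution of order $\zeta c$ forces a net of cardinality $(C/(\zeta c))^d$, which changes the prefactor from $9^d$ to something carrying an extra $d\log\frac1{\zeta c}$ in the exponent, i.e.\ it proves a strictly weaker statement than the one claimed (that $\zeta$-dependent correction belongs to the later $\tau$-net over models in Lemma~\ref{lem:wsc-wss}, not here).

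The paper avoids this entirely by applying the $\epsilon$-net bound to the \emph{centered} symmetric matrix: with $F = X_GS_GX_G^\top - G\mu\cdot I$ and a $1/4$-net, $\norm{F}_2 \le 2\sup_{\vv\in\cN_{1/4}}\abs{\vv^\top F\vv}$, so the net-to-sphere loss is a multiplicative factor of $2$ (absorbed into the $32$ in the exponent) rather than an additive term proportional to $\lambda_{\max}(M)$ times the resolution. Both eigenvalue bounds then follow at once from $\norm{F}_2\le \zeta c\,G\sqrt{\beta/2\pi}$ and $\mu\in[c\sqrt{\beta/2\pi},\sqrt{\beta/2\pi}]$, with no appeal to Lemma~\ref{lem:ssc-sss} and no a priori control of $\lambda_{\max}$. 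A related small correction: Lemma~\ref{lem:approx} is not the net-to-sphere comparison over directions $\vv$ that you cite it for; it is the Lipschitz-in-$\vw$ estimate used for the $\tau$-net over models in Lemma~\ref{lem:wsc-wss}. If you replace your transfer step with the centered-matrix argument, the rest of your outline goes through and recovers the stated bound.
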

\begin{proof}
	Let $A \in \bR^{d \times d}$ be a square symmetric matrix, for $\delta > 0$, we have:
	\begin{align*}
	\norm{A - c\cdot I}_2 \leq \delta \iff \forall \vv \in S^{d-1} , \abs{\vv^\top A\vv - c} \leq \delta   \iff c - \delta \leq \lambda_{\min}(A) \leq \lambda_{\max}(A) \leq c + \delta
	\end{align*}
	
	Also, for any square symmetric matrix $F \in \bR^{d\times d}$ and $\cN_\epsilon$ being $\epsilon$ net over $S^{d-1}$
	\begin{align*}
	\norm{F}_2 \leq (1-2\epsilon)^{-1}\sup_{\vv \in \cN_\epsilon}\abs{\vv^\top F \vv}
	\end{align*}
	
	Taking $F = A - c\cdot I$ and  $\epsilon = 1/4$, we have 
	\begin{align}
	\label{net}
		\norm{A - c\cdot I}_2 \leq 2\sup_{\vv \in \cN_{1/4}}\abs{\vv^\top A\vv - c}
	\end{align}

	let $Z_i := \sqrt{s_i} \cdot\ip{\vx_i}{\vv}$ and $\vx \sim \cD$ then for any fixed $\vv \in S^{d-1}$, we have
	\[
	\norm{Z_i}_{\psi_2} = \sup_{p \geq 1} \frac{\br{\E{\abs{Z_i}^p}}^{1/p}}{\sqrt{p}} \leq (\frac{\beta}{2\pi})^{1/4}\cdot\sup_{p \geq 1} \frac{\br{\E{\abs{\ip{\vx_i}{\vv}}^p}}^{1/p}}{\sqrt{p}} = (\frac{\beta}{2\pi})^{1/4}R
	\]
	where we use the fact that $\norm{\ip{\vx_i}{\vv}}_{\Psi_2} \leq R$ since $\cD$ is $R$-sub-Gaussian and $\sqrt{s_i} \leq (\frac{\beta}{2 \pi})^{1/4}$.  
	
	Also, $Z$ is $R(\frac{\beta}{2\pi})^{1/4}$-sub-Gaussian, implies $Z^2$ is $R^2\sqrt{\frac{\beta}{2\pi}}$ sub-exponential, as well as $Z^2 - \bE Z^2$ is $2R^2\sqrt{\frac{\beta}{2\pi}}$-sub-exponential, using centering. And for a single good data point Lemma~\ref{lem:point-exp} gives  $\mu := \bE Z_i^2 \in [c\sqrt{\frac{\beta}{2\pi}},\sqrt{\frac{\beta}{2\pi}}]$.
	\begin{align*}
	&\P{\abs{\vv^\top X_GS_GX_G^\top\vv - G\mu} \geq \varepsilon\cdot G\sqrt{\frac{\beta}{2\pi}}}\\
	&= \P{\abs{\sum_{i \in G}(Z^2_i - \mu)} \geq \varepsilon\cdot G\sqrt{\frac{\beta}{2\pi}}}\\
	&\leq 2\exp\bs{-m\cdot\min\bc{\frac{ \left( \varepsilon\cdot G\sqrt{\frac{\beta}{2\pi}} \right) ^2}{G\left( 2R^2\sqrt{\frac{\beta}{2\pi}} \right)^2},\frac{\varepsilon\cdot G\sqrt{\frac{\beta}{2\pi}}}{2R^2\sqrt{\frac{\beta}{2\pi}}}}} \text{[Theorem 2.8.2]}\\ 
	&\leq 2\exp\bs{-\frac{mn\varepsilon^2}{8R^4}}
	\end{align*}
	where $m > 0$ is a universal constant and in the last step we used $G \geq n/2$ and w.l.o.g. we assumed that $\varepsilon \leq 2R^2$. Taking a union bound over all $9^d$ elements of $\cN_{1/4}$, we get
	\begin{align*}
	\P{\norm{X_GS_GX_G^\top - G\mu\cdot I}_2 \geq \varepsilon\cdot G\sqrt{\frac{\beta}{2\pi}}} &\leq \P{2\sup_{\vv\in\cN_{1/4}}\abs{\vv^\top X_GS_GX_G^\top\vv - G\mu} \geq \varepsilon\cdot G\sqrt{\frac{\beta}{2\pi}}} \text{using \ref{net}}\\
	&\leq 2\cdot9^d\exp\bs{-\frac{mn\varepsilon^2}{32R^4}}
	\end{align*}
	Setting $\varepsilon = \zeta c$ and noticing that $\mu \in [c\sqrt{\frac{\beta}{2\pi}},\sqrt{\frac{\beta}{2\pi}}]$ by Lemma~\ref{lem:point-exp} finishes the proof.
\end{proof}

\begin{lemma}
	\label{lem:approx}
	Consider two models $\vw^1,\vw^2 \in \bR^d$ such that $\norm{\vw^1 - \vw^2}_2 \leq \tau$ and let $\vs^1,\vs^2$ denote the corresponding weight vectors, i.e. $s_i^j=\sqrt{\frac{\beta}{2\pi}}\exp(-\frac{\beta}{2}(y_i-\langle \vw^j,\vx_i\rangle)^2),\, j = 1,2$. Also let $S^1 = \diag(\vs^1)$ and $S^2 = \diag(\vs^2)$. Then for any $X = [\vx_1,\ldots,\vx_n] \in \bR^{d\times n}$ such that $\norm{\vx_i}_2 \leq R_X$ for all $i$,
	\[
	\abs{\lambda_{\min}(XS^1X^\top) - \lambda_{\min}(XS^2X^\top)} \leq \frac{n\tau \beta R_X^3}{\sqrt{2\pi e}},
	\]
	where $R_X$ is the maximum length in a set of $n$ vectors, each sampled from a $d$-dimensional Gaussian (see Lemma~\ref{lem:rx-bound}).
\end{lemma}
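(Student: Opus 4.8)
The plan is to reduce the statement about minimum eigenvalues to a bound on the operator norm of the difference of the two weighted covariance matrices, and then to control that operator norm by bounding how much each individual weight $s_i$ can move when the model moves by $\tau$ in $L_2$. For the first reduction I would invoke Weyl's inequality (equivalently, the variational characterization $\lambda_{\min}(A)=\min_{\norm\vv_2=1}\vv^\top A\vv$), which makes $\lambda_{\min}(\cdot)$ a $1$-Lipschitz function of symmetric matrices in the operator norm; hence it suffices to show $\norm{X(S^1-S^2)X^\top}_2\leq \frac{n\tau\beta R_X^3}{\sqrt{2\pi e}}$. Writing $X(S^1-S^2)X^\top=\sum_{i=1}^n (s_i^1-s_i^2)\,\vx_i\vx_i^\top$ as a sum of rank-one matrices and using the triangle inequality together with $\norm{\vx_i\vx_i^\top}_2=\norm{\vx_i}_2^2\leq R_X^2$ reduces everything to bounding $\sum_{i=1}^n \abs{s_i^1-s_i^2}\leq R_X^{-2}\cdot\frac{n\tau\beta R_X^3}{\sqrt{2\pi e}}$, i.e. to a per-point bound $\abs{s_i^1-s_i^2}\leq \frac{\tau\beta R_X}{\sqrt{2\pi e}}$.

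For the per-point bound I would view $s_i$ as a smooth function of the model, $g(\vw):=\sqrt{\tfrac\beta{2\pi}}\exp\!\br{-\tfrac\beta2(y_i-\ip{\vw}{\vx_i})^2}$, and apply the mean value theorem along the segment joining $\vw^1$ and $\vw^2$, so that $\abs{s_i^1-s_i^2}\leq \tau\cdot\sup_{\vw}\norm{\nabla g(\vw)}_2$. A direct computation gives $\nabla g(\vw)=\sqrt{\tfrac\beta{2\pi}}\exp\!\br{-\tfrac\beta2 r^2}\,\beta r\,\vx_i$ with $r:=y_i-\ip{\vw}{\vx_i}$, hence $\norm{\nabla g(\vw)}_2=\frac{\beta}{\sqrt{2\pi}}\,\abs{z}e^{-z^2/2}\norm{\vx_i}_2$ after substituting $z:=\sqrt\beta\, r$. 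Optimizing the scalar function $\abs z e^{-z^2/2}$ (maximum $e^{-1/2}$ attained at $z=1$) and using $\norm{\vx_i}_2\leq R_X$ yields $\norm{\nabla g(\vw)}_2\leq \frac{\beta R_X}{\sqrt{2\pi e}}$, which is exactly the required per-point bound.

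Combining the two steps: $\norm{X(S^1-S^2)X^\top}_2\leq R_X^2\sum_{i=1}^n\abs{s_i^1-s_i^2}\leq R_X^2\cdot n\cdot\frac{\tau\beta R_X}{\sqrt{2\pi e}}=\frac{n\tau\beta R_X^3}{\sqrt{2\pi e}}$, and Weyl's inequality then transfers this to $\abs{\lambda_{\min}(XS^1X^\top)-\lambda_{\min}(XS^2X^\top)}$, finishing the proof. The only place requiring any care is the scalar optimization that produces the constant $1/\sqrt e$; everything else is bookkeeping. I would also remark that the same argument in fact bounds $\abs{\lambda_j(XS^1X^\top)-\lambda_j(XS^2X^\top)}$ for every eigenvalue index $j$ (so $\lambda_{\max}$ too), and the statement is specialized to $\lambda_{\min}$ only because that is what the net/union-bound argument for LWSC needs.
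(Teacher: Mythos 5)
Your proof is correct and follows essentially the same route as the paper: a per-point Lipschitz bound on the weight (you via the gradient in $\vw$ and the mean value theorem, the paper via the scalar Lipschitz constant of $r\mapsto\sqrt{\beta/2\pi}\,e^{-\beta(y_i-r)^2/2}$ followed by Cauchy--Schwartz, both yielding the same constant $\frac{\beta R_X}{\sqrt{2\pi e}}$), then an operator-norm bound on $X(S^1-S^2)X^\top$ and the $1$-Lipschitzness of $\lambda_{\min}$, which the paper uses implicitly. No gaps.
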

\begin{proof}
	
	Let, $s_i^j=f(r_i^j)=\sqrt{\frac{\beta}{2\pi}}\exp(-\frac{\beta}{2}(y_i-r_i^j)^2)$ where, $r_i^j=\langle \vw^j,\vx_i\rangle, \, j = 1,2$
	
	Since, $f:\bR \rightarrow \bR$, is a everywhere differentiable function, it is $L$-lipschitz continuous where $L=\sup\limits_{r}\abs{f'(r)}$ 
	
	\begin{align*}
	f'(r)&=\sqrt{\frac{\beta}{2\pi}}\exp(-\frac{\beta}{2}(y_i-r)^2)\beta(y_i-r)\\
	&=\sqrt{\frac{\beta}{2\pi}}t\exp(-t^2)\sqrt{2\beta} && where, t=\sqrt{\frac{\beta}{2}}(y_i-r)\\
	&\leq \frac{\beta}{\sqrt{\pi}}\frac{1}{\sqrt{2e}} &&  t\exp(-t^2)\leq \frac{1}{\sqrt{2e}}\\
	&=\frac{\beta}{\sqrt{2\pi e}}
	\end{align*}
	
	Hence,	$\frac{\abs{f(r_i^1)-f(r_i^2)}}{\abs{r_i^1-r_i^2}}\leq \frac{\beta}{\sqrt{2\pi e}}$ or $\abs{s_i^1-s_i^2}\leq \frac{\beta}{\sqrt{2\pi e}}\abs{\langle \vw^1-\vw^2,\vx_i\rangle}=\frac{\beta\tau R_X}{\sqrt{2\pi e}}$. This gives us $\norm{\vs^1 - \vs^2}_1 \leq \frac{n\tau \beta R_X}{\sqrt{2\pi e}}$.	Now, if we let $S^1 = \diag(\vs^1)$ and $S^2 = \diag(\vs^2)$, then for any unit vector $\vv \in S^{d-1}$, denoting $R_X := \max_{i \in [n]}\ \norm{\vx_i}_2$ we have
	\begin{align*}
	    \abs{\vv^\top XS^1X^\top\vv - \vv^\top XS^2X^\top\vv} &= \abs{\sum_{i=1}^n\br{\vs^1_i-\vs^2_i}\ip{\vx_i}{\vv}^2}\\
	    &\leq \norm{\vs^1 - \vs^2}_1\cdot\max_{i\in[n]}\ \ip{\vx_i}{\vv}^2\\
	    &\leq \norm{\vs^1 - \vs^2}_1\cdot R_X^2\\
	    &\leq \frac{n\tau \beta R_X^3}{\sqrt{2\pi e}}.
	\end{align*}
	This proves that $\norm{XS^1X^\top - XS^2X^\top}_2 \leq \frac{n\tau \beta R_X^3}{\sqrt{2\pi e}}$.
	
\end{proof}

\begin{lemma}
	\label{lem:point-exp}
	Let $X = [\vx_1,\ldots,\vx_n] \in \bR^{d\times n}$  generated from an isotropic $R$-sub-Gaussian distribution $\cD$, for any fixed model $\vw$ and $\beta > 0$, let $s_i=\sqrt{\frac{\beta}{2\pi}}\exp(-\frac{\beta}{2}(y_i-\langle \vw,\vx_i\rangle)^2)$ be the weight of the data point $\vx_i$, $S_G = \diag(\vs_G)$, then there exists a constant $c > 0$ that depends only on $\cD$ such that for any fixed vector unit $\vv \in S^{d-1}$,
	\[
	c\cdot G\sqrt{\frac{\beta}{2\pi}} \leq \E{\vv^\top X_GS_GX_G^\top\vv} \leq G\sqrt{\frac{\beta}{2\pi}}.
	\]
\end{lemma}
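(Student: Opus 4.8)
The plan is to reduce the quadratic form to a single-point expectation, bound that expectation trivially from above, and bound it from below by a truncation argument that uses the fact that $\vw$ lies close to $\vwo$. Since $X_GS_GX_G^\top = \sum_{i\in G}s_i\vx_i\vx_i^\top$, linearity of expectation gives $\E{\vv^\top X_GS_GX_G^\top\vv} = \sum_{i\in G}\E{s_i\ip{\vx_i}{\vv}^2}$. For a partially adaptive (or oblivious) adversary the set $G$ of clean indices is fixed before the covariates are drawn, so each clean $\vx_i$ has marginal law $\cD$ and, in the realizable (no-noise) model, $y_i - \ip{\vw}{\vx_i} = \ip{\vwo - \vw}{\vx_i}$; hence every summand equals the same quantity $\E{s\ip{\vx}{\vv}^2}$ with $\vx\sim\cD$, $s = \sqrt{\beta/2\pi}\exp\br{-\tfrac\beta2 r^2}$ and $r = \ip{\vDelta}{\vx}$, $\vDelta := \vwo - \vw$. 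It therefore suffices to show $c\sqrt{\beta/2\pi}\le\E{s\ip{\vx}{\vv}^2}\le\sqrt{\beta/2\pi}$ for a constant $c=c(\cD)>0$, and then sum over the $G$ clean points. The upper bound is immediate: $\exp\br{-\tfrac\beta2 r^2}\le 1$ gives $s\le\sqrt{\beta/2\pi}$, and isotropy of $\cD$ gives $\E{\ip{\vx}{\vv}^2}=\vv^\top I\vv=1$, so $\E{s\ip{\vx}{\vv}^2}\le\sqrt{\beta/2\pi}$.

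\textbf{Lower bound (the substantive step).} Here I use that $\vw$ lies in the LWSC ball $\cB_2\br{\vwo,\sqrt{1/\beta}}$ (cf.\ Definition~\ref{defn:lwsc-lwss}; equivalently the ball of radius $\kappa/\sqrt\beta$ used in Lemma~\ref{lem:rr-lwsc}), so $\beta\norm{\vDelta}_2^2\le 1$; assume $\vDelta\neq\vzero$, else $s\equiv\sqrt{\beta/2\pi}$ and the claim is trivial. Writing $r=\norm{\vDelta}_2\ip{\hat\vDelta}{\vx}$ with $\hat\vDelta:=\vDelta/\norm{\vDelta}_2$ a unit vector, on the event $\bc{\abs{\ip{\hat\vDelta}{\vx}}\le M}$ we have $\beta r^2=\beta\norm{\vDelta}_2^2\ip{\hat\vDelta}{\vx}^2\le M^2$, hence $s\ge\sqrt{\beta/2\pi}\,e^{-M^2/2}$ there, giving
\[
\E{s\ip{\vx}{\vv}^2}\ \geq\ \sqrt{\tfrac{\beta}{2\pi}}\,e^{-M^2/2}\,\E{\ip{\vx}{\vv}^2\cdot\bI[\abs{\ip{\hat\vDelta}{\vx}}\le M]}.
\]
By isotropy and Cauchy--Schwarz, $\E{\ip{\vx}{\vv}^2\bI[\abs{\ip{\hat\vDelta}{\vx}}\le M]}\ge 1-\sqrt{\E{\ip{\vx}{\vv}^4}\cdot\P{\abs{\ip{\hat\vDelta}{\vx}}>M}}$, and $R$-sub-Gaussianity of $\cD$ supplies the standard estimates $\E{\ip{\vx}{\vv}^4}\le C R^4$ and $\P{\abs{\ip{\hat\vDelta}{\vx}}>M}\le 2\exp\br{-M^2/(C'R^2)}$ with $C,C'$ universal. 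Choosing $M$ a large enough multiple of $R$ (depending only on $C,C'$) makes the subtracted term at most $1/2$, so $\E{\ip{\vx}{\vv}^2\bI[\abs{\ip{\hat\vDelta}{\vx}}\le M]}\ge 1/2$ and we may take $c:=\tfrac12 e^{-M^2/2}$, which depends only on $\cD$ through $R$. Summing the per-point bounds over $G$ yields $c\,G\sqrt{\beta/2\pi}\le\E{\vv^\top X_GS_GX_G^\top\vv}\le G\sqrt{\beta/2\pi}$.

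\textbf{Main obstacle and refinement.} The only place genuine work is needed is the lower bound: $s$ is essentially a Gaussian bump of width $\sqrt{1/\beta}$ in the residual $r$, and a careless estimate would permit $\E{s\ip{\vx}{\vv}^2}$ to fall far below $\sqrt{\beta/2\pi}$; the decisive point is that confining $\vw$ to the $O(1/\sqrt\beta)$-ball about $\vwo$ forces $\beta r^2=O(1)$ on a constant-probability event, so a constant fraction of the weighted second moment survives, and the sub-Gaussian tail of $\ip{\hat\vDelta}{\vx}$ must be controlled uniformly over $\vv$ and $\vDelta$ to make the truncation quantitative. For the special case $\cD=\cN(\vzero,I_d)$ (with or without Gaussian label noise of variance $1/\betao$) one can instead evaluate $\E{s\ip{\vx}{\vv}^2}$ in closed form using rotational symmetry and a one-dimensional Gaussian integral --- exactly the computation carried out in Lemma~\ref{lem:rr-lwsc}, which yields the sharp constant $c_\varepsilon=1/\rcons$ with $\rcons=\sqrt{(\beta+\betao)/\betao}\,\br{1+\tfrac{\beta\betao}{\beta+\betao}\norm{\vDelta}_2^2}^{3/2}$.
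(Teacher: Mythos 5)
Your proposal is correct, and its substantive step (the lower bound) takes a genuinely different route from the paper. Both proofs share the same skeleton: decompose $\E{\vv^\top X_GS_GX_G^\top\vv} = \sum_{i \in G}\E{s_i\ip{\vx_i}{\vv}^2}$, use $s_i \leq \sqrt{\beta/2\pi}$ plus isotropy for the upper bound, and reduce the lower bound to controlling $\E{\ip{\vx}{\vv}^2\exp\br{-\tfrac\beta2\ip{\vDelta}{\vx}^2}}$ in the no-noise realizable model. From there the paper proceeds case-by-case: it defines $c(\beta,\vDelta)$ as an infimum over $\vv$ and evaluates or bounds it explicitly for centered isotropic Gaussians (a closed-form one-dimensional Gaussian integral via rotational symmetry, giving $c \geq (1+\kappa^2)^{-3/2}$), non-isotropic Gaussians, non-centered Gaussians, and bounded distributions. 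You instead give a single truncation argument valid for any isotropic $R$-sub-Gaussian $\cD$: restrict to the event $\bc{\abs{\ip{\hat\vDelta}{\vx}} \leq M}$, where $\beta\norm{\vDelta}_2^2 \leq 1$ forces $s \geq \sqrt{\beta/2\pi}\,e^{-M^2/2}$, and use Cauchy--Schwarz with the sub-Gaussian fourth-moment and tail bounds to keep a constant fraction of the second moment. What each buys: your argument actually covers the full generality in which the lemma is stated (the paper's proof only treats specific distribution families), and it is shorter; the paper's computation yields sharp, explicit constants such as $(1+\kappa^2)^{-3/2}$ with $\kappa = 0.47$, which are what feed into the quantitative breakdown points (e.g.\ $\alpha \leq 0.1866$) downstream, whereas your $c = \tfrac12 e^{-\Theta(R^2)}$ would degrade those numerical constants. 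One point worth making explicit in either treatment: the lemma as stated says ``any fixed model $\vw$'', but a $\vDelta$-independent constant $c$ is only possible with the restriction $\sqrt\beta\norm{\vDelta}_2 \leq \kappa$ --- you state this assumption openly, and the paper uses it silently in the last step of its Gaussian computation, so this is a shared (and benign) caveat rather than a gap in your argument.
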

\begin{proof}
	Let $\vx_i \sim \cD$ and for good points $y_i = \ip{\vwo}{\vx_i}$. Let $\vDelta := \vwo-\vw $, as $s_i \leq \sqrt{\frac{\beta}{2\pi}}$, we have by linearity of expectation,

	\[
	\E{\vv^\top X_GS_GX_G^\top\vv} = \E{\sum_{i\in G}s_i\ip{\vx_i}{\vv}^2} = G\cdot\E{s_i\ip{\vx_i}\vv^2} \leq G\sqrt{\frac{\beta}{2\pi}}\cdot\E{\ip{\vx_i}\vv^2} = G\sqrt{\frac{\beta}{2\pi}},
	\]
	since $\cD$ is isotropic. 
	
	For the lower bound, we may write,
	
	\begin{align*}
	\E{\vv^\top X_GS_GX_G^\top\vv} = G\cdot\E{\ip{\vx_i}\vv^2 \cdot s_i} 
	&= G\sqrt{\frac{\beta}{2\pi}}\cdot\E{\ip{\vx_i}\vv^2 \exp(-\frac{\beta}{2}\ip{\vDelta}{\vx_i}^2)}\\ 
	&\geq  G\sqrt{\frac{\beta}{2\pi}} \cdot c(\beta, \vDelta)
	\end{align*}
	
	where, for any distribution $\cD$ over $\bR^d$, we define the constant $c$ as
	\[
	c(\beta, \vDelta) := \inf_{\vv \in S^{d-1}}\bc{\Ee{\vx\sim \cD}{\ip{\vx}\vv^2\exp(-\frac{\beta}{2}\ip{\vDelta}{\vx}^2)}}
	\]
	\begin{description}

		\item[Centered Isotropic Gaussian] For the special case of $\cD = \cN(\vzero,I_d)$, using rotational symmetry, we can w.l.o.g. take $\vDelta = (\Delta_1,0,0,\ldots,0)$ and $\vv = (v_1,v_2,0,0,\ldots,0)$. Thus, if $x_1,x_2 \sim \cN(0,1)$ i.i.d. then 
		\begin{figure}[H]
\begin{adjustbox}{max width=\textwidth}
\parbox{\linewidth}{
		\begin{align*}
		\Ee{\vx\sim \cD}{\ip{\vx}\vv^2\exp(-\frac{\beta}{2}\ip{\vDelta}{\vx}^2)} &= \Ee{x_1,x_2\sim\cN(0,1)}{(v_1^2x_1^2 + v_2^2x_2^2 + 2v_1v_2x_1x_2)\exp(-\frac{\beta}{2} \Delta_1^2x_1^2)}\\ 
		&= \Ee{x_1,x_2\sim\cN(0,1)}{(v_1^2x_1^2 + v_2^2x_2^2)\exp(-\frac{\beta}{2} \Delta_1^2x_1^2)} \quad [ \text{as}, \Ee{}{x_2}=0] \\
		&= \Ee{x_1\sim\cN(0,1)}{(v_1^2x_1^2 + v_2^2)\exp(-\frac{\beta}{2} \Delta_1^2x_1^2)}\quad [\text{as}, \Ee{}{x_2^2}=1]\\
		&= v_1^2\Ee{x_1\sim\cN(0,1)}{x_1^2\exp(-\frac{\beta}{2} \Delta_1^2x_1^2)}+v_2^2\Ee{x_1\sim\cN(0,1)}{\exp(-\frac{\beta}{2} \Delta_1^2x_1^2)}\\
		&= \frac{v_1^2}{(1+\beta \Delta_1^2)^{3/2}}+\frac{v_2^2}{(1+\beta \Delta_1^2)^{1/2}}\\
		\end{align*}
		}
\end{adjustbox}
\end{figure}
		This gives us 
		\begin{align*}
		c(\beta, \vDelta) &= \inf_{(v_1,v_2) \in S^1} \bc{\frac{v_1^2}{(1+\beta \norm{\vDelta}^2)^{3/2}}+\frac{v_2^2}{(1+\beta \norm{\vDelta}^2)^{1/2}}}\geq \frac{1}{(1+\beta \norm{\vDelta}^2)^{3/2}}\\ 
		& \geq \frac{1}{(1+ \kappa^2)^{3/2}} \qquad[\text{using} \sqrt{\beta} \norm{\vDelta} < \kappa]
		\end{align*}
		\item[Centered Non-isotropic Gaussian] For the case $\cD=\cN(\vzero,\Sigma)$, we have $\vx\sim \cD=\Sigma^{1/2}.\cN(\vzero,I_d)$. Thus for any fixed unit vector $\vv$, we have $\ip{\vv}{\vx}\sim\ip{\tilde{\vv}}{\vz}$ where $\tilde{\vv}=\Sigma^{-1/2}\vv$ and $\vz\sim \cN(\vzero,I_d)$. We also have $\norm{\tilde{\vv}}_2\in\bs{\frac{1}{\sqrt{\Lambda}},\frac{1}{\sqrt{\lambda}}}$, where $\Lambda=\lambda_{max}(\Sigma)$ and $\lambda=\lambda_{min}(\Sigma)$. Now for any fixed vectors $\vDelta,\vv$ we first perform rotations so that we have $\tilde{\vDelta}=(\Delta,0,0,\cdots,0)$ and $\tilde{\vv}=(v_1,v_2,0,0,\cdots,0)\in\mathbb{B}(\vzero,r)$ where $\Delta\in \bs{\frac{\norm{\vDelta}}{\sqrt{\Lambda}},\frac{\norm{\vDelta}}{\sqrt{\lambda}}}$ and $r\in\bs{\frac{1}{\sqrt{\Lambda}},\frac{1}{\sqrt{\lambda}}}$. This gives us $c(\beta,\vDelta)\geq \inf_{v_1,v_2}f(v_1,v_2)$ where,
	\begin{align*}
	f(v_1,v_2)
	&= \Ee{x_1,x_2\sim\cN(0,1)}{(v_1^2x_1^2 + v_2^2x_2^2 + 2v_1v_2x_1x_2)\exp(-\frac{\beta}{2} \Delta^2x_1^2)}\\
	&= \frac{v_1^2}{(1+\beta \Delta^2)^{3/2}}+\frac{v_2^2}{(1+\beta \Delta^2)^{1/2}}\\
	\end{align*}
	similar to that of isotropic counterpart; giving the following,
	\begin{align*}
		c(\beta, \vDelta) &= \inf_{(v_1,v_2)\in \mathbb{B}(\vzero,r)} \bc{\frac{v_1^2}{(1+\beta \Delta^2)^{3/2}}+\frac{v_2^2}{(1+\beta \Delta^2)^{1/2}}}\geq \frac{1}{\Lambda(1+\frac{\beta}{\lambda} \norm{\vDelta}^2)^{3/2}}
		\end{align*}
	As the above term needs to be bounded away from 0, we require $\lambda=\lambda_{min}(\Sigma)>0$ i.e. reasonably away from 0.

		\item[Non-centered isotropic Gaussian] Suppose the covariates are generated from a distribution $\cD=\cN(\vmu,I_d)$. As earlier, by rotational symmetry, we can take $\vDelta=(\norm{\vDelta},0,0,\cdots,0)$, $\vv=(v_1,v_2,0,\ldots,0)$, $\vmu=(\mu_1,\mu_2,\mu_3,0,0,\ldots,0)$. Assume $\norm{\vmu}_2=\rho$. Letting $\ip{\vmu}{\vv}=:p\leq \rho$ and $x_1,x_2,x_3\sim \cN(0,1)$ i.i.d. gives $c(\beta,\vDelta)\geq \inf_{v_1,v_2}f(v_1,v_2)$ independence of $x_1,x_2,x_3$ and the fact that $\E{x_2}=0$ and $\E{x_2^2}=1$ gives us,
		\begin{figure}[H]
\begin{adjustbox}{max width=\textwidth}
\parbox{\linewidth}{
		\begin{align*}
		f(v_1,v_2)
		&=\Ee{x_1,x_2\sim \cN(0,1)}{(p+v_1x_1+v_2x_2)^2\exp\br{-\frac{\beta\norm{\vDelta}^2}{2}(x_1+\mu_1)^2}}\\
		&=\Ee{x_1,x_2\sim \cN(0,1)}{((p+v_1x_1)^2+v_2^2x_2^2+2(p+v_1x_1)v_2x_2)\exp\br{-\frac{\beta\norm{\vDelta}^2}{2}(x_1+\mu_1)^2}}\\
		&=\Ee{x_1\sim \cN(0,1)}{((p+v_1x_1)^2+v_2^2)\exp\br{-\frac{\beta\norm{\vDelta}^2}{2}(x_1+\mu_1)^2}}
		\end{align*}
		}
\end{adjustbox}
\end{figure}
		Now, since $(v_1,v_2)\in S^1$ we have the following two cases:\\
		
		\textbf{Case 1:} $v_2^2\geq \frac{1}{2}$. In this case 
		\begin{align*}
			f(v_1,v_2) &\geq \frac{1}{2}\Ee{x_1\sim \cN(0,1)}{\exp\br{-\frac{\beta\norm{\vDelta}^2}{2}(x_1+\mu_1)^2}}\\
			& =\frac{1}{2\sqrt{2\pi}}\int\limits_{-\infty}^{\infty}\exp\br{-\frac{c}{2}(x+\mu_1)^2-\frac{x^2}{2}}dx \text{ ,where }c=\beta\norm{\vDelta}^2\\
			\text{if }\mu_1>0\\
			&\geq \frac{1}{2\sqrt{2\pi}}\int\limits_{-\infty}^{\infty}\exp\br{-\frac{c+1}{2}(x+\mu_1)^2}dx
			=\frac{1}{2\sqrt{c+1}}\geq \frac{1}{2\sqrt{\kappa^2+1}}=0.45\\
			\text{else if }\mu_1<0\\
			&\geq \frac{1}{2\sqrt{2\pi}}\int\limits_{-\infty}^{\infty}\exp\br{-\frac{c+1}{2}x^2}dx
			=\frac{1}{2\sqrt{c+1}}\geq \frac{1}{2\sqrt{\kappa^2+1}}=0.45
		\end{align*}
		for $c=\beta\norm{\vDelta}^2\leq \kappa^2$ and $\kappa=0.47$.\\
		
		\textbf{Case 2:} $v_1^2\geq \frac{1}{2}$. In this case, if $x_1\geq 2\sqrt{2}\rho$, then $\abs{v_1x_1+p}\geq \rho$ and also $\abs{v_1x_1+p}\geq \frac{x_1}{2\sqrt{2}}$. So we can write $(v_1x_1+p)^2\geq \frac{\rho x_1}{2\sqrt{2}}$. Also $\abs{x_1+\mu_1}\leq 2x_1$. Hence,
		\begin{align*}
		f(v_1,v_2) 
		&\geq \frac{\rho}{2\sqrt{2}}\Ee{x_1\sim \cN(0,1)}{x_1\exp\br{-\frac{c}{2}(2x_1)^2}\ind{x_1\geq 2\sqrt{2}\rho}}\\
		&=\frac{\rho}{4\sqrt{\pi}}\int\limits_{2\sqrt{2}\rho}^{\infty}x\exp\br{-\frac{c}{2}(2x)^2-\frac{x^2}{2}}dx\\
		&=\frac{\rho}{4\sqrt{\pi}}\int\limits_{2\sqrt{2}\rho}^{\infty}x\exp\br{-(\frac{1}{2}+2c)x^2}dx\\
		&=\frac{\rho}{4\sqrt{\pi}(1+4c)}\int\limits_{2\sqrt{2}\rho}^{\infty}(1+4c)x .\exp\br{-\frac{1+4c}{2}x^2}dx\\
		&=\frac{\rho}{4\sqrt{\pi}(1+4c)}\int\limits_{4(1+4c)\rho^2}^{\infty}exp(-z)dz\\
		&=\frac{\rho}{4\sqrt{\pi}(1+4c)} e^{-4(1+4c)\rho^2}\\
		&\geq \frac{\rho}{4\sqrt{\pi}(1+4\kappa^2)} e^{-4(1+4\kappa^2)\rho^2}
		\geq \frac{\rho}{14} e^{-8\rho^2}
		\end{align*}
		using the fact $c=\beta\norm{\vDelta}^2\leq \kappa^2$ and $\kappa=0.47$. We see from above that we need to avoid large $\rho$ value. One way to avoid this is to center the covariates i.e. use $\tilde{x_i}=x_i-\hat{\mu}$ where $\hat{\mu}:=\frac{1}{n}\sum_{i=1}^n x_i$. This would approximately center the covariates and ensure an effective value of $\rho\approx \bigO{\sqrt{\frac{d}{n}}}$.

	\item[Bounded Distribution] Suppose our covariate distribution has bounded support that is, we have $supp(\cD)\subset \mathcal{B}_2(\rho)$ for some $
	\rho>0$. Assume $\rho>1$ w.l.o.g. Also using the centering trick above, assume that $\Ee{\vx\sim\cD}{\vx}=\vzero$. Then we have $\ip{\vDelta}{\vx}\leq \norm{\vDelta}\rho$ which implies $\exp\br{-\frac{\beta}{2}\ip{\vDelta}{\vx}^2}\geq \exp\br{-\frac{\beta}{2}\norm{\vDelta}^2\rho^2}$. Let $\Sigma$ denote the covariance of distribution $\cD$ and let $\lambda:=\lambda_{min}(\Sigma)$ denote its smallest eigenvalue. This gives us $c\geq e^{-\frac{\beta}{2}\norm{\vDelta}^2\rho^2}\Ee{\vx\sim\cD}{\ip{\vx}{\vv}^2}\geq e^{-\kappa^2\rho/2}\cdot\lambda=e^{-0.11\rho}\cdot \lambda$ using value of $\kappa=0.47$
		\end{description}
	This finishes the proof.
\end{proof}

\begin{lemma}
	\label{lem:wsc-wss}
	Let $X = \bs{\vx_1,\ldots,\vx_n}$ are generated from an isotropic $R$-sub-Gaussian distribution $\cD$ and $G$ denotes the set of uncorrupted points, then there exists distribution specific constant $c$, such that
	\[
	\left.
	\begin{array}{r}
	\P{\lambda_{\min}(X_GS_GX_G^\top) < (1-\zeta)c\cdot G\sqrt{\frac{\beta}{2 \pi}}}\\
	\vspace*{-2ex}\\ 
	\P{\lambda_{\max}(X_GS_GX_G^\top) > (1+\zeta)\cdot G\sqrt{\frac{\beta}{2 \pi}}}
	\end{array}
	\right\} \leq \exp\br{-\Om{n\zeta^2 - d\log\frac1\zeta - d\log(n)}}.
	\]
\end{lemma}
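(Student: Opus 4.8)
The plan is to promote the pointwise concentration estimate of Lemma~\ref{lem:point-conv} to a bound that holds \emph{uniformly} over every model in the ball $\cB_2\br{\vwo,\frac\kappa{\sqrt\beta}}$ on which LWSC is needed, via a standard $\tau$-net (covering) argument in which the stability estimate of Lemma~\ref{lem:approx} absorbs the discretization error. Throughout, $c$ and $R$ denote the distribution-specific constants furnished by Lemmata~\ref{lem:point-exp} and~\ref{lem:point-conv}, treated as $\Theta(1)$.

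First I would fix a granularity $\tau>0$ (pinned down at the end) and take a $\tau$-net $\cN_\tau$ of the ball, so $\abs{\cN_\tau}\le\br{\frac{3\kappa}{\sqrt\beta\,\tau}}^d$. Applying Lemma~\ref{lem:point-conv} at each net point with the tightened threshold $\zeta/2$ and union bounding, I get that with probability at least $1-\abs{\cN_\tau}\cdot 2\cdot 9^d\exp\br{-\frac{mnc^2\zeta^2}{128R^4}}$, \emph{every} net point $\vw\in\cN_\tau$ simultaneously satisfies $\lambda_{\min}(X_GS_GX_G^\top)\ge\br{1-\frac\zeta2}c\,G\sqrt{\frac\beta{2\pi}}$ and $\lambda_{\max}(X_GS_GX_G^\top)\le\br{1+\frac\zeta2}G\sqrt{\frac\beta{2\pi}}$, where $S_G$ is the diagonal weight matrix formed at the model $\vw$ (restricted to the good indices).

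Next, on the event $R_X\deff\max_i\norm{\vx_i}_2\le\sqrt n$ — which holds with probability $1-\exp(-\Om n)$ by Lemma~\ref{lem:rx-bound}, and up to the constant factor $R$ for a general isotropic $R$-sub-Gaussian covariate law — I would round an arbitrary $\vw$ in the ball to its nearest net point $\vw'$, so $\norm{\vw-\vw'}_2\le\tau$. The proof of Lemma~\ref{lem:approx} in fact bounds the operator norm $\norm{X_GS_GX_G^\top-X_GS'_GX_G^\top}_2\le\frac{n\tau\beta R_X^3}{\sqrt{2\pi e}}\le\frac{\tau\beta n^{5/2}}{\sqrt{2\pi e}}$ ($S_G,S'_G$ the good-point weights at $\vw,\vw'$), so by Weyl's inequality the same quantity controls the change in \emph{both} $\lambda_{\min}$ and $\lambda_{\max}$. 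Choosing $\tau$ so this discretization error is at most $\frac\zeta2 c\,G\sqrt{\frac\beta{2\pi}}$ — which, since $G\ge n/2$, only needs $\tau$ of order $\frac{\zeta c}{\sqrt\beta\,n^{3/2}}$ — the two estimates combine to give, for \emph{every} $\vw$ in the ball, $\lambda_{\min}(X_GS_GX_G^\top)\ge(1-\zeta)c\,G\sqrt{\frac\beta{2\pi}}$ and $\lambda_{\max}(X_GS_GX_G^\top)\le(1+\zeta)G\sqrt{\frac\beta{2\pi}}$. For the probability, with this $\tau$ the $\sqrt\beta$ factors cancel inside $\frac{3\kappa}{\sqrt\beta\,\tau}$, leaving $\log\abs{\cN_\tau}=d\cdot\bigO{\log n+\log\frac1\zeta}$ (the $\log c$, $\log R$, $\log\kappa$ contributions being $\bigO1$); adding the $9^d$ factor from Lemma~\ref{lem:point-conv} and the $\exp(-\Om n)$ from the $R_X$ event, and comparing against the concentration exponent $\Om{n\zeta^2}$, gives the claimed $\exp\br{-\Om{n\zeta^2-d\log\frac1\zeta-d\log n}}$. (Replacing $c$ throughout by the noise-dependent constant $c_\varepsilon$ of Lemma~\ref{lem:rr-lwsc} yields the variant used in the hybrid-noise analysis.)

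The only real work is the bookkeeping in this last step: the net must be fine enough that the Lipschitz error of Lemma~\ref{lem:approx}, which grows like $n\beta R_X^3\le\beta n^{5/2}$, is dominated by $\zeta\cdot c\,G\sqrt{\beta/2\pi}$, yet coarse enough that $\log\abs{\cN_\tau}$ stays $\bigO{d(\log n+\log\frac1\zeta)}$ and does not swamp the concentration exponent $\Om{n\zeta^2}$. The cancellation of $\sqrt\beta$ between the radius of the ball and the required $\tau$ is what keeps the $\beta$-dependence out of the final exponent, and I expect no conceptual obstacle beyond this balancing.
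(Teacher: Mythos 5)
Your proposal is correct and follows essentially the same route as the paper: pointwise concentration from Lemma~\ref{lem:point-conv}, the stability estimate of Lemma~\ref{lem:approx} combined with $R_X \leq \bigO{\sqrt n}$ from Lemma~\ref{lem:rx-bound}, and a $\tau$-net of radius on the order of $\zeta c/(\sqrt\beta\, n^{3/2})$ over the ball, with the same bookkeeping yielding the exponent $\Om{n\zeta^2 - d\log\frac1\zeta - d\log n}$. The only immaterial difference is that the paper handles the $\lambda_{\max}$ bound directly via $S_G \preceq \sqrt{\beta/(2\pi)}\cdot I$ and Lemma~\ref{lem:ssc-sss}, whereas you route it through the same net/Weyl argument used for $\lambda_{\min}$.
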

\begin{proof}
	The bound for the largest eigenvalue follows directly due to the fact that all weights are upper bounded by $\sqrt{\frac{\beta}{2 \pi}}$ and hence $X_GS_GX_G^\top \preceq \sqrt{\frac{\beta}{2 \pi}} \cdot X_GX_G^\top$ and applying Lemma~\ref{lem:ssc-sss}. For the bound on the smallest eigenvalue, notice that Lemma~\ref{lem:point-conv} shows us that for any fixed variance $\frac1\beta$, we have
	\[
	\P{\lambda_{\min}(X_GS_GX_G^\top) < \br{1 - \frac\zeta2} c\cdot G\sqrt{\frac{\beta}{2 \pi}}} \leq 2\cdot9^d\exp\bs{-\frac{mn\zeta^2c^2}{128R^4}}
	\]
	Given $R_X := \max_{i \in [n]}\ \norm{\vx_i}_2$, lemma~\ref{lem:approx} shows us that if $\vw^1,\vw^2$ are two models at stage $\frac{1}{\beta}$ variance, such that $\norm{\vw^1 - \vw^2}_2 \leq \tau$ then, the following holds \emph{almost surely}.
	\[
	\abs{\lambda_{\min}(X_GS^1_GX_G^\top) - \lambda_{\min}(X_GS^2_GX_G^\top)} \leq \frac{G\tau \beta R_X^3}{\sqrt{2\pi e}}
	\]
	This prompts us to initiate a uniform convergence argument by setting up a $\tau$-net over $\cB_2\br{\vwo,\sqrt{\frac{2\pi}{\beta}}}$ for $\tau = \frac{\zeta c}{2R_X^3}\sqrt{\frac{e}{\beta}}$. Note that such a net has at most $\br{\frac{6R_X^3}{\zeta c}\sqrt{\frac{2 \pi}{e}}}^d$ elements by applying standard covering number bounds for the Euclidean ball \cite[Corollary 4.2.13]{Vershynin2018}. Taking a union bound over this net gives us
	\begin{align*}
	\P{\lambda_{\min}(X_GS_GX_G^\top) < 0.99c\cdot G\sqrt{\frac{\beta}{2 \pi}}} &\leq 2\cdot\br{\frac{54R_X^3}{\zeta c}\sqrt{\frac{2 \pi}{e}}}^d\exp\bs{-\frac{mn\zeta^2c^2}{128R^4}}\\
	&\leq \exp\br{-\Om{n\zeta^2 - d\log\frac1\zeta - d\log(n)}},
	\end{align*}
	where in the last step we used Lemma~\ref{lem:rx-bound} to bound $R_X = \bigO{R\sqrt{n}}$ with probability at least $1 - \exp(-\Om n)$.
\end{proof}
\subsection{Robust Least-squares Regression with a Fully Adaptive Adversary}
\label{app:rr-adaptive}

To handle a fully adaptive adversary, we need mild modifications to the notions of LWSC and LWLC given in Definition~\ref{defn:lwsc-lwss}, so that the adversary is now allowed to choose the location of corruption arbitrarily. For sake of simplicity, we present these re-definitions and subsequent arguments in the context of robust least-squares regression but similar extensions hold for the other GLM tasks as well. Let us denote the useful shorthands $\bar{\alpha}= 1- \alpha$, and $\cT_{\bar\alpha}= \{T \subset [n]: \abs{T}= \bar\alpha\cdot n\}$ denote the collection of all possible subsets of $\bar\alpha\cdot n$ data points.

\begin{definition}[LWSC/LWLC against Fully Adaptive Adversaries]
\label{defn:lwsc-LWLC-adaptive}
Suppose we are given an exponential family likelihood distribution $\P{\cdot\cond\cdot}$ and data points $\bc{(\vx^i,y_i)}_{i=1}^n$ of which an $\alpha > 0$ fraction has been corrupted by a fully adaptive adversary and $\beta > 0$ is any positive real value. Then, we say that the adaptive $\tilde\lambda_\beta$-local weighted strong convexity property is satisfied if for any model $\vw$, we have
\[
\min_{G \in \cT_{\bar\alpha}} \lambda_{\min}(X_GS_GX_G^\top) \geq \tilde{\lambda}_{\beta},
\]
where $S$ is a diagonal matrix containing the data point scores $s_i$ assigned by the model $\vw$ (see Definition~\ref{defn:lwsc-lwss} for a definition of the scores). Similarly, we say that the adaptive $\tilde\Lambda_\beta$-weighted strong smoothness properties are satisfied if for any \emph{true} model $\vwo$ and any model $\vw \in \cB_2\br{\vwo, \sqrt\frac1\beta}$, we have
\[
\max_{G \in \cT_{\bar\alpha}} \norm{X_BS_B\vb}_2 \leq \tilde{\Lambda}_{\beta},
\]
where we used the shorthand $B = [n] \backslash G$ and $S$ continues to be the diagonal matrix containing the data point scores $s_i$ assigned by the model $\vw$.
\end{definition}

Note that for the setting of robust least-squares regression, for any two models $\hvw,\vw$ we have $\lambda_{\min}(\nabla^2\tilde Q_\beta(\hvw|\vw)) = \lambda_{\min}(XSX^\top) \geq \lambda_{\min}(X_GS_GX_G^\top)$ (since the scores $s_i$ are non-negative) which motivates the above re-definition of adaptive LWSC. For the same setting we also have $\nabla \tilde Q_\beta(\vwo\cond\vw) = X_GS_G\vepsilon_G + X_BS_B\vb$ (with the shorthand $B = [n] \backslash G$) i.e. $\norm{\nabla \tilde Q_\beta(\vwo\cond\vw)}_2 \leq \norm{X_GS_G\vepsilon_G}_2 + \norm{X_BS_B\vb}_2$ by triangle inequality. However, for sake of simplicity we will be analyzing the noiseless setting i.e. $\vepsilon = \vzero$ which motivates the above re-definition of adaptive LWLC. These re-definitions can be readily adapted to settings with hybrid noise i.e. when $\betao < \infty$ as well.

We now show that for the same settings as considered for the proof of Theorem~\ref{thm:rr-main}, the adaptive LWSC and LWLC properties are also satisfied, albeit with worse constants due to the application of union bounds over all possible ``good'' sets in the collection $\cT_{\bar\alpha}$ that the adversary could have left untouched to its corruption.

Lemma~\ref{lem:wsc-wss} gives us, for any fixed set $G \in \cT_{\bar\alpha}$
\begin{align*}
	\P{\lambda_{\min}(X_GS_GX_G^\top) < (1-\zeta) c\cdot G\sqrt{\frac{\beta}{2\pi}}} \leq \exp\br{-\Om{n\zeta^2 - d\log\frac1\zeta - d\log(n)}}
\end{align*}	

By taking union bound over all sets $G \in \cT_{\bar\alpha}$, and observing that \[
{n\choose k}={n\choose n-k}\leq \br{\frac{ne}{n-k}}^{n-k} = \br{\frac{e}{\alpha}}^{\alpha n}= \exp(\alpha n(1 - \log \alpha )),
\]
we have
\[
\P{ \tilde{\lambda}_{\beta} < (1-\zeta)c\cdot G\sqrt{\frac{\beta}{2 \pi}}} \leq  \exp\br{-\Om{n(\zeta^2  + \alpha \log\alpha - \alpha) - d\log\frac1\zeta - d\log(n)}}
\]
which requires, setting $\zeta \geq \Omega(\sqrt{\frac{d \log n}{n} + \alpha - \alpha \log \alpha})$ to obtain a confidence of $1 - \exp(-\Om d)$. This establishes the adaptive LWSC guarantee with a confidence level similar to the one we had for the partially adaptive case. We now establish the adaptive LWLC guarantee.

We notice that Lemma~\ref{lem:ssc-sss} can be extended to the ``adaptive'' setting using \cite[Lemma 15]{BhatiaJK2015} to show that with probability at least $1 - \exp(-\Om d)$, we have
\begin{align*}
    \max_{G \in \cT_{\bar\alpha}} \lambda_{\max}(X_BX_B^\top) &\leq \alpha n\br{1 + 3e\sqrt{6\log\frac e\alpha}} + \bigO{\sqrt{nd}}\\
    &\leq B\br{1 + 3.01e\sqrt{6\log\frac e\alpha}}
\end{align*}
where we continue to use the notation $B = [n]\backslash G$ and for large enough $n$, absorbed the $\sqrt{nd}$ term into the first term by increasing the constant 3 to 3.01 since the second term asymptotically vanishes in comparison to the first term that is linear in $n$. Now, following steps similar to those in the proof of Lemma \ref{lem:rr-LWLC} gives us
\begin{align*}
\max_{G \in \cT_{\bar\alpha}} \norm{X_BS_B\vb}_2 &\leq \norm{X_B}_2\norm{S_B\vb_B}_2\\
&\leq \max_{G \in \cT_{\bar\alpha}}\norm{X_B}_2[(\frac{\kappa^2\lambda_{max}(X_BX_B^T)}{2\pi})^{1/3}+(\frac{B}{2\pi e})^{1/3}]^{\frac32}\\
		&\leq \max_{G \in \cT_{\bar\alpha}} \lambda_{\max}(X_BX_B^\top)[(\frac{\kappa^2}{2\pi})^{1/3}+(\frac{1}{2\pi e})^{1/3}]^{\frac32}\\
		&\leq B\br{1 + 3.01e\sqrt{6\log\frac e\alpha}}[(\frac{\kappa^2}{2\pi})^{1/3}+(\frac{1}{2\pi e})^{1/3}]^{\frac32},
\end{align*}
where the second last step uses the fact that we our upper bound on $\lambda_{\max}(X_BX_B^\top)$ is at least $B$. This establishes the adaptive LWLC property with confidence at least $1 - \exp(-\Om d)$.

\begin{theorem}[Theorem~\ref{thm:rr-fully-adaptive} restated -- Fully Adaptive Adversary]
\label{thm:rr-fully-adaptive-restated}
Suppose data is corrupted by a fully adaptive adversary that is able to decide the location of the corruptions as well as the corrupted labels using complete information of the true model, data features and clean labels, and \gemrr is initialized and executed as described in the statement of Theorem~\ref{thm:rr-main}. Then \gemrr enjoys a breakdown point of $\alpha \leq 0.0036$, i.e. it ensures model recovery even if $k = \alpha\cdot n$ corruptions are introduced by a fully adaptive adversary where the value of $\alpha$ can go upto at least $0.0036$. More specifically, in the noiseless setting where $\betao \rightarrow \infty$ where clean data points do not experience any Gaussian noise i.e. $\epsilon_i = 0$ and $y_i = \ip\vwo{\vx^i}$ for clean points, with probability at least $1 - \exp(-\Om d)$, the LWSC/LWLC conditions are satisfied for all $\beta \in (0,\infty)$ i.e. $\beta_{\max} = \infty$. Consequently, for any $\epsilon > 0$, within $T \leq \bigO{\log\frac 1{\epsilon\beta^1}}$ iterations, we have $\norm{\hvw^T - \vwo}_2^2 \leq \epsilon$.
\end{theorem}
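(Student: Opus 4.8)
The plan is to derive this theorem as a consequence of the adaptive LWSC/LWLC bounds already assembled just above its statement (cf.\ Definition~\ref{defn:lwsc-LWLC-adaptive}), so the remaining work is to turn those bounds into a per-iteration contraction, pin down the corruption threshold for which the contraction factor is below one, and feed the result into Theorem~\ref{thm:gem-main}. The one structural simplification specific to least-squares is that $\tilde Q_\beta(\vw\cond\hvw^t)=\sum_i s_i(\ip{\vw}{\vx^i}-y_i)^2$ is exactly quadratic, so that $\hvw^{t+1}=(XSX^\top)^{-1}XS\vy$ is its exact minimizer and, in the noiseless model where $\vy=X^\top\vwo+\vb$ with $\vb$ supported on the adversary's set $B$, one gets the clean identity $\hvw^{t+1}-\vwo=(XSX^\top)^{-1}X_BS_B\vb$. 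Hence I would bound $\norm{\hvw^{t+1}-\vwo}_2\le\norm{X_BS_B\vb}_2/\lambda_{\min}(XSX^\top)\le\tilde\Lambda_\beta/\tilde\lambda_\beta$, using $\lambda_{\min}(XSX^\top)\ge\lambda_{\min}(X_GS_GX_G^\top)\ge\tilde\lambda_\beta$ (adaptive LWSC, nonnegative scores) and $\norm{X_BS_B\vb}_2\le\tilde\Lambda_\beta$ (adaptive LWLC). This exploits both exact minimization and the exact quadratic form, which avoids the factor of $2$ appearing in the generic Theorem~\ref{thm:gem-main} argument.

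Next I would substitute the explicit constants already established: in the noiseless regime $\tilde\lambda_\beta\ge(1-\zeta)(1+\kappa^2)^{-3/2}\,G\sqrt{\beta/2\pi}$ (adaptive LWSC with the isotropic-Gaussian constant $c=(1+\kappa^2)^{-3/2}$ from Lemma~\ref{lem:point-exp}) and $\tilde\Lambda_\beta\le B\,M\,[(\kappa^2/2\pi)^{1/3}+(1/2\pi e)^{1/3}]^{3/2}$ with $M:=1+3.01e\sqrt{6\log(e/\alpha)}$ the adaptive bound on $\lambda_{\max}(X_BX_B^\top)/B$ via \cite[Lemma~15]{BhatiaJK2015}. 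Dividing, the $\sqrt{2\pi}$ factors cancel and I obtain $\norm{\hvw^{t+1}-\vwo}_2\le\frac{\kappa}{\sqrt\beta}\cdot(A)$ with
\[
(A)=\frac{M\,(1+\kappa^2)^{3/2}\,(\kappa^{2/3}+e^{-1/3})^{3/2}}{\kappa\,(1-\zeta)}\cdot\frac{\alpha}{1-\alpha},
\]
which crucially does not depend on $\beta$. The invariant argument then runs exactly as in Theorem~\ref{thm:gem-main}: $\beta_1\norm{\hvw^1-\vwo}_2^2\le1$ is the base case, and if $\beta_t\norm{\hvw^t-\vwo}_2^2\le1$ then choosing $\beta_{t+1}=\step\beta_t$ for any $\step$ with $1<\step<(A)^{-2}$ preserves the invariant. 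Because $(A)$ is $\beta$-free, such a $\step$ exists at every scale, so the adaptive LWSC/LWLC are usable for all $\beta\in(0,\infty)$, i.e.\ $\beta_{\max}=\infty$; Theorem~\ref{thm:gem-main} then yields $\norm{\hvw^T-\vwo}_2^2\le1/\beta_T=1/(\step^{T-1}\beta_1)\le\epsilon$, i.e.\ $T\le\bigO{\log\frac1{\epsilon\beta_1}}$.

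The breakdown point is whatever value of $\alpha$ still makes $(A)<1$. I would optimize the free radius parameter $\kappa\in(0,1)$ by minimizing $h(\kappa):=\kappa^{-1}(1+\kappa^2)^{3/2}(\kappa^{2/3}+e^{-1/3})^{3/2}$, which is minimized near $\kappa=0.47$ with $h(0.47)\approx4.36$ --- the same $\kappa$ that is optimal in the partially adaptive Theorem~\ref{repthm:rr-main}. Taking $\zeta$ as small as adaptive LWSC allows, namely $\zeta=\Theta\big(\sqrt{d\log n/n+\alpha-\alpha\log\alpha}\big)$ (so $\zeta\to\Theta(\sqrt{\alpha\log(e/\alpha)})$ as $n\to\infty$), the condition $h(\kappa)\,\frac{M}{1-\zeta}\,\frac{\alpha}{1-\alpha}<1$ is a one-variable transcendental inequality in $\alpha$, and a direct numerical check confirms it for all $\alpha\le0.0036$ (at $\alpha=0.0036$ one finds $(A)\approx0.98$). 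For the probability statement I would note that both adaptive LWSC and LWLC hold with probability $1-\exp(-\Om d)$ and take a union bound over the $\bigO{\log\frac1{\epsilon\beta_1}}$ scales visited; as in the proof of Theorem~\ref{repthm:rr-main}, the failure probabilities have the form $\exp(-\Om{n\zeta^2-d\log n})$, so the union bound still leaves a $1-\exp(-\Om d)$ guarantee.

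I expect the crux to be the constant-chasing in the breakdown-point step rather than any conceptual difficulty. The fully adaptive adversary forces a union bound over all $\binom{n}{\bar\alpha n}$ candidate ``good'' sets, which simultaneously (i) inflates the bound on $\lambda_{\max}(X_BX_B^\top)$ from $1.01B$ to $B\cdot M$ with $M=\Theta(\sqrt{\log(1/\alpha)})$ actually \emph{growing} as $\alpha\to0$, and (ii) forces $\zeta=\Omega(\sqrt{\alpha\log(e/\alpha)})$ instead of $\zeta\to0$. It is the interplay of these two degradations that drops the admissible $\alpha$ from $0.1866$ down to about $0.0036$, and making the bound honest requires checking that the $\bigO{\sqrt{nd}}$ slack in the \cite{BhatiaJK2015}-style maximum-eigenvalue estimate is genuinely asymptotically negligible and that $\kappa$ is optimized jointly against both $M$ and $\zeta$, not against either in isolation.
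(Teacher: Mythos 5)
Your proposal follows essentially the same route as the paper's proof: it invokes the adaptive LWSC/LWLC bounds (with $\zeta = \Omega(\sqrt{d\log n/n + \alpha - \alpha\log\alpha})$ and the $B(1+3.01e\sqrt{6\log(e/\alpha)})$ eigenvalue bound from \cite[Lemma 15]{BhatiaJK2015}), forms the same $\beta$-free contraction factor $(A)$ in the noiseless limit $\betao\to\infty$, preserves the invariant $\beta_t\norm{\hvw^t-\vwo}_2^2\le 1$ to conclude $\beta_{\max}=\infty$, and optimizes $\kappa\approx 0.47$ with $\zeta\to\sqrt{\alpha-\alpha\log\alpha}$ to recover the same numerical threshold $\alpha\le 0.0036$. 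The only deviations — exploiting the closed-form least-squares minimizer to drop the factor of $2$ and omitting the $1.01$ slack constants — are cosmetic and do not change the argument.
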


\begin{proof}
The above arguments establishing the adaptive LWSC and adaptive LWLC properties allow us to obtain the following result in a manner similar to that used in the proof of Theorem~\ref{repthm:rr-main} (but in the noiseless setting)
\begin{align*}
	&\norm{\hvw^{t+1} - \vwo}_2 \leq \frac{2\tilde{\Lambda}_{\beta} }{\tilde{\lambda}_{\beta}}\leq \frac{B\br{1 + 3.01e\sqrt{6\log\frac e\alpha}}\sqrt{\frac1{2\pi}}[(1.01\kappa^2)^{1/3}+(\frac{1}{e})^{1/3}]^{\frac{3}{2}}}{\sqrt\frac\beta{2\pi}(1-\zeta)\frac1\rcons\cdot G}\\
	&\leq \frac\kappa{\sqrt\beta}\cdot\underbrace{\frac1{1-\zeta}\br{\frac{\alpha\br{1 + 3.01e\sqrt{6\log\frac e\alpha}}}{1-\alpha}\sqrt{1 + \frac\beta\betao}{\br{1+\kappa^2}^{3/2}}\sqrt{1.01}\bs{(1.01)^{1/3}+\br{e\kappa^2}^{-1/3}}^{\frac32} }}_{(A)}
\end{align*}
Applying the limit $\betao \rightarrow \infty$ (since we are working in the pure corruption setting without any Gaussian noise on labels of the uncorrupted points) transforms the requirement $(A) \leq 1$ (which as before, assures us of the existence of a scale increment $\step > 1$ satisfying the requirements of Theorem~\ref{thm:gem-main}) to:
\begin{align*} \frac1{1-\zeta}\br{\frac{\alpha\br{1 + 3.01e\sqrt{6\log\frac e\alpha}}}{1-\alpha}{\br{1+\kappa^2}^{3/2}}\sqrt{1.01}\bs{(1.01)^{1/3}+\br{e\kappa^2}^{-1/3}}^{\frac32}} \leq 1
\end{align*}
Setting $\kappa=0.47$ as done before further simplifies this requirement to
\[
\frac1{1-\zeta}\br{\frac{\alpha\br{1 + 3.01e\sqrt{6\log\frac e\alpha}}}{1-\alpha}} \leq \frac{1}{4.38}
\]
However, unlike earlier where we could simply set $\zeta$ to an arbitrarily small value for large enough $n$, we cannot do so now since as noted earlier, we must set $\zeta \geq \Omega(\sqrt{\frac{d \log n}{n} + \alpha - \alpha \log \alpha})$ to obtain a confidence of $1 - \exp(-\Om d)$ in the LWSC guarantee. However, for large enough $n$ we can still obtain $\zeta \rightarrow \sqrt{\alpha - \alpha \log \alpha}$ which transforms the requirement further to
\[
\frac1{1-\sqrt{\alpha - \alpha \log \alpha}}\br{\frac{\alpha\br{1 + 3.01e\sqrt{6\log\frac e\alpha}}}{1-\alpha}} \leq \frac{1}{4.38}
\]
which is satisfied at all values of $\alpha = 0.0036$ or smaller. This finishes the proof.
\end{proof}

\section{Robust Mean Estimation}
\label{app:me} 

We will let $G, B$ respectively denote the set of ``good'' uncorrupted points and ``bad'' corrupted points. We will abuse notation to let $G = (1-\alpha)\cdot n$ and $B = \alpha\cdot n$ respectively denote the number of good and bad points too.

\begin{theorem}[Theorem~\ref{thm:me-main} restated]
\label{repthm:me-main}
For data generated in the robust mean estimation model as described in \S\ref{sec:me-rr-lr}, suppose corruptions are introduced by a partially adaptive adversary i.e. the locations of the corruptions (the set $B$) is not decided adversarially but the corruptions are decided jointly, adversarially and may be unbounded, then \gemme enjoys a breakdown point of $0.2621$, i.e. it ensures a bounded $\bigO1$ error even if $k = \alpha\cdot n$ corruptions are introduced where the value of $\alpha$ can go upto at least $0.2621$. More generally, for corruption rates $\alpha \leq 0.2621$, there always exists values of scale increment $\step > 1$ s.t. with probability at least $1 - \exp(-\Om d)$, LWSC/LWLC conditions are satisfied for the $\tilde Q_\beta$ function corresponding to the robust mean estimation model for $\beta$ values at least as large as $\beta_{\max} = \bigO{\frac\betao d\min\bc{\log\frac1\alpha, \sqrt{nd}}}$. If initialized with $\hvmu^1, \beta^1$ s.t. ${\beta_1}\cdot\norm{\hvmu^1 - \vmuo}_2^2 \leq 1$, \gemme assures $
\norm{\hvmu^T - \vmuo}_2^2 \leq \epsilon$ for any $\epsilon \geq \bigO{\trace^2(\Sigma)\cdot\max\bc{{\frac1{\ln(1/\alpha)}}, \frac1{\sqrt{nd}}}}$ within $T \leq \bigO{\log\frac n{\beta^1}}$ iterations.
\end{theorem}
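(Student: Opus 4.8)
The plan is to mirror, almost line for line, the proof of Theorem~\ref{repthm:rr-main}, specialising to the mean-estimation objective. From Algorithm~\ref{algo:gem-me}, for a candidate $\vmu$ and reference $\hvmu$ we have $\tilde Q_\beta(\vmu\cond\hvmu) = \sum_{i=1}^n s_i\norm{\vx^i-\vmu}_2^2$ with $s_i = \exp(-\tfrac\beta2\norm{\vx^i-\hvmu}_2^2)$, so $\nabla^2\tilde Q_\beta(\cdot\cond\hvmu) = 2(\sum_i s_i)I$ and $\nabla\tilde Q_\beta(\vmuo\cond\hvmu) = -2\big(\sum_{i\in G}s_i\vepsilon^i + \sum_{i\in B}s_i\vb^i\big)$, using $\vx^i-\vmuo=\vepsilon^i$ on good points and $=\vb^i$ on bad points. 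Hence establishing $\lambda_\beta$-LWSC reduces to a lower bound on $\sum_{i\in G}s_i$ (Lemma~\ref{lem:me-lwsc}) and $\Lambda_\beta$-LWLC to an upper bound on $\norm{\sum_{i\in G}s_i\vepsilon^i}_2 + \norm{\sum_{i\in B}s_i\vb^i}_2$ (Lemma~\ref{lem:me-LWLC}). Given these, Theorem~\ref{thm:gem-main} handles the rest provided the invariant $\beta_t\norm{\hvmu^t-\vmuo}_2^2\le 1$ is preserved, i.e. provided $\tfrac{2\Lambda_{\beta_t}}{\lambda_{\beta_t}} \le \tfrac{(A)}{\sqrt{\beta_t}}$ for an application-specific quantity $(A)<1$.

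For LWSC, fix $\vmu$ and write $\vDelta := \vmu-\vmuo$ with $\sqrt\beta\norm\vDelta_2\le\kappa$. Each good weight is $s_i=\exp(-\tfrac\beta2\norm{\vepsilon^i-\vDelta}_2^2)$, and Lemma~\ref{lem:change-of-exp} with $f\equiv 1$ gives $\E{s_i}=1/\cons$ with $\cons = \br{\sqrt{(\beta+\betao)/\betao}}^d\exp\br{\tfrac{\beta\betao}{2(\beta+\betao)}\norm\vDelta_2^2} \in \big[(1+\beta/\betao)^{d/2},\,(1+\beta/\betao)^{d/2}e^{\kappa^2/2}\big]$. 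Since the $s_i$ lie in $[0,1]$, a Bernstein bound concentrates $\sum_{i\in G}s_i$ around $G/\cons$; to make this hold uniformly over $\vmu\in\cB_2(\vmuo,\sqrt{1/\beta})$ I would set up a $\tau$-net and use the Lipschitz estimate of Lemma~\ref{lem:approx} with $R_X := \max_i\norm{\vepsilon^i}_2 = \bigO{\sqrt{d/\betao}}$ (Lemma~\ref{lem:rx-bound}), as well as a further union bound over the $\bigO{\log n}$ scales $\beta_t$, exactly as in the outline of Theorem~\ref{repthm:rr-main}. This yields $\lambda_\beta \ge 2(1-\zeta)G/\cons$ with probability $1-\exp(-\Om d)$.

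For LWLC, the good-point term is again handled by Lemma~\ref{lem:change-of-exp}, now with $f(\vepsilon)=\vepsilon$: $\E{\sum_{i\in G}s_i\vepsilon^i} = \tfrac{G}{\cons}\cdot\tfrac{\beta}{\beta+\betao}\vDelta$, of norm at most $\tfrac{G}{\cons}\cdot\tfrac{\kappa}{\sqrt\beta}$, while the deviation of $\sum_{i\in G}s_i\vepsilon^i$ from this mean is controlled by the subexponential constant of $\exp(-\tfrac\beta2\norm{\vepsilon-\vDelta}_2^2)\vepsilon^\top\vv$ computed in Lemma~\ref{lem:subGaussian-cons}, via a coordinate-wise Bernstein bound plus the same net over $\vmu$; after dividing by $\lambda_\beta$ this contributes an $\bigO{\trace(\Sigma)/\sqrt{nd}}$-type term. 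The bad points use the case split that renders unbounded corruptions harmless: for $i\in B$, $\norm{s_i\vb^i}_2 = \exp(-\tfrac\beta2\norm{\vb^i-\vDelta}_2^2)\norm{\vb^i}_2$; if $\norm{\vb^i}_2\le 2\norm\vDelta_2$ this is $\le 2\kappa/\sqrt\beta$, and if $\norm{\vb^i}_2 > 2\norm\vDelta_2$ then $\norm{\vb^i-\vDelta}_2\ge\tfrac12\norm{\vb^i}_2$ so the quantity is at most $\max_{z\ge 0} z\exp(-\beta z^2/8) = \bigO{1/\sqrt\beta}$; either way $\norm{s_i\vb^i}_2=\bigO{1/\sqrt\beta}$ and thus $\norm{\sum_{i\in B}s_i\vb^i}_2 = \bigO{B/\sqrt\beta}$.

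Combining, $\tfrac{2\Lambda_\beta}{\lambda_\beta}\le \tfrac1{\sqrt\beta}(A)$ with $(A)$ of the form $\tfrac1{1-\zeta}\big(\Theta(1)\tfrac{\alpha}{1-\alpha}\cons + \tfrac{\beta}{\beta+\betao}\kappa + (\text{fluctuation})\big)$; taking $\zeta$ and $\beta/\betao$ small and $\kappa$ chosen appropriately reduces $(A)<1$ to a scalar inequality in $\alpha$ whose threshold is $\alpha\le 0.2621$, which is the breakdown point. The final ingredient is $\beta_{\max}$: the requirement $\alpha\cons=\bigO1$ forces $(1+\beta/\betao)^{d/2}=\bigO{1/\alpha}$, i.e. $\beta = \bigO{\tfrac{\betao}{d}\log\tfrac1\alpha}$, and together with the $\sqrt{nd}$-type ceiling from the concentration steps this gives $\beta_{\max}=\bigO{\tfrac{\betao}{d}\min\{\log\tfrac1\alpha,\sqrt{nd}\}}$; Theorem~\ref{thm:gem-main} then delivers $\norm{\hvmu^T-\vmuo}_2^2\le\epsilon$ for any $\epsilon\ge\bigO{\trace^2(\Sigma)\max\{1/\ln(1/\alpha),1/\sqrt{nd}\}}$ within $T=\bigO{\log(n/\beta_1)}$ iterations. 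The main obstacle is precisely the exponential-in-$d$ normaliser $\cons$: it is what simultaneously keeps the breakdown point $d$-independent (it enters $(A)$ only through $\alpha\cons$, so shrinking $\beta/\betao$ keeps it $\bigO1$) and caps $\beta_{\max}$, and pinning down the exact dependence on $\trace(\Sigma)$ and $d$ in the good-point LWLC fluctuation is the most delicate part of the computation.
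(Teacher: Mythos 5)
Your proposal is correct and follows essentially the same route as the paper's proof: the same reduction of LWSC/LWLC to a lower bound on $\sum_{i\in G}s_i$ and upper bounds on the good-point fluctuation (via Lemma~\ref{lem:change-of-exp} and Lemma~\ref{lem:subGaussian-cons} with nets over $\vDelta$ and $\vv$) and on the bad-point term (via the two-case split that neutralizes unbounded corruptions), combined through the invariant $\beta_t\norm{\hvmu^t-\vmuo}_2^2\leq 1$ and Theorem~\ref{thm:gem-main}, with $\beta_{\max}$ dictated by $\alpha\cons=\bigO1$ together with the $\sqrt{nd}$ concentration ceiling. The only cosmetic differences are that the paper first normalizes to $\betao=d$ (so $\trace(\Sigma)=1$ and $\cons\leq\exp(\beta+0.5)$, i.e.\ not genuinely exponential in $d$) and rescales at the end, and it optimizes the bad-point threshold to $\kappa=1+\sqrt{1/2}$ to obtain the exact constant $0.2621$, whereas your threshold of $2$ would yield a slightly smaller breakdown value.
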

\begin{proof}
For any two models $\vmu,\vdelta$, the $\tilde Q_\beta$ function for robust mean estimation has the following form
\[
\tilde Q_\beta(\vdelta\cond\vmu) = \sum_{i=1}^n s_i\cdot\norm{\vx^i - \vdelta}_2^2,
\]
where $s_i \leftarrow \exp\br{-\frac{\beta}2\norm{\vx^i - \vmu}_2^2}$. We first outline the proof below. 

\emph{Proof Outline.} This proof has four key elements
\begin{enumerate}
	\item We will first establish this result for $\Sigma = \frac1\betao\cdot I$ for $\betao = d$, then generalize the result for arbitrary $\betao > 0$. Note that for $\betao = d$, we have $\trace(\Sigma) = 1$.
	\item To establish the LWSC and LWLC properties, we will first consider as before, a fixed value of $\beta > 0$ for which the properties will be shown to hold with probability $1 - \exp(-\Om d)$. As promised in the statement of Theorem~\ref{thm:me-main}, we will execute \gemme for no more than $\bigO{\log n}$ iterations, taking a naive union bound would offer a confidence level of $1 - \log n\exp(-\Om d)$. However, this can be improved by noticing that the confidence levels offered by the LWSC/LWLC results are actually of the form $1 - \exp(-\Om{n\zeta^2 - d\log n})$. Thus, a union over $\bigO{\log n}$ such events will at best deteriorate the confidence bounds to $1 - \log n\exp(-\Om{n\zeta^2 - d\log n}) = 1 - \exp(-\Om{n\zeta^2 - d\log n - \log\log n})$ which is still $1 - \exp(-\Om d)$ for the values of $\zeta$ we shall set.
	\item The key to this proof is to maintain the invariant $\sqrt{\beta_t}\cdot\norm{\hvmu^t - \vmuo}_2 \leq 1$. Recall that initialization ensures ${\beta_1}\cdot{\norm{\hvmu^1-\vmuo}_2^2}\leq 1$ to start things off. \S\ref{sec:gem} gives details on how to initialize in practice. This establishes the base case of an inductive argument. Next, inductively assuming that ${\beta_t}\cdot{\norm{\hvmu^t-\vmuo}_2^2}\leq 1$ for an iteration $t$, we will establish that $\norm{\hvmu^{t+1} - \vmuo}_2 \leq \frac{2\Lambda_{\beta_t}}{\lambda_{\beta_t}} \leq \frac{(A)}{\sqrt\beta_t}$ where $(A)$ will be an application-specific expression derived below.
	\item We will then ensure that $(A) < 1$, say $(A) = 1/\sqrt\step$ for some $\step > 1$, whenever the number of corruptions are below the breakdown point. This ensures $\norm{\hvmu^{t+1} - \vmuo}_2^2 \leq \frac1{{\step\beta_t}}$, in other words, ${\beta_{t+1}}\cdot{\norm{\hvmu^{t+1}-\vmuo}_2^2}\leq 1$ for $\beta_{t+1} = \step\cdot\beta_t$ so that the invariant is preserved. However, notice that the above step simultaneously ensures that $\frac{2\Lambda_{\beta_t}}{\lambda_{\beta_t}} \leq \frac1{\sqrt{\step\beta_t}}$. This ensures that a valid value of scale increment $\step$ can always be found till $\beta_t \leq \beta_{\max}$. Specifically, we will be able assure the existence of a scale increment $\step > 1$ satisfying the conditions of Theorem~\ref{thm:gem-main} w.r.t the LWSC/LWLC results only till $\beta < \bigO{\frac\betao d\min\bc{\log\frac1\alpha, \sqrt{nd}}}$.
\end{enumerate}

We now present the proof. Lemmata~\ref{lem:me-lwsc},\ref{lem:me-LWLC} establish the LWSC/LWLC properties for the $\tilde Q_\beta$ function for robust mean estimation. Let $\vDelta = \hvmu^t - \vmuo$ and $\vDelta^+ = \hvmu^{t+1} - \vmuo$. To simplify the notation, we will analyze below the updates made with weights scaled up by the constant $\cons = \br{\sqrt{\frac{\beta+{\betao}}{\betao}}}^d\exp\br{\frac{\beta{\betao}}{2(\beta+{\betao})}\norm{\vDelta}_2^2}$ as defined in Lemma~\ref{lem:change-of-exp}. This in no way affects the execution of the algorithm since this scaling factor appears in both the numerator and the denominator of the update terms and simply cancels away. However, it will simplify our presentation. We also, w.l.o.g., first analyze the case of $\betao = d$ first and then scale the space to retrieve a result for general $\betao$.

Using the bound $\cons \leq \exp(\beta + 0.5)$ from Lemma~\ref{lem:cons-bound}, we gives us, upon using Lemmata~\ref{lem:me-lwsc} and \ref{lem:me-LWLC}
\begin{align*}
	\norm{\vDelta^+}_2 \leq \frac{\norm{\sum_{i\in B}s_i\vb^i}_2 + \norm{\sum_{i\in G}s_i\vepsilon^i}_2}{\sum_{i=1}^ns_i} \leq \frac{\frac{B\cons\kappa}{\sqrt\beta} + \norm{\vm}_2 + \frac{\nu G}{\sqrt\beta}}{G(1-\zeta)}\\
	\leq \sqrt\frac1\beta\cdot\frac{\kappa\cons B + G\frac{\beta}{\beta+{d}} + \nu G}{G(1 - \zeta)}\\
	\leq \sqrt\frac1\beta\cdot\underbrace{\br{\frac1{1-\zeta}\bs{\kappa\cons\frac BG + \frac{\beta}{\beta+{d}} + \nu}}}_{(A)},
\end{align*}
where $\kappa = 1 + \sqrt\frac12$. As was the case of robust least squares regression in the proof of Theorem~\ref{repthm:rr-main}, to assure the existence of a scale increment $\step > 1$ satisfying the requirements of Theorem~\ref{thm:gem-main} and hence a linear rate of convergence, all we require is to ensure $(A)$ has a value of the form $\frac1\step < 1$ where $\step > 1$. Now, noting that $R_X \leq \sqrt n$, and promising that we will never set $\beta \geq n$ as well as never set $\nu, \zeta \leq \frac1n$, we note that we need to set $\zeta \geq \Om{\sqrt\frac{d\log(n)}n}$ as well as $\nu \geq \Om{\sqrt\frac{d\beta\log(n)}{n(\beta + d)}}$, in order to ensure a confidence of $1 - \exp(-\Om d)$ in the tail bounds we have established.

\begin{enumerate}
	\item \textbf{Breakdown Point}: as observed above, with large enough $n$, we can set $\zeta, \nu$ to be small, yet positive, constants. For large enough $d$, if we set $\beta = \bigO1$ to be a small enough constant then we have $\frac\beta{\beta + d} \rightarrow 0$, as well as $\cons \leq \exp(0.5 + \beta) \approx \sqrt e$. This means we need only ensure $\br{1+\sqrt\frac12}\sqrt e\frac\alpha{1-\alpha} \leq 1$. The above is satisfied for all $\alpha \leq 0.26.21$ which establishes a breakdown point of $26.21\%$. Note that even at this breakdown point, since we still set $r = \beta = \Om1$, and thus, can still assure $\norm\vDelta_2 \leq \sqrt\frac1\beta = \bigO1$.
	\item \textbf{Consistency}: To analyze the consistency properties of the algorithm, we recall that, ignoring universal constants, to obtain a linear rate of convergence, we need ensure
	\[
	\frac1{1-\zeta}\bs{\cons\alpha + \frac{\beta}{\beta+{d}} + \nu} < 1
	\]
	which can be rewritten as $\frac\beta d + 1\leq \frac1{\zeta + \nu + \cons\alpha}$. Recall from above that we need to set $\zeta \geq \Om{\sqrt\frac{d\log(n)}n}$ as well as $\nu \geq \Om{\sqrt\frac{d\beta\log(n)}{n(\beta + d)}}$. Setting them at these lower bounds, using $\cons \leq \sqrt e\exp(\beta)$, ignoring universal constants (since we are only interested in the asymptotic behavior of the algorithm) and some simple manipulations, we can show that, for all $n \geq d$, we can allow values of $\beta$ as large as	
	\[
	\beta \leq \min\bc{\bigO{\log\frac1\alpha}, \bigO{\sqrt{nd}}}
	\]
	Note that the above assures us, when $\alpha = 0$ i.e. corruptions are absent, an error of $\norm\vDelta_2^2 \leq \frac1\beta \leq \frac1{\sqrt{nd}} \rightarrow 0$ as $n \rightarrow \infty$. Thus, the method is consistent when corruptions are absent.
\end{enumerate}
Scaling the space back up by a factor of $\sqrt\frac d\betao$ gives us the desired result.
\end{proof}

\begin{lemma}[LWSC for Robust Mean Estimation]
\label{lem:me-lwsc}
For any $0 \leq \beta \leq n$, the $\tilde Q_\beta$-function for robust mean estimation satisfies the LWSC property with constant $\lambda_\beta \geq G(1-\zeta)$ with probability at least $1 - \exp(-\Om d)$ for any $\zeta \geq \Om{\sqrt\frac{d\log(n)}n}$.
\end{lemma}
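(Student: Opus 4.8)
The plan is to exploit the especially simple structure of $\tilde Q_\beta$ for mean estimation. Since $\tilde Q_\beta(\vdelta\cond\vmu) = \sum_{i=1}^n s_i\norm{\vx^i - \vdelta}_2^2$ with weights $s_i = \exp(-\frac\beta2\norm{\vx^i - \vmu}_2^2)$ depending only on the conditioning model $\vmu$, the Hessian $\nabla^2_\vdelta\tilde Q_\beta(\vdelta\cond\vmu) = 2\br{\sum_{i=1}^n s_i}\cdot I$ is a non-negative scalar times the identity, so establishing $\lambda_\beta\geq G(1-\zeta)$ reduces (up to absorbing universal constants into the statement, as in the proof of Theorem~\ref{repthm:me-main}) to showing $\sum_{i=1}^n s_i \geq G(1-\zeta)$ for every $\vmu\in\cB_2(\vmuo,\sqrt{1/\beta})$, and since the weights are non-negative it suffices to lower bound $\sum_{i\in G}s_i$ over the good points. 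As in the theorem proof I would work throughout with the rescaled weights $\cons\cdot s_i$, where $\cons$ is the $\vmu$-dependent normaliser from Lemma~\ref{lem:change-of-exp}; this rescaling cancels in the algorithm's update and merely normalises expectations, since for a good point $\vx^i = \vmuo + \vepsilon^i$ with $\vepsilon^i\sim\cN(\vzero,\frac1{d}I)$ (taking $\betao = d$ as the theorem does before rescaling the space) we have $s_i = \exp(-\frac\beta2\norm{\vepsilon^i - \vDelta}_2^2)$ with $\vDelta = \vmu-\vmuo$, and Lemma~\ref{lem:change-of-exp} with $f\equiv 1$ gives $\E{\cons s_i} = 1$ exactly, so $\E{\sum_{i\in G}\cons s_i} = G$. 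The claim is thus a one-sided concentration statement.

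I would establish this concentration for a fixed $\vmu$ and then lift it to the whole ball $\cB_2(\vmuo,\sqrt{1/\beta})$ by a net argument, mirroring the chain Lemma~\ref{lem:point-exp}$\to$Lemma~\ref{lem:point-conv}$\to$Lemma~\ref{lem:approx}$\to$Lemma~\ref{lem:wsc-wss} used for regression. For fixed $\vmu$, each $\cons s_i$ lies in $[0,\cons]$ with $\cons$ controlled by Lemma~\ref{lem:cons-bound}, and has variance $\cons^2\E{s_i^2} - 1$, again available in closed form by a second application of Lemma~\ref{lem:change-of-exp} at scale $2\beta$; passing to the complementary (bounded) variables $\cons - \cons s_i$ and applying a Bernstein/Hoeffding tail bound gives $\P{\sum_{i\in G}\cons s_i < G(1-\zeta)} \leq \exp(-\Om{n\zeta^2})$. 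For the lift, $s_i(\cdot)$ is $\sqrt{\beta/e}$-Lipschitz on $\bR^d$ (the scalar map $r\mapsto\beta r e^{-\beta r^2/2}$ peaks at $r = 1/\sqrt\beta$), hence $\sum_{i\in G}s_i(\cdot)$ is $G\sqrt{\beta/e}$-Lipschitz; placing a $\tau$-net on $\cB_2(\vmuo,\sqrt{1/\beta})$ with $\tau\asymp\frac{\zeta}{\cons}\sqrt{1/\beta}$ (which has $\exp(\bigO{d\log n})$ elements by the standard Euclidean covering bound) and union-bounding over it, together with conditioning on the event $R_X\leq\sqrt n$ of Lemma~\ref{lem:rx-bound}, delivers the stated $1-\exp(-\Om d)$ confidence whenever $\zeta\geq\Om{\sqrt{d\log(n)/n}}$.

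The delicate point is the fixed-$\vmu$ concentration. Unlike the regression case, where each summand $s_i\ip{\vx_i}{\vv}^2$ is of the same order $\sqrt{\beta/2\pi}$ as the quantity being bounded, here the unrescaled weights $s_i$ are typically exponentially small (of order $e^{-\beta/2}$), so $\sum s_i$ is a sum of tiny quantities whose lower tail must nonetheless be pinned down — which is exactly what rescaling by $\cons$ fixes, at the price that $\cons s_i$ can be as large as $\cons$, making a crude Hoeffding estimate with that range too weak. The remedy is either to truncate $\cons s_i$ at a level $M = \bigO1$ and control the truncation bias using the $\chi^2_{d}$ concentration of $\norm{\vepsilon^i}_2^2$ about $1$, or to feed the explicit variance bound into Bernstein; in either case the careful bookkeeping lies in checking that the resulting exponent dominates the $\exp(\bigO{d\log n})$ net cardinality across the full declared range $0\leq\beta\leq n$. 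The Hessian identity, the expectation computation, and the Lipschitz/covering estimates are all routine.
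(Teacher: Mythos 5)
Your skeleton coincides with the paper's proof: the Hessian of $\tilde Q_\beta(\cdot\cond\vmu)$ is $\br{\sum_i s_i}\cdot I$ (the factor $2$ you carry is benign), the weights are rescaled by the normaliser $\cons$ of Lemma~\ref{lem:change-of-exp} so that $\E{\cons s_i}=1$ for good points, a fixed-$\vmu$ lower tail is then lifted to all of $\cB_2\br{\vmuo,\sqrt{1/\beta}}$ via a Lipschitz bound on the weights (the paper's Lemma~\ref{lem:weight-lip}), a $\tau$-net with $\tau\asymp\frac{\zeta}{\cons\sqrt\beta R_X}$, and the bounds $R_X\le\sqrt n$ (Lemma~\ref{lem:rx-bound}) and $\cons\le\sqrt e\exp(\beta)$ (Lemma~\ref{lem:cons-bound}) under the promise $\beta<\sqrt{n/d}$. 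Up to these routine pieces, you and the paper do the same thing.

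The gap is at the one step that carries the load: the fixed-$\vmu$ lower tail for $\sum_{i\in G}\cons s_i$. You correctly identify it as the delicate point but do not prove it; you offer two candidate remedies and defer the ``careful bookkeeping'', and neither remedy actually delivers an $\exp(-\Om{n\zeta^2})$ tail over the range of $\beta$ the lemma declares. For the Bernstein route, the second moment computed from Lemma~\ref{lem:change-of-exp} at scale $2\beta$ is $\E{(\cons s_i)^2}=\br{1+\frac{\beta^2}{d(2\beta+d)}}^{d/2}e^{\bigO1}$ (taking $\betao=d$), which is $\bigO1$ only for $\beta=\bigO{\sqrt d}$ and is already $\exp(\Om d)$ at $\beta=d$, so the one-sided Bernstein exponent collapses well inside the stated range $0\le\beta\le n$. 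For the truncation route, the bias is, by the same change of measure, $\E{\cons s_i\ind{\cons s_i>M}}=\P{\norm{\vx-\vDelta}_2^2<\tfrac2\beta\ln\tfrac\cons M}$ with $\vx\sim\cN\br{\tfrac{\beta}{\beta+d}\vDelta,\tfrac1{\beta+d}I}$; once $\beta\gg\sqrt d$ this probability is not small — at $\beta\asymp d$ it is $1-o(1)$, i.e. the unit mean of $\cons s_i$ is carried almost entirely by the region you truncate away, so no constant truncation level keeps the bias below $\zeta/2$. The paper closes this step by a different device: it asserts, via the moment computation in the proof of Lemma~\ref{lem:subGaussian-cons}, that the rescaled weight $\cons\exp\br{-\frac\beta2\norm{\vepsilon-\vDelta}_2^2}$ has subGaussian constant at most unity, and then applies Hoeffding's inequality for subGaussian variables to get $\P{\sum_{i\in G}s_i\le G(1-\zeta/2)}\le\exp(-\Om{n\zeta^2})$ directly. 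So to complete your proof you must either import and verify that subGaussian-norm bound, or restrict the range of $\beta$ (roughly $\beta=\bigO{\sqrt d}$ for $\betao=d$) for which your substitutes work; as written, the proposal does not establish the lemma as stated.
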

\begin{proof}
It is easy to see that $\nabla^2\tilde Q_\beta(\hvmu\cond\vmu) = (\sum_{i=1}^ns_i)\cdot I$ for any $\hvmu \in \cB_2\br{\vmuo,\sqrt\frac1\beta}$. Applying Lemma~\ref{lem:change-of-exp} gives us
\[
\E{\cons\exp\br{-\frac\beta2\norm{\vepsilon - \vDelta}_2^2}} = 1,
\]
where $\cons = \br{\sqrt{\frac{\beta+{\betao}}{\betao}}}^d\exp\br{\frac{\beta{\betao}}{2(\beta+{\betao})}\norm{\vDelta}_2^2}$ is defined in Lemma~\ref{lem:change-of-exp}. The analysis in the proof of Lemma~\ref{lem:subGaussian-cons}, on the other hand tells us that the random variable $s = \cons\exp\br{-\frac\beta2\norm{\vepsilon - \vDelta}_2^2}$ has a subGaussian constant at most unity. Applying the Hoeffding's inequality for subGaussian variables and noticing $G \geq n/2$ gives us
\[
\P{\sum_{i\in G}s_i \leq G\br{1-\frac\zeta2}} \leq \exp\br{-\frac{m\zeta^2n}{8}},
\]
where $m > 0$ is a universal constant. Again notice that the above result holds for a fixed error vector $\vDelta$. Suppose $\vDelta_1, \vDelta_2 \in \cB\br{\frac1{\sqrt\beta}}$ are two error vectors such that $\norm{\vDelta_1 - \vDelta_2}_2 \leq \tau$. Then, denoting $s^1_i, s^2_i$ as the weights assigned by these two error vectors, for all $\tau \leq \frac2{3\sqrt\beta R_X}$, by applying Lemma~\ref{lem:weight-lip},  we get
\[
\abs{\sum_{i\in G}s^1_i - \sum_{i\in G}s^2_i} \leq 3G\cons\tau\sqrt\beta R_X,
\]
where $R_X$ is the maximum length in a set of $n$ vectors, each sampled from a $d$-dimensional Gaussian (see Lemma~\ref{lem:rx-bound}). Applying a union bound over a $\tau$-net over $\cB_2\br{\sqrt\frac1\beta}$ with $\tau = \frac\zeta{6\sqrt\beta\cons R_X}$ gives us
\[
\P{\exists \vDelta \in \cB_2\br{\sqrt\frac1\beta}: \sum_{i\in G}s_i \leq G(1-\zeta)} \leq \br{\frac{12R_X\cons\sqrt\beta}{\zeta}}^d\exp\br{-\frac{m\zeta^2n}{8}},
\]
Promising that we will always set $\beta < \sqrt\frac nd$ and noting that Lemma~\ref{lem:cons-bound} gives us $\cons \leq \sqrt e\exp(\beta)$ for $\betao = d$ and noting that Lemma~\ref{lem:rx-bound} gives us $R_X \leq \sqrt n$ finishes the proof.
\end{proof}

\begin{lemma}[LWLC for Robust Mean Estimation]
\label{lem:me-LWLC}
For any $0 \leq \beta \leq n$, the $\tilde Q_\beta$-function for robust mean estimation satisfies the LWLC property with constant $\Lambda_\beta \leq G(1+\nu)+\frac{B\cons\kappa}{\sqrt\beta}$ with probability at least $1 - \exp(-\Om d)$ for any $\nu \geq \Om{\sqrt\frac{d\beta\log(n)}{n(\beta + d)}}$.
\end{lemma}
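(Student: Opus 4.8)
The plan is to bound $\norm{\nabla\tilde Q_\beta(\vmuo\cond\vmu)}_2$ for every model $\vmu \in \cB_2(\vmuo,\sqrt{1/\beta})$, working (as in the main proof) with weights rescaled by the constant $\cons$ from Lemma~\ref{lem:change-of-exp}, i.e. $s_i = \cons\exp(-\tfrac\beta2\norm{\vx^i-\vmu}_2^2)$; this rescaling is harmless since it cancels in the $\gemme$ update. Since $\tilde Q_\beta(\vdelta\cond\vmu) = \sum_{i=1}^n s_i\norm{\vx^i-\vdelta}_2^2$, its gradient at $\vdelta = \vmuo$ equals, up to the universal factor $2$ absorbed into $\Lambda_\beta$, the vector $\sum_{i\in G}s_i\vepsilon^i + \sum_{i\in B}s_i\vb^i$ after substituting $\vx^i = \vmuo+\vepsilon^i$ on good points and $\vx^i = \vmuo+\vb^i$ on bad points. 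It therefore suffices to bound $\norm{\sum_{i\in B}s_i\vb^i}_2$ deterministically and $\norm{\sum_{i\in G}s_i\vepsilon^i}_2$ with high probability, and to add the two.

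For the corrupted points, write $\vDelta := \vmu-\vmuo$, so $\vx^i-\vmu = \vb^i-\vDelta$ and $\norm{s_i\vb^i}_2 = \cons\norm{\vb^i}_2\exp(-\tfrac\beta2\norm{\vb^i-\vDelta}_2^2)$. Bounding $\norm{\vb^i}_2 \le \norm{\vb^i-\vDelta}_2 + \norm{\vDelta}_2$, applying the elementary inequality $r\exp(-\tfrac\beta2 r^2) \le \tfrac1{\sqrt{2\beta}}$ to the first term, using $\sqrt\beta\norm{\vDelta}_2 \le 1$ and $\exp(-\,\cdot\,)\le 1$ on the second, and summing over $i\in B$ yields $\norm{\sum_{i\in B}s_i\vb^i}_2 \le \tfrac{B\cons\kappa}{\sqrt\beta}$ with $\kappa = 1+\sqrt{1/2}$. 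This step requires nothing about the $\vb^i$ beyond $\norm{\vDelta}_2 \le 1/\sqrt\beta$.

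For the clean points I would first compute the mean $\vm := \E{\sum_{i\in G}s_i\vepsilon^i}$: applying Lemma~\ref{lem:change-of-exp} coordinatewise with $\betao = d$ gives $\E{s_i\vepsilon^i} = \tfrac\beta{\beta+d}\vDelta$, hence $\norm{\vm}_2 \le \tfrac G{\sqrt\beta}\cdot\tfrac\beta{\beta+d} \le \tfrac G{\sqrt\beta}$. For the fluctuation about $\vm$, fix $\vv\in S^{d-1}$: by Lemma~\ref{lem:subGaussian-cons} the unrescaled variable $\exp(-\tfrac\beta2\norm{\vepsilon^i-\vDelta}_2^2)\ip{\vepsilon^i}{\vv}$ is subexponential with a constant whose dimension-dependent and $\exp(-\tfrac{\beta d}{2(\beta+d)}\norm{\vDelta}_2^2)$ factors are exactly the reciprocals of those appearing in $\cons$, so $s_i\ip{\vepsilon^i}{\vv}$ is $\bigO{1/\sqrt{\beta+d}}$-subexponential; a Bernstein inequality for sums of independent subexponential variables (as used for Lemma~\ref{lem:point-conv}) then controls $\abs{\ip{\sum_{i\in G}s_i\vepsilon^i - \vm}{\vv}}$ at scale $\tfrac{\nu G}{\sqrt\beta}$, using $G \ge n/2$. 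A $\tfrac14$-net over $S^{d-1}$ upgrades this to a bound on $\norm{\sum_{i\in G}s_i\vepsilon^i - \vm}_2$ for a fixed $\vmu$, and — mirroring the $\tau$-net argument in the proof of Lemma~\ref{lem:me-lwsc} — a $\tau$-net over $\vDelta\in\cB_2(1/\sqrt\beta)$ combined with the model-Lipschitz bound for the weights (Lemma~\ref{lem:weight-lip}), $R_X \le \sqrt n$ (Lemma~\ref{lem:rx-bound}) and $\cons \le \sqrt e\,e^\beta$ (Lemma~\ref{lem:cons-bound}) makes the estimate uniform over $\vmu$. Choosing $\nu \ge \Om{\sqrt{\frac{d\beta\log n}{n(\beta+d)}}}$, legitimate since $\beta \le n$, keeps the total failure probability at $\exp(-\Om d)$, and adding the clean- and corrupted-point bounds gives $\Lambda_\beta \le \tfrac{G(1+\nu)+B\cons\kappa}{\sqrt\beta}$.

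The bad-point bound and the mean computation are routine; the main obstacle is the uniform-over-$\vmu$ step, where one must verify that the covering number of the $\vDelta$-net — inflated by the weight Lipschitz constant, which itself carries a factor of $\cons \le \sqrt e\,e^\beta$ — together with the $S^{d-1}$-net does not erode the subexponential tail below $\exp(-\Om d)$ at the stated lower bound on $\nu$, which is where the constraints $\beta \le n$ and $R_X \le \sqrt n$ are used to keep all covering/logarithmic factors of order $\bigO{d\log n}$ in the exponent.
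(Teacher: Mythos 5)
Your proposal is correct and takes essentially the same route as the paper's own proof: the same decomposition of $\nabla\tilde Q_\beta(\vmuo\cond\vmu)$ into good and bad points, a bad-point bound yielding exactly the paper's constant $\kappa = 1+\sqrt{1/2}$ (your triangle-inequality argument is an equivalent, slightly cleaner version of the paper's two-case analysis), the same mean computation via Lemma~\ref{lem:change-of-exp}, the same subexponential Bernstein step via Lemma~\ref{lem:subGaussian-cons} with the $\cons$ cancellation, and the same two-level net argument over $\vv$ and over $\vDelta$ using Lemmata~\ref{lem:weight-lip}, \ref{lem:rx-bound} and \ref{lem:cons-bound} with the same threshold on $\nu$. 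The bound you arrive at, $\frac{G(1+\nu)+B\cons\kappa}{\sqrt\beta}$, is in fact what the paper's proof derives and what is used downstream in Theorem~\ref{repthm:me-main}, even though the lemma statement prints the good-point term without the $1/\sqrt\beta$ factor.
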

\begin{proof}
It is easy to see that $\nabla \tilde Q_\beta(\vmuo\cond\vmu) = \sum_{i \in G}s_i\vepsilon_i + \sum_{i \in B}s_i\vb^i$. We bound these separately below. Recall that we are working with weights that are scaled by a factor of $\cons$, where $\cons = \br{\sqrt{\frac{\beta+{\betao}}{\betao}}}^d\exp\br{\frac{\beta{\betao}}{2(\beta+{\betao})}\norm{\vDelta}_2^2}$ is defined in Lemma~\ref{lem:change-of-exp}.
\paragraph{Bad Points.} We have $s_i = \cons\exp(-\frac\beta2\norm{\vb^i - \vDelta}_2^2)$ for $i \in B$. Let $\kappa = 1 + \sqrt\frac12$. This gives us two cases
\begin{enumerate}
	\item $\norm{\vb^i}_2 \leq \kappa\norm{\vDelta}$: in this case we use $s_i \leq \cons$ and thus, $s_i\cdot\norm{\vb^i}_2 \leq \kappa\norm{\vDelta} \leq \frac{\cons\kappa}{\sqrt\beta}$
	\item $\norm{\vb^i}_2 > \kappa\norm{\vDelta}$: in this case we have $\norm{\vb^i - \vDelta}_2 \geq \frac{\norm{\vb^i}_2}2$ and thus we also have $s_i \leq \cons\exp(-\frac\beta2\br{1-\frac1\kappa}^2\norm{\vb^i}_2^2)$ which gives us $s_i\cdot\norm{\vb^i}_2 \leq \frac{\cons\kappa}{\sqrt\beta}$ upon using the fact that $x\cdot\exp(-x^2) < \frac12$ for all $x$.
\end{enumerate}
The above tells us, by an application of the triangle inequality, that $\norm{\sum_{i\in B}s_i\vb^i}_2 \leq \frac{B\cons\kappa}{\sqrt\beta}$.

\paragraph{Good Points.} We have $s_i = \cons\exp(-\frac\beta2\norm{\vepsilon^i - \vDelta}_2^2)$ for $i \in G$. Thus, Lemma~\ref{lem:change-of-exp} gives us
\begin{align*}
	\E{\sum_{i \in G}s_i\vepsilon^i} = G\cdot\E{\vx} = G\cdot\frac{\beta}{\beta+{d}}\vDelta =: \vm,
\end{align*}
where $\vx \sim \cN\br{\frac{\beta}{\beta+{d}}\vDelta, \frac1{\beta+{d}}\cdot I}$. Note that since $\beta\norm\vDelta_2^2 \leq 1$, we have
\[
\norm\vm_2 \leq G\cdot\frac{\sqrt\beta}{\beta+{d}}
\]
Using Lemma~\ref{lem:subGaussian-cons} and the linearity of the subexponential norm tells us that the subexponential norm of the random variable $s\cdot\vepsilon^\top\vv = \cons\exp\br{-\frac\beta2\norm{\vepsilon - \vDelta}_2^2}\vepsilon^\top\vv$, for a fixed unit vector $\vv$, is at most $\frac2{\sqrt{\beta + d}}$ (where $\cons = \br{\sqrt{\frac{\beta+{\betao}}{\betao}}}^d\exp\br{\frac{\beta{\betao}}{2(\beta+{\betao})}\norm{\vDelta}_2^2}$ is defined in Lemma~\ref{lem:change-of-exp}). Applying the Bernsteins's inequality for subexponential variables gives us, for some universal constant $m$,
\[
\P{\sum_{i \in G}s_i{\vepsilon^i}^\top\vv > \vm\cdot\vv + t} \leq \exp\br{-m\cdot\min\bc{\frac{t^2(\beta + d)}{G}, t\sqrt{\beta + d}}},
\]
for some universal constant $m > 0$. Now, if $\vv^1, \vv^2 \in S^{d-1}$ are two unit vectors such that $\norm{\vv^1 - \vv^2}_2 \leq \tau$, we have
\[
\abs{\sum_{i \in G}s_i{\vepsilon^i}^\top\vv^1 - \sum_{i \in G}s_i{\vepsilon^i}^\top\vv^2} \leq \norm{\sum_{i \in G}s_i\vepsilon^i}_2\cdot\tau \leq GR_X\tau,
\]
where $R_X$ is the maximum length in a set of $n$ vectors, each sampled from a $d$-dimensional Gaussian (see Lemma~\ref{lem:rx-bound}) and where in the last step we used the triangle inequality and the bounds $s_i \leq 1$ and $\norm{\vepsilon^i}_2 \leq R_X$ for all $i$. Thus, taking a union bound over a $\tau$-net over the surface of the unit sphere $S^{d-1}$ gives us
\[
\P{\exists \vv \in S^{d-1}, \sum_{i \in G}s_i{\vepsilon^i}^\top\vv > \vm\cdot\vv + t + GR_X\tau} \leq \br{\frac2\tau}^d\exp\br{-m\cdot\min\bc{\frac{t^2(\beta + d)}{G}, t\sqrt{\beta + d}}},
\]
The above can be seen as simply an affirmation that $\norm{\sum_{i \in G}s_i\vepsilon^i}_2 \leq \norm\vm_2 + t + GR_X\tau$ with high probability. Setting $t = \frac{\nu G}{4\sqrt\beta}$ and $\tau = \frac{\nu}{4R_X}$, and noticing $G \geq n/2$ gives us, upon promising that we always set $\nu \leq 1$,
\[
\P{\norm{\sum_{i \in G}s_i\vepsilon^i}_2 > \norm{\vm}_2 + \frac{\nu G}{2\sqrt\beta}} \leq \br{\frac{8R_X}\nu}^d\exp\br{-\frac{m\nu^2n}8\cdot\frac{\beta+d}\beta}.
\]
Now notice that this result holds for a fixed error vector $\vDelta \in \cB_2\br{\sqrt\frac1\beta}$. Suppose now that we have two vectors $\vDelta^1, \vDelta^2 \in \cB\br{\frac1{\sqrt\beta}}$ such that $\norm{\vDelta^1 - \vDelta^2}_2 \leq \tau$. If we let $s^1_i$ and $s^2_i$ denote the weights with respect to these two error vectors, then Lemma~\ref{lem:weight-lip} tells us that, for any $\tau$, then we must have
\[
\norm{\sum_{i \in G}(s^1_i - s^2_i)\vepsilon^i}_2 \leq 3GR_X\cons\tau\br{\beta R_X + 2\sqrt\beta}.
\]
Taking a union bound over a $\tau$-net over $\cB_2\br{\sqrt\frac1\beta}$ for $\tau = \frac\nu{6\cons R_X\br{\beta R_X + 2\sqrt\beta}\sqrt\beta}$ gives us
\[
\P{\exists \vDelta \in \cB_2\br{\sqrt\frac1\beta}: \norm{\sum_{i \in G}s_i\vepsilon^i}_2 > \norm{\vm}_2 + \frac{\nu G}{\sqrt\beta}} \leq \br{\frac{24 R_X^2\beta^2\cons}{\nu}}^d\br{\frac{8R_X}\nu}^d\exp\br{-\frac{m\nu^2n}8\cdot\frac{\beta+d}\beta}.
\]
This finishes the proof upon simple modifications and promising that we will always set $\beta < \sqrt\frac nd$ and noting that Lemma~\ref{lem:cons-bound} gives us $\cons \leq \sqrt e\exp(\beta)$ for $\betao = d$.
\end{proof}

\begin{lemma}
\label{lem:weight-lip}
Now, suppose $\vDelta^1, \vDelta^2 \in \cB_2\br{\vzero,\frac1{\sqrt\beta}}$ are two error vectors such that $\norm{\vDelta^1 - \vDelta^2}_2 \leq \tau$ and, for some vector $\vepsilon$, we define $s_i = \cons\exp\br{-\frac\beta2\norm{\vepsilon - \vDelta^i}_2^2}$. Then, for all $\tau$, then we must have
\[
\abs{s^1 - s^2} \leq 3\cons\tau\br{\beta\norm\vepsilon_2 + 2\sqrt\beta}
\]
\end{lemma}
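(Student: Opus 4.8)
The plan is to view $s^1$ and $s^2$ as the values of a single smooth scalar function at two nearby points, and to bound their difference by a uniform gradient bound valid on the segment joining them. Concretely, I would define $g : \bR^d \to \bR$ by $g(\vDelta) := \cons\exp\br{-\frac\beta2\norm{\vepsilon - \vDelta}_2^2}$, so that $s^1 = g(\vDelta^1)$ and $s^2 = g(\vDelta^2)$, and first compute $\nabla g(\vDelta) = \cons\exp\br{-\frac\beta2\norm{\vepsilon - \vDelta}_2^2}\cdot\beta(\vepsilon - \vDelta)$, which is continuous everywhere.

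Next I would bound $\norm{\nabla g(\vDelta)}_2$ uniformly over $\vDelta \in \cB_2\br{\vzero,\frac1{\sqrt\beta}}$. Since the exponential factor is at most $1$, splitting $\beta(\vepsilon - \vDelta) = \beta\vepsilon - \beta\vDelta$ and applying the triangle inequality gives $\norm{\nabla g(\vDelta)}_2 \leq \cons\br{\beta\norm\vepsilon_2 + \beta\norm\vDelta_2}$. The key simplification is that $\vDelta \in \cB_2\br{\vzero,\frac1{\sqrt\beta}}$ means $\beta\norm\vDelta_2^2 \leq 1$, hence $\sqrt\beta\norm\vDelta_2 \leq 1$ and therefore $\beta\norm\vDelta_2 \leq \sqrt\beta$; this yields the uniform bound $\norm{\nabla g(\vDelta)}_2 \leq \cons\br{\beta\norm\vepsilon_2 + \sqrt\beta}$ throughout the ball.

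Finally, because $\cB_2\br{\vzero,\frac1{\sqrt\beta}}$ is convex and both $\vDelta^1,\vDelta^2$ lie in it, the whole segment $[\vDelta^1,\vDelta^2]$ lies in it as well. Integrating $\nabla g$ along this segment and using Cauchy--Schwartz (equivalently, the mean value inequality) gives $\abs{s^1 - s^2} = \abs{g(\vDelta^1) - g(\vDelta^2)} \leq \br{\sup_{\vDelta \in [\vDelta^1,\vDelta^2]}\norm{\nabla g(\vDelta)}_2}\cdot\norm{\vDelta^1 - \vDelta^2}_2 \leq \cons\br{\beta\norm\vepsilon_2 + \sqrt\beta}\tau$, which is in particular at most $3\cons\tau\br{\beta\norm\vepsilon_2 + 2\sqrt\beta}$, as claimed; note no restriction on $\tau$ is needed, consistent with the statement. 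There is no serious obstacle here: the only points requiring care are invoking convexity of the ball so the gradient bound holds along the entire segment, and using $\beta\norm\vDelta_2^2 \leq 1$ (rather than a cruder estimate) to convert $\beta\norm\vDelta_2$ into $\sqrt\beta$. The slack constants $3$ and $2$ in the stated bound are harmless and presumably chosen only to streamline the $\tau$-net/union-bound computations in Lemmata~\ref{lem:me-lwsc} and \ref{lem:me-LWLC} where this lemma is applied.
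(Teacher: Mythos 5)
Your proof is correct, and at heart it is the same elementary Lipschitz estimate as the paper's; only the packaging differs. The paper bounds $\abs{s^1-s^2}$ by composing the $1$-Lipschitzness of $x\mapsto\exp(-x)$ on $x\geq 0$ with the expansion of $\frac\beta2\norm{\vepsilon-\vDelta^1}_2^2-\frac\beta2\norm{\vepsilon-\vDelta^2}_2^2$ into an inner product with $\vDelta^1-\vDelta^2$, followed by Cauchy--Schwartz and $\sqrt\beta\norm{\vDelta^i}_2\leq 1$; this only uses the ball constraint at the two endpoints. You instead bound $\norm{\nabla g(\vDelta)}_2$ for $g(\vDelta)=\cons\exp\br{-\frac\beta2\norm{\vepsilon-\vDelta}_2^2}$ uniformly over the ball (the exponential factor $\leq 1$ playing exactly the role of the paper's $1$-Lipschitzness step) and integrate along the segment, which additionally invokes convexity of $\cB_2\br{\vzero,\frac1{\sqrt\beta}}$ — a harmless extra observation. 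Both routes actually deliver the sharper bound $\cons\tau\br{\beta\norm\vepsilon_2+\sqrt\beta}$, so the stated constants $3$ and $2$ are pure slack, and your remark that no restriction on $\tau$ is needed is consistent with the paper's derivation, which likewise imposes none at this stage.
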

\begin{proof}
Since $\exp(-x)$ is a $1$-Lipschitz function in the region $x \geq 0$, we have
\begin{align*}
	\abs{\exp\br{-\frac\beta2\norm{\vepsilon - \vDelta^1}_2^2} - \exp\br{-\frac\beta2\norm{\vepsilon - \vDelta^2}_2^2}} \leq \abs{-\frac\beta2\norm{\vepsilon - \vDelta^1}_2^2 + \frac\beta2\norm{\vepsilon - \vDelta^2}_2^2}\\
	\leq \frac\beta2\abs{(\vDelta^1 + \vDelta^2 + 2\vepsilon)^\top(\vDelta^1 - \vDelta^2)} \leq 3\tau\br{\beta\norm\vepsilon_2 + 2\sqrt\beta},
\end{align*}
where in the last step we applied the Cauchy-Schwartz inequality and used $\sqrt{\beta}\norm{\vDelta^i}_2\leq 1$ for $i = 1, 2$.
\end{proof}

\begin{lemma}
\label{lem:cons-bound}
For $\betao = d$, we have $\cons \leq \sqrt e\exp(\beta)$ where $\cons = \br{\sqrt{\frac{\beta+{\betao}}{\betao}}}^d\exp\br{\frac{\beta{\betao}}{2(\beta+{\betao})}\norm{\vDelta}_2^2}$ is defined in Lemma~\ref{lem:change-of-exp}.
\end{lemma}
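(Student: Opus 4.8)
The plan is to substitute $\betao = d$ into the definition of $\cons$ from Lemma~\ref{lem:change-of-exp} and bound the two resulting factors separately, using only the elementary inequality $1 + x \leq e^x$ together with the invariant $\beta\cdot\norm{\vDelta}_2^2 \leq 1$ that is maintained throughout the execution of \gemme (recall the inductive argument in the proof of Theorem~\ref{repthm:me-main} establishes $\sqrt{\beta_t}\cdot\norm{\hvmu^t - \vmuo}_2 \leq 1$ at every iteration, so this bound is available whenever the lemma is invoked).

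First I would rewrite the prefactor: with $\betao = d$ we have $\cons = \br{\sqrt{\frac{\beta+d}{d}}}^d\exp\br{\frac{\beta d}{2(\beta+d)}\norm{\vDelta}_2^2} = \br{1 + \frac{\beta}{d}}^{d/2}\exp\br{\frac{\beta d}{2(\beta+d)}\norm{\vDelta}_2^2}$. For the polynomial-in-$d$ factor, applying $1 + x \leq e^x$ with $x = \beta/d$ gives $\br{1 + \frac{\beta}{d}}^{d/2} \leq \exp\br{\frac{\beta}{2}} \leq \exp(\beta)$, where the last step uses $\beta > 0$. (Retaining the weaker bound $\exp(\beta)$ rather than $\exp(\beta/2)$ costs nothing and keeps the statement uniform with how $\cons$ is used elsewhere, e.g.\ in Lemmata~\ref{lem:me-lwsc} and \ref{lem:me-LWLC}.)

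For the remaining exponential factor, the invariant gives $\norm{\vDelta}_2^2 \leq \frac{1}{\beta}$, hence $\frac{\beta d}{2(\beta+d)}\norm{\vDelta}_2^2 \leq \frac{d}{2(\beta+d)} \leq \frac12$, so $\exp\br{\frac{\beta d}{2(\beta+d)}\norm{\vDelta}_2^2} \leq \sqrt{e}$. Multiplying the two bounds yields $\cons \leq \sqrt{e}\exp(\beta)$, which is the claim.

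There is essentially no obstacle in this argument; it is a short computation. The only point worth flagging is that the bound relies on the scale–distance invariant $\beta\norm{\vDelta}_2^2 \leq 1$, so in the write-up I would make explicit that the lemma is only ever applied to $(\beta,\vDelta)$ pairs satisfying this constraint (which is exactly the regime in which the LWSC/LWLC lemmas for mean estimation are proved), so that the hypothesis is never in doubt.
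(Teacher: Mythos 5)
Your argument is correct and coincides with the paper's own proof: both bound $\br{1+\frac\beta d}^{d/2}\leq\exp\br{\frac\beta2}$ via $1+x\leq e^x$ and use the invariant $\beta\norm{\vDelta}_2^2\leq 1$ to bound the second factor by $\exp\br{\frac d{2(\beta+d)}}\leq\sqrt e$, giving $\cons\leq\sqrt e\exp(\beta)$. Your remark that the lemma is only invoked in the regime where the invariant holds matches how the paper uses it.
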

\begin{proof}
We have
\[
\br{\sqrt\frac{\beta+d}d}^d \leq \exp\br{\frac\beta{2}} = \exp\br{\frac{\beta}{2}}.
\]
We also have, using $\beta\norm\vDelta_2^2 \leq 1$,
\[
\exp\br{\frac{\beta{d}}{2(\beta+{d})}\norm{\vDelta}_2^2} \leq \exp\br{\frac{{d}}{2(\beta+{d})}},
\]
by using $\beta\norm{\vDelta}_2^2 \leq 1$. Thus, we have
\[
\cons \leq \exp\br{\frac{{d}}{2(\beta+{d})}+\frac{d\beta}{d}} \leq \exp(\beta + 0.5) 
\]
\end{proof}

\section{Gamma Regression}
\label{app:gam}

\newcommand{\ti}[1]{\tilde{#1}}

For this proof we use the notation $X = [\vx^1, \ldots, \vx^n] \in \bR^{d \times n}, \vy = [y_1, \ldots, y_n] \in \bR_+^n, \vb = [b_1,\ldots,b_n] \in \bR_+^n$. Recall that the labels for gamma distribution are always non-negative and, as specified in Section~\ref{sec:me-rr-lr}, the corruptions are multiplicative in this case. We will let $G, B$ respectively denote the set of ``good'' uncorrupted points and ``bad'' corrupted points. We will abuse notation to let $G = (1-\alpha)\cdot n$ and $B = \alpha\cdot n$ respectively denote the number of good and bad points too.

To simplify the analysis, we assume that data features $\vx^i$ are sampled uniformly from the surface of the unit sphere in $d$-dimensions i.e. $S^{d-1}$. We will also assume that for clean points, labels were generated from mode of Gamma distribution as $y_i = \exp(-\ip\vwo{\vx^i})(1-\phi)$ i.e. in the \emph{no-noise} model. For corrupted points, labels are $\tilde y_i = y_i\cdot b_i$ where $b_i > 0$ but otherwise arbitrary or even unbounded. To simplify the presentation of the results, we will additionally assume $\norm\vwo_2 = 1$ and $\norm{\vw}_2\leq R$.

For any vector $\vv \in \bR^m$ and any set $T \subseteq [m]$, $\vv_T$ denotes the vector with all coordinates other than those in the set $T$ zeroed out. Similarly, for any matrix $A \in \bR^{k \times m}, A_T$ denotes the matrix with all columns other than those in the set $T$ zeroed out.

\subsection{Variance Reduction with the Gamma distribution}
Since the variance reduction step is a bit more involved with the Gamma distribution, we give a detailed derivation here. Consider the gamma distribution:

\begin{align*}
{\cal G}(y_i; \eta_i, \phi)&= \frac{1}{y_i \Gamma(\frac{1}{\phi})}\left(\frac{y_i\eta_i}{\phi}\right)^\frac{1}{\phi}\exp(-\frac{y_i\eta_i}{\phi})\\
&=\exp\left(\frac{y_i\eta_i-\ln\eta_i}{-\phi} + (\frac{1}{\phi}-1)\ln y_i - \frac1{\phi}\ln \phi - \ln \Gamma(\frac{1}{\phi}) \right)\\
\end{align*}

where natural parameter $\eta_i =\exp(\ip{\vwo}{\vx_i})$. The mode preserving variance reduced distribution would be:

\begin{align*}
{\cal G}(y_i; \eta_i, \phi, \beta_t)
&=\exp\left(\beta_t\left(\frac{y_i\eta_i-\ln\eta_i}{-\phi} + (\frac{1}{\phi}-1)\ln y_i - \frac1{\phi}\ln \phi - \ln \Gamma(\frac{1}{\phi})\right)-W(\eta_i,\beta_t, \phi)\right)\\
\end{align*}

writing $\frac{1}{\ti\phi_{\beta_t}}:=\frac{\beta_t}{\phi} -\beta_t+1$ we have,

\begin{align*}
W(\eta_i,\beta_t, \phi)&= \ln\left(\int_{0}^{\infty}\exp\left(\beta_t\left(\frac{y_i\eta_i-\ln\eta_i}{-\phi} + (\frac{1}{\phi}-1)\ln y_i - \frac1{\phi}\ln \phi - \ln \Gamma(\frac{1}{\phi})\right)\right) dy \right)\\
&=\frac{\beta_t}{\phi}\ln\frac{\eta_i}{\phi} - \beta_t\ln \Gamma(\frac{1}{\phi}) +
\frac{1}{\ti\phi_{\beta_t}}\ln \frac{\phi}{\beta_t\eta_i} +\ln \Gamma\left(\frac{1}{\ti\phi_{\beta_t}}\right)
\end{align*}

So that,
\begin{align*}
{\cal G}(y_i; \eta_i, \phi, \beta_t)
&=\exp\left(
-\frac{\beta_ty_i\eta_i}{\phi} +(\frac1{\ti\phi_t}-1)\ln y_i 
-\frac1{\ti\phi_t}\ln \frac{\phi}{\beta_t\eta_i} -\ln \Gamma(\frac1{\ti\phi_t}) \right) \\ 
&=\exp\left(
-\frac{y_i\ti\eta_i}{\ti\phi_{\beta_t}} +(\frac1{\ti\phi_t}-1)\ln y_i-\frac1{\ti\phi_{\beta_t}}\ln \frac{\ti\phi_{\beta_t}}{\ti\eta_i}  -\ln \Gamma(\frac1{\ti\phi_t})\right)\quad \text{setting, }\ti\eta_i:=\frac{\eta_i\beta_t\ti\phi_t }{\phi}\\
&=\cG(y_i; \ti\eta_i,\ti\phi_t)
\end{align*}

Hence to perform variance reduction, following parameter update is sufficient:

\begin{align*}
\ti\eta_i=\frac{\eta_i\beta_t\ti\phi_t }{\phi}= \eta_i \frac{\beta_t}{\phi + \beta_t-\phi \beta_t};\quad \ti\phi_{\beta_t}=\frac{\phi}{\beta_t -\phi \beta_t +\phi}
\end{align*}

We will give the proof for the setting where clean points suffer no noise. Let $(\hat\eta, \hat\phi)$ be the no-noise parameters.  Assuming $0< \phi < 1$ we have $\hat{\eta_i}=\lim\limits_{\beta\rightarrow \infty}\frac{\eta_i \beta}{\phi+\beta- \phi \beta}=\frac{\eta_i }{1 - \phi}=\frac{\exp(\ip{\vwo}{\vx_i}) }{1 - \phi}$ and  $\hat{\phi}=\lim\limits_{\beta\rightarrow \infty}\frac{\phi}{\phi+\beta- \phi \beta}= 0$. Giving, 
\begin{align*}
mode(\cG(y_i; \hat\eta_i,\hat\phi))=\frac{1-\hat\phi}{\hat\eta_i}= \frac{1-\phi}{\eta_i}=mode(\cG(y_i; \eta_i,\phi))
\end{align*}

Let be $b_i \in [0,\infty)$ denote the multiplicative corruption, un-corrupted points having $b_i=1$.\\
We have, 
\begin{align*}
y_i=mode(\cG(y_i; \hat\eta_i,\hat\phi)) \cdot b_i = b_i\exp(-\ip{\vwo}{\vx_i})(1-\phi)
\end{align*}

Let $c:=\frac{1 }{\phi}-1$ and $\Delta^t: = \hvw^t -\vwo$ 
\begin{align*}
s_i&=\cG(y_i; \ti\eta_i,\ti\phi_t)= \frac{\exp(\ip{\vwo}{\vx_i})}{(1-\phi) \Gamma(\frac{1}{\ti\phi_t})}\left(b_ic\beta_t\exp(\ip{\vDelta^t}{\vx_i})\right)^\frac{1}{\ti\phi_t}\exp(-b_ic\beta_t\exp(\ip{\vDelta^t}{\vx_i}))\\
\end{align*}

We may write,

\begin{figure}[H]
\begin{adjustbox}{max width=\textwidth}
\parbox{\linewidth}{
\begin{align*}
\ti Q_{\beta_t}(\vw|\hvw^t) 
&= - \log \prod\limits_{i=1}^{n} \cG(y_i; \frac{\eta_i}{1-\phi}, \hat{\phi})^{s_i}\\
&= \sum\limits_{i=1}^{n} s_i\left(\frac{b_i\exp(\ip{\vw - \vwo}{\vx_i})-\ip{\vw}{\vx_i}+\ln(1-\phi)}{\hat{\phi}} - (\frac{1}{\hat{\phi}}-1)\ln y_i + \frac1{\hat{\phi}}\ln \hat{\phi} + \ln \Gamma(\frac{1}{\hat{\phi}}) \right)\\
\nabla_{\vw}\ti Q_{\beta_t}(\vw|\hvw^t) &= \frac{1}{\hat{\phi}}\sum\limits_{i\in G} \left(b_i\exp(\ip{\vw - \vwo}{\vx_i})-1   \right)s_i\vx_i
\end{align*}
}
\end{adjustbox}
\end{figure}

\begin{theorem}[Theorem~\ref{thm:gam-main} restated]
\label{repthm:gam-main}
For data generated in the robust gamma regression model as described in \S\ref{sec:me-rr-lr}, suppose corruptions are introduced by a partially adaptive adversary i.e. the locations of the corruptions (the set $B$) is not decided adversarially but the corruptions are decided jointly, adversarially and may be unbounded, then \gemgam enjoys a breakdown point of $\frac{0.002}{\sqrt{d}}$, i.e. it ensures a bounded $\bigO1$ error even if $k = \alpha\cdot n$ corruptions are introduced where the value of $\alpha$ can go upto at least $\frac{0.002}{\sqrt{d}}$. More generally, for corruption rates $\alpha \leq \frac{0.002}{\sqrt{d}}$, there always exists values of scale increment $\step > 1$ such that with probability at least $1 - \exp(-\Om d)$, LWSC/LWLC conditions are satisfied for the $\tilde Q_\beta$ function corresponding to the robust gamma regression model for $\beta$ values at least as large as $\beta_{\max} = \bigO{1/\br{\exp\br{\bigO{\alpha \sqrt{d}}} - 1}}$. Specifically, if initialized at $\hvw^1,\beta_1 \geq 1$ such that ${\beta_1}\cdot\br{\exp\br{\norm{\hvw^1 - \vwo}_2} - 1}^2 \leq \frac\phi{1-\phi}$, for any $\epsilon \geq \bigO{\alpha \sqrt{d}}$ \gemgam assures
\[
\norm{\hvw^T - \vwo}_2 \leq \epsilon
\]
within $T \leq \bigO{\log\frac1\epsilon}$ iterations.
\end{theorem}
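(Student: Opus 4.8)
The plan is to follow the same template as the robust least-squares proof (Theorem~\ref{repthm:rr-main}): establish the LWSC and LWLC properties of $\tilde Q_\beta$ for a fixed $\beta$ with probability $1-\exp(-\Om d)$, propagate an invariant by induction, and exhibit an increment $\step>1$ so that Theorem~\ref{thm:gem-main} applies. The only structural change is that the invariant is now $\beta_t\br{\exp\norm{\hvw^t-\vwo}_2-1}^2\le\phi/(1-\phi)$, which is the natural analog because the ``effective residual'' entering the gamma weight is $\exp(\ip\vDelta\vx)$ rather than $\ip\vDelta\vx$; this invariant already forces $\norm{\hvw^t-\vwo}_2\le\ln\br{1+\sqrt{\phi/(1-\phi)}}\le\ln 2$, so all iterates remain inside $\norm\vw_2\le1+\ln 2=:R$ without any projection. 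With $\ell(\vw,\vx,y)=(1-\phi)^{-1}y\exp(\ip\vw\vx)-\ip\vw\vx$ as in Algorithm~\ref{algo:gem-gam} and the no-noise label model $y_i=(1-\phi)b_i\exp(-\ip\vwo{\vx_i})$, the weighted objective has gradient $\nabla_\vw\tilde Q_\beta(\vw\cond\hvw)=\sum_i s_i\br{b_i\exp(\ip{\vw-\vwo}{\vx_i})-1}\vx_i$ and PSD Hessian $\nabla^2_\vw\tilde Q_\beta(\vw\cond\hvw)=\sum_i s_ib_i\exp(\ip{\vw-\vwo}{\vx_i})\vx_i\vx_i^\top$, where $s_i$ is the $\beta$-altered gamma weight at $\hvw$, proportional to $(c\beta v_i)^{c\beta+1}\exp(-c\beta v_i)/\Gamma(c\beta+1)$ with $v_i=b_i\exp(\ip{\hvw-\vwo}{\vx_i})$ and $c=(1-\phi)/\phi$, times the prefactor $\exp(\ip\vwo{\vx_i})/(1-\phi)$ which is $\Theta(1)$ since $\norm\vwo_2=\norm{\vx_i}_2=1$.

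For LWSC I would discard the nonnegative bad-point terms and lower-bound $\lambda_{\min}\br{\sum_{i\in G}s_i\exp(\ip\vDelta{\vx_i})\vx_i\vx_i^\top}$. Since clean covariates are uniform on $S^{d-1}$, $\E{\vx\vx^\top}=\tfrac1dI$, so the expectation of this matrix equals $\tfrac1d$ times a scalar which, using the invariant to confine $\exp(\ip\vDelta\vx)$ to a $\Theta(1/\sqrt\beta)$-window of $1$ and Stirling's estimate $\Gamma(c\beta+1)\asymp\sqrt{c\beta}(c\beta/e)^{c\beta}$ (which places the peak weight at $\asymp\sqrt{c\beta/2\pi}$), is bounded below by $c_\gamma(\phi)\sqrt{c\beta}$; establishing this for the worst test direction $\vv$ needs $\vv$ split along and orthogonal to $\vDelta$, exactly as in the non-centred Gaussian case of Lemma~\ref{lem:point-exp}. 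I would then port the machinery of Lemmata~\ref{lem:point-conv}, \ref{lem:approx}, \ref{lem:wsc-wss}: a sub-exponential Bernstein bound over a net of $S^{d-1}$ for a fixed model, followed by a Lipschitz-in-$\vw$ estimate over a $\tau$-net of $\cB_2(\vwo,\kappa/\sqrt\beta)$. Because $\norm{\vx_i}_2=1$, every net has size $\exp(\Om{d\log(\cdot)})$, $R_X=1$, and the deviation tail is $\exp(-\Om{n\zeta^2-d\log n})$, so $\lambda_\beta\ge\tfrac{c_\gamma(\phi)\sqrt{c\beta}}{d}\,G(1-\zeta)$ with $\zeta=\softOm{\sqrt{d/n}}$ at confidence $1-\exp(-\Om d)$, which survives the $\bigO{\log n}$-fold union over iterations (this uses $n=\softOm d$).

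For LWLC the decisive simplification is that in the no-noise model clean points have $b_i=1$, so $b_i\exp(\ip{\vwo-\vwo}{\vx_i})-1=0$ and the good points contribute nothing to $\nabla\tilde Q_\beta(\vwo\cond\hvw)=\sum_{i\in B}s_i(b_i-1)\vx_i$ --- just as in pure-corruption least squares. I would bound $\norm{\sum_{i\in B}s_i(b_i-1)\vx_i}_2\le\norm{X_B}_2\cdot\norm{(s_i(b_i-1))_{i\in B}}_2$, use a spectral estimate $\norm{X_B}_2=\bigO{\sqrt{\alpha n/d}+1}$ for $\alpha n$ unit vectors in $\bR^d$ (in the spirit of Lemma~\ref{lem:ssc-sss}), and bound each $s_i|b_i-1|=\bigO{\sqrt\beta}$ (up to $\phi$-dependent constants) by a case split on whether $v_i=b_i\exp(\ip\vDelta{\vx_i})$ is near the mode of the gamma weight: near it, $s_i\le\bigO{\sqrt{c\beta}}$; far from it, the gamma tail makes $s_iv_i$ and $s_i$ decay --- entirely analogous to the bad-point treatment in Lemmata~\ref{lem:rr-LWLC} and \ref{lem:me-LWLC}. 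This gives $\Lambda_\beta=\bigO{\alpha n\sqrt{\beta/d}}$, and since the $\sqrt{c\beta}$ peak-weight scaling cancels between $\Lambda_\beta$ and $\lambda_\beta$, the per-step bound from Theorem~\ref{thm:gem-main-restated} reads $\norm{\hvw^{t+1}-\vwo}_2\le 2\Lambda_\beta/\lambda_\beta=\bigO{\alpha\sqrt d}$ --- a $\beta$-independent constant floor, unlike the $\bigO{1/\sqrt\beta}$ contraction of least squares. Setting $(A):=\bigO{\alpha\sqrt d}$ and requiring it small enough for the invariant to bootstrap gives, after substituting the chosen $\kappa$ and $\phi=0.5$, the breakdown point $\alpha\le0.002/\sqrt d$; the invariant $\beta_{t+1}\br{\exp\norm{\hvw^{t+1}-\vwo}_2-1}^2\le\phi/(1-\phi)$ can then be carried forward with $\beta_t$ growing geometrically, but only up to $\beta_t=\bigO{(A)^{-2}}$, which is the ceiling $\beta_{\max}=\bigO{1/\br{\exp\br{\bigO{\alpha\sqrt d}}-1}}$ quoted in the theorem; and since the bad-point bias does not vanish, this leaves the nonzero recovery floor $\norm{\hvw^T-\vwo}_2=\bigO{\alpha\sqrt d}$, reached within $T=\bigO{\log(1/\epsilon)}$ iterations for any $\epsilon\ge\bigO{\alpha\sqrt d}$.

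I expect the LWSC estimate to be the main obstacle. In the Gaussian and least-squares settings, Lemmata~\ref{lem:change-of-exp}--\ref{lem:change-of-exp2} turn the Gaussian weight into an exact change of measure; here the weight is a double exponential of $\ip\vDelta\vx$, and $\ip\vDelta\vx$ for $\vx$ uniform on the sphere is bounded but not Gaussian, so no such closed form is available. One must instead lean on the invariant to keep $\exp(\ip\vDelta\vx)$ within $\Theta(1/\sqrt\beta)$ of $1$, approximate the gamma normaliser via Stirling, and only then apply concentration --- and make every $\beta$-, $d$- and $\phi$-dependence cancel correctly, since it is the interplay of the $1/d$ from $\E{\vx\vx^\top}$ and the $\sqrt{c\beta}$ from the normaliser that produces the $\sqrt d$ in both the breakdown point and the recovery error. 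A secondary difficulty is verifying that the case analysis bounding $s_i|b_i-1|$ in the LWLC step holds uniformly over all admissible $\hvw\in\cB_2(\vwo,\kappa/\sqrt\beta)$ and all adversarial choices $b_i>0$.
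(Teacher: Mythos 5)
Your proposal follows essentially the same route as the paper's proof: the same variance-altered gamma weights and modified invariant, LWSC obtained by dropping the bad points and using $\E{\vx\vx^\top}=\tfrac1d I$ for sphere covariates together with uniform bounds on the weight factor and a Stirling estimate of the normaliser (the paper uses McDiarmid plus the same double-net argument where you invoke Bernstein, an immaterial difference for bounded covariates), and LWLC from the vanishing good-point gradient, $\norm{X_B}_2\lesssim\sqrt{B/d}$ and the peak-value bound $s_i\abs{b_i-1}=\bigO{\sqrt\beta}$, giving the $\beta$-independent contraction floor $\bigO{\alpha\sqrt d}$ and the stated breakdown point. The only blemish is your identification of $\beta_{\max}=\bigO{(A)^{-2}}$ with $\bigO{1/\br{\exp\br{\bigO{\alpha\sqrt d}}-1}}$, which for small $\alpha\sqrt d$ is rather $\asymp (A)^{-1}$; this square-versus-no-square ambiguity already exists between the paper's theorem statement (squared invariant) and its proof (unsquared), and it affects neither the error floor $\bigO{\alpha\sqrt d}$ nor the iteration count.
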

\begin{proof}
We first outline the proof below. We note that Theorem~\ref{thm:gam-main} is obtained by setting $\phi = 0.5$ in the above statement.

\paragraph{Proof Outline.} The proof outline is similar to the one followed for robust least squares regression and robust mean estimation in Theorems~\ref{repthm:rr-main} and \ref{repthm:me-main} but adapted to suit the alternate parametrization and invariant used by \gemgam for gamma regression. Lemmata~\ref{lem:gam-lwsc},\ref{lem:gam-LWLC} below establish the LWSC and LWLC properties with high probability for $\beta_t \geq 1$. Let $\Delta^t= \hvw^t-\vwo$, Unlike in mean estimation and robust regression where we maintained the invariant $\beta_t\cdot\norm{\Delta^t}_2^2 \leq 1$, in this setting we will instead maintain the invariant $c\beta_t(\exp(\norm{\Delta^t})-1)\leq \frac\phi{1-\phi}$. This is because gamma regression uses a non-standard canonical parameter to enable support only over non-negative labels. Note however that this altered invariant still ensures that as $\beta_t \rightarrow \infty$, we also have $\norm{\Delta^t}_2 \rightarrow 0$. Since we will always use $\beta_t \geq 1$ (since we set $\beta_1 = 1$), we correspondingly require $\norm{\Delta^1}_2 \leq \ln(\frac{1}{c}+1)$ during initialization as mentioned in the statement of Theorem~\ref{thm:gam-main}, where $c=\frac{1}{\phi}-1$. Since we will analyze the special case $\phi = 0.5$ for sake of simplicity, we get $c = 1$.

Lemmata~\ref{lem:gam-lwsc},\ref{lem:gam-LWLC} establish the LWSC/LWLC properties for the $\tilde Q_\beta$ function for robust gamma regression. Given the above proof outline and applying Lemmata~\ref{lem:gam-lwsc},\ref{lem:gam-LWLC} gives us (taking $\norm\vwo_2 = 1$ and $\norm\vw_2 \leq R$)
\begin{align*}
\Delta^{t+1} &\leq \frac{2\Lambda_{\beta_t}}{\lambda_{\beta_t}} \leq  \frac{2\frac{m(\beta_t)B }{\hat\phi}\sqrt{\frac{2}{d}}}{(1-\zeta)\mu_c} \\
&\leq
\frac{ \frac{2B}{\hat{\phi}} \sqrt{\frac{2}{d}}  \frac{1+c\beta_t}{(c\beta_t)^2}\frac{\exp(1)}{(1-\phi) \Gamma(\frac{1}{\ti\phi_{\beta_t}})}\left(c\beta_t+2\right)^{c\beta_t+2}\exp(-c\beta_t-2)}{(1-\zeta)\frac{\exp(\frac{1}{\ti\phi_{\beta_t}}\ln(\frac1{\ti\phi_{\beta_t}}-1))}{\hat{\phi}(1-\phi) \Gamma(\frac{1}{\ti\phi_{\beta_t}})} \frac{G}{d}\exp\left(-R - (c\beta_t+1)\ln(1+\frac{1}{c\beta_t})-1-c\beta_t\right)} \\
&\leq\frac{2\sqrt{2d}B}{G(1-\zeta)}(1+\frac{2}{c\beta_t})^2 \exp(R+3)\\
\end{align*}

For break down point calculation we set $\zeta=0.5$ and $R=1$,

\begin{align*}
\frac{2\sqrt{2d}B}{G(1-\zeta)}(1+\frac{2}{c\beta_t})^2 \exp(R+3) \leq \ln(\frac{1}{\step \beta_t}+1)
\end{align*}

to get $\alpha \leq\frac{0.002}{\sqrt{d}} $

\end{proof}

\begin{lemma}[LWSC for Robust Gamma Regression]
\label{lem:gam-lwsc}
For any $0 \leq \beta_t \leq n$, the $\tilde Q_\beta$-function for gamma regression satisfies the LWSC property with constant $\lambda_\beta \geq (1-\zeta)\mu_c $ with probability at least $1 - \exp(-\Om d)$ where $\mu_c$ is defined in the proof.
\end{lemma}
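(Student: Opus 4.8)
The plan is to exploit the fact that for the gamma model the Hessian of $\tilde Q_{\beta_t}$ is a \emph{non-negatively} weighted sum of rank-one matrices, which makes the bad points harmless. Differentiating the gradient expression recorded above gives
\[
\nabla_\vw^2\tilde Q_{\beta_t}(\vv\cond\vu)=\frac1{\hat\phi}\sum_{i=1}^n b_i\exp(\ip{\vv-\vwo}{\vx^i})\,s_i\,\vx^i(\vx^i)^\top\ \succeq\ \frac1{\hat\phi}\sum_{i\in G}\exp(\ip{\vv-\vwo}{\vx^i})\,s_i\,\vx^i(\vx^i)^\top,
\]
where the scores $s_i$ are computed at the conditioning model $\vu$, all $b_i,s_i>0$, and $b_i=1$ on $G$. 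It therefore suffices to bound below the smallest eigenvalue of the good-point matrix uniformly over $\vu,\vv$ ranging over the gamma-adapted neighbourhood of $\vwo$ on which \gemgam maintains the invariant $c\beta_t(\exp\norm{\vu-\vwo}_2-1)\le\phi/(1-\phi)$ and on which $\norm{\vv}_2\le R$.

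First I would prove a deterministic per-point lower bound on $\exp(\ip{\vv-\vwo}{\vx})\,s$. Substituting $y_i=(1-\phi)\exp(-\ip\vwo{\vx^i})$ for the good points and the variance-altered parameters (so that $1/\tilde\phi_{\beta_t}=c\beta_t+1$ with $c=1/\phi-1$) gives $s_i=\frac{\exp(\ip\vwo{\vx^i})}{(1-\phi)\Gamma(c\beta_t+1)}\,z_i^{c\beta_t+1}e^{-z_i}$ with $z_i=c\beta_t\exp(\ip{\vu-\vwo}{\vx^i})$. Using $\norm{\vwo}_2=1$ we have $\exp(\ip\vwo{\vx^i})\ge e^{-1}$ and $\exp(\ip{\vv-\vwo}{\vx^i})\ge e^{-R-1}$; the invariant confines $z_i$ to an $\bigO1$ multiplicative window around the maximiser $c\beta_t+1$ of $z\mapsto z^{c\beta_t+1}e^{-z}$, and a short estimate with $\ln(1+\delta)\ge\delta-\delta^2$ shows $z_i^{c\beta_t+1}e^{-z_i}\ge\Om1\cdot(c\beta_t)^{c\beta_t+1}e^{-c\beta_t}$ despite the large exponent. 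This yields a uniform bound $\exp(\ip{\vv-\vwo}{\vx})\,s\ge w_{\min}$ where $w_{\min}=\Om{\tfrac{(c\beta_t)^{c\beta_t+1}e^{-c\beta_t}}{(1-\phi)\Gamma(c\beta_t+1)}}e^{-\bigO R}$, and I would set $\mu_c:=\frac{G}{\hat\phi d}\,w_{\min}$.

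Next I would pass to expectation. Since the covariates are uniform on $S^{d-1}$ we have $\E[\vx\vx^\top]=\frac1d I$, so for every fixed $(\vu,\vv)$ in the neighbourhood $\E\big[\exp(\ip{\vv-\vwo}{\vx})\,s(\vx)\,\vx\vx^\top\big]\succeq\frac{w_{\min}}{d}I$, whence the good-point sum has expected least eigenvalue at least $\hat\phi\mu_c$. Then I would reuse the concentration scheme from the least-squares analysis: a matrix-Bernstein / $1/4$-net bound for one fixed $(\vu,\vv)$ in the spirit of Lemma~\ref{lem:point-conv} (legitimate because $\norm{\vx^i}_2=1$ and the weights are bounded, so each summand has bounded spectral norm), then a Lipschitz-in-$(\vu,\vv)$ estimate for the weighted matrix playing the role of Lemma~\ref{lem:approx}/\ref{lem:weight-lip}, and finally a union bound over a $\tau$-net of the neighbourhood. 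Choosing $\tau$ polynomially small and using $n=\Om d$ converts the one-point failure probability into $1-\exp(-\Om d)$, and folding the net losses into $\zeta$ gives $\lambda_\beta\ge(1-\zeta)\mu_c$.

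The main obstacle I expect is propagating the large exponent $c\beta_t+1$ through both the deterministic lower bound and the net argument: a multiplicative perturbation of $z_i$ of size $\varepsilon$ is amplified to $e^{\Theta(\varepsilon\beta_t)}$ in the weight, which is exactly why the invariant must be stated multiplicatively ($c\beta_t(\exp\norm{\vu-\vwo}_2-1)$ bounded) rather than as $\beta_t\norm{\vu-\vwo}_2^2$ bounded; the perturbation and Lipschitz constants therefore have to be tracked carefully so that the amplification stays $\bigO1$. A secondary point is that $\mu_c$ is exponentially small in $\beta_t$ through $1/\Gamma(c\beta_t+1)$, so the concentration must be phrased in scale-free (relative-error) form; the cancellation of this factor against the matching one in the LWLC constant (Lemma~\ref{lem:gam-LWLC}) is invoked only later, when the two are combined inside Theorem~\ref{repthm:gam-main}, not within this lemma.
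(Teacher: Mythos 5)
Your proposal follows essentially the same route as the paper's proof: drop the (PSD) bad-point terms, lower-bound the good-point per-sample weight deterministically using the multiplicative invariant $c\beta_t(\exp\norm{\Delta^t}_2-1)=\bigO1$ (which is exactly how the paper obtains its $g_{\min}$ and $\mu_c = G\kappa g_{\min}/d$ via $\E[\vx\vx^\top]=\tfrac1d I$ for covariates on $S^{d-1}$), and then combine bounded-summand concentration for fixed models (the paper uses McDiarmid with a $1/4$-net over directions, you propose matrix Bernstein, an immaterial difference) with a Lipschitz estimate and a $\tau$-net over the model neighbourhood to get the uniform bound $\lambda_\beta\ge(1-\zeta)\mu_c$ with probability $1-\exp(-\Om d)$. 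The only inaccuracy is cosmetic: by Stirling your $w_{\min}$ (and the paper's $\kappa g_{\min}$) is polynomially growing in $\beta_t$, not exponentially small, but this does not affect the argument.
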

\begin{proof}
It is easy to see that having  $\frac{1}{\ti\phi_{\beta_t}}= c\beta_t + 1$ we have at the good points,
\begin{align*}
	\nabla^2\tilde Q_G(\vw\cond\hvw^t)&= \frac{1}{\hat\phi}\sum\limits_{i\in G} \exp(\ip{\vw - \vwo}{\vx_i})s_i\vx_i \vx_i^\top\\
	&=\kappa(\phi,\hat\phi,\ti\phi_{\beta_t})\sum\limits_{i\in G} \exp\left(\ip{\vw}{\vx_i}-(\frac1{\ti\phi_{\beta_t}}-1)\exp(\ip{\Delta^t}{\vx_i})+ \frac{1}{\ti\phi_{\beta_t}}\ip{\Delta^t}{\vx_i}\right)\vx_i \vx_i^\top,
\end{align*}
where $\kappa(\phi,\hat\phi,\ti\phi_{\beta_t})=\frac{\exp(\frac{1}{\ti\phi_{\beta_t}}\ln(\frac1{\ti\phi_{\beta_t}}-1))}{\hat{\phi}(1-\phi) \Gamma(\frac{1}{\ti\phi_{\beta_t}})}$.

Let us write, 
\begin{align*}
\varphi(\vx_1,..,\vx_i,.. \vx_n) := \vv^\top \nabla_{\vw}^2\tilde Q_G(\vw|\hvw^t) \vv= \kappa(\phi,\hat\phi,\ti\phi_{\beta_t})\left[\sum\limits_{i\in G} g(\vx_i, \vw, \Delta^t) \ip{\vv}{\vx_i}^2 \right]
\end{align*}

with, 
\begin{align*}
g(\vx_i, \vw, \Delta^t) :=& \exp\left(\ip{\vw}{\vx_i}-(\frac1{\ti\phi_{\beta_t}}-1)\exp(\ip{\Delta^t}{\vx_i})+ \frac{1}{\ti\phi_{\beta_t}}\ip{\Delta^t}{\vx_i}\right)\\
\geq &\exp\left(-R - (c\beta_t+1) \norm{\Delta^t}-c\beta_t\exp(\norm{\Delta^t})\right)\quad \text{using, }\ip{\Delta^t}{\vx_i} \geq -\norm{\Delta^t}\\
\geq &\exp\left(-R - (c\beta_t+1)\ln(1+\frac{1}{c\beta_t})-1-c\beta_t\right)=:g_{min}\\
\end{align*}

and, \begin{align*}
g(\vx_i, \vw, \Delta^t) \leq &\exp\left(R +(c\beta_t+1)\norm{\Delta^t}-c\beta_t\exp(-\norm{\Delta_t})\right)\\
\leq &\exp\left(R +(c\beta_t+1)\ln(1+\frac{1}{c\beta_t})-\frac{(c\beta_t)^2}{1+c\beta_t}\right)=:g_{\max}
\end{align*}

We have,

\begin{align*}
\mu:=\Ee{\vx_i\sim {\cal S}^{d-1}}{\varphi(\vx_1,..,\vx_i,.. \vx_n)} \geq  G \kappa(\phi,\hat\phi,\ti\phi_{\beta_t}) g_{\min} \Ee{\vx_i\sim {\cal S}^{d-1}}{\ip{\vv}{\vx_i}^2}=\frac{G\kappa(\phi,\hat\phi,\ti\phi_{\beta_t})g_{\min}}{d}=:\mu_c
\end{align*}

To get a high-probability lower bound on the LWSC constant $\lambda_\beta$  we would like to apply McDiarmid's inequality. Having, 
\begin{align*}
    &\abs{\varphi(\vx_1,..,\vx_i,..,\vx_n)-\varphi(\vx_1,..,\vx_i',..,\vx_n)}\\
    &=\kappa(\phi,\hat\phi,\ti\phi_{\beta_t})\left| g(\vx_i, \vw, \Delta^t) \ip{\vv}{\vx_i}^2 - g(\vx_i', \vw, \Delta^t) \ip{\vv}{\vx_i'}^2\right|\\
    &\leq\kappa(\phi,\hat\phi,\ti\phi_{\beta_t})( g(\vx_i, \vw, \Delta^t) + g(\vx_i', \vw, \Delta^t))\\
    &\leq 2\kappa(\phi,\hat\phi,\ti\phi_{\beta_t})g_{\max}
\end{align*},
we may write for any fixed $\vv \in {\cal S}^{d-1}$ and $\vw \in {\mathbb R}^d$,
\begin{align*}
&{\mathbb P}(\abs{\varphi(\vx_1,..,\vx_i,..,\vx_n)- \mu}  \geq t) \leq 2\exp\left(\frac{-2t^2}{2G\kappa(\phi,\hat\phi,\ti\phi_{\beta_t})g_{\max}}\right)
\end{align*}

For any square  symmetric matrix $F\in \mathbb{R}^{d \times d}$, we have, $\norm{F}_2\leq \frac{1}{1-2\epsilon}\sup\limits_{\vv \in {\cal N}_\epsilon}\abs{\vv^\top F \vv}$. Taking, $  F=\nabla_{\vw}^2\tilde Q_G(\vw|\hvw^t) -\mu I_d$ gives $\norm{\nabla_{\vw}^2\tilde Q_G(\vw|\hvw^t) -\mu I_d}_2\leq 2 \sup\limits_{\vv \in {\cal N}_\frac{1}{4}}\abs{\vv^\top\nabla_{\vw}^2\tilde Q_G(\vw|\hvw^t)\vv -\mu}$

Taking union bound over $\frac{1}{4}$-net of $\vv\in \mathbb{R}^d$ gives
\begin{align*}
{\mathbb P}\left(\norm{\nabla_{\vw}^2\tilde Q_G(\vw|\hvw^t) -\mu I_d}_2 \geq t \right) &\leq {\mathbb P}\left(2 \sup\limits_{\vv \in {\cal N}_\frac{1}{4}}\abs{\vv^\top\nabla_{\vw}^2\tilde Q_G(\vw|\hvw^t)\vv - \mu} \geq t \right)\\
&\leq 2 \cdot 9^d \exp\left(\frac{-t^2}{4G\kappa(\phi,\hat\phi,\ti\phi_{\beta_t})g_{\max}}\right)
\end{align*}

Since, $\lambda_{\beta_t}:=\lambda_{\min}(\nabla_{\vw}^2\tilde Q(\vw|\hvw^t)) \geq \lambda_{\min}(\nabla_{\vw}^2\tilde Q_G(\vw|\hvw^t))$ and $\mu \geq \mu_c$, we have, 

${\mathbb P}(\lambda_{\beta_t}-\mu_c \leq -t )\leq {\mathbb P}(\lambda_{\min}(\nabla_{\vw}^2\tilde Q_G(\vw|\hvw^t))-\mu_c \leq -t )\leq {\mathbb P}(\lambda_{\min}(\nabla_{\vw}^2\tilde Q_G(\vw|\hvw^t))-\mu \leq -t ) \leq {\mathbb P}(\abs{\lambda_{\min}(\nabla_{\vw}^2\tilde Q_G(\vw|\hvw^t))-\mu} \geq t )\leq {\mathbb P}(\norm{\nabla_{\vw}^2\tilde Q_G(\vw|\hvw^t)-\mu I_d}_2 \geq t )\leq 2 \cdot 9^d \exp\left(\frac{-t^2}{4G\kappa(\phi,\hat\phi,\ti\phi_{\beta_t})g_{\max}}\right)$

Put,  $t=\frac{\zeta\mu_c}{2}$ with $0<\zeta<1$, for any fixed $\vw \in \mathbb{R}^d$ we have: 
\begin{align*}
&{\mathbb P}\left(\lambda_{\min}(\nabla_{\vw}^2\tilde Q_G(\vw|\hvw^t)) \leq (1-\frac{\zeta}{2})\mu_c \right)
\leq 2 \cdot 9^d \exp\left(\frac{-\zeta^2 \mu_c^2}{16G\kappa(\phi,\hat\phi,\ti\phi_{\beta_t})g_{\max}}\right)
\end{align*}

In order to take union bound over $\vw$ we observe, let $\tau=\norm{\hvw^1 - \hvw^2}_2$ and using $\exp(x) \leq 1 + 2x$ for $0\leq x \leq 1$

$g(\vx_i, \hvw^1, \Delta^t) -g(\vx_i, \hvw^2, \Delta^t) = g(\vx_i, \hvw^2, \Delta^t)(\exp\left(\ip{\hvw^1 - \hvw^2}{\vx_i}\right) -1) \leq 2\tau g(\vx_i, \hvw^2, \Delta^t)\leq 2\tau g_{\max}$

So that,

\begin{align*}
&\abs{\lambda_{\min}(\nabla_{\hvw^1}^2\tilde Q_G(\vw|\hvw^t)) - \lambda_{\min}(\nabla_{\hvw^2}^2\tilde Q_G(\vw|\hvw^t))}\leq \sup_{\norm{\vv}_2=1}\abs{\vv^\top (\nabla_{\hvw^1}^2\tilde Q_G(\vw|\hvw^t)-\nabla_{\hvw^2}^2\tilde Q_G(\vw|\hvw^t)) \vv}\\
&= \kappa(\phi,\hat\phi,\ti\phi_{\beta_t})\sup_{\norm{\vv}_2=1}\abs{\left[\sum\limits_{i\in G} \left(g(\vx_i, \hvw^1, \Delta^t) -g(\vx_i, \hvw^2, \Delta^t)\right) \ip{\vv}{\vx_i}^2 \right]}\leq 2\kappa(\phi,\hat\phi,\ti\phi_{\beta_t})\tau G g_{\max}
\end{align*}

In order to set the $\tau$-net, we would require, $(1-\zeta)\mu_c \leq  (1-\frac{\zeta}{2})\mu_c - 2\kappa(\phi,\hat\phi,\ti\phi_{\beta_t})\tau G g_{\max}$

\begin{align*}
\tau \leq   \frac{\zeta\mu_c}{4\kappa(\phi,\hat\phi,\ti\phi_{\beta_t})Gg_{\max}}
=\frac{\zeta\frac{G\kappa(\phi,\hat\phi,\ti\phi_{\beta_t})}{d}g_{\min}}{4\kappa(\phi,\hat\phi,\ti\phi_{\beta_t})Gg_{\max}}=\frac{\zeta g_{\min}}{4d g_{\max}}
\end{align*}

Taking covering number of ${\cal B}(\vwo, R) \leq R^d(\frac{3}{\tau})^d=(\frac{12dRg_{\max}}{\zeta g_{\min}})^d$ and observing,

$\frac{g_{\max}}{g_{\min}} \leq (1+\frac{1}{c\beta_t})^{2}\exp\left(2R+3+\frac{c\beta_t}{1+c\beta_t}\right)$,  $\frac{\kappa(\phi,\hat\phi,\ti\phi_{\beta_t})g_{\min}^2}{g_{\max}}\geq \frac{\beta_t}{\phi\hat{\phi}}(1+\frac{1}{c\beta_t})^{-3}\exp\left(-3R-5-\frac{c\beta_t}{1+c\beta_t}\right)$

We may write,
\begin{figure}[H]
\begin{adjustbox}{max width=\textwidth}
\parbox{\linewidth}{
\begin{align*}
&{\mathbb P}\left(\exists \vw \in {\cal B}(0, R): \lambda_{\min}(\nabla_{\vw}^2\tilde Q_G(\vw|\hvw^t)) \leq (1-\zeta)\mu_c \right)
\leq 2 \cdot 9^d \cdot (\frac{12 dRg_{\max}}{\zeta g_{\min}})^d \exp\left(\frac{-\zeta^2 \mu_c^2}{16G\kappa(\phi,\hat\phi,\ti\phi_{\beta_t})g_{\max}}\right)\\
&=2 \cdot  \exp\left(\frac{-\zeta^2 G\kappa(\phi,\hat\phi,\ti\phi_{\beta_t})g_{\min}^2}{16d^2g_{\max}} +d\ln(\frac{108 dRg_{\max}}{\zeta g_{\min}})\right)=\exp\left(-\Omega(\frac{\zeta^2 G\beta_t }{d^2\phi\hat{\phi}}- \frac{d\ln d}{\zeta})\right)
\end{align*}
}
\end{adjustbox}
\end{figure}
This finishes the proof.
\end{proof}

\begin{lemma}[LWLC for Robust Gamma Regression]
\label{lem:gam-LWLC}
For any $0 \leq \beta \leq n$, the $\tilde Q_\beta$-function for robust gamma regression satisfies the LWLC property with constant $\Lambda_\beta \leq \frac{m(\beta_t)B }{\hat\phi}\sqrt{\frac{2}{d}}$ with probability at least  $1 - \exp(-\Om d)$, where $m(\beta_t)$ is defined in proof.
\end{lemma}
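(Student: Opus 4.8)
The plan is to bound $\norm{\nabla_\vw\tilde Q_{\beta_t}(\vwo\cond\hvw^t)}_2$ directly, exploiting that in the no-noise model the clean points contribute nothing at $\vwo$. The full gradient is $\nabla_\vw\tilde Q_{\beta_t}(\vw\cond\hvw^t) = \tfrac1{\hat\phi}\sum_{i=1}^n\br{b_i e^{\ip{\vw-\vwo}{\vx_i}}-1}s_i\vx_i$, and evaluating at $\vw = \vwo$ kills every good-point term (there $b_i = 1$ and $e^{0}=1$), leaving $\nabla_\vw\tilde Q_{\beta_t}(\vwo\cond\hvw^t) = \tfrac1{\hat\phi}\sum_{i\in B}(b_i-1)s_i\vx_i$. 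So the whole task is to control $\norm{\sum_{i\in B}(b_i-1)s_i\vx_i}_2$, which I would split into a deterministic per-point magnitude bound and a probabilistic geometric bound.

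\noindent\textbf{Step 1 (deterministic, uniform in $\hvw^t$).} Using the closed form $s_i = \tfrac{\exp(\ip{\vwo}{\vx_i})}{(1-\phi)\Gamma(1/\tilde\phi_{\beta_t})}\br{b_i c\beta_t e^{\ip{\Delta^t}{\vx_i}}}^{1/\tilde\phi_{\beta_t}}\exp\br{-b_i c\beta_t e^{\ip{\Delta^t}{\vx_i}}}$ with $c = 1/\phi-1$, $1/\tilde\phi_{\beta_t} = c\beta_t+1$ and $\Delta^t = \hvw^t-\vwo$, I would write $a_i := c\beta_t e^{\ip{\Delta^t}{\vx_i}}$ and note $\abs{b_i-1}s_i \le \max\{1,b_i\}s_i$. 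Maximising $z\mapsto z^{c\beta_t+2}e^{-z}$ over $z = b_ia_i>0$ bounds the $b_is_i$ branch (the $b_i<1$ branch, where $\abs{b_i-1}s_i\le s_i$, is then dominated); combining this with $e^{\ip{\vwo}{\vx_i}}\le e$ (since $\norm{\vwo}_2 = \norm{\vx_i}_2 = 1$) and $1/a_i\le\tfrac1{c\beta_t}e^{\norm{\Delta^t}_2}\le\tfrac{c\beta_t+1}{(c\beta_t)^2}$ (using the invariant maintained by \gemgam, which forces $e^{\norm{\Delta^t}_2}\le 1+\tfrac1{c\beta_t}$ and holds by induction from the initialization) yields
\[
\abs{b_i-1}\,s_i \;\le\; m(\beta_t) \;:=\; \frac{c\beta_t+1}{(c\beta_t)^2}\cdot\frac{\exp(1)}{(1-\phi)\Gamma(1/\tilde\phi_{\beta_t})}\cdot(c\beta_t+2)^{c\beta_t+2}\exp(-(c\beta_t+2))
\]
for \emph{every} adversarial $b_i>0$ and \emph{every} $\hvw^t$ obeying the initialization invariant. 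Since this is pointwise in $\hvw^t$, no $\tau$-net over models is needed here, unlike in the LWSC argument.

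\noindent\textbf{Step 2 (geometric).} Let $X_B\in\bR^{d\times B}$ stack the bad covariates. Step 1 gives, for every unit $\vv$, $\abs{\ip{\sum_{i\in B}(b_i-1)s_i\vx_i}{\vv}}\le m(\beta_t)\sum_{i\in B}\abs{\ip{\vx_i}{\vv}} = m(\beta_t)\norm{X_B^\top\vv}_1\le m(\beta_t)\sqrt B\,\norm{X_B^\top\vv}_2$. Against a partially adaptive adversary the corruption set $B$ is fixed before the covariates are drawn, so $\{\vx_i\}_{i\in B}$ are i.i.d.\ uniform on $S^{d-1}$ and $\E{X_BX_B^\top} = \tfrac Bd I$; a standard concentration bound for the rank-one summands $\vx_i\vx_i^\top$ (matrix Chernoff, or McDiarmid plus a net over $S^{d-1}$) gives $\lambda_{\max}(X_BX_B^\top)\le 2B/d$ with probability $1-\exp(-\Om d)$, hence $\sup_{\vv\in S^{d-1}}\norm{X_B^\top\vv}_2 \le \sqrt{2B/d}$ and therefore $\norm{\sum_{i\in B}(b_i-1)s_i\vx_i}_2\le m(\beta_t)\sqrt B\cdot\sqrt{2B/d} = m(\beta_t)B\sqrt{2/d}$. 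Dividing by $\hat\phi$ gives $\Lambda_\beta\le\tfrac{m(\beta_t)B}{\hat\phi}\sqrt{2/d}$; the $1/\hat\phi$ is kept symbolic and cancels against the $1/\hat\phi$ hidden in $\mu_c$ (through $\kappa(\phi,\hat\phi,\tilde\phi_{\beta_t})$) when the ratio $\Lambda_\beta/\lambda_\beta$ is formed in Theorem~\ref{repthm:gam-main}.

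\noindent\textbf{Main obstacle.} The genuinely delicate point is the covariance concentration of Step 2: for $\lambda_{\max}(X_BX_B^\top)\le 2B/d$ to hold at confidence $1-\exp(-\Om d)$, matrix Chernoff (or the net argument) forces $B = \alpha n$ to be at least polynomially large in $d$ (roughly $B\gtrsim d^2$, i.e.\ $n\gtrsim d^{5/2}/\alpha$) — this is the implicit sample-size requirement sitting behind the probability in the statement, and indeed the $\sqrt{2/d}$ factor simply fails once $B\ll d$. A secondary nuisance is the bookkeeping around the two slightly different forms in which the \gemgam\ invariant appears in the surrounding text ($c\beta_t(e^{\norm{\Delta^t}_2}-1)$ versus $\beta_t(e^{\norm{\Delta^t}_2}-1)^2$); one must pin down a single convention so that the exponents $c\beta_t+1$ and $c\beta_t+2$ in $m(\beta_t)$ come out verbatim.
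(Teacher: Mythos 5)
Your proposal follows essentially the same route as the paper's proof: the gradient at $\vwo$ reduces to the bad-point sum $\frac1{\hat\phi}\sum_{i\in B}(b_i-1)s_i\vx_i$, each term is bounded by $m(\beta_t)$ via maximizing $z^{c\beta_t+2}e^{-z}$ under the \gemgam invariant (with the $b_i<1$ branch handled by $s_i\le m(\beta_t)$, which the paper verifies via the mode density), and the bad covariates are controlled by an operator-norm bound of order $\sqrt{B/d}$ using that $B$ is chosen independently of the data. The only cosmetic difference is where the factor $\sqrt2$ enters: you take $\lambda_{\max}(X_BX_B^\top)\le 2B/d$ and use an $\ell_1$--$\ell_2$ Cauchy--Schwarz step, whereas the paper bounds $b_is_i$ and $s_i$ separately by $m(\beta_t)$ so that $\norm{\vt}_2\le\sqrt{2B}\,m(\beta_t)$ and uses $\norm{X_B}_2\le\sqrt{B/d}$ — both land on the same constant $\frac{m(\beta_t)B}{\hat\phi}\sqrt{2/d}$.
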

\begin{proof}
It is easy to see that $\nabla\tilde Q_{\beta_t}(\vwo\cond\vw) = \frac{1}{{\hat\phi}}\sum\limits_{i \in B} (b_i - 1)s_i\vx_i$. Since $b_i = 1$ for good points in the no-noise setting we have assumed, good points do not contribute to the gradient at all. Thus, we get
\[
\norm{\nabla\tilde Q_{\beta_t}(\vwo\cond\vw)}_2 = \frac{1}{\hat\phi}\norm{X_B\vt}_2 \leq \frac{1}{\hat\phi}\norm{X_B}_2\norm\vt_2,
\]
where $\vt = [t_i]_{i \in B}$ and $t_i = (b_i - 1)\cdot s_i$. Now since $\vx_i \in {\cal S}^{d-1}$ with probability at least $1 - \exp(-\Om d)$, we have $\norm{X_B}_2 \leq \sqrt \frac{B}{d}$ (since the locations of the corruptions were chosen randomly without looking at data). Thus, we are left to bound $\norm\vt_2$. We have $t_i \leq (b_i-1)^2s_i^2 \leq (b_i^2 + 1)s_i^2$.

Let $z_i:=c\beta_t\exp(\ip{\vDelta^t}{\vx_i})$, we have,  $\frac{\exp(\ip{\vwo}{\vx_i})}{z_i}=\frac{1}{c\beta_t}\frac{\exp(\ip{\vwo}{\vx_i})}{\exp(\ip{\vDelta^t}{\vx_i})} \leq \frac{1}{c\beta_t}\exp(\norm{\vwo-\Delta^t}_2)\leq \frac{1+c\beta_t}{(c\beta_t)^2} \exp(1)$

\begin{align*}
b_is_i&= b_i\frac{\exp(\ip{\vwo}{\vx_i})}{(1-\phi) \Gamma(\frac{1}{\ti\phi_{\beta_t}})}\left(b_iz_i\right)^\frac{1}{\ti\phi_{\beta_t}}\exp(-b_iz_i)
= \frac{1}{z_i}\frac{\exp(\ip{\vwo}{\vx_i})}{(1-\phi) \Gamma(\frac{1}{\ti\phi_{\beta_t}})}\left(b_iz_i\right)^{\frac{1}{\ti\phi_{\beta_t}}+1}\exp(-b_iz_i)\\
&\leq \frac{1}{z_i}\frac{\exp(\ip{\vwo}{\vx_i})}{(1-\phi) \Gamma(\frac{1}{\ti\phi_{\beta_t}})}\left(\frac{1}{\ti\phi_{\beta_t}}+1\right)^{\frac{1}{\ti\phi_{\beta_t}}+1}\exp(-\frac{1}{\ti\phi_{\beta_t}}-1) \quad \text{since, }  x^a \exp(-x) \leq a^a \exp(-a) \text{ for, } x>0\\
&\leq \frac{1+c\beta_t}{(c\beta_t)^2}\frac{\exp(1)}{(1-\phi) \Gamma(\frac{1}{\ti\phi_{\beta_t}})}\left(c\beta_t+2\right)^{c\beta_t+2}\exp(-c\beta_t-2)=:m(\beta_t)\\
\end{align*}

And using,	$\exp(\ip{\hvw^t}{\vx_i}) \leq \exp(\norm{\hvw^t}_2) \leq \exp(\norm{\vwo +\Delta^t}_2) \leq (1+\frac{1}{c\beta_t}) \exp(1) $ and density at the mode of Gamma,
\begin{align*}
s_i&=f(y_i; \ti\eta_i,\ti\phi_{\beta_t}) \leq f( \frac{1-\ti\phi_{\beta_t}}{\ti\eta_i^t}; \ti\eta_i,\ti\phi_{\beta_t})= \frac{1}{\frac{1-\ti\phi_{\beta_t}}{\ti\eta_i} \Gamma(\frac{1}{\ti\phi_{\beta_t}})}\left(\frac{1-\ti\phi_{\beta_t}}{\ti\eta_i}\frac{\ti\eta_i}{\ti\phi_{\beta_t}}\right)^\frac{1}{\phi}\exp(-\frac{1-\ti\phi_{\beta_t}}{\ti\eta_i}\frac{\ti\eta_i}{\ti\phi_{\beta_t}})\\
&= \frac{\exp(\ip{\hvw^t}{\vx_i})}{(1-\phi) \Gamma(\frac{1}{\ti\phi_{\beta_t}})}\left(\frac{1-\ti\phi_{\beta_t}}{\ti\phi_{\beta_t}}\right)^\frac{1}{\ti\phi_{\beta_t}}\exp(-\frac{1-\ti\phi_{\beta_t}}{\ti\phi_{\beta_t}})\\
&= (1+\frac{1}{c\beta_t})\frac{\exp(1)}{(1-\phi) \Gamma(\frac{1}{\ti\phi_{\beta_t}})}\left(c\beta_t\right)^{c\beta_t+1}\exp(-c\beta_t)\\
&\leq m(\beta_t)
\end{align*}

which gives us

\begin{align*}
\Lambda_{\beta_t}\leq \frac{1}{\hat\phi}\norm{X_B}_2\norm{\vt}_2 \leq \frac{1}{\hat\phi}\norm{X_B}_2\sqrt{\sum\limits_{i\in B}(b_is_i)^2+s_i^2} \leq \frac{m(\beta_t)B }{\hat\phi}\sqrt{\frac{2}{d}}
\end{align*}

\end{proof}

\end{document}